\newtheorem{assumption}{Assumption}
\newcolumntype{L}[1]{>{\raggedright\let\newline\\\arraybackslash\hspace{0pt}}m{#1}}
\newcolumntype{C}[1]{>{\centering\let\newline  \\\arraybackslash\hspace{0pt}}m{#1}}
\newcolumntype{R}[1]{>{\raggedleft\let\newline \\\arraybackslash\hspace{0pt}}m{#1}}
\newcommand{\NM}[2]{\left\| #1 \right\|_{#2}}
\newcommand{\R}{\mathbb{R}}
\newcommand{\Px}[2]{\text{\normalfont prox}_{#1}(  #2 ) }
\newcommand{\Diag}[1]{ \text{\normalfont \,Diag} \left(  #1 \right) }
\newcommand{\Tr}[1]{\text{Tr}( #1 ) }
\newcommand{\ip}[1]{{\left\langle  #1 \right\rangle}}
\newcommand{\ten}[1]{ \mathscr{ #1 } }
\newcommand{\SO}[1]{P_{\vect{\Omega}}\left( #1 \right) }
\newcommand{\vect}[1]{{\boldsymbol{\bm{#1}}}} 
\begin{document}

\title{Low-rank Tensor Learning with \\
Nonconvex 
Overlapped Nuclear Norm
Regularization}

\author{\name Quanming Yao \email qyaoaa@tsinghua.edu.cn \\
		\addr 
		Department of Electronic Engineering, Tsinghua University
		\AND
		\name Yaqing Wang~\thanks{Corresponding Author.} \email wangyaqing01@baidu.com \\
		\addr Baidu Research
		\AND
		\name Bo Han \email bhanml@comp.hkbu.edu.hk \\
		\addr Department of Computer Science, Hong Kong Baptist University
		\AND   	   
		James T. Kwok \email jamesk@cse.ust.hk \\
		\addr Department of Computer Science and Engineering, 
		Hong Kong University of Science and Technology}

\editor{Prateek Jain}

\maketitle

\begin{abstract}
Nonconvex regularization has been popularly used in low-rank matrix learning.
However, extending  it for low-rank tensor learning
is still computationally expensive.
To address this problem,
we develop an efficient solver for use with a nonconvex extension of the
overlapped nuclear norm regularizer.
Based on the proximal average algorithm,
the proposed algorithm can avoid expensive 
tensor folding/unfolding operations. 
A special ``sparse plus low-rank" structure
is maintained throughout the iterations, and
allows fast computation of the individual proximal steps.
Empirical convergence is further improved
with the use of adaptive momentum.
We  provide convergence guarantees to critical points on 
smooth
losses 
	and also on objectives satisfying
	the Kurdyka-{\L}ojasiewicz condition.
While the optimization problem is nonconvex and nonsmooth,
we show that
its critical points still have good statistical performance
on the tensor completion problem.  
Experiments on various synthetic and real-world data sets show that the proposed algorithm is efficient
in both time and space and more accurate than the existing state-of-the-art.
\end{abstract}

\begin{keywords}
Low-rank tensor,
Proximal algorithm,
Proximal average algorithm,
Nonconvex regularization,
Overlapped nuclear norm.
\end{keywords}

\section{Introduction}
\label{sec:intro}

Tensors can be seen as high-order matrices and are widely used for describing multilinear relationships in the data.
They have been popularly applied in areas such as computer vision, 
data mining and machine learning~\citep{Kolda2009,zhao2016tensor,song2017tensor,papalexakis2017tensors,hong2020generalized,janzamin2020spectral}.
For example,
color images~\citep{liu2013tensor}, 
hyper-spectral images~\citep{signoretto2011tensor,He2019NonLocalMG},
and knowledge graphs~\citep{nickel2015review,lacroix2018canonical}
can be naturally represented as third-order tensors, while
color videos can be seen as 4-order tensors~\citep{candes2011robust,bengua2017efficient}.
In YouTube,
users can follow each other and belong to the same subscribed channels.
By treating channel as the third dimension,
the users' co-subscription network can also be represented as a third-order tensor~\citep{lei2009analysis}.

In many applications, only a few entries  in the tensor are observed.
For example,
each YouTube user usually only interacts
with a few other users
\citep{lei2009analysis,davis2011multi},
and in knowledge graphs,
we can only have a few labeled edges
describing the relations between entities~\citep{nickel2015review,lacroix2018canonical}.
Tensor completion, which
aims at filling in this partially observed tensor,
has attracted a lot of recent interest
\citep{rendle2010pairwise,signoretto2011tensor,bahadori2014fast,cichocki2015tensor}.

In the related task of matrix completion, 
the underlying matrix is often assumed to be low-rank~\citep{candes2009exact},
as
its rows/columns share similar
characteristics.  The nuclear norm, which is the tightest convex envelope of the
matrix rank~\citep{boyd2009convex}, is popularly used as its surrogate 
in low-rank matrix completion~\citep{cai2010singular,mazumder2010spectral}.
In tensor completion, the low-rank assumption also captures relatedness in the 
different tensor dimensions~\citep{tomioka2010estimation,acar2011scalable,song2017tensor,hong2020generalized}.
However, tensors are more complicated
than matrices. Indeed,
even the computation of tensor rank is NP-hard~\citep{hillar-13}.
In recent years, 
many convex relaxations based on the matrix nuclear norm 
have been proposed
for tensors.
Examples include the tensor trace norm~\citep{cheng2016scalable},
overlapped nuclear norm~\citep{tomioka2010estimation,gandy2011tensor},
and latent nuclear norm~\citep{tomioka2010estimation}.
Among these convex relaxations, the overlapped nuclear norm is the most
popular as it (i) can be evaluated exactly by performing SVD on the unfolded matrices~\citep{cheng2016scalable}, 
(ii) has better low-rank approximation~\citep{tomioka2010estimation}, 
and 
(iii) can lead to exact recovery~\citep{tomioka2011statistical,tomioka2013convex,mu2014square}.

The (overlapped) nuclear norm
equally penalizes all singular values.
Intuitively,
larger singular values are more informative and should be less
penalized~\citep{mazumder2010spectral,lu2016nonconvex,yao2018large}.
To alleviate this problem
in low-rank matrix learning,
various adaptive nonconvex regularizers have been recently introduced.
Examples include
the capped-$\ell_1$ norm~\citep{zhang2010analysis},
log-sum-penalty (LSP)~\citep{candes2008enhancing},
truncated nuclear norm (TNN)~\citep{hu2013fast},
smoothed-capped-absolute-deviation (SCAD)~\citep{fan2001variable}
and
minimax concave penalty (MCP)~\citep{zhang2010nearly}.
All these assign smaller penalties to the larger singular values.
This leads to better recovery performance in many applications 
such as image recovery~\citep{lu2016nonconvex,gu2017weighted} and collaborative filtering~\citep{yao2018large},
and lower statistical errors of various 
matrix completion and regression problems~\citep{gui2016towards,Mazumder2020MatrixCW}.

Motivated by 
the success of adaptive nonconvex regularizers in low-rank matrix learning,
there are 
recent
works that apply nonconvex regularization in learning low-rank tensors.
For example, the TNN regularizer is used with 
the overlapped nuclear norm
regularizer
on 
video processing~\citep{Xue2018LowRankTC} and traffic data processing~\citep{Chen2020ANL}.
In this paper,
we propose a general nonconvex variant of the overlapped nuclear norm
regularizer for low-rank tensor completion.
Unlike the standard convex tensor completion problem,
the resulting optimization problem is nonconvex and more difficult to solve.
Previous algorithms in
\citep{Xue2018LowRankTC,Chen2020ANL}
are computationally expensive, and
have neither convergence results nor statistical guarantees. 

To solve this issue,
based on the proximal average algorithm~\citep{bauschke2008proximal},
we 
develop 
an efficient solver 
with much smaller time and space complexities.
The keys to  its success
are on (i) avoiding expensive tensor folding/unfolding,
(ii) maintaining a ``sparse plus low-rank'' structure on the iterates, and
(iii) incorporating 
the adaptive momentum~\citep{li2015accelerated,Li2017ada,yao2017efficient}.
Convergence guarantees to critical points
are provided under the usual smoothness assumption for the loss
and further Kurdyka-{\L}ojasiewicz~\citep{attouch2013convergence} condition
on the whole learning objective.

Besides, we study its
statistical guarantees,
and show
that critical points of the proposed objective can have small statistical errors
under the restricted strong convexity condition~\citep{agarwal2010fast}.
Informally,
for tensor completion with
unknown
noise,
we show that the recovery error can be bounded as
$\| \ten{X}^* - \tilde{\ten{X}} \|_F \le \mathcal{O} ( \lambda \kappa_0 \sum\nolimits_{i = 1}^M \sqrt{k_i} )$
(see Theorem~\ref{thm:stat}),
where $\mathcal{O}$ omits constant terms,
$\ten{X}^*$ (resp. $\tilde{\ten{X}}$) is the ground-truth (resp. recovered) tensor,
$M$ is the tensor order and $k_i$ is the rank of unfolding matrix on the $i$th mode.
When Gaussian additive noise is assumed,
we show that the recovery error also depends 
linearly with the noise level $\sigma$ (see Corollary~\ref{cor:noisy})
and $\sqrt{\log I^{\pi} / \NM{\bf{\Omega}}{1}}$
where  $I^{\pi}$
is the tensor size
and $\NM{\bf{\Omega}}{1}$ is the number of observed elements 
(see Corollary~\ref{cor:number}).

We further extend it for use with
Laplacian regularizer as in spatial-temporal analysis
and 
non-smooth losses
as in robust tensor completion.
Experiments on a variety of synthetic and real-world data sets
(including images, videos, hyper-spectral images, social networks, knowledge graphs and spatial-temporal climate observation records)
show that the proposed algorithm is more efficient
and has much better empirical performance than other low-rank tensor regularization and decomposition methods.

\subsection*{Difference with the Conference Version}
A preliminary conference version of this work 
\citep{yao2019efficient}
was published in ICML-2019.
The main differences 
with this conference version
are as follows.
\begin{itemize}[leftmargin=18px,noitemsep,parsep=0pt,partopsep=0pt]
\item[1).] Only third-order tensor and square loss are considered in 
\citep{yao2019efficient}, while
the proposed algorithm here,
which is enabled by Proposition~\ref{pr:mulv}, 
can work on tensors with arbitrary orders.
The difficulties of extending to higher order tensors
are also discussed after Proposition~\ref{pr:mulv}.

\item[2).] Statistical guarantee of the proposed model 
for the tensor completion problem
is added in Section~\ref{sec:stat},
which shows that
tensors that are not too spiky can be recovered.
We also show
how the recovery performance can depend on
noise level,
tensor ranks,
and the number of observations.

\item[3).] In Section~\ref{sec:extend}, we enable the proposed method work with robust loss function (which is nonconvex and nonsmooth) and Laplacian regularizer. These enable the proposed algorithm to be applied to a wider range of application scenarios such as
knowledge graph completion, 
spatial-temporal analysis 
and robust video recovery.

\item[4).]
Extensive experiments are added.  
Specifically,
quality of the obtained critical points is studied in Section~\ref{sec:quacri};
application to knowledge graphs 
in Section~\ref{sec:exp:kg},
application to robust video completion in Section~\ref{sec:exp:rpca},
and 
application to spatial-temporal data
in Section~\ref{sec:exp:sptemp}.
 
\end{itemize}

\subsection*{Notation}
Vectors
are denoted by lowercase boldface,
matrices
by uppercase boldface,
and
tensors
by Euler.

For a matrix $\vect{A}  \in  \R^{m  \times  n}$
(without loss of generality, we assume that $m  \ge  n$),
$\sigma_i(\vect{A})$ denotes its $i$th singular, 
its nuclear norm is $\| \vect{A} \|_*  = 
\sum_i  \sigma_i$;
$\| \vect{A} \|_{\infty}$ returns its maximum singular.

For tensors,
we follow the
notation in~\citep{Kolda2009}.
For a  $M$-order tensor
$\ten{X}  \in  \R^{I^1 \times \dots \times I^M}$
(without loss of generality, we assume $ I^1 \ge \dots \ge I^M$),
its $(i_1, \dots , i_M)$th entry
is $\ten{X}_{i_1 \dots i_M}$.
One can {\em unfold}
$\ten{X}$
along its $d$th mode to
obtain the matrix $\vect{X}_{\left\langle  d \right\rangle}  \in  \R^{I^d \times ( \frac{ I^{\pi} }{ I^d } )}$
with $I^{\pi} = \prod_{i = 1}^M I^i$,
whose $(i_d,j)$  entry is $\ten{X}_{i_1 \dots i_M}$
with $ j  =  1  +  \sum_{l=1,l\neq d}^M (i_l  -  1)\prod_{m=1, m\neq d}^{l-1}I^m$.
One can also {\em fold}  a matrix
$\vect{X}$
back to a tensor
$\ten{X}  =  \vect{X}^{\ip{d}}$,
with $\ten{X}_{ i_1 \dots i_M }  =  \vect{X}_{i_d j}$,
and $j$ as defined above.
A slice in a tensor $\ten{X}$ is a matrix $\bm{x}$ obtained by fixing all but two $\ten{X}$'s indices.
The inner product of two $M$-order
tensors $\ten{X}$ and $\ten{Y}$ is $\ip{\ten{X} , \ten{Y}}  = 
\sum_{i_1=1}^{I^1} ... \sum_{i_M = 1}^{I^M} \ten{X}_{i_1 ... i_M} \ten{Y}_{i_1 ... i_M}$,
the Frobenius norm is $\|\ten{X}\|_F  =  \sqrt{\langle \ten{X}, \ten{X} \rangle}$,
$\| \ten{X} \|_{\max}$ returns the value of the element in $\ten{X}$ with the maximum absolute value.

For a proper and lower-semi-continuous function $f$,
$\partial f$ denotes its Frechet subdifferential
\citep{attouch2013convergence}.

Finally,
$\SO{\cdot}$ is the observer operator,
i.e., 
given a binary tensor $\vect{\Omega} \in \{ 0, 1 \}^{I_1 \times ... \times I_M}$
and an arbitrary tensor $\ten{X} \in \R^{I_1 \times ... \times I_M}$,
$[ \SO{ \ten{X} } ]_{i_1 ... i_M} = \ten{X}_{i_1 ... i_M}$
if $\vect{\Omega}_{i_1 ... i_M} = 1$ and 
$[ \SO{ \ten{X} } ]_{i_1 ... i_M} = 0$ otherwise.

\section{Related Works}
\label{sec:relworks}

\subsection{Low-Rank Matrix Learning}
\label{sec:rel:nonreg}

Learning 
of a low-rank matrix 
$\vect{X}\in\R^{m\times n}$
can be formulated as the following optimization problem:
\begin{align}
	\min\nolimits_{\vect{X}}
	f(\vect{X}) + \lambda r( \vect{X} ),
	\label{eq:lrmat}
\end{align}
where
$r$ is a low-rank regularizer, 
$\lambda \ge 0$ is a hyperparameter, and
$f$ is a 
loss function that  is
$\rho$-Lipschitz smooth\footnote{In other words, 
	$\| \nabla f(\vect{X})  -  \nabla f(\vect{Y}) \|_F \le  \rho \NM{\vect{X}  -  \vect{Y}}{F}$ for any $\vect{X}, \vect{Y}$.}.
Existing methods
for low-rank matrix learning generally
fall into three
types:
(i) nuclear norm minimization; (ii)
nonconvex regularization; 
and (iii) matrix factorization.

\subsubsection{Nuclear Norm Minimization}

A common choice
for $r$ is 
the nuclear norm regularizer.
Using the 
proximal algorithm~\citep{parikh2013proximal}
on (\ref{eq:lrmat}),
the iterate at iteration $t$ is given by
$\vect{X}_{t+1} = \Px{\frac{\lambda}{\tau} \|\cdot\|_*}{ \bm{Z}_t}$,
where 
\begin{equation} \label{eq:zt21}
\bm{Z}_t = \vect{X}_t - \frac{1}{\tau} \nabla f(\vect{X}_t).
\end{equation} 
Here,
$\tau > \rho$ controls the stepsize ($1/\tau$), and
\begin{align}
\Px{\frac{\lambda}{\tau} \NM{\cdot}{*}}{\bm{Z}}
= \arg\min\nolimits_{\vect{X}} \frac{1}{2} \NM{\vect{X} - \bm{Z}}{F}^2  
+ \frac{\lambda}{\tau} \NM{\vect{X}}{*}
\label{eq:proxsvt}
\end{align}
is the proximal step.
The following Lemma shows that 
$\Px{\frac{\lambda}{\tau} \NM{\cdot}{*}}{\bm{Z}}$
can be obtained by shrinking the singular values 
of $\bm{Z}$, 
which 
encourages
$\vect{X}_t$ to be low-rank.

\begin{lemma}\citep{cai2010singular}
$\Px{\lambda\NM{\cdot}{*}}{\bm{Z}} = \bm{U}
(\bm{\Sigma} - \lambda \bm{I})_+ \bm{V}^{\top}$,
where 
$\bm{U} \bm{\Sigma} \bm{V}^{\top}$ is the SVD of $\bm{Z}$,
and
$\left[ (\vect{X})_+ \right]_{ij} = \max(\vect{X}_{ij}, 0)$.
\label{lem:svt}
\end{lemma}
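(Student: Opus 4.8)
The objective $g(\vect{X}) = \tfrac12\NM{\vect{X}-\bm{Z}}{F}^2 + \lambda\NM{\vect{X}}{*}$ is $1$-strongly convex (a $1$-strongly convex quadratic plus a convex norm), so it has a unique minimizer, which is by definition $\Px{\lambda\NM{\cdot}{*}}{\bm{Z}}$. Hence it suffices to exhibit one point obeying the first-order optimality condition $\bm{0}\in\vect{X}-\bm{Z}+\lambda\,\partial\NM{\vect{X}}{*}$, i.e. $\bm{Z}-\vect{X}\in\lambda\,\partial\NM{\vect{X}}{*}$, and to show that the candidate $\hat{\vect{X}}=\bm{U}(\bm{\Sigma}-\lambda\bm{I})_+\bm{V}^{\top}$ is such a point. (Alternatively, one can avoid subdifferentials: by von Neumann's trace inequality $\ip{\vect{X},\bm{Z}}\le\sum_i\sigma_i(\vect{X})\sigma_i(\bm{Z})$, with equality iff $\vect{X}$ and $\bm{Z}$ admit simultaneous SVDs, the minimization of $g$ decouples over the singular values of $\vect{X}$ once the singular vectors are taken to align with those of $\bm{Z}$, reducing it to the $n$ scalar soft-thresholding problems $\min_{s\ge0}\tfrac12(s-\sigma_i(\bm{Z}))^2+\lambda s$, with solutions $s_i=(\sigma_i(\bm{Z})-\lambda)_+$.)

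For the subdifferential route I would first recall the standard characterization: if $\vect{A}=\bm{P}\bm{S}\bm{Q}^{\top}$ is a compact SVD ($\bm{S}$ collecting only the positive singular values, $\bm{P},\bm{Q}$ the corresponding orthonormal columns), then
\[
\partial\NM{\vect{A}}{*}=\{\bm{P}\bm{Q}^{\top}+\bm{W}:\ \bm{P}^{\top}\bm{W}=\bm{0},\ \bm{W}\bm{Q}=\bm{0},\ \NM{\bm{W}}{\infty}\le1\},
\]
which follows from the dual description $\NM{\vect{A}}{*}=\max\{\ip{\vect{A},\bm{B}}:\NM{\bm{B}}{\infty}\le1\}$.

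Then, assuming $\lambda>0$ (the case $\lambda=0$ being trivial), I would split the SVD as $\bm{Z}=\bm{U}_0\bm{\Sigma}_0\bm{V}_0^{\top}+\bm{U}_1\bm{\Sigma}_1\bm{V}_1^{\top}$, where $\bm{\Sigma}_0$ gathers the singular values $>\lambda$ and $\bm{\Sigma}_1$ those $\le\lambda$. Then $\hat{\vect{X}}=\bm{U}_0(\bm{\Sigma}_0-\lambda\bm{I})\bm{V}_0^{\top}$, whose compact SVD has left/right factors $\bm{U}_0,\bm{V}_0$, and a one-line computation gives $\bm{Z}-\hat{\vect{X}}=\lambda\bm{U}_0\bm{V}_0^{\top}+\bm{U}_1\bm{\Sigma}_1\bm{V}_1^{\top}=\lambda(\bm{U}_0\bm{V}_0^{\top}+\bm{W})$ with $\bm{W}=\lambda^{-1}\bm{U}_1\bm{\Sigma}_1\bm{V}_1^{\top}$. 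Orthonormality of the columns of $\bm{U}$ and $\bm{V}$ yields $\bm{U}_0^{\top}\bm{W}=\bm{0}$ and $\bm{W}\bm{V}_0=\bm{0}$, while $\NM{\bm{W}}{\infty}=\lambda^{-1}\NM{\bm{\Sigma}_1}{\infty}\le1$ by construction; hence $\bm{Z}-\hat{\vect{X}}\in\lambda\,\partial\NM{\hat{\vect{X}}}{*}$, and uniqueness of the minimizer gives $\Px{\lambda\NM{\cdot}{*}}{\bm{Z}}=\hat{\vect{X}}$.

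I expect the only real subtlety to be the nuclear-norm subdifferential formula (for the first route) or the equality case of von Neumann's inequality (for the second), together with the bookkeeping around non-unique SVDs when $\bm{Z}$ has repeated singular values or one equal to $\lambda$: one checks that the argument is invariant under the choice of orthonormal bases within each singular subspace, so both $\hat{\vect{X}}$ and the conclusion are well defined. Everything else is a direct computation.
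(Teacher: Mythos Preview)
Your proposal is correct; both the subdifferential route and the von Neumann trace-inequality route are standard and valid proofs of the singular value thresholding formula. The paper itself does not prove this lemma at all: it simply cites \citep{cai2010singular} and uses the result as a known fact, so there is no in-paper proof to compare against.
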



A special class of low-rank matrix learning problems is
matrix completion, which attempts to find a low-rank matrix that agrees with the observations in data $\bm{O}$:
\begin{align}
\min\nolimits_{\vect{X}}
\frac{1}{2}\NM{ \SO{\vect{X} - \bm{O}} }{F}^2
+ \lambda \NM{\vect{X}}{*}.
\label{eq:matcomp}
\end{align}
Here, positions of the observed elements 
in $\bm{O}$
are indicated by $1$'s
in the binary matrix $\vect{\Omega}$.
Setting
$f(\vect{X}) = \frac{1}{2}\NM{\SO{\vect{X} - \bm{O}}}{F}^2$
in 
(\ref{eq:lrmat}),
$\vect{Z}_t$ in (\ref{eq:zt21}) becomes:
\begin{align}
\label{eq:zt1}
\vect{Z}_t
=
\vect{X}_{t} - \frac{1}{\tau} \SO{ \vect{X}_{t} - \vect{O} }.
\end{align}
Note that $\vect{X}_t$ is low-rank and $\frac{1}{\tau} \SO{ \vect{X}_{t} - \vect{O}
}$ is sparse.
$\vect{Z}_t$ thus
has a ``sparse plus low-rank'' structure.
This allows
the SVD  computation
in Lemma~\ref{lem:svt} 
to be 
much more 
efficient~\citep{mazumder2010spectral}.
Specifically, 
on using the power method
to compute 
$\vect{Z}_t$'s
SVD, most effort is spent on 
multiplications of the forms $\vect{Z}_t \bm{b}$ 
and $\bm{a}^{\top} \vect{Z}_t$ 
(where 
$\bm{a} \in \R^{n}$
and 
$\bm{b} \in \R^{m}$).
Let 
$\vect{X}_{t}$ 
in (\ref{eq:zt1})
be low-rank factorized as
$\bm{U}_{t} \bm{V}_{t}^{\top}$, where 
$\bm{U}_{t}\in\R^{m\times k_t}$ and $\bm{V}_{t}\in\R^{n\times k_t}$
with rank
$k_{t}$.
Computing
\begin{align}
\vect{Z}_t \bm{b}
= 
\bm{U}_{t} \left( \bm{V}_{t}^{\top} \bm{b} \right) 
- \frac{1}{\tau} \SO{ \vect{Y}_{t} - \bm{O} } \bm{b}
\label{eq:zt1v}
\end{align}
takes $O( (m + n) k_t + \| \vect{\Omega} \|_1)$ time.
Usually, $k_{t} \ll n$ and $\| \vect{\Omega} \|_1 \ll m n$.
Thus, this is much faster than directly multiplying $\vect{Z}_t$ and $\bm{b}$,
which takes $O( m n )$ time.
The same holds for computing $\bm{a}^{\top} \vect{Z}_t$.
The proximal step in \eqref{eq:proxsvt} 
takes a total of $O( (m + n) k_t k_{t + 1} + \| \vect{\Omega} \|_1 k_{t + 1} )$ time,
while a direct computation
without utilizing the ``sparse plus low-rank'' structure
takes $O( m n k_{t + 1} )$ time.
Besides, as
only 
$\SO{\vect{X}_{t}}$ 
and 
the factorized form of $\vect{X}_{t}$ 
need to be kept,
the space complexity is reduced from $O( m n )$ to
$O( (m + n) k_t + \| \vect{\Omega} \|_1 )$.


\subsubsection{Nonconvex Low-Rank Regularizer}

Instead of using a convex $r$ in \eqref{eq:lrmat}, the following
nonconvex regularizer 
has been commonly used~\citep{gui2016towards,lu2016nonconvex,gu2017weighted,yao2018large}:
\begin{align}
\phi(  \vect{X} ) = \sum\nolimits_{i = 1}^{n} \kappa( \sigma_i( \vect{X} ) ),
\label{eq:lrphi}
\end{align}
where $\kappa$ is nonconvex and possibly nonsmooth.
We assume the following on $\kappa$.

\begin{assumption}
\label{ass:kappa}
$\kappa(\alpha)$ is a concave and
non-decreasing
function on $\alpha \ge 0$,
with $\kappa(0) = 0$ and 
$\lim_{\alpha \rightarrow 0^+} \kappa'(\alpha) = \kappa_0$ 
for a positive constant $\kappa_0$.
\end{assumption}

Table~\ref{tab:regdef} shows
the 
$\kappa$'s
corresponding 
to the popular nonconvex regularizers of
capped-$\ell_1$ penalty
\citep{zhang2010analysis},  
log-sum-penalty (LSP)
\citep{candes2008enhancing},
truncated nuclear norm (TNN)~\citep{hu2013fast},
smoothed-capped-absolute-deviation (SCAD)~\citep{fan2001variable},
and minimax concave penalty (MCP)~\citep{zhang2010nearly}.
These nonconvex regularizers have similar statistical guarantees~\citep{gui2016towards}, and 
perform empirically better than the convex nuclear norm regularizer
\citep{lu2016nonconvex,yao2018large}.
The proximal algorithm can 
also be used,
and converges to a critical point~\citep{attouch2013convergence}.
Analogous to Lemma~\ref{lem:svt},
the underlying proximal step 
\begin{align}
\Px{\frac{\lambda}{\tau} \phi}{\bm{Z}}
= \arg\min\nolimits_{\vect{X}} \frac{1}{2} \NM{\vect{X} - \bm{Z}}{F}^2  
+ \frac{\lambda}{\tau} \phi(\vect{X})
\label{eq:proxgsvt}
\end{align}
can be obtained as follows.
\begin{lemma}{\normalfont~\citep{lu2016nonconvex}}
\label{lem:gsvt}
$\Px{\lambda\phi}{\bm{Z}} = \bm{U} \Diag{  y_1, \dots, y_{n} }
\bm{V}^{\top}$, where
$\bm{U} \bm{\Sigma} \bm{V}^{\top}$ is the SVD of $\bm{Z}$, and
$y_i = \arg \min\nolimits_{y \ge 0} \frac{1}{2} (y - \sigma_i(\bm{Z}))^2 + \lambda \kappa(y)$.
\end{lemma}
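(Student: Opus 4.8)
The plan is to exploit that $\phi$ is a unitarily invariant (spectral) function, so that the matrix proximal problem decouples, via von Neumann's trace inequality, into a family of identical scalar problems. Write the objective as $h(\vect{X}) = \tfrac{1}{2}\NM{\vect{X}-\bm{Z}}{F}^2 + \lambda\phi(\vect{X})$. First I would observe that $h$ is proper, lower semicontinuous, and coercive (since $\phi\ge 0$ under Assumption~\ref{ass:kappa}), so a minimizer exists. Expanding the squared Frobenius norm, minimizing $h$ is equivalent to minimizing $\tfrac{1}{2}\NM{\vect{X}}{F}^2 - \ip{\vect{X},\bm{Z}} + \lambda\sum_i \kappa(\sigma_i(\vect{X}))$. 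Since $\NM{\vect{X}}{F}^2=\sum_i\sigma_i(\vect{X})^2$ and $\phi(\vect{X})$ depend only on the singular values of $\vect{X}$, the only term coupling the singular vectors of $\vect{X}$ to those of $\bm{Z}$ is $\ip{\vect{X},\bm{Z}}$.

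Next I would invoke von Neumann's trace inequality, $\ip{\vect{X},\bm{Z}} \le \sum_i \sigma_i(\vect{X})\,\sigma_i(\bm{Z})$, with equality attained when $\vect{X}$ has an SVD of the form $\bm{U}\,\Diag{\sigma_1(\vect{X}),\dots,\sigma_n(\vect{X})}\,\bm{V}^{\top}$ using the very $\bm{U},\bm{V}$ in the SVD $\bm{Z}=\bm{U}\bm{\Sigma}\bm{V}^{\top}$. Hence for any $\vect{X}$ with singular values $s_1\ge\dots\ge s_n\ge 0$ we get the lower bound $h(\vect{X}) \ge \tfrac{1}{2}\NM{\bm{Z}}{F}^2 + \sum_i\bigl[\tfrac{1}{2}s_i^2 - s_i\sigma_i(\bm{Z}) + \lambda\kappa(s_i)\bigr]$, and this bound is exactly achieved by the aligned choice of singular vectors. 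It therefore suffices to minimize $\sum_i\bigl[\tfrac{1}{2}(s_i-\sigma_i(\bm{Z}))^2 + \lambda\kappa(s_i)\bigr]$ over $s_1\ge\dots\ge s_n\ge 0$, and then fold $\bm{U}\,\Diag{s_1,\dots,s_n}\,\bm{V}^{\top}$ back.

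Finally I would drop the ordering constraint, obtaining the separable problem whose $i$th component is precisely $y_i = \arg\min_{y\ge 0}\tfrac{1}{2}(y-\sigma_i(\bm{Z}))^2 + \lambda\kappa(y)$ as in the statement, and then argue the relaxation is tight: because $\sigma_1(\bm{Z})\ge\dots\ge\sigma_n(\bm{Z})$ and $\kappa$ is concave and non-decreasing on $[0,\infty)$, the scalar map $\sigma\mapsto\arg\min_{y\ge0}\tfrac{1}{2}(y-\sigma)^2+\lambda\kappa(y)$ is monotone non-decreasing (using a consistent selection when it is set-valued), so the unconstrained optimizers automatically satisfy $y_1\ge\dots\ge y_n\ge 0$ and hence are legitimate singular values. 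The hard part is this last tightness/monotonicity claim: for nonconvex $\kappa$ the scalar proximal map need not be single-valued and its order-preservation is not immediate, so the crux is a one-dimensional exchange argument showing that if $\sigma\le\sigma'$ then some minimizers satisfy $y\le y'$, which follows from concavity of $\kappa$ by comparing the two quadratics-plus-$\kappa$ objectives; everything else (existence of a minimizer, the von Neumann reduction, separability) is routine.
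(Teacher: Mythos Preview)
The paper does not prove Lemma~\ref{lem:gsvt}; it is quoted from \cite{lu2016nonconvex} as a known result, so there is no in-paper proof to compare against. Your argument is the standard and correct one: reduce via von Neumann's trace inequality to an ordered scalar problem, then show the ordering constraint is inactive.

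One small sharpening: the monotonicity step you flag as ``the hard part'' does not actually rely on concavity of $\kappa$. The exchange argument you describe goes through using only the quadratic coupling: if $\sigma<\sigma'$ and $y,y'$ are respective minimizers with $y>y'$, then adding $f_\sigma(y)\le f_\sigma(y')$ and $f_{\sigma'}(y')\le f_{\sigma'}(y)$ and cancelling the identical $\lambda\kappa(\cdot)$ terms yields $(y-y')(\sigma'-\sigma)\le 0$, a contradiction. Concavity (indeed, any structure on $\kappa$ beyond lower semicontinuity and $\kappa\ge 0$ for existence) is not needed here. Everything else in your outline is routine and correct.
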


\begin{table}[ht]
	\centering
	\caption{Common examples of $\kappa(\sigma_i( \vect{X} ))$.
		Here, $\theta$ is a constant.
		For the capped-$\ell_1$, 
		LSP and 
		MCP, $\theta > 0$;
		for SCAD, $\theta > 2$;
		and for 
		TNN, $\theta$ is a positive integer.}
	\label{tab:regdef}
	\vspace{-10px}
	\small
	\begin{tabular}{c | c } \toprule
		& $\kappa(\sigma_i( \vect{X} ))$
		\\ \midrule
		nuclear norm~\citep{candes2009exact} & $\sigma_i(\vect{X})$  \\ \midrule
		capped-$\ell_1$~\citep{zhang2010analysis} & $\min(\sigma_i( \vect{X} ), \theta )$  \\ \midrule
		LSP~\citep{candes2008enhancing} & $\log (\frac{\sigma_i( \vect{X} )}{\theta}  + 1)$  \\ \midrule
		TNN~\citep{hu2013fast}
		& $\begin{cases}
		\sigma_i( \vect{X} ) & \text{if}\; i  > \theta \\
		0         & \text{otherwise}
		\end{cases}$
		\\ \midrule
		SCAD~\citep{fan2001variable} &
		$\begin{cases}
		\sigma_i( \vect{X} )                                                                      & 
		\text{if}\; \sigma_i( \vect{X} ) \le 1                       \\
		\frac{(2 \theta \sigma_i( \vect{X} ) -\sigma_i( \vect{X} )^2 - 1)}{2(\theta - 1)}       & 
		\text{if}\;  1  <  \sigma_i( \vect{X} )  \le  \theta \\
		\frac{(\theta + 1) ^2}{2}                                                                   &  \text{otherwise}
		\end{cases}$
		\\ \midrule
		MCP~\citep{zhang2010nearly} &
		$\begin{cases}
		\sigma_i( \vect{X} ) - \frac{\alpha^2}{2 \theta} &
		\text{if}\; \sigma_i( \vect{X} )  \le  \theta \\
		\frac{\theta ^2}{2}             & \text{otherwise}
		\end{cases}$
		\\ \bottomrule
	\end{tabular}
\end{table}

\subsubsection{Matrix Factorization}
\label{sec:matfact}

Note that the aforementioned regularizers require access to individual singular values.
As computing the singular values of a $m\times n$ matrix (with
$m\ge n$) via SVD takes $O(mn^2)$
time, this can be costly for a large matrix.  
Even when rank-$k$ truncated SVD is used, the computation cost is  still
$O(mnk)$.
To reduce the computational burden, factored low-rank regularizers are
proposed~\citep{srebro2005maximum,mazumder2010spectral}.
Specifically,
equation \eqref{eq:lrmat} is rewritten into a factored form as   
\begin{equation}
\label{eq:mc_fact}
\min\nolimits_{\bm{W},\bm{H}}
f(\bm{W}\bm{H}^\top) + \lambda \cdot h(\bm{W},\bm{H}),
\end{equation}
where $\vect{X}$ is factorized as
	$\bm{W}\bm{H}^\top$ 
with
$\bm{W} \in \R^{m \times k}$ and $\bm{H} \in \R^{n \times k}$, 
$h$ is a regularizer on
$\vect{W}$ and  $\vect{H}$,
and 
$\lambda \geq 0$ is a hyperparameter. 
When $\lambda = 0$, this reduces to matrix factorization~\citep{vandereycken2013low,boumal2015low,tu2016low,wang2017unified}.
After 
factorization,
gradient descent or alternative minimization
are usually used for optimization.
When 
certain conditions (such as
proper initialization, 
restricted strong convexity~(RSC)~\citep{negahban2012restricted},
or restricted isometry property~(RIP)~\citep{candes2005decoding}) are met,
statistical guarantees can be obtained~\citep{zheng2015convergent,tu2016low,wang2017unified}.

Note that
in Table~\ref{tab:regdef},
the nuclear norm is the only regularizer $r(\vect{X})$ 
that has an equivalent factored form $h(\bm{W},\bm{H})$. 
For a matrix $\vect{X}$ with ground-truth rank no larger than $k$, 
it has been shown that  the
nuclear norm can be
rewritten in a factored form 
as 
\citep{srebro2005maximum} 
\begin{align*}
\NM{\vect{X}}{*} 
= 
\min\nolimits_{\vect{X} = \bm{W}\bm{H}^\top} \frac{1}{2} 
\left( \NM{\bm{W}}{F}^2 + \NM{\bm{H}}{F}^2 \right).
\end{align*}
However,
the other nonconvex regularizers
need to penalize individual singular values,  and so 
cannot be written in factored form. 


\subsection{Low-Rank Tensor Learning} 
\label{sec:ten:reg}



A $M$-order tensor
$\ten{X}$
has rank one if
it can be written as the outer product of $M$ vectors,
i.e.,
$\ten{X} = \bm{x}^1 \circ \bm{x}^2 \circ \dots\circ \bm{x}^M$
where $\circ$ denotes the outer product
(i.e., 
$\ten{X}_{i_1, \ldots, i_M} = \bm{x}^1_{i_1} \cdot \bm{x}^2_{i_2} \cdot
\dots\cdot \bm{x}^M_{i_M}$).
In general,
the rank of a tensor $\ten{X}$
is the smallest number of rank-one tensors
that generate $\ten{X}$ as their sum~\citep{Kolda2009}.


To impose a low-rank structure on  tensors,
factorization methods
(such as the Tucker / CP~\citep{Kolda2009,hong2020generalized}, 
tensor-train~\citep{oseledets2011tensor}
and tensor ring~\citep{zhao2016tensor} decompositions)
have been used for
low-rank tensor
learning.
These methods
assume that the tensor can be decomposed into low-rank factor matrices
\citep{Kolda2009}, which are 
then learned by alternating least squares 
or coordinate descent~\citep{acar2011scalable,xu2013tmac,balazevic2019tucker}. 
\cite{kressner2014low}~proposed  to utilize the Riemannian
structure on the manifold of tensors with fixed multilinear rank, and then perform nonlinear conjugate gradient descent.
It can be speeded up by preconditioning 
\citep{kasai2016low}.
However, these models suffer from the problem of local minimum, and
have no theoretical 
guarantee on the convergence rate.
Moreover, its per-iteration cost depends on the product of all the mode 
ranks, and so can be expensive.
Thus,
they may lead to worse approximations and inferior performance
\citep{tomioka2011statistical,liu2013tensor,guo2017efficient}.

Due to the above limitations,
the nuclear norm, which has been commonly used in low-rank matrix learning,
has been extended to the learning of low-rank tensors
\citep{tomioka2010estimation,signoretto2011tensor,gu2014robust,yuan2016tensor,zhang2017exact}.
The most commonly used low-rank tensor regularizer
is the following (convex) overlapped nuclear norm:

\begin{definition} \label{def:overlap}
\citep{tomioka2010estimation}
\label{def:tensor_norm}
The overlapped nuclear norm
of a
$M$-order tensor $\ten{X}$ is
$\NM{\ten{X}}{\text{overlap}}$
$=$
$\sum_{i=1}^M \lambda_i \NM{\ten{X}_{\ip{i}}}{*}$,
where $\{\lambda_i  \ge  0\}$ are hyperparameters.
\end{definition}


Note that
the nuclear norm is a convex envelop
of the matrix rank~\citep{candes2009exact}.
Similarly,
$\NM{\ten{X}}{\text{overlap}}$ 
is a convex envelop of the tensor rank~\citep{tomioka2010estimation,tomioka2011statistical}.
Empirically,
$\NM{\ten{X}}{\text{overlap}}$ has better 
performance
than 
the other nuclear norm variants 
in many tensor applications
such as image inpainting~\citep{liu2013tensor} and
multi-relational link prediction~\citep{guo2017efficient}.
On the theoretical side,
let $\ten{X}^*$ be the ground-truth tensor, and $\ten{X}$ be the tensor obtained by solving the overlapped nuclear norm regularized problem.
The statistical error
between 
$\ten{X}^*$ and $\ten{X}$ 
has been established 
in tensor decomposition~\citep{tomioka2011statistical}
and robust tensor decomposition problems~\citep{gu2014robust}.
Specifically, 
under the restricted strong convexity (RSC) condition~\citep{negahban2012restricted},
$\| \ten{X}^* - \ten{X} \|_F$
can be bounded by
$O(\sigma \sum\nolimits_{i = 1}^M \sqrt{k_i})$,
where
$\sigma$ is the noise level 
and $k_i$ is the rank of $\ten{X}^*_{\ip{i}}$.
Furthermore,
we can see that 
when $\sigma = 0$ (no noise),
exactly recovery can be guaranteed.

\subsection{Proximal Average (PA) Algorithm}
\label{sec:pa}

Let $\mathcal{H}$ be
a Hilbert space of
$\ten{X}$, which can be a scalar/vector/matrix/tensor variable.
Consider the following optimization problem:
\begin{align}
\min\nolimits_{\ten{X} \in \mathcal{H}} 
F(\ten{X})
=
f(\ten{X}) 
+ \sum\nolimits_{i = 1}^K \lambda_i \, g_i(\ten{X}),
\label{eq:compopt}
\end{align}
where $f$ is the loss
and each $g_i$ is a regularizer with hyper-parameter $\{ \lambda_i \}$.
Often, 
$g(\ten{X}) = \sum_{i = 1}^K$ 
$\lambda_i$ 
$g_i(\ten{X})$ is complicated,
and 
its proximal step
does not have a simple solution.
Hence, the proximal algorithm cannot be efficiently used.
However, it is possible that the proximal step for 
each individual $g_i$ can be easy obtained.
For example,
let $g_1(\vect{X}) = \NM{\vect{X}}{1}$
and $g_2(\vect{X}) = \NM{\vect{X}}{*}$.
The closed-form solution on the proximal step for $g_1$ (resp. $g_2$)
is 
given by the soft-thresholding operator~\citep{efron2004least} 
(resp. singular value thresholding operator~\citep{cai2010singular}).
However,
the closed-form solution
does not exist for the proximal step with $g_1 + g_2$.


In this case, the
proximal average (PA) 
algorithm 
\citep{bauschke2008proximal}  can be used instead. 
The PA algorithm generates $\ten{X}_{t}$'s as
\begin{align}
\ten{X}_{t}
& = \sum\nolimits_{i = 1}^K
\ten{Y}_{t}^i,
\label{eq:proxavg1}
\\
\ten{Z}_t 
& =\ten{X}_t - \frac{1}{\tau} \nabla f(\ten{X}_t), 
\label{eq:proxavg2} 
\\
\ten{Y}_{t + 1}^i
& = \Px{ \frac{\lambda_i g_i}{\tau} }{\ten{Z}_t},
\quad
i = 1, \dots, K.
\label{eq:proxavg3} 
\end{align}
As each individual proximal step in \eqref{eq:proxavg3} is easy, 
the
PA algorithm can be significantly faster than the proximal algorithm~\citep{yu2013better,zhong2014gradient,yu2015minimizing,shen2017adaptive}.
When both $f$ and $g$ are convex, the
PA algorithm converges to an optimal solution of \eqref{eq:compopt}
with a proper choice of stepsize $\tau$~\citep{yu2013better,shen2017adaptive}.
Recently, the
PA algorithm has also been extended 
to nonconvex $f$ and $g_i$'s 
\citep{zhong2014gradient,yu2015minimizing}.
Moreover, $\tau$ can be adaptively changed to obtain an empirically faster convergence~\citep{shen2017adaptive}.


\section{Proposed Algorithm}
\label{sec:proalg}

Analogous to 
the low-rank matrix completion problem in
\eqref{eq:lrmat}, we consider
the following low-rank tensor completion problem with a nonconvex extension of
the overlapped nuclear norm:
\begin{align}
\label{eq:pro}
\min_{\ten{X}}
\sum\nolimits_{\mathbf{\Omega}_{i_1 ... i_M} = 1}
\ell\left( \ten{X}_{i_1 \dots i_M}, \ten{O}_{i_1 \dots i_M} \right) 
+ \sum\nolimits_{i = 1}^D \lambda_i \, \phi( \ten{X}_{\ip{i}} ).
\end{align}
Here,
the
observed elements are in $\ten{O}_{i_1 ... i_M}$,
$\ten{X}$ is the tensor to be recovered,
$\ell(\cdot,\cdot)$ is a smooth loss function, and
$\phi$ 
is a nonconvex regularizer in the form in 
(\ref{eq:lrphi}).
Unlike 
the overlapped nuclear norm in Definition~\ref{def:overlap},
here we only sum over
$D  \le  M$ modes.
This is useful 
when some modes are already small
(e.g., the number of bands in color images), and
so do not need to be low-rank regularized.
When $D=M$ and $\kappa(\alpha) = \alpha$
in (\ref{eq:lrphi}), problem
\eqref{eq:pro} reduces to  
(convex) overlapped nuclear norm regularization. 
When $D  =  1$ and $\ell$
is the square loss, 
\eqref{eq:pro} reduces to the matrix completion problem:
\begin{align*}
\min\nolimits_{\vect{X} \in \R^{I^1 \times (\frac{I^{\pi}}{I^1}) } } 
\frac{1}{2} \NM{ \SO{\vect{X} - \ten{O}_{\ip{1}}} }{F}^2 + \lambda_1 \phi( \vect{X} ),
\end{align*}
which can be solved by the proximal algorithm as in~\citep{lu2016nonconvex,yao2018large}.  
In the sequel, we only consider $D  >1$.

\subsection{Issues with Existing Solvers}
\label{sec:exsolvers}

First, consider the case 
where $\kappa$ 
in (\ref{eq:lrphi})
is convex.
While $D$ may not be equal to  $M$, 
it can be easily shown that 
existing optimization solvers  in
\citep{tomioka2010estimation,boyd2011distributed,liu2013tensor}
can still be used.
However,
when $\kappa$ is nonconvex,
the fast low-rank tensor completion (FaLRTC) solver~\citep{liu2013tensor}
cannot be applied,
as the dual of \eqref{eq:pro}
cannot be 
derived.
\citeauthor{tomioka2010estimation} 
(\citeyear{tomioka2010estimation})
used the alternating direction of multiple multipliers (ADMM)~\citep{boyd2011distributed} solver
for the overlapped nuclear norm.
Recently, it
is used in
\citep{Chen2020ANL}
to solve
a special case of \eqref{eq:pro},
in which $\phi$ is the
truncated nuclear norm (TNN) regularizer 
(see Table~\ref{tab:regdef}).
Specifically,
\eqref{eq:pro} 
is first reformulated
as
\begin{align*}
\min\nolimits_{\ten{X}}
\sum\nolimits_{\mathbf{\Omega}_{i_1 ... i_M} = 1}
\ell\left( \ten{X}_{i_1 \dots i_M}, \ten{O}_{i_1 \dots i_M} \right) 
+ \sum\nolimits_{i = 1}^D \lambda_i \, \phi( \vect{X}_i )
\;\text{s.t.}\;
\vect{X}_i = \ten{X}_{\ip{i}},
\;
i = 1, \dots, D.
\end{align*}
Using ADMM,
it 
then
generates iterates as
\begin{align}
\ten{X}_{t + 1}
& = \arg\min\nolimits_{\ten{X}} 
\!\!\!\!\!\!
\sum_{\mathbf{\Omega}_{i_1 ... i_M} = 1}
\!\!\!\!\!\!
\ell\left( \ten{X}_{i_1 \dots i_M}, \ten{O}_{i_1 \dots i_M} \right) 
\! + \! \frac{\zeta}{2} \sum\nolimits_{i = 1}^D 
\big\| (\vect{X}_i)_t \! - \! \ten{X}_{\ip{i}} \! + \! \frac{1}{\zeta} (\bm{M}_i)_{t} \big\|_F^2,
\label{eq:admm1}
\\
(\vect{X}_i)_{t + 1}
& = \Px{\frac{ \lambda_i }{ \zeta }}{(\vect{X}_i)_{t + 1} + \frac{1}{\zeta} (\bm{M}_i)_{t}},
\quad
i = 1, \dots, D,
\label{eq:admm2}
\\
(\bm{M}_i)_{t + 1}
& = (\bm{M}_i)_{t} + \frac{1}{\zeta}
\left( (\vect{X}_i)_{t + 1} - (\ten{X}_{\ip{i}})_{t + 1} \right),
\quad
i = 1, \dots, D,
\label{eq:admm3}
\end{align}
where $\bm{M}_i$'s are the dual variables, and $\zeta > 0$.
The proximal step in \eqref{eq:admm2} can be computed from Lemma~\ref{lem:gsvt}.
Convergence of this ADMM procedure is guaranteed in~\citep{hong2016convergence,wang2019global}.
However, 
it does not utilize the sparsity induced by $\vect{\Omega}$.
Moreover, 
as the tensor $\ten{X}$ needs to be folded and unfolded repeatedly,
the iterative procedure is expensive, taking
$O(I^{\pi})$ space and $O( I^{\pi} \sum_{i = 1}^D I^i )$ time per iteration.

On the other hand, the proximal algorithm 
(Section~\ref{sec:rel:nonreg}) cannot be easily used, as
the proximal step for $\sum_{i = 1}^D \lambda_i \phi( \ten{X}_{\ip{i}} )$ is 
not simple in general.


\subsection{Structure-aware Proximal Average Iterations} 
\label{ssec:proxavg}

Note that 
$\phi$ in \eqref{eq:lrphi} 
admits a difference-of-convex decomposition~\citep{hartman1959functions,tao2005dc},
i.e., 
$\phi$ can be decomposed as $\phi = \phi_1 - \phi_2$ where $\phi_1$ and $\phi_2$ are convex~\citep{yao2018large}.
The proximal average (PA) algorithm  (Section~\ref{sec:pa})
has been 
recently
extended for nonconvex $f$ and $g_i$'s,
where each $g_i$
admits a difference-of-convex decomposition
\citep{zhong2014gradient}.
Hence,
as \eqref{eq:pro} is in the form in \eqref{eq:compopt},
one can
generate
the PA 
iterates 
as:
\begin{eqnarray}
\ten{X}_{t} & = & \sum\nolimits_{i = 1}^D \ten{Y}^i_{t},
\label{eq:tsplr_1}
\\
\ten{Z}_t
& = &
\ten{X}_t - \frac{1}{\tau} \varpi(\ten{X}_t),
\label{eq:tsplr_2}
\\
\ten{Y}^i_{t + 1} & = & \big[ \Px{\frac{\lambda_i \phi}{\tau} }{
[\ten{Z}_t]_{\ip{i}} } \big]^{\ip{i}},  
\quad
i=1,\dots,D. \label{eq:tsplr_3}
\end{eqnarray}
where $\xi(\ten{X}_t)$ is a sparse tensor with
\begin{align}
\label{eq:l_gradient}
\big[ \varpi(\ten{X}_t) \big]_{i_1 \dots i_M}
=
\begin{cases}
0 
& (i_1, \dots, i_M) \not\in \mathbf{\Omega}
\\
\ell'\left( [\ten{X}_t]_{i_1 \dots i_M}, \ten{O}_{i_1 \dots i_M} \right)  
& (i_1, \dots, i_M) \in \mathbf{\Omega}
\end{cases}.
\end{align}
In \eqref{eq:tsplr_3},
each individual proximal step 
can be computed 
using Lemma~\ref{lem:gsvt}.
However, tensor folding and unfolding are still required.
A direct application of the PA algorithm
is as expensive as using ADMM (see Table~\ref{tab:overview}).

In the following, 
we show 
that by utilizing the ``sparse plus low-rank'' structure,
the PA iterations can be computed efficiently  without tensor
folding/unfolding.
In the earlier conference version~\citep{yao2019efficient},
we only considered the case $M = 3$.
Here, 
we extend this to $M \ge 3$ by noting 
that
the coordinate format of sparse tensors can naturally handle tensors with arbitrary orders (Section~\ref{ssec:effxz})
and
the proximal step can be performed without tensor folding/unfolding (Section~\ref{ssec:effy}).

\subsubsection{Efficient Computations of $\mathscr{ X }_{t}$ and $\ten{Z}_t$
in (\ref{eq:tsplr_1}),
(\ref{eq:tsplr_2})}
\label{ssec:effxz}

First, rewrite \eqref{eq:tsplr_3} as
$\ten{Y}^i_{t + 1} = 
[\vect{Y}^i_{t + 1}]^{\ip{i}}$,   where
$\vect{Y}^i_{t + 1} = \Px{ \frac{\lambda_i \phi }{ \tau } }{ \vect{Z}^i_t }$
and 
$\vect{Z}^i_t = \left[ \ten{Z}_t \right]_{\ip{i}}$.
Recall that $\vect{Y}_t^i$ obtained from the proximal step
is low-rank. 
Let its rank be $k^i_t$. 
Hence,
$\vect{Y}_t^i$ can be represented as
$\bm{U}_{t}^i  (\bm{V}_{t}^i)^{\top}$, 
where $\bm{U}_{t}^i \in \R^{I^i \times k_t^i}$
and $\bm{V}_{t}^i \in \R^{(\frac{I^{\pi}}{I^i}) \times k_t^i}$.
In each PA iteration, we avoid constructing the dense $\ten{Y}_t^i$ by storing 
the above low-rank factorized form of $\vect{Y}_t^i$ instead. Similarly,
we also avoid constructing $\ten{X}_t$ in \eqref{eq:tsplr_1} by storing it implicitly as 
\begin{align}
\ten{X}_t = \sum\nolimits_{i = 1}^D \big( \bm{U}_{t}^i ( \bm{V}_{t}^i )^{\top} \big)^{\ip{i}}.
\label{eq:lowrf}
\end{align}
$\ten{Z}_t$ in \eqref{eq:tsplr_2} can then be rewritten as
\begin{align}
\ten{Z}_t = \sum\nolimits_{i = 1}^D \left( \bm{U}_{t}^i  (\bm{V}_{t}^i)^{\top} \right)^{\ip{i}} - \frac{1}{\tau} \xi( \ten{X}_t ).
\label{eq:zt}
\end{align}
The sparse tensor
$\xi(\ten{X}_t)$ 
in (\ref{eq:l_gradient})
can be 
constructed
efficiently 
by using the coordinate format\footnote{For a sparse  $M$-order tensor,
its $p$th nonzero element 
is	represented
in the coordinate format
as $(i_p^1, \dots, i_p^M, v_p)$,
where $i_p^1, \dots, i_p^M$ are indices on each mode and $v_p$ is the value.}
\citep{bader2007efficient}. 
Using \eqref{eq:lowrf},
each 
$[ \xi(\ten{X}_t) ]_{i_1 \dots i_M}$
can be computed by finding the corresponding rows in 
$\{\bm{U}_{t}^i, \bm{V}_{t}^i\}$ as shown in 
Algorithm~\ref{alg:compp3}.
This takes $O( \sum_{i = 1}^D k_t^i )$ time.

\begin{algorithm}[ht]
\caption{Computing the $p$th element $v_p$ with index $(i_p^1 \dots i_p^M)$ in $\xi(\ten{X}_t)$.}
\small
\begin{algorithmic}[1]
	\REQUIRE factorizations $\{\bm{U}_{t}^1  (\bm{V}_{t}^1)^{\top},\dots, \bm{U}_{t}^D  (\bm{V}_{t}^D)^{\top}\}$ of $\vect{Y}_t^1$, $\dots$, $\vect{Y}_t^D$,
	and observed elements in $\SO{\ten{O}}$;
	\STATE $v_p \leftarrow 0$;
	\FOR{$d = 1, \dots, D$}
		\STATE $\bm{u}^{\top} \leftarrow$ $i^d_p$th row of $\bm{U}_t^d$;
		\STATE $\bm{v}^{\top} \leftarrow$ $(\sum_{k \neq d}^M i^k_p I^{\pi} + i^d_p)$th row of $\bm{V}_t^d$;
		\STATE $v_p \leftarrow v_p + \bm{u}^{\top} \bm{v}$;
	\ENDFOR
	\STATE $v_p \leftarrow \ell'( v_p,\ten{O}_{i_p^1 \dots i_p^M} )$;
	\RETURN $v_p$. 
\end{algorithmic}
\label{alg:compp3}
\end{algorithm}

\subsubsection{Efficient Computation of $\ten{Y}^i_{t + 1}$
in (\ref{eq:tsplr_3})}
\label{ssec:effy}

Recall that the proximal step   
in (\ref{eq:tsplr_3})
requires SVD,
which involves matrix multiplications in the form
$\bm{a}^{\top} ( \ten{Z}_t )_{\ip{i}}$ (where $\bm{a}  \in  \R^{I^i}$)
and
$( \ten{Z}_t )_{\ip{i}} \bm{b}$ (where
$\bm{b}  \in  \R^{\frac{I^{\pi}}{I^i}}$).
Using
the ``sparse plus low-rank'' structure in
(\ref{eq:zt}), these can be computed as 
\begin{align}
\bm{a}^{\top}  ( \ten{Z}_t )_{\ip{i}} 
 =   
\big( \bm{a}^{\top}  \bm{U}_{t}^i \big) (\bm{V}_{t}^i )^{\top}
 + 
\sum\nolimits_{j \neq i} 
\bm{a}^{\top} 
\big[ ( \bm{U}_{t}^j (\bm{V}_{t}^j)^{\top} )^{\ip{j}} \big]_{\ip{i}}
- \frac{1}{\tau} \bm{a}^{\top} [ \xi(\ten{X}_t) ]_{\ip{i}},
\label{eq:tenztu}
\end{align}
and
\begin{align}
( \ten{Z}_t )_{\ip{i}} \bm{b} 
= \bm{U}_{t}^i \big[  (\bm{V}_{t}^i)^{\top} \bm{b} \big] 
+ \sum\nolimits_{j \neq i}
\big[ ( \bm{U}_{t}^j (\bm{V}_{t}^j)^{\top} )^{\ip{j}} \big]_{\ip{i}}
\bm{b}
- \frac{1}{\tau} [ \xi(\ten{X}_t) ]_{\ip{i}} \bm{b}.
\label{eq:tenztv}
\end{align}
The first terms 
in \eqref{eq:tenztu} and \eqref{eq:tenztv}
can be easily computed
in $O((\frac{I^{\pi}}{I^i}  +  I^i) k_t^i)$ space and time.
The last terms 
($\bm{a}^{\top} \left[ \xi(\ten{X}_t) \right]_{\ip{i}}$
and $\left[ \xi(\ten{X}_t) \right]_{\ip{i}} \bm{b}$)
are sparse, and
can be computed in $O( \| \vect{\Omega} \|_1 )$ space and time by
using sparse tensor packages
such as the Tensor Toolbox~\citep{bader2007efficient}.
However, a direct computation of  the 
$\bm{a}^{\top} [ ( \bm{U}_t^j (\bm{V}_t^j)^{\top} )^{\ip{j}} ]_{\ip{i}}$
and 
$[  ( \bm{U}_t^j (\bm{V}_t^j)^{\top} )^{\ip{j}}  ]_{\ip{i}} \bm{b}$
terms
involves tensor folding/unfolding, and is expensive.
By examining how elements are ordered by folding/unfolding,
the following shows that
these multiplications
can indeed be computed efficiently without 
explicit folding/unfolding.

\begin{proposition} \label{pr:mulv}
Let $\bm{U} \in \R^{I^{j} \times k}$, $\bm{V} \in \R^{(\frac{I^{\pi}}{I^j}) \times k}$,
and $\bm{u}_p$ (resp.$\bm{v}_p$) be the $p$th column of $\bm{U}$ (resp.$\bm{V}$).
For any $i \neq j$,
$\bm{a} \in \R^{I^{i}}$
and $\bm{b} \in \R^{\frac{I^{\pi}}{I^i}}$,
we have
\begin{align}
\bm{a}^{\top}
\big[  
( \bm{U} \bm{V}^{\top} )^{\ip{j}} 
\big]_{\ip{i}}
& = 
\sum\nolimits_{p = 1}^k
\bm{u}_p^{\top}
\otimes
\big[ 
\bm{a}^{\top} \text{mat}(\bm{v}_p; I^i, \bar{I}^{ij}) 
\big] 
,
\label{eq:mulv1}
\\
\big[  
( \bm{U} \bm{V}^{\top} )^{\ip{j}} 
\big]_{\ip{i}}
\bm{b}
& = 
\sum\nolimits_{p = 1}^k
\text{mat}\big( \bm{v}_p; I^i, \bar{I}^{ij} \big) 
\text{mat}\big( \bm{b}; \bar{I}^{ij}, I^j \big) 
\bm{u}_p
,
\label{eq:mulv2}
\end{align}
where $\otimes$ is the Kronecker product,
$\bar{I}^{ij} = I^{\pi} / (I^i I^j)$,
and
$\text{mat}(\bm{x}; a, b)$ reshapes a vector 
$\bm{x}$
of length $ab$ into a matrix of size $a  \times b$.
\end{proposition}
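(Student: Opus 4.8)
The plan is to reduce everything to a single rank-one summand and then match entries directly through the fold/unfold definitions. Both identities are linear in the matrix $\bm{U}\bm{V}^{\top}=\sum_{p=1}^{k}\bm{u}_p\bm{v}_p^{\top}$: on the left-hand sides, folding, unfolding and multiplication by $\bm{a}$ (or $\bm{b}$) are all linear; on the right-hand sides, each summand $\bm{u}_p^{\top}\otimes[\bm{a}^{\top}\text{mat}(\bm{v}_p;I^i,\bar{I}^{ij})]$ (resp. $\text{mat}(\bm{v}_p;I^i,\bar{I}^{ij})\,\text{mat}(\bm{b};\bar{I}^{ij},I^j)\,\bm{u}_p$) is bilinear in $(\bm{u}_p,\bm{v}_p)$, since the Kronecker product, the reshape $\text{mat}(\cdot;\cdot,\cdot)$ and the contraction with $\bm{a}^{\top}$ are each linear in their vector argument; hence it descends to a well-defined linear function of the outer product $\bm{u}_p\bm{v}_p^{\top}$ and is independent of the chosen factorization. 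So it suffices to prove \eqref{eq:mulv1} and \eqref{eq:mulv2} when $k=1$, $\bm{U}=\bm{u}\in\R^{I^j}$, $\bm{V}=\bm{v}\in\R^{I^{\pi}/I^j}$.

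For the rank-one case, set $\ten{T}=(\bm{u}\bm{v}^{\top})^{\ip{j}}$. By the fold convention, $\ten{T}_{i_1\dots i_M}=\bm{u}_{i_j}\,\bm{v}_{\ell}$ where $\ell=1+\sum_{m\ne j}(i_m-1)\prod_{n=1,\,n\ne j}^{m-1}I^n$ is the mode-$j$ column index, which depends on $(i_m)_{m\ne j}$ but not on $i_j$; thus $\ten{T}$ is the mode-$j$ outer product of $\bm{u}$ with the $(M-1)$-way array obtained by folding $\bm{v}$ over the remaining modes. Applying the unfold convention to $\ten{T}$ along a mode $i\ne j$, the $(i_i,j')$ entry of $[\ten{T}]_{\ip{i}}$ equals $\bm{u}_{i_j}\bm{v}_{\ell}$, with $j'=1+\sum_{m\ne i}(i_m-1)\prod_{n=1,\,n\ne i}^{m-1}I^n$ encoding $(i_m)_{m\ne i}$ (which contains $i_j$ but not $i_i$), while $\ell$ contains $i_i$ but not $i_j$. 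For \eqref{eq:mulv1} we then get $(\bm{a}^{\top}[\ten{T}]_{\ip{i}})_{j'}=\sum_{i_i}a_{i_i}\bm{u}_{i_j}\bm{v}_{\ell}=\bm{u}_{i_j}\big(\sum_{i_i}a_{i_i}\bm{v}_{\ell}\big)$, pulling out $\bm{u}_{i_j}$ since it does not involve the summation index $i_i$; for \eqref{eq:mulv2} one instead contracts $\bm{b}$ against the columns of $[\ten{T}]_{\ip{i}}$.

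It remains to recognise the Kronecker/reshape structure. Split the multi-index carried by $j'$ into the mode-$j$ coordinate $i_j$ and the block $(i_m)_{m\notin\{i,j\}}$ of total size $\bar{I}^{ij}=I^{\pi}/(I^iI^j)$, and split the multi-index carried by $\ell$ into $i_i$ and the same block. Comparing the place values in the definitions of $j'$ and $\ell$ shows, after accounting for the ordering conventions, that (i) as $(i_j,\text{block})$ ranges over $[I^j]\times[\bar{I}^{ij}]$, the index $j'$ sweeps $[I^{\pi}/I^i]$ in exactly the order produced by $\bm{u}^{\top}\otimes(\cdot)$, placing the factor $\bm{u}_{i_j}$ in front, and (ii) for a fixed block the map $(i_i,\text{block})\mapsto\ell$ is precisely the reshape underlying $\text{mat}(\bm{v};I^i,\bar{I}^{ij})$, so that $\sum_{i_i}a_{i_i}\bm{v}_{\ell}$ is the entry of $\bm{a}^{\top}\text{mat}(\bm{v};I^i,\bar{I}^{ij})$ indexed by that block. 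Assembling (i) and (ii) yields \eqref{eq:mulv1}, and restoring the sum over $p$ gives the stated form. For \eqref{eq:mulv2}, the column of $[\ten{T}]_{\ip{i}}$ at $(i_j,\text{block})$ is $\bm{u}_{i_j}$ times the corresponding column of $\text{mat}(\bm{v};I^i,\bar{I}^{ij})$; contracting with $\bm{b}$, whose coordinates regroup into $\text{mat}(\bm{b};\bar{I}^{ij},I^j)$, and summing the blocked pieces produces $\sum_p\text{mat}(\bm{v}_p;I^i,\bar{I}^{ij})\,\text{mat}(\bm{b};\bar{I}^{ij},I^j)\,\bm{u}_p$.

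The first two paragraphs are essentially formal; the real work is the mixed-radix bookkeeping in the last step. The place value of mode $j$ inside $j'$, namely $\prod_{n=1,\,n\ne i}^{j-1}I^n$, and that of mode $i$ inside $\ell$, namely $\prod_{n=1,\,n\ne j}^{i-1}I^n$, shift depending on whether $i<j$ or $j<i$ — these cases multiply or divide the relevant partial product by $I^i$ or $I^j$ — and one must verify that these shifts are exactly absorbed by the ordering conventions built into $\text{mat}(\cdot;\cdot,\cdot)$ and into the Kronecker product (equivalently, that the column permutation relating the mode-$i$ and mode-$j$ unfoldings is the one implicit in the right-hand sides). Writing each $\prod_{n=1,\,n\ne i}^{m-1}I^n$ as $\prod_{n=1}^{m-1}I^n$ divided by the appropriate mode size keeps this tractable, and, with the conventions pinned down, both identities follow by direct substitution.
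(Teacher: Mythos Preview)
Your approach is essentially the same as the paper's: both reduce to the rank-one case by linearity and then identify the block/reshape structure of the mode-$i$ unfolding of a mode-$j$ rank-one tensor. The paper is more concrete---it simply writes $\ten{X}_{\ip{j}}=[\bm{u}\bm{v}_1^{\top},\dots,\bm{u}\bm{v}_{\bar I^{ij}}^{\top}]$ with $\bm{v}=[\bm{v}_1;\dots;\bm{v}_{\bar I^{ij}}]$, $\bm{v}_q\in\R^{I^i}$, asserts that the mode-$i$ unfolding is $[\bm{v}_1\bm{u}^{\top},\dots,\bm{v}_{\bar I^{ij}}\bm{u}^{\top}]$, and reads off the Kronecker/mat forms directly---whereas you track the mixed-radix indices $j'$ and $\ell$ explicitly. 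Your version is more careful in flagging that the $i<j$ versus $j<i$ cases shift the place values and must be checked separately (the paper's block picture glosses over this), but in exchange your last paragraph stops just short of actually carrying out that case check, leaving it at ``with the conventions pinned down, both identities follow by direct substitution.'' That is the only soft spot: the strategy is right, and filling in the two cases is routine, but as written it is closer to a detailed plan than a finished verification of the index correspondence.
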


In the earlier conference version~\citep{yao2019efficient}, Proposition~3.2 there
(not the proposed algorithm) limits the usage to $M = 3$.
Without 
Proposition~\ref{pr:mulv},
the
algorithm can suffer from expensive computation cost, and  thus has no efficiency
advantage over the simple use of the PA algorithm.
	Specifically,
	when mapping the vector $\bm{v}_p$ back to a matrix,
	we do not need to take a special treatment on the size of matrix.
	The reason is that,
	$\bm{v}_p$ has $I_i I_j$ elements
	and we just need to map it back to a matrix of size $I_i \times I_j$.   
	Thus, we do not have parameters 
	for \textit{mat} operation in the conference version.
	However,
	when $M > 3$,
	$\bm{v}_p$ has $I^{\pi}/I^i$ elements,
	we need to check 
	whether ideas used in the conference version 
	can be done in a similar way.
	As a result,
	we have two more parameters
	for the \textit{mat} operation here,
	which customize reshaping matrix to a proper size.

\begin{remark}
For a second-order tensor (i.e., matrix case with $M = 2$),
Proposition~\ref{pr:mulv} becomes
\begin{align*}
\bm{a}^{\top} \big[ ( \bm{U} \bm{V}^{\top} )^{\ip{1}} \big]_{\ip{2}} =
\sum\nolimits_{p = 1}^k (\bm{a}^{\top} \bm{v}_p) \bm{u}_p^{\top}
\quad\text{and}\quad
\big[ (\bm{U} \bm{V}^{\top} )^{\ip{1}} \big]_{\ip{2}} \bm{b} 
= \sum\nolimits_{p = 1}^k
\bm{v}_p (\bm{b}^{\top} \bm{u}_p).
\end{align*}
With the usual square loss (i.e., $\sum\nolimits_{\mathbf{\Omega}_{i_1 ... i_M} = 1}
\ell\left( \ten{X}_{i_1 \dots i_M}, \ten{O}_{i_1 \dots i_M} \right)$ in  
(\ref{eq:pro}) equals
$\frac{1}{2} \NM{ \SO{ \ten{X} - \ten{O} } }{F}^2$),
\eqref{eq:tenztv} then reduces to \eqref{eq:zt1v}
when $D = 1$.
When $D = 2$,
$\sum\nolimits_{i = 1}^D \lambda_i \, \phi( \ten{X}_{\ip{i}} )$ 
in (\ref{eq:pro}) becomes
	$\lambda_1 \phi( \vect{X} ) + \lambda_2 \phi( \vect{X}^{\top})=
	(\lambda_1 + \lambda_2) \phi( \vect{X})$, and is the same as the corresponding
	regularizer 
when $D = 1$.
Hence, the reduction from \eqref{eq:tenztv} to \eqref{eq:zt1v} still holds.
\end{remark}

\subsubsection{Time and Space Complexities}
\label{sec:compl}

A direct computation of
$\bm{a}^{\top} [ ( \bm{U}_t^j (\bm{V}_t^j)^{\top} )^{\ip{j}} ]_{\ip{i}}$ 
takes $O(k_t^i I^{\pi})$ time and $O(I^{\pi})$ space.
By using the computation in Proposition~\ref{pr:mulv},
these 
are reduced 
to
$O( (\frac{ 1 }{ I^i } + \frac{ 1 }{ I^j }) k_t^i I^{\pi} )$ time
and  
$O( ( \frac{ 1 }{ I^j } + \frac{ 1 }{ I^i } ) k_t^i I^{\pi} )$
space.
This is also the case for
$[  ( \bm{U}_t^j (\bm{V}_t^j)^{\top} )^{\ip{j}}  ]_{\ip{i}} \bm{b}$.
Details are in the following.

\begin{table}[H]
	\centering
	\small
	\begin{tabular}{c | c | c | c}
		\toprule
		                  & operation                                 & time                                                           & space                                                              \\ \midrule
		    reshaping     & $\text{mat}(\bm{v}_p; I^j, \bar{I}^{ij})$ & $O(\frac{ I^{\pi} }{ I^i })$                               & $O(\frac{ I^{\pi} }{ I^i })$                                   \\ \midrule
		 multiplication   & $\bm{a}^{\top} (\cdot)$                   & $O(\frac{ I^{\pi} }{ I^i })$                               & $O(\frac{ I^{\pi} }{ I^i })$                                   \\ \midrule
		Kronecker product & $\bm{u}_p^{\top} \otimes (\cdot)$             & $O( \frac{ I^{\pi} }{ I^j } )$                             & $O(\frac{ I^{\pi} }{ I^j })$                                   \\ \midrule
		    summation     & $\sum_{p = 1}^{k_t^i} (\cdot)$                  & $O( \frac{ k_t^i I^{\pi} }{ I^j } )$                           & $O( ( \frac{ 1 }{ I^j } + \frac{ 1 }{ I^i } ) k_t^i I^{\pi} )$ \\ \midrule
		\multicolumn{2}{c|}{total for \eqref{eq:mulv1}} & $O( (\frac{ 1 }{ I^j } + \frac{ k_t^i }{ I^j }) I^{\pi} )$ & $O( ( \frac{ 1 }{ I^j } + \frac{ 1 }{ I^i } ) k_t^i I^{\pi} )$ \\ \bottomrule
	\end{tabular}
\end{table}

\begin{table}[H]
	\centering
	\small
	\begin{tabular}{c | c | c | c}
		\toprule
		               & operation                                                      & time                                                           & space                                                              \\ \midrule
		  reshaping    & $\text{mat}\left( \bm{b}; \bar{I}^{ij}, I^i \right)$           & $O(\frac{ I^{\pi} }{ I^j })$                               & $O(\frac{ I^{\pi} }{ I^j })$                                   \\ \midrule
		multiplication & $(\cdot) \bm{u}_p$                                             & $O(\frac{ I^{\pi} }{ I^i })$                               & $O(\frac{ I^{\pi} }{ I^j })$                                   \\ \midrule
		  reshaping    & $\text{mat}\left( \bm{v}_p; I^j, \bar{I}^{ij} \right) $        & $O( \frac{ I^{\pi} }{ I^i } )$                             & $O( \frac{ I^{\pi} }{ I^i } )$                                 \\ \midrule
		multiplication & $\text{mat}\left( \bm{v}_p; I^j, \bar{I}^{ij} \right) (\cdot)$ & $O( \frac{ I^{\pi} }{ I^j } )$                             & $O( \frac{ I^{\pi} }{ I^i } )$                                 \\ \midrule
		  summation    & $\sum_{p = 1}^{k_t^i} (\cdot)$                                       & $O( k_t^i I^{i} )$                                                 & $O( k_t^i I^{i} )$                                                     \\ \midrule
		         \multicolumn{2}{c|}{total for \eqref{eq:mulv2}}          & $O( (\frac{ 1 }{ I^j } + \frac{ k_t^i }{ I^j }) I^{\pi} )$ & $O( ( \frac{ 1 }{ I^j } + \frac{ 1 }{ I^i } ) k_t^i I^{\pi} )$ \\ \bottomrule
	\end{tabular}
\end{table}

Combining the above,
and noting that we have to keep the factorized form $\bm{U}_t^i
(\bm{V}_t^i)^{\top}$ of $\vect{Y}_t^i$,
computing all 
the proximal steps in \eqref{eq:tsplr_3} takes
\begin{align}
O\big( 
\sum\nolimits_{i = 1}^D \sum\nolimits_{j \neq i} (\frac{1}{I^i}  +  \frac{1}{I^j}) k_t^i
I^{\pi}  +  \| \vect{\Omega} \|_1 \big)
\label{eq:space}
\end{align}
space
and
\begin{align}
O\big( 
\sum\nolimits_{i = 1}^D 
\sum\nolimits_{j \neq i} (\frac{1}{I^i}  +  \frac{1}{I^j}) k_t^i k_{t +
	1}^i I^{\pi}  +  \| \vect{\Omega} \|_1 (k_{t}^i  +  k_{t + 1}^i) 
\big)
\label{eq:time}
\end{align}
time.
Empirically, 
as will be seen in the experimental results in Section~\ref{sec:exp:rank},
$k_t^i$, $k_{t + 1}^i \ll I^i$.
Hence,
(\ref{eq:space}) and 
(\ref{eq:time}) are much smaller
than the complexities with a direct usage of PA and ADMM in Section~\ref{sec:exsolvers}
(Table~\ref{tab:overview}).

\begin{table}[ht]
\centering
\renewcommand{\arraystretch}{1.2}
\caption{Comparison of the proposed NORT with PA and ADMM for \eqref{eq:pro} in Section~\ref{sec:exsolvers}.}
\small
\begin{tabular}{c | c | c | c}
	\toprule
	  \multirow{2}{*}{algorithm}    &              \multicolumn{2}{c|}{complexity}              & adaptive \\
	                                & time per iteration                 & space                & momentum \\ \midrule
	 PA~\citep{zhong2014gradient}   & $O( I^{\pi} \sum_{i = 1}^D I^i  )$ & $O(I^{\pi})$         & $\times$ \\ \midrule
	   ADMM~\citep{Chen2020ANL}     & $O( I^{\pi} \sum_{i = 1}^D I^i  )$ & $O(I^{\pi})$         & $\times$ \\ \midrule
	NORT~(Algorithm~\ref{alg:nort}) & see \eqref{eq:time}                & see \eqref{eq:space} & $\surd$  \\ \bottomrule
\end{tabular}
\label{tab:overview}
\end{table}


\subsection{Use of Adaptive Momentum}
\label{ssec:fastconv}

The PA algorithm uses only first-order information,
and can be slow to converge empirically~\citep{parikh2013proximal}.
To address
this problem,
we adopt adaptive momentum,
which uses historical iterates to speed up convergence.
This has been popularly used in stochastic gradient descent~\citep{duchi2011adaptive,kingma2014adam},
proximal algorithms~\citep{li2015accelerated,yao2017efficient,Li2017ada},
cubic regularization~\citep{wang2020cubic},
and zero-order black-box optimization~\citep{chen2019zo}.
Here, 
we adopt the adaptive scheme in~\citep{Li2017ada}.

The resultant procedure,
which will be called \underline{NO}nconvex \underline{R}egularized \underline{T}ensor (NORT),
is  shown in Algorithm~\ref{alg:nort}.
When the extrapolation step $\bar{\ten{X}}_t$ achieves a lower function value (step~4),
the momentum $\gamma_t$ is increased to further exploit the opportunity of acceleration;
otherwise, $\gamma_t$ is decayed (step~7).
When step~5 is performed,
$\ten{V}_t = \ten{X}_t + \gamma_t ( \ten{X}_t - \ten{X}_{t - 1} )$.
$\ten{Z}_t$ in step~9 becomes
\begin{align}
\ten{Z}_t
=
(1 + \gamma_t) \sum\nolimits_{i = 1}^D 
\big( \bm{U}_{t}^i  (\bm{V}_{t}^i)^{\top} \big)^{\ip{i}}
-  \gamma_t \sum\nolimits_{i = 1}^D 
\big( \bm{U}_{t - 1}^i  (\bm{V}_{t - 1}^i)^{\top} \big)^{\ip{i}}
- \frac{1}{\tau} \xi( \ten{V}_t ),
\label{eq:newz}
\end{align}
which still has the ``sparse plus low-rank'' structure.
When step~7 is performed, $\ten{V}_t = \ten{X}_t$, and 
obviously the resultant $\ten{Z}_t$ is ``sparse plus low-rank''.
Thus, the more efficient reformulations in 
Proposition~\ref{pr:mulv}
can be applied in computing the proximal steps at step~11.
Note that
the rank of $\vect{X}_{t + 1}^i$ 
in step~11
is 
determined implicitly by the proximal step.
As $\ten{X}_t$ and
$\ten{Z}_t$ 
are
implicitly represented in factorized forms,
$\ten{V}_t$ and $\bar{\ten{X}}_t$ (in step~3)
do not need to be explicitly constructed.
As a result,
the resultant time and space complexities are the same 
as those in Section~\ref{sec:compl}.

\begin{algorithm}[ht]
\caption{\underline{NO}nconvex \underline{R}egularized \underline{T}ensor (NORT) Algorithm.}
\small
\begin{algorithmic}[1]
	\STATE \textbf{Initialize} $\tau > \rho + D \kappa_0$, $\gamma_1, p \in (0, 1]$, $\ten{X}_0  =  \ten{X}_1  =  0$
	and $t = 1$;
	\WHILE{not converged}
	\STATE $\bar{\ten{X}}_t \leftarrow \ten{X}_t + \gamma_t ( \ten{X}_t - \ten{X}_{t - 1} )$;
	\IF{$F(\bar{\ten{X}}_t) \le F(\ten{X}_t)$}
	\STATE $\ten{V}_t \leftarrow \bar{\ten{X}}_t$ and $\gamma_{t + 1} \leftarrow \min(\frac{\gamma_t}{p}, 1)$;
	\ELSE
	\STATE $\ten{V}_t \leftarrow \ten{X}_t$ and $\gamma_{t + 1} \leftarrow p \gamma_t$;
	\ENDIF
	\STATE $\ten{Z}_t \leftarrow \ten{V}_t - \frac{1}{\tau} \xi (\ten{V}_t)$; 
	// compute $\xi (\ten{V}_t)$ using Algorithm~\ref{alg:compp3}
	\label{alg:px:v}
	\FOR{$i = 1, \dots, D$}
	\STATE $\vect{X}^i_{t + 1} \leftarrow \Px{\frac{\lambda_i \phi}{\tau} } { ( \ten{Z}_t )_{\ip{i}} }$;
	// keep as 
	$\bm{U}_{t}^i ( \bm{V}_{t}^i )^{\top}$;
	\ENDFOR
	\label{alg:px:x}
	\\
	{\it // implicitly construct $\ten{X}_{t + 1} \leftarrow \sum_{i = 1}^D \big( \bm{U}_{t + 1}^i ( \bm{V}_{t + 1}^i )^{\top} \big)^{\ip{i}}$;}
	\STATE $t = t + 1$
	\ENDWHILE
	\RETURN $\ten{X}_{t}$.
\end{algorithmic}
\label{alg:nort}
\end{algorithm}


\subsection{Convergence Properties}
\label{sec:convana}

In this section,
we analyze the convergence properties of the proposed algorithm.
As can be seen from \eqref{eq:pro},
we have $f(\ten{X}) = \sum\nolimits_{\mathbf{\Omega}_{i_1 \ldots i_M} = 1}
\ell\left( \ten{X}_{i_1 \dots i_M}, \ten{O}_{i_1 \dots i_M} \right) $ here.
Moreover, throughout this section,
we assume that the loss $f$ is (Lipschitz-)smooth.

Note that existing proofs 
for PA algorithm
\citep{yu2013better,zhong2014gradient,yu2015minimizing}
cannot be directly used, as
adaptive momentum has not been used with the PA algorithm
on nonconvex problems (see Table~\ref{tab:overview}),  
and also that they do not involve tensor folding/unfolding operations.  
Our proof strategy will
still follow the three main steps
in proving convergence of PA:
\begin{enumerate}[leftmargin=*]
\item Show 
that the proximal average step with $g_i$'s in \eqref{eq:proxavg3}
corresponds to a regularizer;

\item Show that this regularizer, when combined with 
the loss $f$ in \eqref{eq:compopt},
serves as a good approximation of
the original objective $F$. 

\item Show that the proposed algorithm finds critical points of this approximate optimization problem.
\end{enumerate}
First,
the following Proposition shows that 
the average step in 
\eqref{eq:tsplr_1}
and
proximal steps in
\eqref{eq:tsplr_3}
together
correspond to a new regularizer $\bar{g}_{\tau}$.
\begin{proposition}
\label{pr:reg}
For any $\tau  >  0$,
$\sum_{i = 1}^D [ \Px{ \frac{1} { \tau } \lambda_i \phi }{ [ \ten{Z} ]_{\ip{i}} } ]^{ \ip{i} }= 
\Px{ \frac{1}{ \tau }  \bar{g}_{\tau} }{\ten{Z}}$,
where
\begin{align*}
\bar{g}_{\tau}(\ten{X})
= 
\tau
\big[ 
\min\nolimits_{ \{ \vect{X}_{d} \} :
\sum\nolimits_{d = 1}^D \vect{X}_{d}^{\ip{d}} = \ten{X}} 
\sum\nolimits_{d = 1}^D 
\big( \frac{1}{2} \NM{\vect{X}_d}{F}^2 
+  \frac{\lambda_d}{\tau} \phi (\vect{X}_d) \big)
- \frac{D}{2}\NM{ \ten{X} }{F}^2
\big].
\end{align*}
\end{proposition}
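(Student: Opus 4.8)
The plan is to show that the composition of the averaging step \eqref{eq:tsplr_1} with the $D$ individual proximal steps \eqref{eq:tsplr_3} is itself a proximal step for the claimed regularizer $\bar g_\tau$; this is the tensor/folding analogue of the proximal-average identity of \citet{bauschke2008proximal} and its nonconvex extension of \citet{zhong2014gradient}. First I would fix $\ten{Z}$, write $\vect{Z}^i = [\ten{Z}]_{\ip{i}}$, and unpack the right-hand side: by definition of the proximal operator in \eqref{eq:proxgsvt}, $\Px{\frac{1}{\tau}\bar g_\tau}{\ten{Z}}$ is the minimizer over $\ten{X}$ of $\frac12\NM{\ten{X}-\ten{Z}}{F}^2 + \frac{1}{\tau}\bar g_\tau(\ten{X})$. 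Substituting the definition of $\bar g_\tau$ and introducing the inner variables $\{\vect{X}_d\}$ subject to $\sum_{d=1}^D \vect{X}_d^{\ip{d}} = \ten{X}$, the whole problem becomes a joint minimization over $\ten{X}$ and $\{\vect{X}_d\}$; since $\ten{X}$ is then determined by the constraint, I can eliminate it and minimize directly over $\{\vect{X}_d\}$:
\begin{align*}
\min_{\{\vect{X}_d\}} \ \frac12 \NM{\textstyle\sum_{d=1}^D \vect{X}_d^{\ip{d}} - \ten{Z}}{F}^2 - \frac{D}{2}\NM{\textstyle\sum_{d=1}^D \vect{X}_d^{\ip{d}}}{F}^2 + \sum_{d=1}^D\Big(\frac{\tau}{2}\NM{\vect{X}_d}{F}^2 + \lambda_d\,\phi(\vect{X}_d)\Big).
\end{align*}
The goal is to show this decouples across $d$, so that the optimal $\vect{X}_d$ equals $\Px{\frac{\lambda_d\phi}{\tau}}{\vect{Z}^d}$ (equivalently $\vect{X}_d = \vect{Y}_{t+1}^d$), whence $\sum_d \vect{X}_d^{\ip{d}} = \sum_d [\Px{\frac{\lambda_d\phi}{\tau}}{[\ten{Z}]_{\ip{d}}}]^{\ip{d}}$, which is the left-hand side.

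The key algebraic step is to expand the first two squared-norm terms. Writing $\ten{S}=\sum_{d=1}^D \vect{X}_d^{\ip{d}}$, we have $\frac12\NM{\ten{S}-\ten{Z}}{F}^2 - \frac{D}{2}\NM{\ten{S}}{F}^2 = \frac{1-D}{2}\NM{\ten{S}}{F}^2 - \ip{\ten{S},\ten{Z}} + \frac12\NM{\ten{Z}}{F}^2$, and then $\NM{\ten{S}}{F}^2 = \sum_{d}\NM{\vect{X}_d}{F}^2 + \sum_{d\neq d'}\ip{\vect{X}_d^{\ip{d}},\vect{X}_{d'}^{\ip{d'}}}$ while $\ip{\ten{S},\ten{Z}} = \sum_d \ip{\vect{X}_d,\vect{Z}^d}$ (using that folding is an isometry and $[\ten{Z}]_{\ip{d}}=\vect{Z}^d$). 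Collecting terms, the coefficient of $\NM{\vect{X}_d}{F}^2$ from the $\frac{\tau}{2}$ term and the $\frac{1-D}{2}$ term combines with the cross terms; the crucial observation (exactly as in the matrix proximal-average calculus) is that $\frac{1-D}{2}\sum_d\NM{\vect{X}_d}{F}^2 + \frac{1-D}{2}\sum_{d\neq d'}\ip{\cdot,\cdot}$ together with the $-\ip{\ten{S},\ten{Z}}$ term reorganizes into $\frac12\sum_d\NM{\vect{X}_d - \vect{Z}^d}{F}^2$ plus terms independent of $\{\vect{X}_d\}$ — I would verify this by completing the square mode by mode. Once that reduction is done, the objective is $\sum_d\big(\frac12\NM{\vect{X}_d-\vect{Z}^d}{F}^2 + \frac{\lambda_d}{\tau}\phi(\vect{X}_d)\big)$ up to a constant, which separates and gives $\vect{X}_d = \Px{\frac{\lambda_d\phi}{\tau}}{\vect{Z}^d}$ directly from \eqref{eq:proxgsvt}.

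The main obstacle I anticipate is the cross-term bookkeeping: verifying that the inner products $\ip{\vect{X}_d^{\ip{d}},\vect{X}_{d'}^{\ip{d'}}}$ for $d\neq d'$ cancel cleanly rather than leaving a coupling between the modes. This requires being careful that the Frobenius inner product on tensors is preserved under folding/unfolding — i.e. $\ip{\vect{A}^{\ip{d}},\ten{B}} = \ip{\vect{A},\ten{B}_{\ip{d}}}$ — and that the arithmetic coefficient $\frac{1-D}{2}$ in front of $\NM{\ten{S}}{F}^2$ is precisely what is needed to turn $-\frac{D}{2}\NM{\ten{S}}{F}^2 + \frac12\NM{\ten{S}-\ten{Z}}{F}^2 + \frac{\tau}{2}\sum_d\NM{\vect{X}_d}{F}^2$ into a sum of per-mode squares; this is why the coefficient $-D/2$ (and not some other constant) appears in the definition of $\bar g_\tau$, and why the factor $\tau$ is pulled outside. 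A secondary point worth a sentence is that because $\phi$ is possibly nonconvex, "the minimizer" should be read as "any global minimizer," and the identity should be interpreted as: the set of minimizers of the $\bar g_\tau$-proximal problem contains (indeed equals) the tensor assembled from the individual proximal solutions; Assumption~\ref{ass:kappa} guarantees these minimizers exist and the $\vect{Y}_{t+1}^i$ are well defined via Lemma~\ref{lem:gsvt}. No KŁ or smoothness hypotheses are needed here — this is a pure calculus-of-proximal-operators statement.
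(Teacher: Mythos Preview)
Your route is the reverse of the paper's: the paper starts from the separable sum $\sum_d\min_{\vect X_d}\big[\tfrac12\|\vect X_d-[\ten Z]_{\ip d}\|_F^2+\tfrac{\lambda_d}{\tau}\phi(\vect X_d)\big]$, uses $\langle\vect X_d,[\ten Z]_{\ip d}\rangle=\langle\vect X_d^{\ip d},\ten Z\rangle$ and $\|[\ten Z]_{\ip d}\|_F=\|\ten Z\|_F$ to rewrite everything in terms of $\ten S=\sum_d\vect X_d^{\ip d}$, introduces $\ten X=\ten S$ as a constraint, and then \emph{reads off} $\bar g_\tau$ from whatever is left after peeling away $\tfrac12\|\ten X-\ten Z\|_F^2$. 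Either direction is fine in principle.

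The gap is in your cross-term plan. The inner products $\langle\vect X_d^{\ip d},\vect X_{d'}^{\ip{d'}}\rangle$ for $d\neq d'$ cannot be absorbed by ``completing the square mode by mode'': a per-mode quadratic in $\vect X_d$ can never produce a bilinear term in $(\vect X_d,\vect X_{d'})$, so if such cross terms survive with a nonzero coefficient the objective is genuinely coupled and the proof fails. What actually makes the argument go through is that the $\|\ten S\|_F^2$ contribution should vanish \emph{entirely}: expanding $\tfrac12\|\ten S-\ten Z\|_F^2$ produces $+\tfrac12\|\ten S\|_F^2$, which must cancel the $-\tfrac12\|\ten X\|_F^2$ inside $\tfrac1\tau\bar g_\tau$, leaving $-\langle\ten S,\ten Z\rangle+\sum_d\big(\tfrac12\|\vect X_d\|_F^2+\tfrac{\lambda_d}{\tau}\phi(\vect X_d)\big)$, which is already separable because $\langle\ten S,\ten Z\rangle=\sum_d\langle\vect X_d,[\ten Z]_{\ip d}\rangle$. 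Two corrections follow from this. First, in your displayed joint objective, $\tfrac1\tau\bar g_\tau$ contributes $\tfrac12\|\vect X_d\|_F^2$ and $\tfrac{\lambda_d}{\tau}\phi$, not $\tfrac\tau2\|\vect X_d\|_F^2$ and $\lambda_d\phi$. Second, the cancellation only works if the coefficient on $\|\ten X\|_F^2$ in $\bar g_\tau$ is $-\tfrac12$, not $-\tfrac D2$; the paper's printed statement and proof carry the same slip (try $D=2$, $\phi\equiv 0$ to see that with $-\tfrac D2$ the prox of $\bar g_\tau$ is not even well posed), but running the paper's forward direction makes it transparent which constant is forced.
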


Analogous to (\ref{eq:compopt}),
let the objective  corresponding to regularizer
$\bar{g}_{\tau}$
be
\begin{equation} \label{eq:Ftau}
F_{\tau}(\ten{X}) 
= f(\ten{X}) + \bar{g}_{\tau}(\ten{X}).
\end{equation} 
The following bounds the difference between the optimal values 
($F^{\min}$ and 
$F_{\tau}^{\min}$, respectively)
of the objectives $F$ in (\ref{eq:compopt}) and $F_{\tau}$.
It thus shows that $F_{\tau}$ serves as an approximation to $F$,
which is controlled by $\tau$.

\begin{proposition}
	\label{pr:bnd}
	$0  \le  F^{\min}  -  F_{\tau}^{\min} \le \frac{\kappa_0^2}{ 2 \tau } \sum_{i = 1}^D \lambda_i^2$,
	where $\kappa_0$ is defined in Assumption~\ref{ass:kappa}.
\end{proposition}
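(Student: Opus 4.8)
The plan is to establish the two inequalities in Proposition~\ref{pr:bnd} separately: the left inequality $F^{\min}_{\tau}\le F^{\min}$ is a short pointwise comparison, while the right inequality is the substantive part and rests on a one‑dimensional estimate for the scalar shrinkage map behind Lemma~\ref{lem:gsvt}, combined with the identification of $\bar{g}_{\tau}$ provided by Proposition~\ref{pr:reg}.

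\emph{Lower bound.} I would first show $\bar{g}_{\tau}(\ten{X})\le\sum_{d=1}^D\lambda_d\phi(\ten{X}_{\ip{d}})$ for every $\ten{X}$. In the minimization defining $\bar{g}_{\tau}(\ten{X})$ the choice $\vect{X}_d=\frac{1}{D}\ten{X}_{\ip{d}}$ is feasible, since $\sum_{d=1}^D(\frac{1}{D}\ten{X}_{\ip{d}})^{\ip{d}}=\ten{X}$. Folding/unfolding preserves the Frobenius norm, so $\sum_{d=1}^D\frac{1}{2}\NM{\frac{1}{D}\ten{X}_{\ip{d}}}{F}^2=\frac{1}{2D}\NM{\ten{X}}{F}^2$, and since $\kappa$ is non‑decreasing with $\kappa(0)=0$ (Assumption~\ref{ass:kappa}) we have $\phi(\frac{1}{D}\ten{X}_{\ip{d}})\le\phi(\ten{X}_{\ip{d}})$. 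Substituting and using $\frac{1}{D}\le D$ gives $\bar{g}_{\tau}(\ten{X})\le\tau(\frac{1}{2D}-\frac{D}{2})\NM{\ten{X}}{F}^2+\sum_{d=1}^D\lambda_d\phi(\ten{X}_{\ip{d}})\le\sum_{d=1}^D\lambda_d\phi(\ten{X}_{\ip{d}})$. Hence $F_{\tau}(\ten{X})\le F(\ten{X})$ for all $\ten{X}$, and taking infima yields $F^{\min}_{\tau}\le F^{\min}$.

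\emph{Upper bound.} The core is a scalar estimate. For $\sigma\ge0$ let $y=\arg\min_{y\ge0}\frac{1}{2}(y-\sigma)^2+\frac{\lambda}{\tau}\kappa(y)$. Concavity of $\kappa$ together with $\lim_{\alpha\to0^+}\kappa'(\alpha)=\kappa_0$ gives $0\le\kappa'(y)\le\kappa_0$, so the optimality condition (interior, or at the boundary $y=0$) forces $0\le\sigma-y\le\lambda\kappa_0/\tau$; concavity and monotonicity also give $\kappa(\sigma)-\kappa(y)\le\kappa_0(\sigma-y)$. Therefore $\frac{\lambda}{\tau}\kappa(\sigma)-\big(\frac{1}{2}(\sigma-y)^2+\frac{\lambda}{\tau}\kappa(y)\big)\le\frac{\lambda\kappa_0}{\tau}(\sigma-y)-\frac{1}{2}(\sigma-y)^2\le\frac{\lambda^2\kappa_0^2}{2\tau^2}$, by maximizing the quadratic in $(\sigma-y)$ over $[0,\lambda\kappa_0/\tau]$. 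Lifting this through Lemma~\ref{lem:gsvt} (the proximal map shrinks singular values with the same singular vectors), for each mode and the matrix $\vect{Z}_d:=[\ten{Z}]_{\ip{d}}$ with $\vect{Y}^d:=\Px{\frac{\lambda_d\phi}{\tau}}{\vect{Z}_d}$, the per‑mode ``Moreau‑envelope gap'' $\frac{\lambda_d}{\tau}\phi(\vect{Z}_d)-\big(\frac{1}{2}\NM{\vect{Z}_d-\vect{Y}^d}{F}^2+\frac{\lambda_d}{\tau}\phi(\vect{Y}^d)\big)$ is controlled by a sum of scalar gaps of the above form. I would then take a minimizer (or $\varepsilon$‑minimizer) $\tilde{\ten{X}}$ of $F_{\tau}$, bound $F^{\min}-F^{\min}_{\tau}\le F(\tilde{\ten{X}})-F_{\tau}(\tilde{\ten{X}})=\sum_{d=1}^D\lambda_d\phi(\tilde{\ten{X}}_{\ip{d}})-\bar{g}_{\tau}(\tilde{\ten{X}})$, and expand $\bar{g}_{\tau}(\tilde{\ten{X}})$ using Proposition~\ref{pr:reg}, whose proof identifies the minimizing decomposition of $\bar{g}_{\tau}$ with the prox'd unfoldings, so that the per‑mode gaps telescope against the quadratic terms and leave exactly $\frac{\kappa_0^2}{2\tau}\sum_{d=1}^D\lambda_d^2$ after multiplying by $\tau$.

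\emph{Main obstacle.} The delicate step is this last assembly: reconciling the tensor folding/unfolding and the quadratic correction term $-\frac{\tau D}{2}\NM{\ten{X}}{F}^2$ inside $\bar{g}_{\tau}$ with the per‑mode envelope gaps, and in particular extracting the claimed dimension‑free constant $\frac{\kappa_0^2}{2\tau}\sum_i\lambda_i^2$ instead of a bound carrying spurious factors from the number of (active) singular values of the unfoldings. This is precisely where the structure of the optimal decomposition of $\bar{g}_{\tau}$ furnished by Proposition~\ref{pr:reg} must be exploited; it also lets one sidestep any assumption that minimizers exist by working with $\varepsilon$‑minimizers throughout.
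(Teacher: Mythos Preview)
Your strategy matches the paper's almost exactly: prove a pointwise sandwich $\bar g_\tau\le g\le \bar g_\tau + \frac{\kappa_0^2}{2\tau}\sum_d\lambda_d^2$ (the paper's Lemma~\ref{app:lem1}) and then take infima. For the lower bound you plug a feasible decomposition into the definition of $\bar g_\tau$, just as the paper does; your choice $\vect{X}_d=\frac{1}{D}\ten{X}_{\ip{d}}$ is in fact more careful than the paper's $\vect{X}_d=\ten{X}_{\ip{d}}$, which fails the constraint $\sum_d\vect{X}_d^{\ip{d}}=\ten{X}$ for $D>1$.

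For the upper bound the paper also routes through the Moreau envelope of $\bar g_\tau$ and the identity from Proposition~\ref{pr:reg}, arriving at the per-mode gap $\lambda_d\phi(\ten{X}_{\ip{d}})-\tau\min_{\vect{Y}}[\tfrac12\|\ten{X}_{\ip{d}}-\vect{Y}\|_F^2+\tfrac{\lambda_d}{\tau}\phi(\vect{Y})]$. Where you analyze this gap singular-value by singular-value, the paper works at the matrix level in one line: it asserts $|\phi(\vect{X})-\phi(\vect{Y})|\le\kappa_0\|\vect{X}-\vect{Y}\|_F$ and then maximizes $\lambda_d\kappa_0\alpha-\tfrac{\tau}{2}\alpha^2$ over $\alpha=\|\vect{X}-\vect{Y}\|_F$ to get $\frac{\lambda_d^2\kappa_0^2}{2\tau}$ with no dimension factor. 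That single Lipschitz assertion is precisely what bypasses your ``main obstacle''; there is no further telescoping from Proposition~\ref{pr:reg} that removes the rank factor once you have summed the scalar gaps.

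Your hesitation is well placed, though. The Frobenius-norm $\kappa_0$-Lipschitz claim for $\phi$ is not true in general: with $\kappa(\alpha)=\alpha$ one has $\phi=\|\cdot\|_*$, and $\|I_n\|_*-\|0\|_*=n$ while $\|I_n-0\|_F=\sqrt n$, so the supremum the paper bounds by $\frac{\lambda_d^2\kappa_0^2}{2\tau}$ is actually at least $n\cdot\frac{\lambda_d^2}{2\tau}$ (take $\vect{X}=cI_n$, $\vect{Y}=0$, optimize over $c$). So the paper's proof of the upper bound in Lemma~\ref{app:lem1} has the same gap you identified---it just hides it inside an incorrect Lipschitz constant rather than in a sum over singular values. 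You should not expect Proposition~\ref{pr:reg} to rescue the dimension-free constant on its own.
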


Before showing
the convergence
of the proposed algorithm,
the following Proposition
first shows the condition of being
critical points of $F_{\tau}(\ten{X})$.

\begin{proposition}
\label{pr:criti}
If there
exists $\tau > 0$
such that
$\tilde{\ten{X}} = \Px{ \frac{ \bar{g}_{\tau} }{ \tau } }{ \tilde{\ten{X}} - \nabla f( \tilde{\ten{X}} ) / \tau }$,
then $\tilde{\ten{X}}$ is a critical point of $F_{\tau}(\tilde{\ten{X}})$.
\end{proposition}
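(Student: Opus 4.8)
The plan is to unfold the fixed-point relation $\tilde{\ten{X}} = \Px{ \frac{ \bar g_\tau }{\tau} }{ \tilde{\ten{X}} - \nabla f(\tilde{\ten{X}})/\tau }$ into a first-order optimality (subdifferential inclusion) condition and then recognize the resulting inclusion as exactly the stationarity condition $0 \in \nabla f(\tilde{\ten{X}}) + \partial \bar g_\tau(\tilde{\ten{X}})$ for $F_\tau = f + \bar g_\tau$. First I would invoke Proposition~\ref{pr:reg} only implicitly — here I just need that $\bar g_\tau$ is a well-defined (proper, lower-semicontinuous) function, so that its proximal map and Fréchet subdifferential make sense; the defining infimum-convolution-type formula for $\bar g_\tau$ in Proposition~\ref{pr:reg} guarantees this. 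The key analytic input is the standard fact that for a proper l.s.c.\ function $h$, if $\bar{\ten{X}} \in \Px{h}{\ten{Z}} = \arg\min_{\ten{X}} \tfrac12\|\ten{X}-\ten{Z}\|_F^2 + h(\ten{X})$, then by Fermat's rule applied to this (possibly nonconvex) minimization, $\ten{Z} - \bar{\ten{X}} \in \partial h(\bar{\ten{X}})$.

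Concretely, set $h = \frac{1}{\tau}\bar g_\tau$ and $\ten{Z} = \tilde{\ten{X}} - \nabla f(\tilde{\ten{X}})/\tau$. The hypothesis says $\tilde{\ten{X}} \in \Px{\frac{\bar g_\tau}{\tau}}{\ten{Z}}$, so Fermat's rule gives
\begin{align*}
\ten{Z} - \tilde{\ten{X}} \;=\; -\tfrac{1}{\tau}\nabla f(\tilde{\ten{X}}) \;\in\; \partial\!\left(\tfrac{1}{\tau}\bar g_\tau\right)(\tilde{\ten{X}}) \;=\; \tfrac{1}{\tau}\,\partial \bar g_\tau(\tilde{\ten{X}}),
\end{align*}
using positive homogeneity of the subdifferential under scaling by $\tfrac1\tau>0$. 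Multiplying through by $\tau$ yields $-\nabla f(\tilde{\ten{X}}) \in \partial \bar g_\tau(\tilde{\ten{X}})$, i.e.
\begin{align*}
0 \;\in\; \nabla f(\tilde{\ten{X}}) + \partial \bar g_\tau(\tilde{\ten{X}}) \;\subseteq\; \partial F_\tau(\tilde{\ten{X}}),
\end{align*}
where the last inclusion uses the sum rule for the Fréchet subdifferential when one summand ($f$) is continuously differentiable. This is precisely the definition of $\tilde{\ten{X}}$ being a critical point of $F_\tau$, completing the argument.

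The main obstacle is not any computation but making sure the subdifferential calculus is legitimate in the nonconvex/nonsmooth regime: one must use the Fréchet (or limiting) subdifferential throughout, cite the correct Fermat-type optimality condition for proximal maps of nonconvex functions, and verify the sum rule $\partial(f + \bar g_\tau) = \nabla f + \partial \bar g_\tau$ holds because $f$ is smooth — all of which are standard (e.g.\ as in the references around \citep{attouch2013convergence}) but need to be stated carefully. A secondary point is that $\bar g_\tau$ is defined via a minimization that may itself be nonconvex, so one should note that this does not affect the validity of Fermat's rule for its proximal subproblem, nor the fact that $\partial(\tfrac1\tau \bar g_\tau) = \tfrac1\tau \partial \bar g_\tau$ for positive scalars. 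Once these technical points are acknowledged, the proof is a two-line manipulation.
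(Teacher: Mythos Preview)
Your proposal is correct and follows essentially the same approach as the paper's proof: both write out the proximal subproblem, apply Fermat's rule to obtain $0 \in \ten{Z} - (\tilde{\ten{X}} - \tfrac{1}{\tau}\nabla f(\tilde{\ten{X}})) + \tfrac{1}{\tau}\partial\bar g_\tau(\ten{Z})$, substitute the fixed-point hypothesis $\ten{Z} = \tilde{\ten{X}}$, and read off $0 \in \nabla f(\tilde{\ten{X}}) + \partial\bar g_\tau(\tilde{\ten{X}})$. Your version is more explicit about the subdifferential calculus (the sum rule and scaling), which is a welcome addition but not a different route.
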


Finally,
we show how convergence
to critical points can be ensured by the proposed algorithm
under smooth assumption of loss $f$ (Section~\ref{sec:conv:smooth})
and Kurdyka-{\L}ojasiewicz 
condition 
for the approximated objective $F_{\tau}$
(Section~\ref{sec:conv:kl}).

\subsubsection{With Smoothness Assumption on Loss $f$}
\label{sec:conv:smooth}

The following shows that
Algorithm~\ref{alg:nort}
converges
to a critical point
(Theorem~\ref{thm:conv}).

\begin{theorem} \label{thm:conv}
	The sequence $\{ \ten{X}_t \}$ generated from Algorithm~\ref{alg:nort} has at least one limit point,
	and all limits points are critical points of $F_{\tau}(\ten{X})$.
\end{theorem}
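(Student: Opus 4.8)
The plan is to cast Algorithm~\ref{alg:nort} as an inexact-descent version of the proximal gradient method applied to $F_\tau = f + \bar g_\tau$, and then invoke a standard limit-point argument. First I would establish a \emph{sufficient decrease} inequality. By Proposition~\ref{pr:reg}, step~11 computes $\ten{X}_{t+1} = \Px{\bar g_\tau/\tau}{\ten{Z}_t}$ where $\ten{Z}_t = \ten{V}_t - \tfrac{1}{\tau}\nabla f(\ten{V}_t)$, so $\ten{X}_{t+1}$ minimizes $\tfrac{\tau}{2}\NM{\cdot - \ten{Z}_t}{F}^2 + \bar g_\tau(\cdot)$, which gives the inequality $f(\ten{V}_t) + \ip{\nabla f(\ten{V}_t),\ten{X}_{t+1}-\ten{V}_t} + \tfrac{\tau}{2}\NM{\ten{X}_{t+1}-\ten{V}_t}{F}^2 + \bar g_\tau(\ten{X}_{t+1}) \le f(\ten{V}_t) + \bar g_\tau(\ten{V}_t)$. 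Here I need the Lipschitz-smoothness: $f$ is $\rho$-smooth and each $\phi$ contributes at most $\kappa_0$ to a ``concavity modulus'', so $F_\tau$ is $(\rho + D\kappa_0)$-weakly-smooth in the sense that the descent lemma applies; combined with the initialization $\tau > \rho + D\kappa_0$ this yields $F_\tau(\ten{X}_{t+1}) \le F_\tau(\ten{V}_t) - \tfrac{c}{2}\NM{\ten{X}_{t+1}-\ten{V}_t}{F}^2$ for $c = \tau - \rho - D\kappa_0 > 0$. Finally, the adaptive-momentum bookkeeping in steps~3--8 guarantees $F_\tau(\ten{V}_t) \le F_\tau(\ten{X}_t)$ in both branches (in the ``if'' branch by the explicit test $F(\bar{\ten{X}}_t)\le F(\ten{X}_t)$ together with $F = F_\tau$ up to the $\bar g_\tau$ discrepancy — more carefully, one should run the test against $F_\tau$ or absorb the gap into the constants; in the ``else'' branch $\ten{V}_t = \ten{X}_t$ trivially). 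Chaining, $F_\tau(\ten{X}_{t+1}) \le F_\tau(\ten{X}_t) - \tfrac{c}{2}\NM{\ten{X}_{t+1}-\ten{V}_t}{F}^2$.

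Next I would exploit this. Since $F_\tau$ is bounded below (Proposition~\ref{pr:bnd} gives $F_\tau^{\min} \ge F^{\min} - \tfrac{\kappa_0^2}{2\tau}\sum_i\lambda_i^2 > -\infty$ when $f$ is coercive on the completion problem, or one argues boundedness of level sets directly), the monotone sequence $\{F_\tau(\ten{X}_t)\}$ converges, hence $\sum_t \NM{\ten{X}_{t+1}-\ten{V}_t}{F}^2 < \infty$, so $\NM{\ten{X}_{t+1}-\ten{V}_t}{F} \to 0$. A separate small argument handles $\NM{\ten{X}_t - \ten{X}_{t-1}}{F} \to 0$: whenever step~5 fires we have $\ten{V}_t = \ten{X}_t + \gamma_t(\ten{X}_t - \ten{X}_{t-1})$ and $\ten{X}_{t+1} = \ten{V}_t + o(1)$; using the decrease one shows the extrapolation cannot persist with non-vanishing step, so $\gamma_t(\ten{X}_t-\ten{X}_{t-1})\to 0$ and thus $\ten{X}_{t+1}-\ten{X}_t\to 0$ as well. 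Boundedness of $\{\ten{X}_t\}$ (from compact sublevel sets of $F_\tau$) gives existence of a convergent subsequence $\ten{X}_{t_j}\to\tilde{\ten{X}}$, establishing the first claim.

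For the second claim I would pass to the limit in the optimality condition of the proximal step. The step-11 update is equivalent to $0 \in \tau(\ten{X}_{t+1} - \ten{Z}_t) + \partial \bar g_\tau(\ten{X}_{t+1})$, i.e.\ $\nabla f(\ten{V}_t) + \tau(\ten{X}_{t+1}-\ten{V}_t) \in -\partial\bar g_\tau(\ten{X}_{t+1})$, equivalently $-\nabla f(\ten{X}_{t+1}) + \nabla f(\ten{X}_{t+1}) - \nabla f(\ten{V}_t) - \tau(\ten{X}_{t+1}-\ten{V}_t) \in \partial\bar g_\tau(\ten{X}_{t+1})$. Along the subsequence, $\ten{V}_{t_j}\to\tilde{\ten{X}}$ (since $\ten{X}_{t_j}\to\tilde{\ten{X}}$ and $\ten{V}_{t_j}-\ten{X}_{t_j}\to0$), $\ten{X}_{t_j+1}\to\tilde{\ten{X}}$, the right-hand error terms vanish by smoothness of $f$ and by $\NM{\ten{X}_{t_j+1}-\ten{V}_{t_j}}{F}\to0$, and by closedness of the Fréchet subdifferential graph of the (lower-semicontinuous) function $\bar g_\tau$ we conclude $-\nabla f(\tilde{\ten{X}}) \in \partial\bar g_\tau(\tilde{\ten{X}})$, i.e.\ $0\in\partial F_\tau(\tilde{\ten{X}})$. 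Equivalently this is exactly the fixed-point characterization $\tilde{\ten{X}} = \Px{\bar g_\tau/\tau}{\tilde{\ten{X}} - \nabla f(\tilde{\ten{X}})/\tau}$, so Proposition~\ref{pr:criti} certifies $\tilde{\ten{X}}$ is a critical point of $F_\tau$.

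The main obstacle I anticipate is the interaction between the adaptive-momentum acceptance test and the \emph{two different} objectives $F$ and $F_\tau$: the algorithm monitors $F$ (the true nonconvex objective) in step~4, while the descent analysis naturally lives with $F_\tau$ (the proximal-average surrogate). Reconciling this — either by showing the test can be taken with respect to $F_\tau$ without loss, or by carrying the $O(\kappa_0^2\sum\lambda_i^2/\tau)$ gap from Proposition~\ref{pr:bnd} through the telescoping without breaking summability of $\NM{\ten{X}_{t+1}-\ten{V}_t}{F}^2$ — is the delicate point; a clean route is to observe $\bar g_\tau$ and $\sum_i\lambda_i\phi(\ten{X}_{\ip i})$ differ by a quantity that is itself controlled, so monotone decrease of one implies asymptotic decrease of the other, which suffices for the limit-point conclusion. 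Secondarily, justifying $\ten{X}_t - \ten{X}_{t-1}\to 0$ rigorously (needed so that $\ten{V}_{t_j}\to\tilde{\ten{X}}$) requires a careful case analysis on how often the ``if'' branch fires, but this is routine once sufficient decrease is in hand.
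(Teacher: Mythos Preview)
Your approach is essentially the paper's: establish sufficient decrease $F_\tau(\ten{X}_{t+1}) \le F_\tau(\ten{V}_t) - \tfrac{\eta}{2}\NM{\ten{X}_{t+1}-\ten{V}_t}{F}^2$ from the proximal step (their Lemma~\ref{lem:prox}), combine with $F_\tau(\ten{V}_t)\le F_\tau(\ten{X}_t)$, telescope, and pass to the limit. Two small differences are worth noting. First, where you pass to the limit via closedness of the Fr\'echet subdifferential graph, the paper instead invokes continuity of the proximal map $\ten{Z}\mapsto\Px{\bar g_\tau/\tau}{\ten{Z}}$ (their Lemma~\ref{lem:cont}) to conclude the fixed-point equation directly; this is equivalent but slightly cleaner since it avoids manipulating error terms. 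Second, the ``routine case analysis'' you mention at the end is exactly what the paper does: it partitions $\{1,\dots,T\}$ into $\chi_1(T)$ (step~5 fires) and $\chi_2(T)$ (step~7 fires) and treats the three cases where $\chi_1(\infty)$ or $\chi_2(\infty)$ is finite or both infinite, rather than attempting a global proof that $\ten{X}_t-\ten{X}_{t-1}\to 0$. On your main flagged obstacle, the $F$ versus $F_\tau$ discrepancy in the acceptance test: the paper's proof simply writes $F_\tau(\bar{\ten{X}}_t)\le F_\tau(\ten{X}_t)$ when step~5 is accepted, without further comment, so in effect the analysis treats the test in step~4 as being on $F_\tau$; your instinct that this deserves care is correct, but the paper does not supply the reconciliation you sketch.
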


\begin{proof}[Sketch, details are in Appendix~\ref{app:proof:thm:conv}.]
\it
The main idea is as follows.
First,
we show  that
(i)
if step~5 is performed,
$F_{\tau}(\ten{X}_{t + 1})
\le  F_{\tau}(\ten{X}_t)  -  \frac{\eta}{2}\NM{\ten{X}_{t + 1}  -  \bar{\ten{X}}_t}{F}^2$;
(ii)
if step~7 is performed, we have
$F_{\tau}(\ten{X}_{t + 1})
\le F_{\tau}(\ten{X}_t)
- \frac{\eta}{2}\NM{\ten{X}_{t + 1} - \ten{X}_t}{F}^2$.
Combining the above two conditions,
we obtain
\begin{align*}
	\frac{2}{\eta} ( F_{\tau}(\ten{X}_1) - F_{\tau}(\ten{X}_{T + 1}) )
	\ge 
	\sum\nolimits_{j \in \chi_1(T)} 
	\NM{\ten{X}_{t + 1} - \bar{\ten{X}}_t}{F}^2
	+  \sum\nolimits_{j \in \chi_2(T)} \NM{\ten{X}_{t + 1} - \ten{X}_t}{F}^2,
\end{align*}
where $\chi_1(T)$ and $\chi_2(T)$ are partitions of $\{ 1, \dots, T \}$
such that when $j \in \chi_1(T)$ step~5 is performed,
and when $j \in \chi_2(T)$ step~7 is performed.
Finally,
when $T \rightarrow \infty$,
we discuss three cases:
(i) $\chi_1(\infty)$ is finite, $\chi_2(\infty)$ is infinite;
(ii) $\chi_1(\infty)$ is infinite, $\chi_2(\infty)$ is finite; and
(iii) both $\chi_1(\infty)$ and $\chi_2(\infty)$ are infinite.
Let $\tilde{\ten{X}}$ be a limit point of $\{ \ten{X}_t \}$,
and $\{ \ten{X}_{j_t} \}$ be a subsequence that converges to $\tilde{\ten{X}}$.
In all three cases,
we show that
\begin{align*}
\lim\limits_{j_t \rightarrow \infty} \NM{\ten{X}_{j_t + 1} - \ten{X}_{j_t}}{F}^2
=
\| \Px{\frac{\bar{g}_{\tau}}{\tau}}{\tilde{\ten{X}} - \frac{1}{\tau} \nabla f(\tilde{\ten{X}})} - \tilde{\ten{X}} \|_F^2 = 0.
\end{align*}
Thus,
we must have 
$\tilde{\ten{X}}$ is also a critical point 
based on Proposition~\ref{pr:criti}.
It is easy to see that
we have not made any specifications
on  the limit points.
Thus, all limit points are also critical points.
\end{proof}

Recall that 
$\ten{X}_{t + 1}$ is generated from $\ten{V}_t$ 
in steps~\ref{alg:px:v}-\ref{alg:px:x}
and 
$\ten{X}_{t + 1} = \ten{V}_t$ indicates
convergence to a critical point (Proposition~\ref{pr:criti}).
Thus, we can measure convergence of Algorithm~\ref{alg:nort} by 
$\NM{\ten{X}_{t + 1}  -  \ten{V}_t}{F}$.
Corollary~\ref{cor:rate}
shows that a rate of $O(1 / T)$ can be obtained,
which is also the best possible rate for
first-order methods on general nonconvex problems~\citep{nesterov2013introductory,ghadimi2016accelerated}.

\begin{corollary}\label{cor:rate}
	$\min\nolimits_{t = 1, \dots, T}\frac{1}{2}\NM{\ten{X}_{t + 1}  -  \ten{V}_t}{F}^2 \le
	\frac{1}{\eta T }\big[  F_{\tau}(\ten{X}_1)  -  F^{\min}_{\tau} \big]$,
	where $\eta = \tau  -  \rho  -  DL$.
\end{corollary}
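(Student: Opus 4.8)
The plan is to read off the $O(1/T)$ rate directly from the per-iteration sufficient-decrease estimates already established in the proof of Theorem~\ref{thm:conv}. Recall from that proof that at each iteration $t$ one of two mutually exclusive events occurs: either step~5 is executed, and then $F_{\tau}(\ten{X}_{t+1}) \le F_{\tau}(\ten{X}_t) - \frac{\eta}{2}\NM{\ten{X}_{t+1} - \bar{\ten{X}}_t}{F}^2$; or step~7 is executed, and then $F_{\tau}(\ten{X}_{t+1}) \le F_{\tau}(\ten{X}_t) - \frac{\eta}{2}\NM{\ten{X}_{t+1} - \ten{X}_t}{F}^2$. The key observation is that these two error terms are the \emph{same} quantity written in the variable $\ten{V}_t$: when step~5 runs, $\ten{V}_t = \bar{\ten{X}}_t$, and when step~7 runs, $\ten{V}_t = \ten{X}_t$. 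Hence, regardless of the branch taken, one obtains the single uniform estimate
\begin{align}
F_{\tau}(\ten{X}_{t+1}) \le F_{\tau}(\ten{X}_t) - \frac{\eta}{2}\NM{\ten{X}_{t+1} - \ten{V}_t}{F}^2,
\qquad t = 1, \dots, T,
\end{align}
with $\eta = \tau - \rho - DL > 0$ guaranteed by the stepsize condition imposed in the initialization of Algorithm~\ref{alg:nort}.

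Next I would telescope this inequality over $t = 1, \dots, T$, which gives
\begin{align}
\frac{\eta}{2} \sum\nolimits_{t=1}^{T} \NM{\ten{X}_{t+1} - \ten{V}_t}{F}^2
\le F_{\tau}(\ten{X}_1) - F_{\tau}(\ten{X}_{T+1})
\le F_{\tau}(\ten{X}_1) - F_{\tau}^{\min},
\end{align}
where the last inequality is simply $F_{\tau}(\ten{X}_{T+1}) \ge F_{\tau}^{\min}$, the bound below being finite by Proposition~\ref{pr:bnd} (since $F_{\tau}^{\min} \ge F^{\min} - \frac{\kappa_0^2}{2\tau}\sum_{i=1}^D \lambda_i^2$ and $F^{\min}$ is finite). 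Finally, since the minimum of finitely many nonnegative numbers is at most their average,
\begin{align}
\min\nolimits_{t = 1, \dots, T} \NM{\ten{X}_{t+1} - \ten{V}_t}{F}^2
\le \frac{1}{T} \sum\nolimits_{t=1}^{T} \NM{\ten{X}_{t+1} - \ten{V}_t}{F}^2
\le \frac{2}{\eta T}\big[ F_{\tau}(\ten{X}_1) - F_{\tau}^{\min} \big],
\end{align}
and dividing by $2$ yields precisely the claimed bound.

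I expect no real obstacle here beyond the per-iteration descent inequality, which is inherited from Theorem~\ref{thm:conv}. The only two points needing care are: (i) checking that the two branch-specific decrease estimates genuinely collapse into one estimate phrased in terms of $\ten{V}_t$ — immediate once the updates in steps~5 and~7 of Algorithm~\ref{alg:nort} are tracked; and (ii) confirming $F_{\tau}^{\min} > -\infty$ so that the telescoped bound is nonvacuous, which Proposition~\ref{pr:bnd} supplies. The remark that this $O(1/T)$ rate is the best attainable for first-order methods on general nonconvex problems is then only a matter of citing the standard lower bounds.
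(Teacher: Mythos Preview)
Your proposal is correct and follows essentially the same route as the paper: the paper's proof also collapses the two branch-specific decrease inequalities from the proof of Theorem~\ref{thm:conv} into the single estimate $\sum_{t=1}^{T}\NM{\ten{X}_{t+1}-\ten{V}_t}{F}^2 \le \frac{2}{\eta}(F_{\tau}(\ten{X}_1)-F_{\tau}(\ten{X}_{T+1}))$ (this is exactly equation~\eqref{eq:temp8} rewritten in terms of $\ten{V}_t$), and then bounds the minimum by the average. Your only addition is the explicit remark that $F_{\tau}^{\min}>-\infty$ via Proposition~\ref{pr:bnd}, which the paper takes for granted.
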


\begin{remark} \label{rmk:tau}
	A larger $\tau$ leads to a better approximation to the original problem $F$
	(Proposition~\ref{pr:bnd}).
	However, 
	it also make the stepsize $1/\tau$ smaller 
	(step~11 in Algorithm~\ref{alg:nort})
	and thus slower convergence (Corollary~\ref{cor:rate}).
\end{remark}

\subsubsection{With Kurdyka-{\L}ojasiewicz  Condition on 
Approximated Objective $F_{\tau}$}
\label{sec:conv:kl}


In Section~\ref{sec:conv:smooth},
we showed the
convergence results when $f$ is smooth and 
$g$ is of the form in \eqref{eq:lrphi}.
In this section,
we consider using
the Kurdyka-{\L}ojasiewicz (KL) condition~\citep{attouch2013convergence,bolte2014proximal}
on $F_{\tau}$,
which has been popularly used in nonconvex optimization, particularly
in gradient descent~\citep{attouch2013convergence} and proximal gradient algorithms~\citep{bolte2014proximal,li2015accelerated,Li2017ada}.
For example, 
the class of semi-algebraic functions satisfy the KL condition. 
More examples can be found in~\citep{bolte2010characterizations,bolte2014proximal}.

\begin{definition}\label{def:kl}
	A function $h$: $\mathbb{R}^n \rightarrow (-\infty, \infty]$ has the {\em uniformized KL property}
	if for every compact set $\mathcal{S} \in \text{dom}(h)$
	on which $h$ is a constant,
	there exist $\epsilon$, $c > 0$ such that for 
	all $\bm{u} \in \mathcal{S}$
	and all 
	$	\bar{\bm{u}} \in \{ \bm{u} : \min\nolimits_{\bm{v} \in \mathcal{S} } 
	\NM{\bm{u}  -  \bm{v}}{2}  \le  \epsilon \} 
	\cap
	\{ \bm{u} : f(\bar{\bm{u}})  <  f(\bm{u})  <  f(\bar{\bm{u}})  +  c \}$,
	one has 
	$\psi'
	\left( f(\bm{u}) - f(\bm{\bar{u}}) \right)
	\min\nolimits_{\bm{v} \in \partial f(\bm{u})}
	\NM{ \bm{v} }{2}  >  1$,
	where 
	$\psi(\alpha) = \frac{C \alpha^{x}}{x}$ for some $C > 0$,
	$\alpha \in [0, c)$ and $x \in (0, 1]$.
\end{definition}

Since the KL property~\citep{attouch2013convergence,bolte2014proximal}
does not require $h$ to be smooth or convex,
it thus allows convergence analysis under the nonconvex 
and nonsmooth setting.
However,
such a property cannot replace the smoothness assumption in Section~\ref{sec:conv:smooth},
as there are example functions which are smooth but fail to meet the KL condition
(Section~4.3 of~\citep{bolte2010characterizations}).

The following Theorem extends 
Algorithm~\ref{alg:nort}
to be used with the uniformized KL property.


\begin{theorem}\label{thm:klrate}
	Assume that $F_{\tau}$ in (\ref{eq:Ftau}) has the uniformized KL property, and
	let 
	$r_t = F_{\tau}(\ten{X}_t) - F_{\tau}^{\min}$.
	For a sufficiently large $t_0$, 
	\begin{itemize}[noitemsep,topsep=0pt,parsep=0pt,partopsep=0pt,leftmargin=15pt]
		\item[a)] If $x$ in Definition~\ref{def:kl} equals $1$,
		then $r_t = 0$ for all $t \ge t_0$; 
		
		\item[b)] If $x \in [\frac{1}{2}, 1)$, 
		$r_t 
		\le  
		( \frac{d_1 C^2}{1  +  d_1 C^2} )^{t  -  t_0} r_{t_0}$ where
		$d_1  =  2 (\tau  +  \rho)^2/\eta$;
		
		\item[c)] If
		$x  \in  (0, \frac{1}{2})$, 
		$r_t  \le  ( \frac{C}{(t - t_0)d_2(1 - 2 x)} )^{1/(1 - 2 x)}$
		where 
		$d_2 = \min
		\big( \frac{1}{2 d_1 C}, \frac{C}{1 - 2 x} (2^{\frac{2 x - 1}{2 x - 2}} - 1) r_{t_0} \big)$.
	\end{itemize}
\end{theorem}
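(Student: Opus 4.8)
The plan is to follow the standard three-ingredient template for Kurdyka--{\L}ojasiewicz (KL) convergence-rate proofs---sufficient decrease, a relative-error (subgradient) bound, and the KL inequality---and then to distil everything into a single elementary recursion for $r_t = F_{\tau}(\ten{X}_t) - F_{\tau}^{\min}$.

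First I would invoke the descent estimate already established inside the proof of Theorem~\ref{thm:conv}: writing $\ten{V}_t$ for the point actually used at iteration $t$ (namely $\bar{\ten{X}}_t$ after step~5 of Algorithm~\ref{alg:nort} and $\ten{X}_t$ after step~7), one has $r_t - r_{t+1} \ge \frac{\eta}{2}\NM{\ten{X}_{t+1}-\ten{V}_t}{F}^2$ with $\eta = \tau-\rho-DL>0$, so that $\{r_t\}$ is nonincreasing, $\sum_t\NM{\ten{X}_{t+1}-\ten{V}_t}{F}^2<\infty$ and $\NM{\ten{X}_{t+1}-\ten{V}_t}{F}\to0$. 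Next I would derive a subgradient bound. By Proposition~\ref{pr:reg} the inner loop of Algorithm~\ref{alg:nort} realises $\ten{X}_{t+1} = \Px{\bar{g}_{\tau}/\tau}{\ten{V}_t-\frac{1}{\tau}\nabla f(\ten{V}_t)}$ (recall $\xi(\ten{V}_t)=\nabla f(\ten{V}_t)$); the optimality condition of this proximal step gives $\tau(\ten{V}_t-\ten{X}_{t+1})-\nabla f(\ten{V}_t)\in\partial\bar{g}_{\tau}(\ten{X}_{t+1})$, hence $\nabla f(\ten{X}_{t+1})+\tau(\ten{V}_t-\ten{X}_{t+1})-\nabla f(\ten{V}_t)\in\partial F_{\tau}(\ten{X}_{t+1})$, and $\rho$-smoothness of $f$ then yields $\operatorname{dist}(0,\partial F_{\tau}(\ten{X}_{t+1}))\le(\tau+\rho)\NM{\ten{X}_{t+1}-\ten{V}_t}{F}$.

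Now Theorem~\ref{thm:conv} gives that the limit-point set $\omega$ of $\{\ten{X}_t\}$ is nonempty and compact, and by the descent estimate and continuity $F_{\tau}$ is constant on $\omega$. Applying the uniformized KL property on $\omega$ with $\psi(\alpha)=C\alpha^{x}/x$ (so $\psi'(\alpha)=C\alpha^{x-1}$), there is a $t_0$ with $\psi'(r_{t+1})\operatorname{dist}(0,\partial F_{\tau}(\ten{X}_{t+1}))\ge1$ for all $t\ge t_0$, i.e. $\NM{\ten{X}_{t+1}-\ten{V}_t}{F}\ge\frac{1}{C(\tau+\rho)}\,r_{t+1}^{\,1-x}$; combining with the descent estimate gives the master recursion
\begin{align*}
r_t - r_{t+1} \;\ge\; \frac{1}{d_1 C^2}\,r_{t+1}^{\,2(1-x)},
\qquad
d_1 = \frac{2(\tau+\rho)^2}{\eta},
\end{align*}
valid for $t\ge t_0$. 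The three cases are then routine: if $x=1$ the right-hand side is a fixed positive constant, which with $r_t\ge0$ nonincreasing forces $r_t=0$ for all large $t$ (part~a); if $x\in[\frac12,1)$ then $2(1-x)\le1$, so after enlarging $t_0$ so that $r_{t+1}\le1$ we get $r_{t+1}^{2(1-x)}\ge r_{t+1}$ and hence $r_{t+1}(1+\frac{1}{d_1C^2})\le r_t$, which iterates to the claimed geometric rate (part~b); and if $x\in(0,\frac12)$, writing $\theta:=2(1-x)>1$, I would run the classical Attouch--Bolte integral-comparison argument on $r_t-r_{t+1}\ge\frac{1}{d_1C^2}r_{t+1}^{\theta}$---splitting on whether $r_t\le 2r_{t+1}$---to show $r_{t+1}^{-(\theta-1)}-r_t^{-(\theta-1)}$ is bounded below by the positive constant $\frac{d_2(1-2x)}{C}$, and then telescope from $t_0$ to conclude $r_t\le\big(\frac{C}{(t-t_0)d_2(1-2x)}\big)^{1/(1-2x)}$ (part~c).

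The main obstacle I anticipate is the bookkeeping in part~(c): pinning down the explicit constant $d_2$ requires care with both branches of the dichotomy---an integral estimate of $\int_{r_{t+1}}^{r_t}s^{-\theta}\,ds$ on the ``$r_t\le2r_{t+1}$'' branch, and using monotonicity of $\{r_t\}$ (hence of $r_t^{-(\theta-1)}$) to lower-bound the other branch by a constant involving $r_{t_0}$. A secondary subtlety is passing from the \emph{uniformized} KL hypothesis to its use along the tail of the sequence: one must check that $F_{\tau}$ is constant on $\omega$, that $\lim_t r_t = 0$ (or handle separately the degenerate case $\ten{X}_{t+1}=\ten{V}_t$, which is finite termination), and that the iterates eventually enter the $\epsilon$-neighbourhood of Definition~\ref{def:kl}. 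Because $\ten{V}_t$ alternates between $\bar{\ten{X}}_t$ and $\ten{X}_t$, the descent and subgradient estimates must be stated uniformly in $\ten{V}_t$ as above so that the KL machinery applies unchanged in both branches.
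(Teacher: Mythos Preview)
Your proposal is correct and follows essentially the same route as the paper: the paper also combines the descent estimate $\NM{\ten{X}_{t+1}-\ten{V}_t}{F}^2\le\frac{2}{\eta}(F_\tau(\ten{V}_t)-F_\tau(\ten{X}_{t+1}))$, the subgradient bound $\min_{\ten{U}\in\partial F_\tau(\ten{X}_{t+1})}\NM{\ten{U}}{F}\le(\tau+\rho)\NM{\ten{X}_{t+1}-\ten{V}_t}{F}$ (stated as a separate lemma), and the KL inequality to reach the identical recursion $1\le d_1C^2 r_{t+1}^{2x-2}(r_t-r_{t+1})$, then simply cites \citep{bolte2014proximal,li2015accelerated,Li2017ada} for the three rates. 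Your write-up is in fact more detailed than the paper's on case~(c), where you sketch the Attouch--Bolte dichotomy explicitly, and on the subtlety of verifying constancy of $F_\tau$ on the limit set before invoking the uniformized KL property.
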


\begin{proof}[Sketch, details are in Appendix~\ref{app:thm:klrate}.]
\it
The proof idea generally follows that for~\citep{bolte2014proximal} 
with a special treatment for $\ten{V}_t$ here.
First,
we show
\begin{align*}
\lim\limits_{t \rightarrow \infty}
\min\nolimits_{\ten{U}_t \in \partial F_{\tau}(\ten{X}_t)  } 
\NM{\ten{U}_t}{F} 
 \le  
\lim\limits_{t \rightarrow \infty}
(\tau  +  \rho) \NM{\ten{X}_{t + 1}  -  \ten{V}_t}{F}
 =  0.
\end{align*}
Next,
using the KL condition,
we have
\begin{align*}
1
\le 
\psi'\left( F_{\tau} (\ten{X}_{t + 1}) - F_{\tau}^{\min} \right) 
(\tau  +  \rho) \NM{\ten{X}_{t + 1}  -  \ten{V}_t}{F}.
\end{align*}
Then,
let $r_t = F_{\tau} (\ten{X}_t) - F_{\tau}^{\min}$.
From its definition,
we have
\begin{align*}
r_t - r_{t + 1} \ge F_{\tau} (\ten{V}_t) - F_{\tau} (\ten{X}_{t + 1}).
\end{align*}
Combining the above three inequalities,
we obtain
\begin{align*}
1 
\le 
\frac{2 (\tau + \rho)^2}{\eta}  \left[ \psi'( r_{t + 1} ) \right]^2 (r_t - r_{t + 1}).
\end{align*}
Since 
$\phi(\alpha) = \frac{C \alpha^{x}}{x} $, then $\phi'(\alpha) = C \alpha^{x - 1}$.
The above inequality becomes
$1 \le d_1 C^2 r_{t + 1}^{2 x - 2} (r_{t} - r_{t + 1})$,
where $d_1 = \frac{2 (\tau + \rho)^2}{\eta}$.
It is shown in~\citep{bolte2014proximal,li2015accelerated,Li2017ada} that
for the sequence $\{ r_t \}$ satisfying the above inequality, we have convergence to zero
with the 
different rates stated in the Theorem. 
\end{proof}

In Corollary~\ref{cor:rate} and Theorem~\ref{thm:klrate},
the convergence rates 
do not depend on $p$, and thus do not demonstrate the  effect of
momentum.
Empirically, 
the proposed  algorithm does have faster convergence
when momentum is used, and
will be shown in Section~\ref{sec:exps}.
This also agrees with previous studies in 
\citep{duchi2011adaptive,kingma2014adam,li2015accelerated,Li2017ada,yao2017efficient}.

\subsection{Statistical Guarantees}
\label{sec:stat}

Existing
statistical analysis on nonconvex regularization
has been 
studied in the context of sparse and low-rank matrix learning.
For example, the
SCAD~\citep{fan2001variable}, MCP~\citep{zhang2010nearly} and
capped-$\ell_1$~\citep{zhang2010analysis} penalties
have shown to be better than the convex $\ell_1$-regularizer
on sparse learning problems;
and SCAD, MCP and LSP have
shown to be better than the convex nuclear norm in
matrix completion~\citep{gui2016towards}.
However, these results cannot be extended 
to the tensor completion problem 
here as the nonconvex overlapped nuclear norm regularizer in 
\eqref{eq:compopt}
is not separable.
Statistical analysis on tensor completion has been studied with 
CP and Tucker decompositions~\citep{mu2014square}, 
tensor ring decomposition~\citep{huang2020provable},
convex overlapped nuclear norm
\citep{tomioka2011statistical},
and tensor nuclear norm~\citep{yuan2016tensor,cheng2016scalable}.
They show that tensor completion is possible 
under the incoherence condition when the number of observations is sufficiently large.
In comparison, in this section, 
we will (i) use 
the restricted strong convexity condition~\citep{agarwal2010fast,negahban2012unified})
instead of the incoherence condition, 
and (ii) study nonconvex overlapped nuclear norm regularization.

\subsubsection{Controlling the Spikiness and Rank}
\label{sec:obvmod}

In the following, 
we assume that elements in $\vect{\Omega}$ are drawn i.i.d. from the uniform distribution.
However,
when the sample size 
$\NM{\mathbf{\Omega}}{1} \ll I^{\pi}$,
tensor 
completion
is not always possible.
Take the special case of matrix completion as an example.
If $\ten{X}$
is an
almost-zero matrix
with only one element being $1$,
it cannot be 
recovered unless the nonzero element is observed.
However,
when 
$\ten{X}$
gets larger, 
there is a vanishing probability of observing the nonzero element,
and
so $\SO{\ten{X}} = \mathbf{0}$ with high probability~\citep{candes2009exact,negahban2012restricted}.

To exclude tensors
that are too ``spiky''
and allow tensor completion,
we 
introduce
\begin{align}
m_\text{spike}(\ten{X})
= \sqrt{I^{\pi}} \NM{\ten{X}}{\max} / \NM{\ten{X}}{F},
\label{eq:spiky}
\end{align}
which is an extension of the measure 
$\sqrt{I^1 I^2} \NM{\vect{X}}{\max} / \NM{\vect{X}}{F}$ 
in~\citep{negahban2012restricted,gu2014robust} for matrices.
Note that $m_\text{spike}(\ten{X})$ is invariant to the scale of $\ten{X}$ and $1 \le
m_\text{spike}(\ten{X}) \le \sqrt{I^{\pi}}$.
Moreover,
$m_\text{spike}(\ten{X}) = 1$ when all
elements in 
$\ten{X}$
have the same value
(least spiky);
and $m_\text{spike}(\ten{X}) = \sqrt{I^{\pi}}$
when 
$\ten{X}$ has
only one nonzero element 
(spikiest).
Similarly,
to measure how close is $\ten{X}$ to low-rank,
we use
\begin{align}
m_\text{rank}(\ten{X})
= \sum\nolimits_{i = 1}^D \alpha_i \NM{\ten{X}_{\ip{i}}}{*} / \NM{\ten{X}}{F},
\label{eq:mrank}
\end{align}
where $\alpha_i = \lambda_i / \sum_{d = 1}^D \lambda_d$'s 
are pre-defined constants depending on the penalty strength.
This is also extended from the measure 
$\NM{ \vect{X} }{*} / \NM{ \vect{X} }{F}$
in~\citep{negahban2012restricted,gu2014robust} on matrices.
Note
that $m_\text{rank}(\ten{X})$ 
$\le$ 
$\sum_{i = 1}^D$
$\alpha_i$
$ \sqrt{\text{rank}(\ten{X}_{\ip{i}})}$, with
equality holds 
when
all nonzero
singular values of $\ten{X}_{i}$'s 
are the same.
The target tensor $\ten{X}$ should thus have small
$m_\text{spike}(\ten{X})$ and $m_\text{rank}(\ten{X})$.
In \eqref{eq:pro}, 
assume for simplicity that
$D = M$ and $\lambda_i = \lambda$ for $i = 1, \dots, M$. We then have
the following constrained version of \eqref{eq:pro}:
\begin{align}
\min\nolimits_{\ten{X}}
\frac{1}{2}
\NM{ \SO{ \ten{X} - \ten{O} } }{F}^2
+ \lambda r(\ten{X})
\quad
\text{s.t.}
\quad
\NM{ \ten{X} }{\max} \le C,
\label{eq:stat}
\end{align}
where $r(\ten{X}) =  \sum\nolimits_{i = 1}^D  \phi( \ten{X}_{\ip{i}} )$ 
encourages 
	 $\ten{X}$
to be low-rank
(i.e., small $m_\text{rank}$),
and the constraint
on 	 $\NM{ \ten{X} }{\max}$
avoids 
$\ten{X}$ to be
spiky
(i.e., small $m_\text{spike}$).

\subsubsection{Restricted Strong Convexity (RSC)}
\label{ssec:rsc}

Following~\citep{tomioka2011statistical,negahban2012restricted,loh2015regularized,zhu2018global},
we introduce the
restricted strong convexity (RSC) condition.

\begin{definition}\label{ass:rsc}
(Restricted strong convexity (RSC) condition~\citep{agarwal2010fast})
Let $\Delta$ be an arbitrary 
$M$-order tensor. It satisfies the
RSC condition if
there exist constants
$\alpha_1, \alpha_2 > 0$ and $\tau_1, \tau_2 \ge 0$
such that
\begin{align}
\NM{ \SO{ \Delta } }{F}^2
\ge 
\begin{cases}
\alpha_1 
\NM{\Delta}{F}^2 - \tau_1 \frac{\log I^{\pi}}{\| \vect{\Omega} \|_1} 
\big(  \sum_{i = 1}^M \NM{\Delta_{\ip{i}}}{*} \big)^2
& 
\text{if } \NM{\Delta}{F} \le 1
\\
\alpha_2
\NM{\Delta}{F}^2 - \tau_2 \sqrt{\frac{\log I^{\pi}}{\| \vect{\Omega} \|_1}} 
\big(  \sum_{i = 1}^M \NM{\Delta_{\ip{i}}}{*} \big)
& 
\text{otherwise}
\end{cases}.
\label{eq:stat1}
\end{align}
\end{definition}
Let $d_i = \frac{1}{2} (I_i + \frac{I^{\pi}}{I_i})$ for $i = 1, \dots, M$.
Consider the set of tensors
parameterized by $n, \gamma \ge 0$:
\begin{align*}
\tilde{\mathcal{C}}
(n, \gamma)
= 
\left\lbrace 
\ten{X} \in \R^{I_1 \times \dots \times I_M},
\ten{X} \not= 0
\;|\;
m_\text{spike}(\ten{X})
\cdot 
m_\text{rank} ( \ten{X} ) 
\le
\frac{1}{\gamma L}
\min_{i = 1, \dots, M}
\sqrt{\frac{ n }{ d_i \log d_i }}
\right\rbrace,
\end{align*}
where $L$ is a positive constant.
The following Lemma shows that the RSC condition holds when the low-rank tensor is not too spiky.
If the RSC condition does not hold,
the tensor can be too hard to be recovered.


\begin{lemma} \label{lem:rsc:hold}
There exists $c_0$, $c_1$, $c_2$, $c_3 \ge 0$ such that 
$\forall \Delta \in \tilde{\mathcal{C}} (\NM{ \vect{\Omega} }{1}, c_0)$,
where $\NM{\vect{\Omega}}{1} > c_3 \underset{i = 1, \dots, M}{\max} (d_i \log
d_i)$, we have
\begin{align}
\frac{ \NM{ \SO{\Delta} }{F} }{ \NM{\vect{\Omega}}{1} }
\ge
\frac{1}{8}
\NM{ \Delta }{F}
\left\lbrace 
1 - \frac{ 128 L \cdot m_\text{spike}(\Delta) }{\sqrt{ \NM{ \vect{\Omega} }{1} }}
\right\rbrace,
\label{eq:rschold}
\end{align}
with a high probability of at least 
$1 - \max_{i = 1, \dots, M} c_1 \exp( - c_2 d_i \log d_i)$.
\end{lemma}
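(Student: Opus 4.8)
I would adapt the peeling-plus-concentration argument developed for (weighted) matrix completion in \citep{negahban2012restricted,gu2014robust} to the overlapped, multi-mode setting. Since both sides of \eqref{eq:rschold}, as well as $m_\text{spike}$ and $m_\text{rank}$, are invariant to rescaling $\Delta$, it suffices to treat $\Delta \in \tilde{\mathcal{C}}(\NM{\vect{\Omega}}{1}, c_0)$ with $\NM{\Delta}{F} = 1$. Write $n = \NM{\vect{\Omega}}{1}$, model the observed positions as i.i.d.\ uniform indices $\omega_1, \dots, \omega_n$, and note that $\NM{\SO{\Delta}}{F}^2$ is a sum of $n$ i.i.d.\ bounded terms whose expectation is a fixed positive multiple of $\NM{\Delta}{F}^2$ (determined by the sampling model and the normalization of $\SO{\cdot}$). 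The claim then reduces to a \emph{uniform-in-$\Delta$} concentration of $\NM{\SO{\Delta}}{F}^2$ around this mean, which I would control by bounding the supremum of the deviation $Z(\Delta) = |\,\NM{\SO{\Delta}}{F}^2 - \mathbb{E}\NM{\SO{\Delta}}{F}^2\,|$ over dyadic shells $\{\Delta : m_\text{rank}(\Delta) \in [\rho, 2\rho]\}$ and then taking a union over the relevant shells.

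\textbf{Key steps.} On a fixed shell, a symmetrization step bounds $\mathbb{E}\sup_\Delta Z(\Delta)$ by a Rademacher average $2\,\mathbb{E}_{\omega,\epsilon}\sup_\Delta | \sum_{k} \epsilon_k \Delta_{\omega_k}^2 |$. Because the shell lies inside $\tilde{\mathcal{C}}(n, c_0)$ with $\NM{\Delta}{F} = 1$, the constraint $m_\text{spike}(\Delta)\, m_\text{rank}(\Delta) \le \tfrac{1}{c_0 L}\min_i \sqrt{n / (d_i \log d_i)}$ forces $\NM{\Delta}{\max}$ to be uniformly small on the shell, so the Ledoux--Talagrand contraction principle removes the square at the cost of the corresponding (uniform) Lipschitz factor, leaving $\mathbb{E}_{\omega,\epsilon}\sup_\Delta |\ip{\sum_k \epsilon_k \ten{E}_{\omega_k}, \Delta}|$, where $\ten{E}_{\omega_k}$ has a single $1$ at position $\omega_k$. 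Averaging the mode-wise bounds $|\ip{\sum_k \epsilon_k \ten{E}_{\omega_k}, \Delta}| \le \NM{(\sum_k \epsilon_k \ten{E}_{\omega_k})_{\ip{i}}}{\infty}\NM{\Delta_{\ip{i}}}{*}$ against the weights $\alpha_i = \lambda_i / \sum_d \lambda_d$ and using $\sum_i \alpha_i \NM{\Delta_{\ip{i}}}{*} = m_\text{rank}(\Delta) \le 2\rho$ bounds this by $2\rho \cdot \mathbb{E}_{\omega,\epsilon}\max_i \NM{( \sum_k \epsilon_k \ten{E}_{\omega_k} )_{\ip{i}}}{\infty}$. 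The central estimate is then a matrix-Bernstein bound on the largest singular value of the randomly subsampled sign matrix $( \sum_k \epsilon_k \ten{E}_{\omega_k} )_{\ip{i}} \in \R^{I^i \times (I^\pi / I^i)}$ for each mode $i$, of order roughly $\sqrt{n \log d_i / I^\pi} + \log d_i$, followed by a union bound over the $M$ modes; this is exactly where the $\sqrt{d_i \log d_i}$ scaling in $\tilde{\mathcal{C}}$ and the hypothesis $\NM{\vect{\Omega}}{1} > c_3 \max_i d_i \log d_i$ enter. Invoking the $\tilde{\mathcal{C}}$ constraint once more to absorb the $m_\text{rank}$ (of order $\rho$) and $m_\text{spike}$ factors, one finds that $\mathbb{E}\sup_\Delta Z(\Delta)$ is at most a fixed small fraction of the mean, uniformly over every shell.

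\textbf{Concentration and assembly.} Each summand $\Delta_{\omega_k}^2$ is bounded by $\NM{\Delta}{\max}^2 = m_\text{spike}(\Delta)^2 / I^\pi$, so Talagrand's (bounded-differences) concentration inequality for suprema upgrades the expectation bound to a high-probability bound on $\sup_\Delta Z(\Delta)$; tracking the resulting exponent together with the per-mode union bound gives the stated failure probability $\max_i c_1 \exp(-c_2 d_i \log d_i)$. Combining $\NM{\SO{\Delta}}{F}^2 \ge \mathbb{E}\NM{\SO{\Delta}}{F}^2 - \sup_\Delta Z(\Delta)$ with the previous step, rearranging, taking square roots (using $\sqrt{a - b} \ge \sqrt{a} - \sqrt{b}$ and $m_\text{spike}(\Delta) \ge 1$), and tracking the numerical constants against the normalization of $\SO{\cdot}$ yields \eqref{eq:rschold} with the explicit $\tfrac18$ and $128$; a final union over the finitely many relevant dyadic shells is absorbed into $c_1$, $c_2$.

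\textbf{Main obstacle.} The delicate part is carrying the symmetrization and operator-norm steps out \emph{uniformly} over the non-convex cone $\tilde{\mathcal{C}}$: unlike the matrix case, the dual norm is a maximum over $M$ distinct unfoldings, so all $M$ randomly-subsampled operator norms must be controlled simultaneously and interleaved with the peeling over $m_\text{rank}$, while the contraction step is entangled with $m_\text{spike}$. Ensuring that this contraction costs only a single factor of $m_\text{spike}$ (rather than a larger power) is precisely what forces the $\NM{\ten{X}}{\max} \le C$ constraint in \eqref{eq:stat}, and is where the bulk of the care is needed.
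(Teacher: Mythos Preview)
Your proposal is sound in outline but takes a much more laborious route than the paper. The paper does not redo any of the peeling-plus-concentration machinery; it simply invokes Theorem~1 of \citep{negahban2012restricted} (restated as Lemma~\ref{lem:rschold}) as a black box on each mode-$i$ unfolding $\Delta_{\ip{i}}$. The key observation is that $\NM{\SO{\Delta_{\ip{i}}}}{F}$, $\NM{\Delta_{\ip{i}}}{F}$, and $m_\text{spike}(\Delta_{\ip{i}})$ are all mode-independent, so the matrix inequality obtained for any single unfolding \emph{is already} \eqref{eq:rschold}; only the constraint set $\mathcal{C}^i$ and the failure probability vary with $i$. The paper then argues that $\tilde{\mathcal{C}}(\NM{\vect{\Omega}}{1}, c_0)$ sits inside the intersection of the per-mode constraint sets and takes the worst failure probability over modes --- the entire argument is a few lines. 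Your from-scratch program (symmetrization, Ledoux--Talagrand contraction, per-mode matrix Bernstein with a union bound, Talagrand for suprema, dyadic peeling) would also succeed and has the merit of making explicit where the $\min_i \sqrt{n/(d_i\log d_i)}$ threshold and the $\max_i d_i\log d_i$ sample-size requirement arise; in particular, averaging the per-mode dual bounds against the weights $\alpha_i$ to produce $m_\text{rank}(\Delta)$ is a genuine tensor adaptation that the black-box route never needs. The cost is that you are reproving the full Negahban--Wainwright theorem rather than quoting it, which is unnecessary once you notice that unfolding reduces the tensor statement to $M$ identical copies of the matrix one.
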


Another 
condition commonly used in
low-rank matrix/tensor learning is incoherence~\citep{candes2009exact,mu2014square,yuan2016tensor},
which prevents information of the row/column
spaces of the matrix/tensor from being too concentrated in a few rows/columns.
However,
as discussed in~\citep{negahban2012restricted},
the RSC condition is less restrictive than the incoherence condition,
and can better describe ``spikiness''
(details are in Appendix~\ref{app:incoh}).
Thus, we adopt the RSC instead of the incoherence condition here.

\subsubsection{Main results}
\label{sec:mainrst}

Let $\ten{X}^* \in \R^{I_1 \times \dots\times I_M}$ be the
ground-truth
tensor,
and $\tilde{\ten{X}}$ be 
an estimate of $\ten{X}^*$ obtained as 
a critical point of \eqref{eq:stat}.
The following
bounds the distance between $\ten{X}^*$ and $\tilde{\ten{X}}$.

\begin{theorem}
\label{thm:stat}
Assume that $\kappa$ is differentiable,
and the RSC condition
holds with $3 \kappa_0 M / 4 < \alpha_1$.
Assume that there exists positive constant $R > 0$ such that $\sum_{i = 1}^M \NM{ \ten{X}_{\ip{i}} }{*} \le R$,
and $\lambda$ satisfies
\begin{align} 
\frac{4}{\kappa_0}
\max
\left( 
\max_i 
\left\|  \left[ \SO{ \ten{X}^* - \ten{O} } \right]_{\ip{i}} \right\|_{\infty}
,
\alpha_2 \sqrt{ \log I^{\pi} / \NM{\vect{\Omega}}{1} }
\right) 
\le
\lambda
\le
\frac{\alpha_2}{ 4 R \kappa_0 },
\label{eq:lambda}
\end{align}
where
$\| \vect{\Omega} \|_1 \ge \max\left( \tau_1^2, \tau_2^2 \right) \frac{16 R^2 \log (I^{\pi} )}{\alpha_2^2}$.
Then, 
\begin{align}
\big\| \ten{X}^* - \tilde{\ten{X}} \big\|_F
\le \frac{ \lambda \kappa_0 c_v }{a_v} \sum\nolimits_{i = 1}^M \sqrt{k_i},
\label{eq:guarantee}
\end{align}
where 
$a_v = \alpha_1 - \frac{3 M \kappa_0 }{ 4 }$,
$c_v = 1 - \frac{1}{2 M}$,
and $k_i$ is the rank of $\ten{X}^*_{\ip{i}}$.
\end{theorem}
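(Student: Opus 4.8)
The plan is to adapt the now-standard analysis of regularized $M$-estimators under restricted strong convexity (Negahban--Wainwright, Agarwal--Negahban--Wainwright) to two complications: the regularizer $\phi$ is nonconvex, and $\tilde{\ten{X}}$ is only a critical point, not a global minimizer, so I must work from a \emph{first-order stationarity} inequality rather than from $F(\tilde{\ten{X}})\le F(\ten{X}^*)$. Write $\ten{O}=\ten{X}^*+\ten{E}$, $\Delta:=\tilde{\ten{X}}-\ten{X}^*$, and note $\ten{X}^*$ is feasible for \eqref{eq:stat}. Since $\kappa$ is differentiable, $r$ is differentiable on the region of positive/distinct singular values (the general case follows by a perturbation/limiting argument), and the critical-point condition for the constrained problem, together with feasibility of $\ten{X}^*$, gives $\langle P_{\Omega}(\tilde{\ten{X}}-\ten{O})+\lambda\nabla r(\tilde{\ten{X}}),\,\ten{X}^*-\tilde{\ten{X}}\rangle\ge 0$. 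Using $P_{\Omega}(\tilde{\ten{X}}-\ten{O})=P_{\Omega}(\Delta)-P_{\Omega}(\ten{E})$ and $\ten{X}^*-\tilde{\ten{X}}=-\Delta$, this rearranges to the basic inequality
\[
\|P_{\Omega}(\Delta)\|_F^2 \;\le\; \langle P_{\Omega}(\ten{E}),\Delta\rangle \;-\; \lambda\,\langle\nabla r(\tilde{\ten{X}}),\Delta\rangle .
\]

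Next I would bound the two terms on the right. Because unfolding is an inner-product isometry, $\langle P_{\Omega}(\ten{E}),\Delta\rangle=\frac1M\sum_i\langle[P_{\Omega}(\ten{E})]_{\langle i\rangle},\Delta_{\langle i\rangle}\rangle\le\frac1M\sum_i\|[P_{\Omega}(\ten{X}^*-\ten{O})]_{\langle i\rangle}\|_{\infty}\|\Delta_{\langle i\rangle}\|_*$, and the lower bound on $\lambda$ in \eqref{eq:lambda} makes each spectral norm at most $\kappa_0\lambda/4$. For the regularizer term, exploit that $\kappa$ is concave with $\kappa'(0^+)=\kappa_0$: then $\bar\kappa(\alpha):=\kappa_0\alpha-\kappa(\alpha)$ is convex, nondecreasing, nonnegative with $\bar\kappa'\in[0,\kappa_0]$, so $r(\ten{X})=\kappa_0\sum_i\|\ten{X}_{\langle i\rangle}\|_*-\sum_i\sum_j\bar\kappa(\sigma_j(\ten{X}_{\langle i\rangle}))$ is a difference of convex spectral functions and $\nabla\phi(\tilde{\ten{X}}_{\langle i\rangle})=\kappa_0\tilde{\bm U}_i\tilde{\bm V}_i^{\top}-\vect{D}_i$, where $\tilde{\bm U}_i\tilde{\bm V}_i^{\top}$ is the canonical part of a nuclear-norm subgradient at $\tilde{\ten{X}}_{\langle i\rangle}$ and $\vect{D}_i\succeq 0$ on its support with $\|\vect{D}_i\|_{\infty}\le\kappa_0$. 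Splitting each $\Delta_{\langle i\rangle}$ through the SVD of $\ten{X}^*_{\langle i\rangle}$ into the part $\Delta'_{\langle i\rangle}$ in the rank-$\le 2k_i$ model subspace and the complementary $\Delta''_{\langle i\rangle}$, I apply the subgradient/decomposability inequality for the $\kappa_0\tilde{\bm U}_i\tilde{\bm V}_i^{\top}$ piece and the $\kappa_0$-Lipschitz bound on the $\vect{D}_i$ piece; combining with $\|P_{\Omega}(\Delta)\|_F^2\ge 0$ yields a cone condition of the form $\sum_i\|\Delta''_{\langle i\rangle}\|_*\le 3\sum_i\|\Delta'_{\langle i\rangle}\|_*$ (the additive residuals absorbed using $\lambda\le\alpha_2/(4R\kappa_0)$ and $\|\vect{\Omega}\|_1\ge\max(\tau_1^2,\tau_2^2)\,16R^2\log I^{\pi}/\alpha_2^2$), hence $\sum_i\|\Delta_{\langle i\rangle}\|_*\le 4\sum_i\|\Delta'_{\langle i\rangle}\|_*\le 4\sqrt{2}\sum_i\sqrt{k_i}\,\|\Delta\|_F$ using $\|\Delta'_{\langle i\rangle}\|_*\le\sqrt{2k_i}\|\Delta'_{\langle i\rangle}\|_F\le\sqrt{2k_i}\|\Delta\|_F$.

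Finally I would invoke restricted strong convexity. In the regime $\|\Delta\|_F\le 1$, Definition~\ref{ass:rsc} / Lemma~\ref{lem:rsc:hold} give $\|P_{\Omega}(\Delta)\|_F^2\ge\alpha_1\|\Delta\|_F^2-\tau_1\frac{\log I^{\pi}}{\|\vect{\Omega}\|_1}\big(\sum_i\|\Delta_{\langle i\rangle}\|_*\big)^2$; by the cone bound the $\tau_1$-term is $O\big(\frac{\log I^{\pi}}{\|\vect{\Omega}\|_1}(\sum_i\sqrt{k_i})^2\big)\|\Delta\|_F^2$, negligible against $\alpha_1\|\Delta\|_F^2$ under the stated sample-size condition, while the case $\|\Delta\|_F>1$ is ruled out/handled via the second branch of the RSC inequality together with the radius bound $\sum_i\|\ten{X}_{\langle i\rangle}\|_*\le R$. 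Lower-bounding the left side of the basic inequality by $\alpha_1\|\Delta\|_F^2$ (up to the negligible terms) and upper-bounding the right side via the noise estimate, the regularizer estimate, and the substitution $\sum_i\|\Delta_{\langle i\rangle}\|_*\le 4\sqrt2\sum_i\sqrt{k_i}\|\Delta\|_F$, the nonconvex correction $\lambda\kappa_0\sum_i\|\Delta_{\langle i\rangle}\|_*$ — when matched against $\alpha_1\|\Delta\|_F^2$ — is exactly what reduces the effective curvature from $\alpha_1$ to $a_v=\alpha_1-\tfrac{3M\kappa_0}{4}$ (hence the hypothesis $3M\kappa_0/4<\alpha_1$), and tracking the constants produces the quadratic inequality $a_v\|\Delta\|_F^2\le c_v\,\lambda\kappa_0\big(\sum_i\sqrt{k_i}\big)\|\Delta\|_F$ with $c_v=1-\tfrac1{2M}$. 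Dividing by $\|\Delta\|_F$ gives \eqref{eq:guarantee}.

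The main obstacle is the regularizer/cone step: because $\phi=\sum_j\kappa(\sigma_j(\cdot))$ is a spectral function that is neither convex nor concave, the clean decomposability identity of the convex nuclear-norm analysis holds only up to a slack of order $\kappa_0\sum_i\|\Delta_{\langle i\rangle}\|_*$, so the whole argument closes only if that slack is strictly dominated by the RSC curvature — the role of $3M\kappa_0/4<\alpha_1$ — and one must carefully play the favorable sign of $\|\Delta''_{\langle i\rangle}\|_*$ off against the $\vect{D}_i$ correction to extract the cone condition in the first place. Pinning down the exact constants $\tfrac34$ and $1-\tfrac1{2M}$ (rather than mere ``$\lesssim$'') requires meticulous bookkeeping of the per-mode averaging, the factor-$4$ in the admissible range of $\lambda$, and the cone constant; that computation is routine but is where most of the appendix proof length will lie.
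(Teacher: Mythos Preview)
Your high-level plan matches the paper's (which follows Loh--Wainwright): first-order stationarity at $\tilde{\ten{X}}$, rule out $\|\Delta\|_F>1$ via the second RSC branch together with the $R$-bound, then combine the first RSC branch with the basic inequality and a decomposability argument. The gap is in your treatment of the regularizer. With the gradient-level DC decomposition $\nabla\phi(\tilde{\ten{X}}_{\langle i\rangle})=\kappa_0\tilde{\bm U}_i\tilde{\bm V}_i^\top-\bm D_i$, $\|\bm D_i\|_\infty\le\kappa_0$, you do obtain $-\kappa_0\langle\tilde{\bm U}_i\tilde{\bm V}_i^\top,\Delta_{\langle i\rangle}\rangle\le\kappa_0(\|\Delta'_{\langle i\rangle}\|_*-\|\Delta''_{\langle i\rangle}\|_*)$ from convexity of $\|\cdot\|_*$ plus decomposability at $\ten{X}^*_{\langle i\rangle}$; but the only available control on the $\bm D_i$ piece is $\langle\bm D_i,\Delta_{\langle i\rangle}\rangle\le\kappa_0\|\Delta_{\langle i\rangle}\|_*\le\kappa_0(\|\Delta'_{\langle i\rangle}\|_*+\|\Delta''_{\langle i\rangle}\|_*)$, and adding these two \emph{exactly cancels} the favorable $-\|\Delta''_{\langle i\rangle}\|_*$. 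You are left with $-\langle\nabla\phi,\Delta_{\langle i\rangle}\rangle\le 2\kappa_0\|\Delta'_{\langle i\rangle}\|_*$, from which no cone inequality $\sum_i\|\Delta''_{\langle i\rangle}\|_*\le 3\sum_i\|\Delta'_{\langle i\rangle}\|_*$ follows; and without it the noise term $\tfrac{\lambda\kappa_0}{4M}\sum_i\|\Delta_{\langle i\rangle}\|_*$ and the $\tau_1$ RSC slack can only be bounded via the crude $\sum_i\|\Delta_{\langle i\rangle}\|_*\le 2R$, leaving additive constants rather than terms proportional to $\|\Delta\|_F$. Relatedly, your account of where $a_v=\alpha_1-\tfrac{3M\kappa_0}{4}$ comes from is off: the curvature loss is not driven by a $\lambda\kappa_0\sum_i\|\Delta_{\langle i\rangle}\|_*$ term (which scales with $\lambda$) but by the weak-convexity modulus of $\phi$ itself, once per mode.

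The paper's remedy is to work at the \emph{function} level. Since $\phi(\cdot)+\tfrac{\kappa_0}{2}\|\cdot\|_F^2$ is convex (Lemma~\ref{lem:app3}), stationarity yields $\langle\nabla h_i(\tilde{\ten{X}}),\ten{X}^*-\tilde{\ten{X}}\rangle\le h_i(\ten{X}^*)-h_i(\tilde{\ten{X}})+\tfrac{\kappa_0}{2}\|\Delta\|_F^2$; summing over the $M$ modes moves $\tfrac{M\kappa_0}{2}\|\Delta\|_F^2$ to the curvature side, and one more use of $|\alpha|\le\kappa(|\alpha|)+\tfrac{\kappa_0}{2}\alpha^2$ (Lemma~\ref{lem:app1}(iv)) on the residual $\tfrac{\lambda\kappa_0}{2}\sum_i\|\Delta_{\langle i\rangle}\|_*$ supplies the remaining $\tfrac{M\kappa_0}{4}$ and the constants $b_v=1+\tfrac{1}{2M}$, $c_v=1-\tfrac{1}{2M}$, giving $a_v\|\Delta\|_F^2\le\lambda\sum_i\big(b_v h_i(\ten{X}^*)-c_v h_i(\tilde{\ten{X}})\big)$. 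The final step is not convex nuclear-norm decomposability but a lemma tailored to the nonconvex $\phi$ (Lemma~\ref{lem:app5}): when $\mathrm{rank}(\ten{X}^*_{\langle i\rangle})=k_i$, one has $b_v h_i(\ten{X}^*)-c_v h_i(\tilde{\ten{X}})\le\kappa_0\big(b_v\|\varPhi_{k_i}(\Delta_{\langle i\rangle})\|_*-c_v\|\varPsi_{k_i}(\Delta_{\langle i\rangle})\|_*\big)$, which after dropping the negative term and using $\|\varPhi_{k_i}(\Delta_{\langle i\rangle})\|_*\le\sqrt{k_i}\|\Delta\|_F$ gives \eqref{eq:guarantee}. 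Swapping your gradient-level DC step for this weak-convexity step makes your outline go through with the stated constants.
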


\begin{proof}[Sketch, details are in Appendix~\ref{app:thm:stat}.]
\it
The general idea of this proof is inspired
from~\citep{loh2015regularized}.\footnote{Note, however, that
\cite{loh2015regularized}
use different mathematical tools 
as they consider sparse vectors with separable dimensions, while we consider overlapped tensor regularization
with coupled singular values.}
There are three main steps:
\begin{itemize}[leftmargin=*]
\item 
Let $\tilde{\ten{V}} = \tilde{\ten{X}} - \ten{X}^*$.
We prove by contradiction that $\| \tilde{\ten{V}} \|_F \le 1$.
Thus,
we only need to consider the 
first condition in \eqref{eq:stat1}.

\item 
Let $h_i(\ten{X}) \! = \! \phi( \ten{X}_{\ip{i}} )$.
From Assumption~\ref{ass:kappa},
we have that $h_i(\ten{X}) + \frac{\mu }{2} \NM{\ten{X}}{F}^2$ is convex.
Using this together with
the first condition in \eqref{eq:stat1},
we obtain
\begin{align*}
\left(
\alpha_1 - \frac{\mu M}{2}
\right) 
\| \tilde{\ten{V}} \|_F^2
&  \le  
\lambda \sum\nolimits_{i = 1}^M 
\big( 
h_i(\ten{X}^*)
-  h_i(\tilde{\ten{X}}) 
\big) + 
\frac{\lambda \kappa_0}{2} \sum\nolimits_{i = 1}^M \| \tilde{\ten{V}}_{\ip{i}} \|_*.
\end{align*}

\item Using the above inequality
and properties of $h_i$,
we obtain
\begin{align*}
a_v \| \tilde{\ten{V}} \|_F^2
\le \lambda
\sum\nolimits_{i = 1}^M
b_v h_i(\ten{X}^*) 
- c_v h_i(\tilde{\ten{X}}),
\end{align*}
where 
$a_v = \alpha_1 - \frac{3 M}{4} \kappa_0$,
$b_v = 1 + \frac{1}{2 M}$
and
$c_v = 1 - \frac{1}{2 M}$.
Finally,
using Lemma~\ref{lem:app5} in Appendix~\ref{app:auxlem}
on the above inequality,
we have
$	\| \tilde{\ten{V}} \|_F
\le 
\frac{ \lambda \kappa_0 c_v }{a_v}
\sum\nolimits_{i = 1}^M  \sqrt{k_i}$.
\end{itemize}
\vspace{-20px}
\end{proof}

Since 
$\NM{ \ten{X} }{\max} \le C$ in \eqref{eq:stat},
we have $\sum_{i = 1}^M \NM{ \ten{X}_{\ip{i}} }{*}
\le \sum_{i = 1}^M \sqrt{k_i (I_i + \frac{I^{\pi}}{I_i})} C$
(as
$\NM{\vect{X}}{*} \le \sqrt{k} \NM{\vect{X}}{F} \le \sqrt{m n k} \NM{\vect{X}}{\max}$
for a rank-$k$ matrix $\vect{X} \in \R^{m \times n}$).
Thus,
in Theorem~\ref{thm:stat}, 
we can 
take
$R = \sum_{i = 1}^M$
$\sqrt{k_i (I_i + I^{\pi} / I_i)} C$,
which
is finite and cannot be arbitrarily large.
While we do not have access to the ground-truth $\ten{X}^*$ in practice,
Theorem~\ref{thm:stat} shows that the
critical point $\tilde{\ten{X}}$ can be bounded by a
finite
distance
from 
$\ten{X}^*$,
which means that an arbitrary critical point may not be bad.
From (\ref{eq:guarantee}),
we can also see that
the error $\| \ten{X}^* - \tilde{\ten{X}} \|_F$ increases
with the tensor order
$M$ and rank $k_i$.
This is reasonable as tensors with higher orders or larger ranks are usually harder to
estimate.
Besides,
recall that $\kappa_0$ in Assumption~\ref{ass:kappa}
reflects how nonconvex the function $\kappa(\alpha)$ is; while
$\alpha_1$ in Definition~\ref{ass:rsc}
measures strong convexity.
Thus,
these two quantities play opposing roles in \eqref{eq:guarantee}.
Specifically,
a larger $\alpha_1$ leads to a larger $a_v$,
and subsequently smaller 
$\| \ten{X}^* - \tilde{\ten{X}} \|_F$; whereas a 
larger $\kappa_0$ leads to a larger $\frac{ \lambda \kappa_0 c_v }{a_v}$,
and subsequently larger 
$\| \ten{X}^* - \tilde{\ten{X}} \|_F$.

Finally,
note that the range for $\lambda$ is (\ref{eq:lambda}) can be empty,
which means 
there can be no $\lambda$ to ensure Theorem~\ref{thm:stat}.
To understand when this can happen,
consider the two extreme cases:
\begin{itemize}
\item[C1.]
There is no noise in the observations,
i.e.,
$\SO{\ten{X}^* - \ten{O}}$ 
$ = 0$:
In this case, 
(\ref{eq:lambda}) reduces to 
\begin{align*} 
	\frac{4 \alpha_2}{\kappa_0}
	\sqrt{ \log I^{\pi} / \NM{\vect{\Omega}}{1} }
	\le
	\lambda
	\le
	\frac{\alpha_2}{ 4 R \kappa_0 }.
\end{align*}
Thus,
such a $\lambda$ may not exist
when the number of observations $\NM{\bm{\Omega}}{1}$
is too small.

\item [C2.]
All elements are observed:
we
then 
have
$\NM{ \SO{ \Delta } }{F} = \NM{ \Delta }{F}$, and so
$\alpha_1 = \alpha_2 = 1$ and $\tau_1 = \tau_2 = 0$ in Definition~\ref{ass:rsc}.
Besides,
the noise is not too small,
which means
$\sqrt{ \log I^{\pi} / \NM{\vect{\Omega}}{1} } \le \max_i 
\|  \left[ \ten{X}^* - \ten{O} \right]_{\ip{i}} \|_{\infty}$.
Then, 
(\ref{eq:lambda}) reduces to 
\begin{align*} 
\frac{4}{\kappa_0}
\max_{i}
\left\|  \left[ \ten{X}^* - \ten{O} \right]_{\ip{i}} \right\|_{\infty}
\le
\lambda
\le
\frac{1}{ 4 R \kappa_0 }.
\end{align*}
Thus,
such a $\lambda$ may not exist when the noise is too high.

\end{itemize}
Overall, when $\lambda$ does not exist, it is likely that the tensor completion
problem is too hard to have good recovery performance.

On the other hand, there are cases that $\lambda$ always exists.
	For example, 
	when $\ten{O} = \ten{X}^* = 0$,
	we have $R = 0$.
The requirement on $\lambda$ is then $	\frac{4 \alpha_2}{\kappa_0} \sqrt{ \log
I^{\pi} / \NM{\vect{\Omega}}{1} } \le \lambda \le +\infty$, and such a $\lambda$ always exists.  

\subsubsection{Dependencies on Noise Level and Number of Observations }
\label{sec:depobv}

In this section,
we demonstrate how the noise level affects \eqref{eq:guarantee}.
We assume that the observations 
are contaminated by additive Gaussian noise,
i.e.,
\begin{align}
\ten{O}_{i_1 \dots i_M} =
\begin{cases}
\ten{X}^*_{i_1 \dots i_M} + \xi_{i_1 \dots i_M}
&
\text{if}
\quad
\vect{\Omega}_{i_1 \dots i_M} = 1
\\
0
&
\text{otherwise}
\end{cases},
\label{eq:obvtc}
\end{align}
where $\xi_{i_1 \dots i_M}$ is a random variable following the normal distribution
$\mathcal{N}(0, \sigma^2)$.
The effects of the noise level
$\sigma$
and number of observations
in $\vect{\Omega}$
are shown in Corollaries~\ref{cor:noisy}
and~\ref{cor:number}, respectively, which
can be derived from Theorem~\ref{thm:stat}.

\begin{corollary}
\label{cor:noisy}
Let $\ten{E} = \ten{O} - \ten{X}^*$
and
$\lambda = b_1 \max_i \| \left[ \SO{\ten{E}} \right]_{\ip{i}} \|_{\infty}$.
When $\NM{\vect{\Omega}}{1}$ is sufficiently large
and
$b_1 \in [  \frac{ 4 }{ \kappa_0 }, \frac{ \alpha_2 }{ 4 R \kappa_0 \max_i \|  \left[ \SO{\ten{E}} \right]_{\ip{i}} \|_{\infty} } ] $
(to ensure $\lambda$ satisfies \eqref{eq:lambda}),
then
$\mathbb{E}
[ 
\| \ten{X}^* - \tilde{\ten{X}} \|_F
] 
\le 
\sigma
\frac{ \kappa_0 c_v \sqrt{I^{\pi}}}{a_v} 
\sum\nolimits_{i = 1}^M  \sqrt{k_i}$.
\end{corollary}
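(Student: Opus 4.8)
The plan is to read the bound off Theorem~\ref{thm:stat} and then take expectations over the Gaussian noise. Write $\ten{E} = \ten{O} - \ten{X}^*$, so that under \eqref{eq:obvtc} the tensor $\SO{\ten{E}}$ has i.i.d.\ $\mathcal{N}(0,\sigma^2)$ entries on $\vect{\Omega}$ and vanishes elsewhere, and $[\SO{\ten{X}^* - \ten{O}}]_{\ip{i}} = -[\SO{\ten{E}}]_{\ip{i}}$. With $\lambda = b_1 \max_i \|[\SO{\ten{E}}]_{\ip{i}}\|_\infty$, the two endpoints imposed on $b_1$ are precisely the statements $\lambda \ge \frac{4}{\kappa_0}\max_i\|[\SO{\ten{E}}]_{\ip{i}}\|_\infty$ and $\lambda \le \frac{\alpha_2}{4R\kappa_0}$; since $b_1 \ge 4/\kappa_0$ it additionally suffices, for the middle term of \eqref{eq:lambda}, that $\max_i\|[\SO{\ten{E}}]_{\ip{i}}\|_\infty \ge \alpha_2\sqrt{\log I^\pi/\|\vect{\Omega}\|_1}$, which — together with the sample-size requirement $\|\vect{\Omega}\|_1 \ge \max(\tau_1^2,\tau_2^2)\,16R^2\log(I^\pi)/\alpha_2^2$ — holds once $\|\vect{\Omega}\|_1$ is large enough, with high probability by a Gaussian lower-tail bound on the unfolding spectral norms. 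On the event $\mathcal{G}$ on which this holds and the RSC condition of Lemma~\ref{lem:rsc:hold} is in force, Theorem~\ref{thm:stat} applies and \eqref{eq:guarantee} becomes
\begin{align*}
\|\ten{X}^* - \tilde{\ten{X}}\|_F \,\le\, \frac{\kappa_0 c_v}{a_v}\Big(\sum\nolimits_{i=1}^M \sqrt{k_i}\Big)\, b_1\,\max_i\|[\SO{\ten{E}}]_{\ip{i}}\|_\infty .
\end{align*}

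It then remains to bound $\mathbb{E}\big[\max_i\|[\SO{\ten{E}}]_{\ip{i}}\|_\infty\big]$. The key elementary observation is that the spectral norm of any matrix is at most its Frobenius norm, and that every mode-$i$ unfolding of $\SO{\ten{E}}$ merely reindexes the same entries, so $\|[\SO{\ten{E}}]_{\ip{i}}\|_\infty \le \|[\SO{\ten{E}}]_{\ip{i}}\|_F = \|\SO{\ten{E}}\|_F$ for every $i$, whence $\max_i\|[\SO{\ten{E}}]_{\ip{i}}\|_\infty \le \|\SO{\ten{E}}\|_F$. Now $\|\SO{\ten{E}}\|_F^2 = \sum_{(i_1,\dots,i_M)\in\vect{\Omega}}\xi_{i_1\dots i_M}^2$ is $\sigma^2$ times a $\chi^2_{\|\vect{\Omega}\|_1}$ variable, so Jensen's inequality gives $\mathbb{E}\|\SO{\ten{E}}\|_F \le \sqrt{\mathbb{E}\|\SO{\ten{E}}\|_F^2} = \sigma\sqrt{\|\vect{\Omega}\|_1} \le \sigma\sqrt{I^\pi}$, using $\|\vect{\Omega}\|_1 \le I^\pi$. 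Taking expectations in the displayed inequality on $\mathcal{G}$ and absorbing the absolute constant $b_1$ (which is $\Theta(1/\kappa_0)$) then yields $\sigma\,\frac{\kappa_0 c_v\sqrt{I^\pi}}{a_v}\sum_{i=1}^M\sqrt{k_i}$, as claimed.

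Essentially all of the mathematical content sits in Theorem~\ref{thm:stat}; the only genuinely delicate point is the measure-theoretic bookkeeping forced by the fact that $\lambda$ is data-dependent, so the hypotheses of Theorem~\ref{thm:stat} hold only on the high-probability event $\mathcal{G}$. To make the expectation rigorous I would split $\mathbb{E}\|\ten{X}^* - \tilde{\ten{X}}\|_F = \mathbb{E}[\|\ten{X}^* - \tilde{\ten{X}}\|_F\,\mathbf{1}_{\mathcal{G}}] + \mathbb{E}[\|\ten{X}^* - \tilde{\ten{X}}\|_F\,\mathbf{1}_{\mathcal{G}^c}]$, bound the first summand by the estimate above, and control the second using that $\ten{X}^*$ and the critical point $\tilde{\ten{X}}$ are both feasible for \eqref{eq:stat}, so $\|\ten{X}^* - \tilde{\ten{X}}\|_F \le 2\sqrt{I^\pi}\,C$, while $\Pr(\mathcal{G}^c)$ is exponentially small in the $d_i$'s by Lemma~\ref{lem:rsc:hold} and standard Gaussian concentration, making the second summand negligible relative to the main term. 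Everything else is the routine constant-chasing indicated above.
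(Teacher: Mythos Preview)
Your approach is essentially the paper's: invoke Theorem~\ref{thm:stat} with the given $\lambda$, then bound $\mathbb{E}\big[\max_i\|[\SO{\ten{E}}]_{\ip{i}}\|_\infty\big]$ by $\mathbb{E}\|\SO{\ten{E}}\|_F \le \sigma\sqrt{\|\vect{\Omega}\|_1}\le\sigma\sqrt{I^\pi}$ via the spectral--Frobenius inequality and Jensen. The paper's proof is terser and simply asserts that \eqref{eq:lambda} holds for the stated range of $b_1$; your added discussion of the high-probability event $\mathcal{G}$ and the split $\mathbb{E}[\cdot\,\mathbf{1}_{\mathcal{G}}]+\mathbb{E}[\cdot\,\mathbf{1}_{\mathcal{G}^c}]$ is extra rigor the paper does not attempt, but the core argument is the same.
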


Corollary~\ref{cor:noisy} shows 
that 
the recovery error decreases
as the noise level $\sigma$ gets smaller,
and 
we can expect an exact recovery 
when $\sigma = 0$,
which is empirically verified in Section~\ref{sec:exp:obvnsy}.
When $\kappa(\alpha) = \alpha$,
$r(\ten{X})$ becomes the convex overlapping nuclear norm.
In this case,
Theorem~2 in~\citep{tomioka2011statistical}
shows that 
the recovery error can be bounded as
$\big\| \ten{X}^* - \tilde{\ten{X}} \big\|_F \le O(\sigma \sum\nolimits_{i = 1}^M \sqrt{k_i})$.
Thus,
Corollary~\ref{cor:noisy} can be seen as an extension of
Theorem~2 in~\citep{tomioka2011statistical}
to the nonconvex case.

\begin{corollary}
\label{cor:number}
Let $\lambda = b_3 \sqrt{ \log I^{\pi} / \NM{\vect{\Omega}}{1} }$.
Suppose that the noise level $\sigma$ is sufficiently small
and
$b_3 \in 
\left[ 
4, 
\frac{  1}{ (4 R \sqrt{ \log I^{\pi} / \NM{\vect{\Omega}}{1} }) } 
\right]$
(to ensure $\lambda$ satisfies \eqref{eq:lambda}).
Then,
$\big\| \ten{X}^* \! - \! \tilde{\ten{X}} \big\|_F
\! \le \! 
\frac{ b_3 \kappa_0 c_v }{a_v} 
\sqrt{ \frac{ \log I^{\pi} }{ \NM{\vect{\Omega}}{1} } }
\sum\nolimits_{i = 1}^M \sqrt{k_i}$.
\end{corollary}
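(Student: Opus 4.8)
The plan is to obtain Corollary~\ref{cor:number} as a direct specialization of Theorem~\ref{thm:stat}: we verify that the prescribed $\lambda = b_3 \sqrt{\log I^{\pi} / \NM{\vect{\Omega}}{1}}$ is an admissible regularization parameter and then substitute it into the recovery bound \eqref{eq:guarantee}. So the first step is to check that this $\lambda$ lies in the interval \eqref{eq:lambda}. The upper bound $\lambda \le \alpha_2 / (4 R \kappa_0)$ is exactly what the hypothesis $b_3 \le 1/(4 R \sqrt{\log I^{\pi}/\NM{\vect{\Omega}}{1}})$ enforces (up to the fixed problem constants $\alpha_2,\kappa_0$, which are absorbed into $b_3$ as in the statement). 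For the lower bound, write the maximum in \eqref{eq:lambda} as the larger of a \emph{sampling term} $\alpha_2 \sqrt{\log I^{\pi}/\NM{\vect{\Omega}}{1}}$ and a \emph{noise term} $\max_i \| [\SO{\ten{X}^* - \ten{O}}]_{\ip{i}} \|_{\infty}$; since $b_3 \ge 4$, the chosen $\lambda$ already exceeds $\frac{4}{\kappa_0}$ times the sampling term, so it remains only to argue that, for $\sigma$ small, the noise term does not exceed the sampling term.

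Controlling the noise term is the one place where the Gaussian model \eqref{eq:obvtc} is used. Under \eqref{eq:obvtc}, $\ten{X}^* - \ten{O} = -\SO{\ten{E}}$ with $\ten{E}$ having i.i.d.\ $\mathcal{N}(0,\sigma^2)$ entries on $\vect{\Omega}$, so each unfolded matrix $[\SO{\ten{X}^* - \ten{O}}]_{\ip{i}} \in \R^{I_i \times (I^{\pi}/I_i)}$ has entries that are either $0$ or independent centred Gaussians of variance $\sigma^2$. Its spectral norm $\| \cdot \|_{\infty}$ is therefore bounded, in expectation and with high probability, by a quantity of order $\sigma\,(\sqrt{I_i} + \sqrt{I^{\pi}/I_i})$ via the standard operator-norm estimate for Gaussian (and masked sub-Gaussian) random matrices. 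Hence whenever $\sigma$ is small enough relative to $\alpha_2 \sqrt{\log I^{\pi}/\NM{\vect{\Omega}}{1}} \big/ \max_i(\sqrt{I_i}+\sqrt{I^{\pi}/I_i})$, the sampling term dominates the noise term and the lower inequality in \eqref{eq:lambda} holds on a high-probability event. The requirement $\NM{\vect{\Omega}}{1} \ge \max(\tau_1^2,\tau_2^2)\,16 R^2 \log(I^{\pi})/\alpha_2^2$ inherited from Theorem~\ref{thm:stat} is what makes the interval \eqref{eq:lambda} non-empty, and together with the RSC condition (which holds with high probability by Lemma~\ref{lem:rsc:hold}) this completes the verification of the hypotheses of Theorem~\ref{thm:stat}.

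With $\lambda$ admissible, Theorem~\ref{thm:stat} yields $\NM{\ten{X}^* - \tilde{\ten{X}}}{F} \le \frac{\lambda \kappa_0 c_v}{a_v}\sum_{i=1}^M \sqrt{k_i}$, and plugging in $\lambda = b_3 \sqrt{\log I^{\pi}/\NM{\vect{\Omega}}{1}}$ gives precisely $\NM{\ten{X}^* - \tilde{\ten{X}}}{F} \le \frac{b_3 \kappa_0 c_v}{a_v}\sqrt{\log I^{\pi}/\NM{\vect{\Omega}}{1}}\sum_{i=1}^M\sqrt{k_i}$, as claimed. The only genuinely nontrivial ingredient is the operator-norm bound for the masked Gaussian noise matrix used to dominate the noise term; everything else is bookkeeping with constants already pinned down in Theorem~\ref{thm:stat}.
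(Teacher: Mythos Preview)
Your proposal is correct and follows essentially the same approach as the paper: verify that the chosen $\lambda$ satisfies \eqref{eq:lambda} (using that small $\sigma$ makes the sampling term dominate the noise term in the lower bound), then substitute into the bound \eqref{eq:guarantee} of Theorem~\ref{thm:stat}. Your treatment is in fact more detailed than the paper's own proof, which simply asserts that \eqref{eq:lambda} reduces to the simplified interval without explicitly invoking any operator-norm bound for the Gaussian noise.
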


Corollary~\ref{cor:number} shows that
the recovery error decays as $\sqrt{\NM{\vect{\Omega}}{1}}$ gets larger.
Such a dependency on the number of observed elements is
the same as in matrix completion problems
with nonconvex regularization~\citep{gui2016towards}.
Corollary~\ref{cor:number}
can be seen as an extension of
Corollary~3.6 in~\citep{gui2016towards} to the tensor 
case.


\section{Extensions}
\label{sec:extend}

In this section,
we show how the proposed NORT algorithm in Section~\ref{sec:proalg}
can be extended for robust tensor completion (Section~\ref{sec:robustc}) and
tensor completion with graph Laplacian regularization (Section~\ref{sec:rtc}).

\subsection{Robust Tensor Completion}
\label{sec:robustc}

In 
tensor completion
applications such as
video recovery and shadow removal,
the observed data
often have outliers 
\citep{candes2011robust,lu2016tensor}.
Instead of using the square loss,
more robust losses 
like the $\ell_1$ loss~\citep{candes2011robust,lu2013online,gu2014robust} and capped-$\ell_1$ loss~\citep{jiang2015robust},
are preferred.

In the following, we assume that the loss is of the form
$\ell(a) = \kappa_{\ell}(|a|)$, where $\kappa_\ell$ 
is smooth and satisfies
Assumption~\ref{ass:kappa}.
The optimization problem then becomes
\begin{align}
\min\nolimits_{\ten{X}}
F_{\ell}(\ten{X}) =
\sum\nolimits_{\mathbf{\Omega}_{i_1 ... i_M} = 1}
\kappa_{\ell}\left( | \ten{X}_{i_1 \dots i_M} - \ten{O}_{i_1 \dots i_M} | \right) 
+ \sum\nolimits_{i = 1}^D \lambda_i \phi( \ten{X}_{\ip{i}} ).
\label{eq:robustpro}
\end{align}
Since $\kappa_\ell(|a|)$ is non-differentiable at $a = 0$,
Algorithm~\ref{alg:nort} cannot be directly used.
Motivated by smoothing the $\ell_1$ loss with the Huber loss~\citep{huber1992robust} and 
the difference-of-convex decomposition of $\kappa_\ell$~\citep{tao2005dc,yao2018efficient},
we propose to smoothly approximate $\kappa_\ell(|a|)$ by
\begin{align}
\tilde{\kappa}_{\ell}(|a|; \delta)
=  \kappa_0 \cdot \tilde{\ell}(|a|; \delta) + 
\Big(
\kappa_{\ell}(|a|) - \kappa_0 \cdot |a|
\Big),
\label{eq:smtl1}
\end{align}
where $\kappa_0$ is in Assumption~\ref{ass:kappa},
$\delta$ is a smoothing parameter,
and $\tilde{\ell}$ is the Huber loss~\citep{huber1992robust}:
\begin{align*}
\tilde{\ell}(a; \delta) =
\begin{cases}
|a| &  
|a| \ge \delta
\\
\frac{1}{2 \delta}a^2 + \frac{1}{2} \delta 
& 
\text{otherwise}
\end{cases}.
\end{align*}
The following
Proposition
shows that 
$\tilde{\kappa}_{\ell}$ is 
smooth, and
a small $\delta$ ensures 
that it is 
a close approximation to $\kappa_\ell$.
\begin{proposition}\label{pr:smtl1}
$\tilde{\kappa}_{\ell}(|a|; \delta)$ is differentiable
and $\lim_{\delta \rightarrow 0} \tilde{\kappa}_{\ell}(|a|; \delta) = \kappa_\ell(|a|)$.
\end{proposition}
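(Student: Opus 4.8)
The plan is to verify the two claims separately, since both are elementary once we examine the structure of \eqref{eq:smtl1}.

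\emph{Differentiability.} First I would observe that $\tilde{\kappa}_{\ell}(|a|; \delta)$ is a sum of two pieces: $\kappa_0 \cdot \tilde{\ell}(|a|; \delta)$ and $\left( \kappa_{\ell}(|a|) - \kappa_0 \cdot |a| \right)$. The Huber loss $\tilde{\ell}(\cdot; \delta)$ is $C^1$ on all of $\R$: away from $a = 0$ it equals $|a|$ (smooth there), on $(-\delta,\delta)$ it equals $\frac{1}{2\delta}a^2 + \frac{1}{2}\delta$, and at the junctions $|a| = \delta$ the value ($\delta$) and the derivative ($\mathrm{sign}(a)$) match, so it is differentiable everywhere. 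For the second piece, the only candidate point of non-differentiability of $\kappa_{\ell}(|a|)$ is $a = 0$, where the one-sided derivatives are $\pm\kappa_0$ by Assumption~\ref{ass:kappa} (since $\lim_{\alpha \to 0^+}\kappa_{\ell}'(\alpha) = \kappa_0$). Subtracting $\kappa_0 \cdot |a|$, whose one-sided derivatives at $0$ are also $\pm\kappa_0$, cancels the kink exactly, so $\kappa_{\ell}(|a|) - \kappa_0 |a|$ has derivative $0$ at $a=0$ and is differentiable there; elsewhere both terms are smooth. Hence the sum $\tilde{\kappa}_{\ell}(|a|;\delta)$ is differentiable on $\R$. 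I would state this as: the non-smooth corner contributed by $\kappa_0|a|$ inside the Huber approximation is precisely the corner present in $\kappa_{\ell}(|a|)$, and the construction in \eqref{eq:smtl1} swaps the former for a smoothed version while leaving the remainder smooth.

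\emph{Convergence as $\delta \to 0$.} For the limit, I would use that $\tilde{\ell}(a;\delta) \to |a|$ pointwise as $\delta \to 0$: for any fixed $a \neq 0$, once $\delta < |a|$ we have $\tilde{\ell}(a;\delta) = |a|$ exactly, and at $a = 0$ we have $\tilde{\ell}(0;\delta) = \frac{1}{2}\delta \to 0 = |0|$. Therefore $\kappa_0 \cdot \tilde{\ell}(|a|;\delta) \to \kappa_0 \cdot |a|$, and since the remaining term $\kappa_{\ell}(|a|) - \kappa_0 |a|$ does not depend on $\delta$, we get
\begin{align*}
\lim_{\delta \to 0} \tilde{\kappa}_{\ell}(|a|;\delta)
= \kappa_0 |a| + \big( \kappa_{\ell}(|a|) - \kappa_0 |a| \big)
= \kappa_{\ell}(|a|),
\end{align*}
which is the claim.

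I do not expect any real obstacle here; the statement is essentially a bookkeeping check that the two corners cancel. The one point requiring a little care is the differentiability \emph{at $a=0$}: one must invoke Assumption~\ref{ass:kappa} to know that $\kappa_{\ell}$ behaves like $\kappa_0 \alpha$ to first order near $0$ (so the one-sided derivatives of $\kappa_{\ell}(|a|)$ at $a=0$ are exactly $\pm\kappa_0$), which is what makes the subtraction of $\kappa_0|a|$ remove the kink rather than merely reduce it. If one wanted the stronger statement that $\tilde{\kappa}_{\ell}$ is Lipschitz-smooth (needed for Algorithm~\ref{alg:nort}'s convergence theory), one would additionally bound the second derivative, which on $(-\delta,\delta)$ is $\kappa_0/\delta + \kappa_{\ell}''(|a|)$-type terms and blows up like $1/\delta$; but the Proposition as stated only asserts differentiability and the pointwise limit, so that refinement is not required.
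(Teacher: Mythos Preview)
Your proof is correct and follows essentially the same approach as the paper: split $\tilde{\kappa}_{\ell}$ into the Huber piece and the remainder $\kappa_{\ell}(|a|)-\kappa_0|a|$, argue each is differentiable, and for the limit use $\tilde{\ell}(a;\delta)\to|a|$. The only difference is that the paper outsources the smoothness of $\kappa_{\ell}(|a|)-\kappa_0|a|$ to an external reference (Proposition~1 in \citep{yao2018efficient}), whereas you supply the direct one-sided-derivative cancellation argument at $a=0$ yourself, which is more self-contained.
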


Problem~\eqref{eq:robustpro} is then transformed to
\begin{align}
\min\nolimits_{\ten{X}}
\sum\nolimits_{\mathbf{\Omega}_{i_1 ... i_M} = 1}
\tilde{\kappa}_{\ell}(| \ten{X}_{i_1 ... i_M} - \ten{O}_{i_1... i_M} |; \delta)
+ \sum\nolimits_{i = 1}^D \lambda_i \phi( \ten{X}_{\ip{i}} ).
\label{eq:smoothpro}
\end{align}
In Algorithm~\ref{alg:snort}, we gradually reduce the 
smoothing factor
in step~3, and 
use Algorithm~\ref{alg:nort} to solve the smoothed problem~\eqref{eq:smoothpro}
in each iteration.

\begin{algorithm}[H]
	\caption{Smoothing NORT for \eqref{eq:robustpro}.}
	\label{alg:snort}
	\small
	\begin{algorithmic}[1] 
		\STATE \textbf{Initialize} $\delta_0 \in (0, 1)$ and $s = 1$;
		\WHILE{not converged}
		\STATE transform to problem \eqref{eq:smoothpro} 
		with $\tilde{\kappa}_{\ell}$  using
		$\delta = (\delta_0)^s$;
		\STATE obtain $\ten{X}_s$ by solving the smoothed objective with Algorithm~\ref{alg:nort};
		\STATE $s = s + 1$;
		\ENDWHILE
		\RETURN $\ten{X}_s$. 
	\end{algorithmic}
\end{algorithm}

Convergence of Algorithm~\ref{alg:snort} is ensured in Theorem~\ref{thm:snort}.
However, the statistical guarantee in Section~\ref{sec:stat} does not hold 
as the robust loss is not smooth.

\begin{theorem} 
\label{thm:snort}
The sequence $\{ \ten{X}_s \}$ generated from Algorithm~\ref{alg:snort} has at least one limit point,
and all limits points are critical points of 
$F_{\ell \tau}(\ten{X}) =
\sum\nolimits_{\mathbf{\Omega}_{i_1 ... i_M} = 1} \kappa_{\ell}\left( | \ten{X}_{i_1 \dots i_M} - \ten{O}_{i_1 \dots i_M} | \right)  + \bar{g}_{\tau}(\ten{X})$.
\end{theorem}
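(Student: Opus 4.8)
The plan is to view Algorithm~\ref{alg:snort} as an outer homotopy loop around Algorithm~\ref{alg:nort} and to pass to the limit as the smoothing parameter $\delta_s=(\delta_0)^s$ decreases to $0$. Write $f_s(\ten{X})=\sum_{\mathbf{\Omega}_{i_1\dots i_M}=1}\tilde{\kappa}_{\ell}(|\ten{X}_{i_1\dots i_M}-\ten{O}_{i_1\dots i_M}|;\delta_s)$ for the $s$-th smoothed loss, $f(\ten{X})=\sum_{\mathbf{\Omega}_{i_1\dots i_M}=1}\kappa_{\ell}(|\ten{X}_{i_1\dots i_M}-\ten{O}_{i_1\dots i_M}|)$ for the robust loss, and $F_{\tau}^{(s)}=f_s+\bar{g}_{\tau}$, so that $F_{\ell\tau}=f+\bar{g}_{\tau}$. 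First I would verify that each inner call returns a critical point of $F_{\tau}^{(s)}$: by Proposition~\ref{pr:smtl1}, $f_s$ is differentiable, and it is in fact Lipschitz-smooth, since the Huber part $\kappa_0\tilde{\ell}(\cdot;\delta_s)$ has gradient-Lipschitz constant $\kappa_0/\delta_s$ while the residual $a\mapsto\kappa_{\ell}(|a|)-\kappa_0|a|$ is $C^1$ with bounded second derivative (its derivative $(\kappa_{\ell}'(|a|)-\kappa_0)\sign{a}$ extends continuously by $0$ at $a=0$, using Assumption~\ref{ass:kappa}); with the stepsize fed to Algorithm~\ref{alg:nort} chosen accordingly, Theorem~\ref{thm:conv} applies to the $s$-th run and produces $\ten{X}_s$ with $\ten{G}_s:=-\nabla f_s(\ten{X}_s)\in\partial\bar{g}_{\tau}(\ten{X}_s)$. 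I would also record that the inner iterates are non-increasing in $F_{\tau}^{(s)}$ and, by the boundedness argument in the proof of Theorem~\ref{thm:conv}, stay bounded.

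\textbf{Step~2.} Next I would show $\{\ten{X}_s\}$ has a limit point. The smoothing error is uniformly small: $|\tilde{\kappa}_{\ell}(|a|;\delta)-\kappa_{\ell}(|a|)|=\kappa_0\,|\tilde{\ell}(|a|;\delta)-|a||$, which equals $\tfrac{\kappa_0}{2\delta}(\delta-|a|)^2\le\tfrac{\kappa_0\delta}{2}$ for $|a|<\delta$ and $0$ otherwise, so $\|F_{\tau}^{(s)}-F_{\ell\tau}\|_{\infty}\le\tfrac{\kappa_0\delta_0}{2}\NM{\mathbf{\Omega}}{1}$. Since Algorithm~\ref{alg:nort} drives $F_{\tau}^{(s)}$ down from its value at the initialization $\ten{X}=0$, this gives $F_{\ell\tau}(\ten{X}_s)\le F_{\tau}^{(s)}(0)+\tfrac{\kappa_0\delta_0}{2}\NM{\mathbf{\Omega}}{1}\le F_{\ell\tau}(0)+\kappa_0\delta_0\NM{\mathbf{\Omega}}{1}$ uniformly in $s$; together with boundedness of the iterates as in Theorem~\ref{thm:conv}, $\{\ten{X}_s\}$ is bounded, hence has at least one limit point.

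\textbf{Step~3.} The core step is to transfer criticality to a limit point $\tilde{\ten{X}}$, along a subsequence $\ten{X}_{s_j}\to\tilde{\ten{X}}$. Fix an observed index $(i_1,\dots,i_M)$ and put $a_j=[\ten{X}_{s_j}]_{i_1\dots i_M}-\ten{O}_{i_1\dots i_M}\to a^{\ast}=\tilde{\ten{X}}_{i_1\dots i_M}-\ten{O}_{i_1\dots i_M}$; the corresponding entry of $\nabla f_{s_j}(\ten{X}_{s_j})$ is $\kappa_0\tilde{\ell}'(a_j;\delta_{s_j})+(\kappa_{\ell}'(|a_j|)-\kappa_0)\sign{a_j}$, whose second summand converges by continuity. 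For the first, $\tilde{\ell}'(a;\delta)\in[-1,1]$ always and equals $\sign{a}$ once $|a|\ge\delta$; so if $a^{\ast}\neq0$ it is eventually $\kappa_0\sign{a^{\ast}}$, and if $a^{\ast}=0$ its limit points lie in $\kappa_0[-1,1]$. Either way, limit points of that entry lie in $\partial\big(\kappa_{\ell}(|\cdot|)\big)(a^{\ast})$, using $\kappa_{\ell}(|a|)=\kappa_0|a|+o(|a|)$ near $0$. Extracting a further subsequence over the finitely many observed entries, $\nabla f_{s_j}(\ten{X}_{s_j})\to\ten{V}^{\ast}\in\partial f(\tilde{\ten{X}})$ by the separable-sum rule, hence $\ten{G}_{s_j}\to-\ten{V}^{\ast}$; by continuity of $\bar{g}_{\tau}$ and outer semicontinuity (closed graph) of $\partial\bar{g}_{\tau}$, $-\ten{V}^{\ast}\in\partial\bar{g}_{\tau}(\tilde{\ten{X}})$, and the subdifferential sum rule (applicable because $f$ is locally a sum of a $C^1$ function and a convex function) then yields $0\in\partial f(\tilde{\ten{X}})+\partial\bar{g}_{\tau}(\tilde{\ten{X}})\subseteq\partial F_{\ell\tau}(\tilde{\ten{X}})$. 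As the subsequence was arbitrary, every limit point is critical.

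\textbf{Main obstacle.} The hard part will be Step~3. Since $a\mapsto\kappa_{\ell}(|a|)$ is nonsmooth at $a=0$, the gradients of the smoothed losses need not converge, and one must instead show their limit points land inside the Fr\'echet/limiting subdifferential of the true loss, control this uniformly over the finitely many observed entries, and stitch it together with the closed-graph property of $\partial\bar{g}_{\tau}$ and a sum rule for a nonconvex, nonsmooth sum. Pinning down the behaviour of the Huber-smoothed gradient at the kink, and the continuity of $\bar{g}_{\tau}$ needed for the closed-graph step, are where the genuine work lies; the monotonicity and boundedness claims of Steps~1--2 are routine given Proposition~\ref{pr:smtl1} and Theorem~\ref{thm:conv}.
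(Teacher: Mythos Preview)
Your proposal is correct and follows essentially the same approach as the paper: obtain criticality of each $\ten{X}_s$ for the smoothed objective via Theorem~\ref{thm:conv}, then pass to the limit as $\delta_s\to 0$ using that the gradients $\partial\tilde{\kappa}_\ell(\cdot;\delta)$ converge into $\partial\kappa_\ell(|\cdot|)$. Your version is considerably more detailed---the paper's proof does not spell out the boundedness argument of your Step~2 nor the kink case analysis, closed-graph, or sum-rule technicalities of your Step~3---but the core limiting idea is identical.
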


\subsection{Tensor Completion with Graph Laplacian Regularization}
\label{sec:rtc}

The graph Laplacian regularizer is often used in  tensor
completion~\citep{narita2012tensor,song2017tensor}.
For example, 
in Section~\ref{sec:exp:sptemp}, 
we will consider
an application in spatial-temporal analysis~\citep{bahadori2014fast}, namely,
climate prediction 
based on meteorological records.
The spatial-temporal data is represented by a 3-order tensor $\ten{O}\in\R^{I^1\times I^2 \times I^3}$, 
where $I^1$ is the number of locations, 
$I^2$ is the number of time stamps, 
and $I^3$ is the number of variables corresponding to climate observations
(such as temperature and precipitation).
Usually, 
observations are only available at
a few stations, and
slices in $\ten{O}$ 
corresponding to 
the unobserved locations
are missing. 
Learning these entries can then be formulated as a tensor completion problem. 
To allow generalization to the unobserved locations, correlations among locations
have to be leveraged. 
This can be achieved by using
the graph Laplacian regularizer~\citep{belkin2006manifold}
on 
a graph $G$
with nodes being 
the locations
\citep{bahadori2014fast}.
Let the affinity matrix of $G$ be $\bm{A}\in\R^{m\times m}$, 
and the corresponding graph Laplacian matrix be $\bm{G}=\bm{D}-\bm{A}$, where $D_{ii}=\sum_j A_{ij}$.
As the 
spatial locations are stored along
the tensor's
first dimension,
the graph Laplacian regularizer is
defined as $h(\ten{X}_{\ip{1}}) = \text{Tr}(\ten{X}_{\ip{1}}^\top\bm{G}\ten{X}_{\ip{1}})$, 
which encourages nearby stations to have similar observations. 
When $\bm{G} = \bm{I}$,
it reduces to the  commonly used
Frobenius-norm regularizer 
$\NM{\ten{X}}{F}^2$
\citep{hsieh2015pu}.
With regularizer $h(\ten{X}_{\ip{1}})$,
problem~\eqref{eq:pro} 
is then extended
to:
\begin{align}
\label{eq:pro_reg}
\min\nolimits_{\ten{X}}
\sum\nolimits_{\mathbf{\Omega}_{i_1 ... i_M} = 1}
\ell\left( \ten{X}_{i_1 \dots i_M}, \ten{O}_{i_1 \dots i_M} \right) 
+ \sum\nolimits_{i = 1}^D \lambda_i \, \phi( \ten{X}_{\ip{i}} )
+ \mu \, h(\ten{X}_{\ip{1}}),
\end{align}
where $\mu$ is a hyperparameter.

Using the PA algorithm, it can be easily seen that 
the updates in \eqref{eq:tsplr_1}-\eqref{eq:tsplr_3} for $\ten{X}_t$ and $\ten{Y}_{t}$ 
remain the same,
but that for $\ten{Z}_t$ becomes
\[ \ten{Z}_t = \ten{X}_t - \frac{1}{\tau} \xi( \ten{X}_t ) + \mu \nabla
\Tr{\ten{X}_{\ip{1}}^\top\bm{G}\ten{X}_{\ip{1}}}. \]
To maintain efficiency of NORT, the key is to exploit the low-rank structures.
Using \eqref{eq:lowrf},
 $\ten{Z}_t$ can be written as 
\begin{eqnarray}
\ten{Z}_t 
& = & 
\sum\nolimits_{i = 1}^D ( \bm{U}_{t}^i  (\bm{V}_{t}^i)^{\top} )^{\ip{i}}
- \frac{1}{\tau} \xi(\ten{X}_t) - \mu [ \bm{G}\ten{X}_{\ip{1}} ]^{\ip{1}}.
\label{eq:zt_cokriging}
\end{eqnarray}  
$\bm{G}\ten{X}_{\ip{1}}$ 
can also be rewritten in low-rank form as
\begin{align*}
\bm{G}\ten{X}_{\ip{1}}= (\bm{G}\bm{U}_{t}^1 )(\bm{V}_{t}^1)^{\top} + 
\bm{G}
\sum\nolimits_{j \neq 1} 
\big[ ( \bm{U}_{t}^j (\bm{V}_{t}^j)^{\top} )^{\ip{j}} \big]_{\ip{1}}.
\end{align*}
For matrix multiplications of the forms
$\bm{a}^{\top} ( \ten{Z}_t )_{\ip{i}} $ and
$( \ten{Z}_t )_{\ip{i}} \bm{b}$ involved in the SVD of the proximal
step, we have
\begin{align}
\!\!
\bm{a}^{\top} ( \ten{Z}_t )_{\ip{i}} 
\! = \!   
( \bm{a}^{\top}(\bm{I} \! - \! \mu\bm{G})  \bm{U}_{t}^i )  ( \bm{V}_{t}^i )^{\top}
\!\!\! + \! 
 \sum\nolimits_{j \neq i} 
\bm{a}^{\top} (\bm{I} \! - \! \mu\bm{G})
\big[ ( \bm{U}_{t}^j (\bm{V}_{t}^j)^{\top} )^{\ip{j}} \big]_{\ip{i}}
\!\! - \! \frac{1}{\tau} \bm{a}^{\top}[ \xi(\ten{X}_t) ]_{\ip{i}},
\!\!
\label{eq:tenztu_cokrging}
\end{align}
and
\begin{align}
( \ten{Z}_t )_{\ip{i}} \bm{b} 
=  (\bm{I} \! - \! \mu\bm{G}) \bm{U}_{t}^i 
\big[  (\bm{V}_{t}^i)^{\top} \bm{b} \big] 
+   
 (\bm{I}-\mu\bm{G}) 
\sum\nolimits_{j \neq i}
\big[ ( \bm{U}_{t}^j (\bm{V}_{t}^j)^{\top} )^{\ip{j}} \big]_{\ip{i}}
\bm{b}
- \frac{1}{\tau} [ \xi(\ten{X}_t) ]_{\ip{i}} \bm{b}.
\label{eq:tenztv_cokrging}
\end{align}
Thus, one can still leverage the efficient computational procedures 
in Proposition~\ref{pr:mulv} to compute
$\bm{\hat{a}}^{\top} [ ( \bm{U}_t^j (\bm{V}_t^j)^{\top} )^{\ip{j}} ]_{\ip{i}}$,
where 
$\bm{\hat{a}}^{\top} \!\!\! = \! \bm{a}^{\top} \! (\bm{I} \! - \! \mu\bm{G})$ in 
(\ref{eq:tenztu_cokrging}), and
$[  ( \bm{U}_t^j (\bm{V}_t^j)^{\top} )^{\ip{j}}  ]_{\ip{i}} \bm{b}$  in
(\ref{eq:tenztv_cokrging}).

By taking $f (\ten{X}) = \sum\nolimits_{\mathbf{\Omega}_{i_1 ... i_M} = 1}
\ell\left( \ten{X}_{i_1 \dots i_M}, \ten{O}_{i_1 \dots i_M} \right) + \mu \,
h(\ten{X}_{\ip{1}})$, it is easy to see that the statistical analysis in Section~\ref{sec:convana}
	and convergence analysis in Section~\ref{sec:stat} still hold.


\section{Experiments} 
\label{sec:exps}

In this section, experiments are performed on both synthetic
(Section~\ref{sec:syn}) and real-world data sets (Sections~\ref{sec:expreal}-\ref{sec:exp:sptemp}),
using
a PC with Intel-i9 CPU and 32GB memory.
To reduce statistical variation, all results are averaged over five repetitions.

\subsection{Synthetic Data} 
\label{sec:syn}

We follow
the setup in~\citep{song2017tensor}. 
First,
we generate a 3-order tensor 
(i.e., $M  =  3$)
$\bar{\ten{O}}  =  \sum_{i = 1}^{r_g}  s_i 
(  \bm{a}_i  \circ  \bm{b}_i^{}  \circ  \bm{c}_i )$,
where $\bm{a}_i  \in  \R^{I^1}$, 
$\bm{b}_i  \in  \R^{I^2}$ and $\bm{c}_i  \in  \R^{I^3}$, 
$\circ$ denotes the outer product (i.e., 
$[\bm{a} \circ \bm{b} \circ \bm{c}]_{ijk} = a_i b_j c_k$). 
$r_g$ 
denotes the ground-truth rank and is
set to 5, with all $k_i$'s
equal to
$r_g=5$.
All elements in $\bm{a}_i$'s, $\bm{b}_i$'s, $\bm{c}_i$'s and $s_i$'s are sampled independently from the standard normal
distribution. Each element of
$\bar{\ten{O}}$
is then corrupted 
by noise from $\mathcal{N}(0, 0.01^2)$
to form $\ten{O}$.
A total of $\| \vect{\Omega} \|_1 = \frac{I^3
}{r_g}
\sum_{i = 1}^3 I^i \log(I^{\pi})$
random elements are observed from $\ten{O}$.
We use $50\%$ of them for training, 
and the remaining $50\%$ for validation.
Testing is evaluated on the unobserved elements in $\ten{\bar{O}}$.

We use
the square loss and 
three nonconvex penalties:
capped-$\ell_1$~\citep{zhang2010nearly},
LSP~\citep{candes2008enhancing} and TNN~\citep{hu2013fast}.
The following methods are compared:
\begin{itemize}[leftmargin=*,parsep=0pt,partopsep=0pt]
\item PA-APG~\citep{yu2013better},
which solves the convex overlapped nuclear norm minimization problem; 

\item GDPAN~\citep{zhong2014gradient},
which directly applies the PA algorithm to \eqref{eq:pro} as described in \eqref{eq:tsplr_1}-\eqref{eq:tsplr_3}; 

\item LRTC~\citep{Chen2020ANL},
which uses ADMM~\citep{boyd2011distributed} on \eqref{eq:pro}
as described in \eqref{eq:admm1}-\eqref{eq:admm3}; and

\item The proposed NORT algorithm (Algorithm~\ref{alg:nort}),
and its slower variant without adaptive momentum (denoted ``sNORT'').
Recall from Corollary~\ref{cor:rate}
that
$\tau$ has to be larger than $\rho + D \kappa_0$.
However, 
a large $\tau$ leads to slow convergence
(Remark~\ref{rmk:tau}).
Hence, 
we set $\tau = 1.01(\rho + D \kappa_0)$.
Moreover, 
as in~\citep{Li2017ada},
we set $\gamma_1 = 0.1$ and $p = 0.5$ 
in Algorithm~\ref{alg:nort}.
\end{itemize}

All algorithms are 
implemented in Matlab, with sparse tensor and matrix
operations performed via Mex files in C.  
All hypeprparamters (including the  $\lambda_i$'s in (\ref{eq:pro}) and hyperparameter in 
the baselines)
are tuned by grid search using the validation set. 
We early stop training if the 
relative change of objective in consecutive iterations is smaller than $10^{-4}$
or reaching the maximum of $2000$ iterations.

\subsubsection{Recovery Performance Comparison}
\label{ssec:recperf}

In this experiment,
we set 
$I^1 = I^2 = I^3 = \hat{c}$, where $\hat{c} =  200$ and $400$.
Following~\citep{lu2016nonconvex,yao2017efficient,yao2018large},
performance is evaluated by the (i) root-mean-square-error on the 
unobserved elements:
$\text{RMSE}  = \NM{P_{\bar{\vect{\Omega}}}(\ten{X}  -  \bar{\ten{O}})}{F} / \NM{ \bar{\vect{\Omega}} }{1}^{0.5}$,
where $\ten{X}$ is the low-rank tensor recovered,
and $\bar{\vect{\Omega}}$ contains the unobserved elements in $\bar{\ten{O}}$;
(ii) CPU time; and (iii)
space, which is measured as the memory used by MATLAB when running each algorithm.

Results on RMSE and space
are shown in Table~\ref{tab:synperf:large}.
We can see that
the nonconvex
regularizers
(capped-$\ell_1$, LSP and TNN,
with methods GDPAN, LRTC, sNORT and NORT)
all yield almost the same RMSE, which
is much lower than that of using the convex nuclear norm regularizer in PA-APG. 
As for the space
required,
sNORT and NORT take orders of magnitude smaller space
than the others. 
LRTC takes the largest space due to the use of multiple auxiliary and dual variables. 
Convergence of the optimization objective is shown in 
Figure~\ref{fig:syn3}.
As can be seen, NORT is the fastest, followed by sNORT and GDPAN, while LRTC is the slowest.
These demonstrate the benefits of avoiding repeated tensor folding/unfolding
operations and faster convergence of the proximal average algorithm.

\begin{table}[ht]
	\centering
	\caption{Testing RMSE and space required for the synthetic data.}
	\label{tab:synperf:large}
	\small
	\begin{tabular}{c| c | c | c | c| c} \toprule
		\multicolumn{2}{c|}{} & \multicolumn{2}{c|}{$\hat{c}=200$ (sparsity:$4.77\%$)} &
		\multicolumn{2}{c}{$\hat{c}=400$ (sparsity:$2.70\%$) } \\ \cline{3-6}
		\multicolumn{2}{c|}{} & RMSE                       & space (MB)              & RMSE                       & space (MB)             \\ \midrule
		convex                 & PA-APG & 0.0110$\pm$0.0007          & 600.8$\pm$70.4          & 0.0098$\pm$0.0001          & 4804.5$\pm$598.2       \\ \midrule
		& GDPAN  & \textbf{0.0010$\pm$0.0001} & 423.1$\pm$11.4          & \textbf{0.0006$\pm$0.0001} & 3243.3$\pm$489.6       \\\cmidrule(r){2-6}
		nonconvex & LRTC   &\textbf{0.0010$\pm$0.0001}                            &  698.9$\pm$21.5                       &\textbf{0.0006$\pm$0.0001}                            & 5870.6$\pm$514.0                       \\\cmidrule(r){2-6} 
		(capped-$\ell_1$)     & sNORT  & \textbf{0.0010$\pm$0.0001} & \textbf{10.1$\pm$0.1}   & \textbf{0.0006$\pm$0.0001} & \textbf{44.6$\pm$0.3}  \\
		\cmidrule(r){2-6}           & NORT   & \textbf{0.0009$\pm$0.0001} & 14.4$\pm$0.1            & \textbf{0.0006$\pm$0.0001} & 66.3$\pm$0.6           \\ \midrule
		& GDPAN  & \textbf{0.0010$\pm$0.0001} & 426.9$\pm$9.7           & \textbf{0.0006$\pm$0.0001} & 3009.3$\pm$376.2       \\
		\cmidrule(r){2-6}
		nonconvex      & LRTC   &\textbf{0.0010$\pm$0.0001}                            &  714.0$\pm$24.1                                                  &\textbf{0.0006$\pm$0.0001}   & 5867.7$\pm$529.1                  \\
		\cmidrule(r){2-6}     	(LSP)      & sNORT  & \textbf{0.0010$\pm$0.0001} & 10.8$\pm$0.1            & \textbf{0.0006$\pm$0.0001} & \textbf{44.6$\pm$0.2}  \\
		\cmidrule(r){2-6}           & NORT   & \textbf{0.0010$\pm$0.0001} & 14.0$\pm$0.1            & \textbf{0.0006$\pm$0.0001} & 62.1$\pm$0.5           \\ \midrule
		& GDPAN  & \textbf{0.0010$\pm$0.0001} & 427.3$\pm$10.1          & \textbf{0.0006$\pm$0.0001} & 3009.2$\pm$412.2       \\
		\cmidrule(r){2-6}
		nonconvex   & LRTC   &\textbf{0.0010$\pm$0.0001}                            &     759.0$\pm$24.3                    &\textbf{0.0006$\pm$0.0001}                            &5865.5$\pm$519.3                    \\
		\cmidrule(r){2-6}       	(TNN)     & sNORT  & \textbf{0.0010$\pm$0.0001} & \textbf{10.2$\pm$0.1}   & \textbf{0.0006$\pm$0.0001} & \textbf{44.7$\pm$0.2}  \\
		\cmidrule(r){2-6}           & NORT   & \textbf{0.0010$\pm$0.0001} & 14.4$\pm$0.2            & \textbf{0.0006$\pm$0.0001} & 63.1$\pm$0.6           \\ \bottomrule
	\end{tabular}
\end{table}

\begin{figure}[ht]
	\centering
	\includegraphics[width=0.32\textwidth]{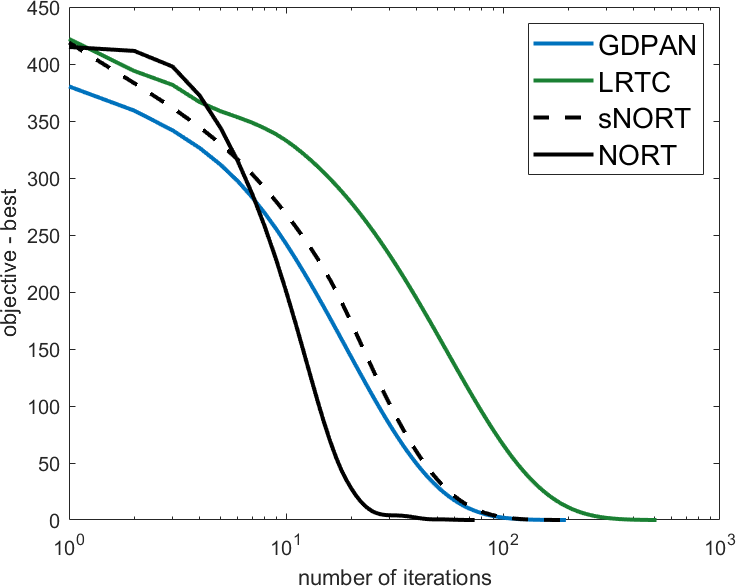}
	\includegraphics[width=0.32\textwidth]{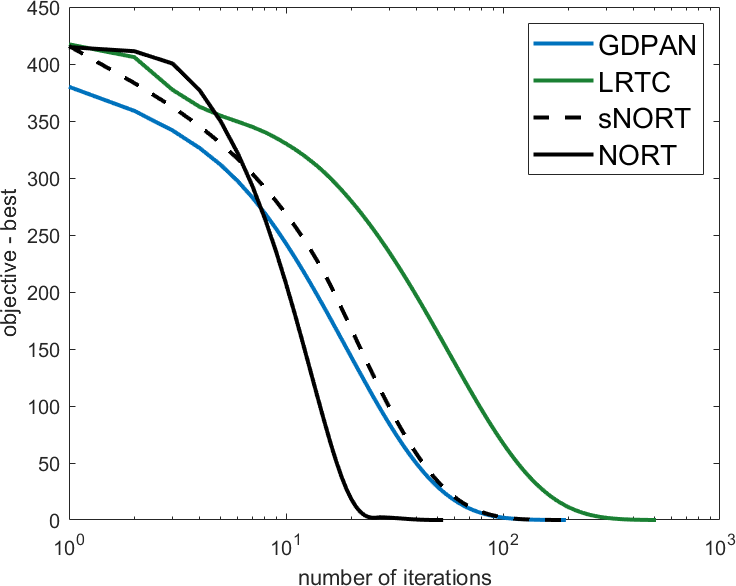}
	\includegraphics[width=0.32\textwidth]{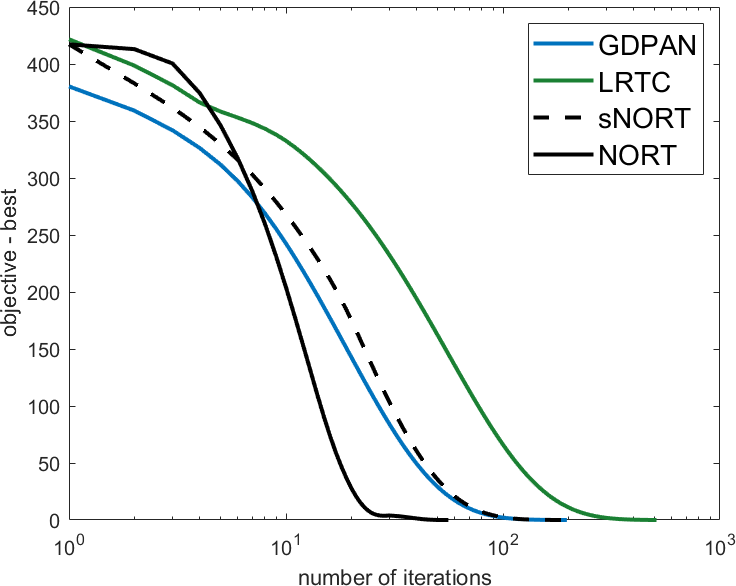}
	
	\subfigure[capped-$\ell_1$.]
	{\includegraphics[width=0.32\textwidth]{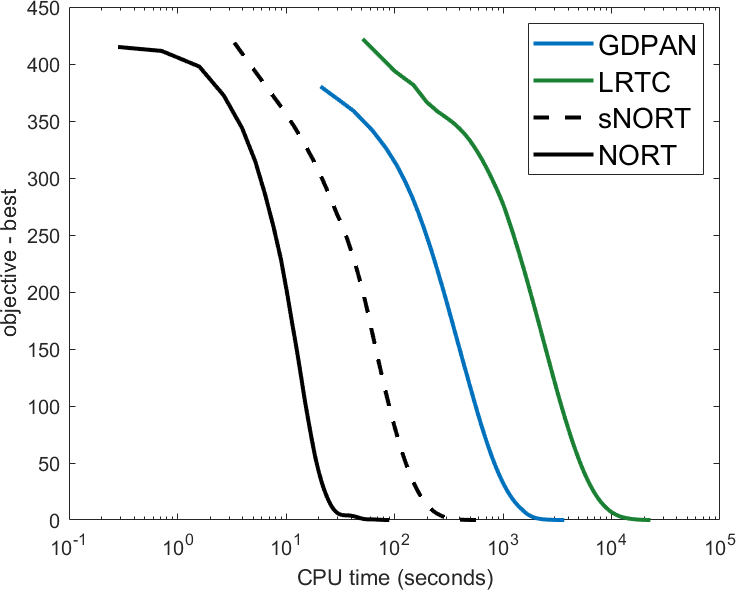}}
	\subfigure[LSP.]
	{\includegraphics[width=0.32\textwidth]{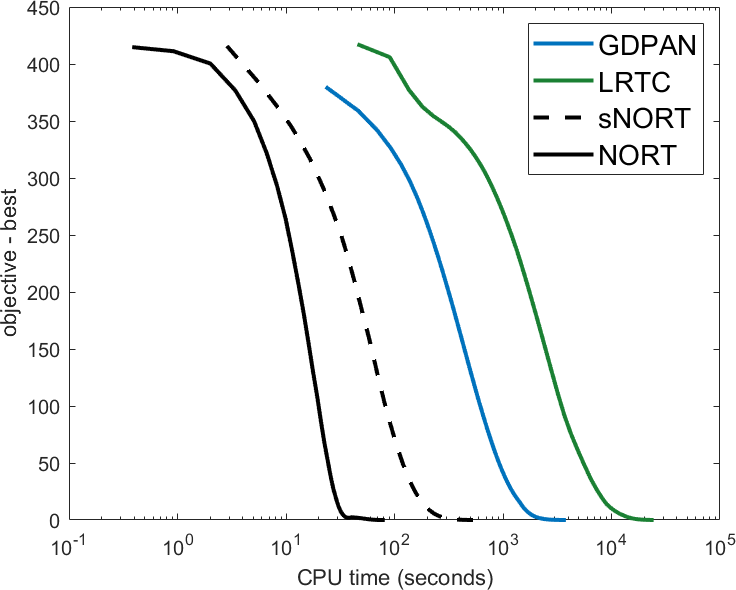}}
	\subfigure[TNN.]
	{\includegraphics[width=0.32\textwidth]{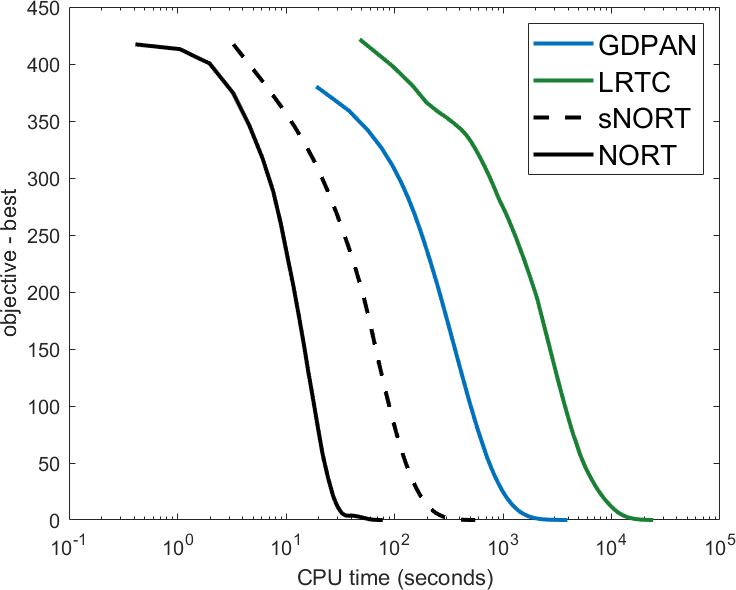}}
	
	\vspace{-10px}
\caption{Convergence of the objective versus number of iterations (top) and
		CPU time (bottom) on the synthetic data (with $\hat{c} = 400$).}
	\label{fig:syn3}
\end{figure}

\subsubsection{Ranks during Iteration }
\label{sec:exp:rank}

Unlike factorization methods
which explicitly constrain the iterate's rank,
in NORT (Algorithm~\ref{alg:nort}),  
this is  only 
implicitly controlled by the nonconvex regularizer.
As shown in Table~\ref{tab:overview},
having a large rank during the iteration may affect the
efficiency of
NORT.
Figure~\ref{fig:syn:rank}
shows the ranks of $(\ten{Z}_t)_{\ip{i}}$ and $\vect{X}_{t + 1}^i$
at step~11 of Algorithm~\ref{alg:nort}.
As can be seen,
the ranks of the iterates remain
small
compared with the tensor size ($\hat{c}=400$).
Moreover,
the ranks of  $\vect{X}_{t + 1}^1$,
$\vect{X}_{t + 1}^2$,
and $\vect{X}_{t + 1}^3$ all converge to the true rank
(i.e., $5$)
of the ground-truth tensor.

\begin{figure}[ht]
	\centering
	\subfigure[capped-$\ell_1$.]
	{\includegraphics[width=0.32\textwidth]{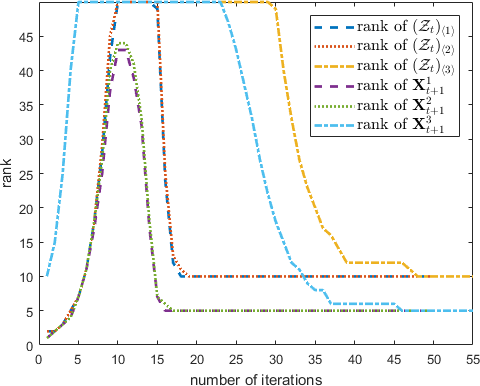}}
	\subfigure[LSP.]
	{\includegraphics[width=0.32\textwidth]{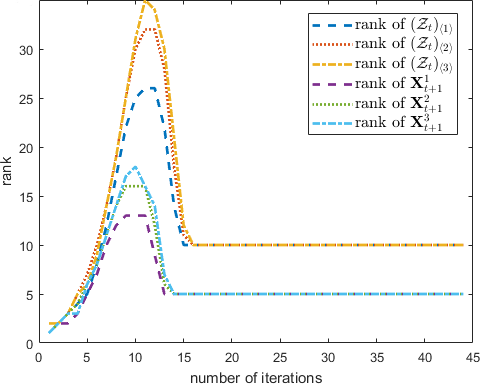}}
	\subfigure[TNN.]
	{\includegraphics[width=0.32\textwidth]{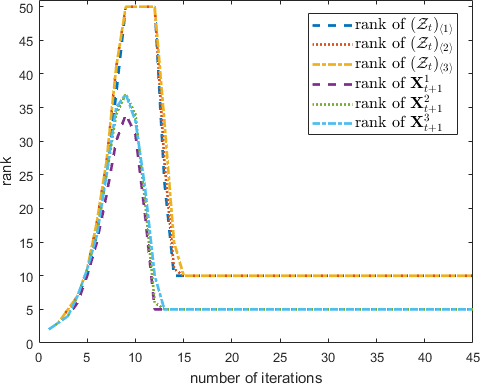}}

	\vspace{-10px}
	\caption{Ranks of $\{(\ten{Z}_t)_{\ip{i}}, \vect{X}_{t + 1}^i\}_{i=1,2,3}$
	versus number of iterations on synthetic data (with $\hat{c} = 400$).}
	\label{fig:syn:rank}
\end{figure}

\subsubsection{Quality of Critical Points}
\label{sec:quacri}

In this experiment, 
we empirically validate the statistical performance of critical points analysed in Theorem~\ref{thm:stat}.
Note that 
$\ten{X}_0$ and $\ten{X}_1$ are initialized as the zero tensor in Algorithm~\ref{alg:nort},
and $\ten{X}_t$ is implicitly stored by
a summation of $D$ factorized matrices in \eqref{eq:lowrf}.
We randomly  generate
$\ten{X}_0 = \ten{X}_1 = \sum\nolimits_{i = 1}^D \big( \vect{u}^i ( \vect{v}^i
)^{\top} \big)^{\ip{i}}$,
where elements in $\bm{u}^i$'s and $\bm{v}^i$'s follow $\mathcal{N}(0, 1)$.
The statistical error
is measured as the RMSE 
between $\ten{X}_t$ during iterating of
NORT (Algorithm~\ref{alg:nort})
and the
underlying ground-truth $\ten{X}^*$
(i.e., $\| \ten{X}_t - \ten{X}^* \|_F^2$),
while the optimization error is measured as the
RMSE between iterate $\ten{X}_t$ and the globally optimal solution $\dot{\ten{X}}$
of \eqref{eq:pro}
(i.e., $\| \ten{X}_t - \dot{\ten{X}} \|_F^2$).
We use the same experimental setup as in Section~\ref{ssec:recperf}.
As the exact $\dot{\ten{X}}$ is not known, it is approximated by the
$\tilde{\ten{X}}$ which obtains the lowest training objective value over 20
repetitions. 

Figure~\ref{fig:syn:staterr_vs_opterr} 
shows the statistical error versus
optimization error obtained by NORT with the 
(smooth) LSP regularizer
and 
(nonsmooth) capped-$\ell_1$ regularizer.
While
both the statistical and optimization errors 
decrease with more iterations, the
statistical error is generally larger 
than the optimization error
since
we may not have exact recovery
when noise is present.
Moreover,
the optimization errors for different runs terminate at different values,
indicating that NORT indeed converges to different local solutions. However,
all these have similar statistical errors,
which validates Theorem~\ref{thm:stat}. 
Finally, while the capped-$\ell_1$ regularizer
does not satisfy Assumption~\ref{ass:kappa} (which is required by
Theorem~\ref{thm:stat}),
Figure~\ref{fig:stat-l1}
still shows a similar pattern
as 
Figure~\ref{fig:stat-lsp}.
This helps
explain 
the good empirical performance obtained
by the capped-$\ell_1$ regularizer~\citep{jiang2015robust,lu2016nonconvex,yao2018large}.

\begin{figure}[ht]
\centering
\subfigure[LSP. \label{fig:stat-lsp}]{\includegraphics[width=0.32\textwidth]{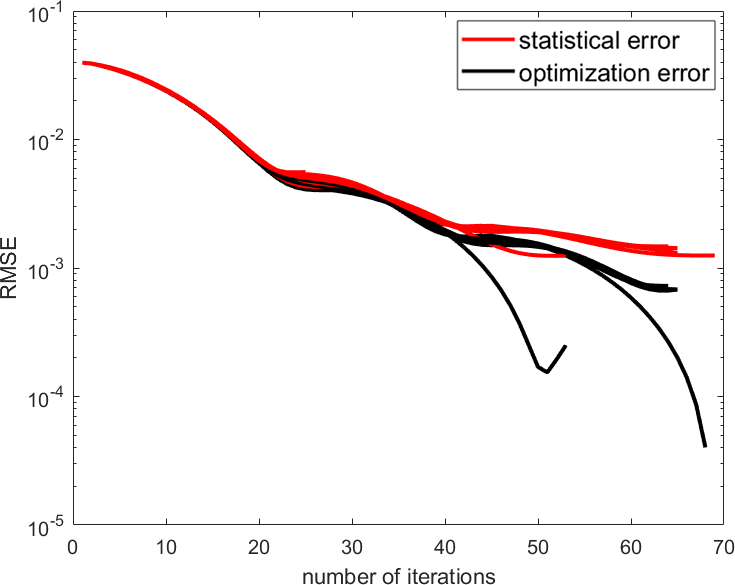}}
\qquad
\subfigure[capped-$\ell_1$. \label{fig:stat-l1}]{\includegraphics[width=0.32\textwidth]{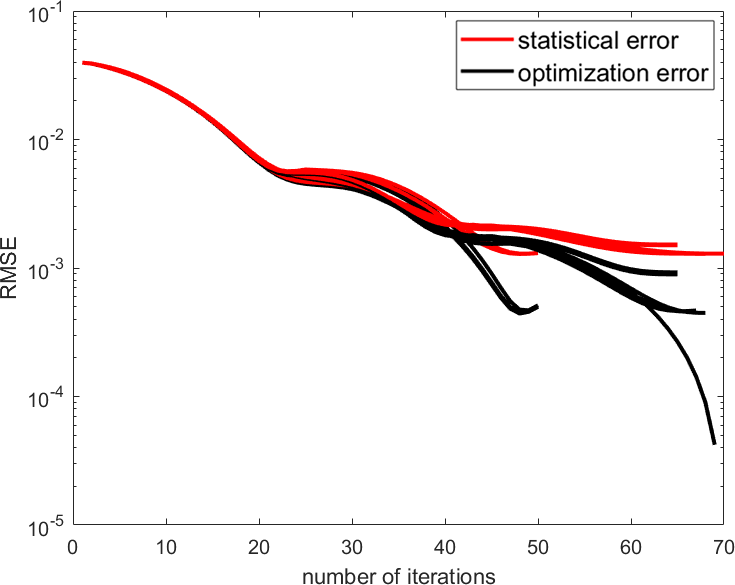}}
	
	\vspace{-10px}
\caption{Statistical error (red) and optimization error (black) versus the
number of NORT iterations (with $\hat{c} = 400$)
from $20$ runs of NORT (with different random seeds).}
	\label{fig:syn:staterr_vs_opterr}
\end{figure}

\subsubsection{Effects of Noise Level and Number of Observations }
\label{sec:exp:obvnsy}

In this section, we show 
the effects of 
noise level $\sigma$ 
and 
number of observed elements $\NM{\mathbf{\Omega}}{1}$ 
on the testing RMSE and training time.
We use the same experimental setup as in 
Section~\ref{ssec:recperf}.
Since PA-APG is much worse 
(see Table~\ref{tab:synperf:large})
while
LRTC and sNORT are slower than NORT
(see Figure~\ref{fig:syn3}),
we only use GDPAN as comparison baseline.

Figure~\ref{fig:rmse:noisy}  shows the testing RMSE with 
$\sigma$ at different
$\NM{\mathbf{\Omega}}{1}$'s (here, we
plot
$s = \NM{\vect{\Omega}}{1} / I^{\pi}$).
As can be seen,
the curves show a linear dependency on $\sigma$ when $\NM{\vect{\Omega}}{1}$ is
sufficiently large, which agrees with 
Corollary~\ref{cor:noisy}.
Figure~\ref{fig:rmse:obv} shows the testing RMSE versus  
$\sqrt{ \log I^{\pi} / \NM{\vect{\Omega}}{1} }$ 
at different $\sigma$'s.
As can be seen, there is a linear dependency 
when the noise level $\sigma$ is small, which agrees with 
Corollary~\ref{cor:number}. 
Finally, 
note that NORT and GDPAN obtain very similar testing RMSEs 
as both solve the same objective (but with different algorithms).

\begin{figure}[ht]
	\centering
	\subfigure[Different noise levels. \label{fig:rmse:noisy}]
	{\includegraphics[width=0.32\textwidth]{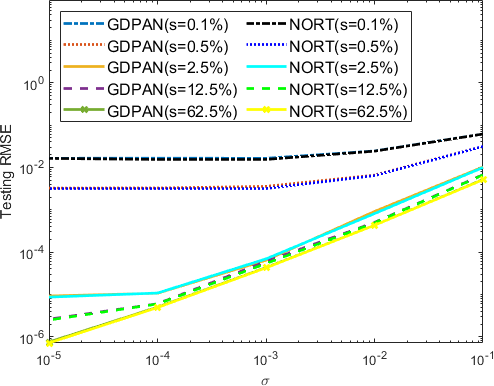}}
	\qquad
	\subfigure[Different numbers of observations. \label{fig:rmse:obv}]
	{\includegraphics[width=0.325\textwidth]{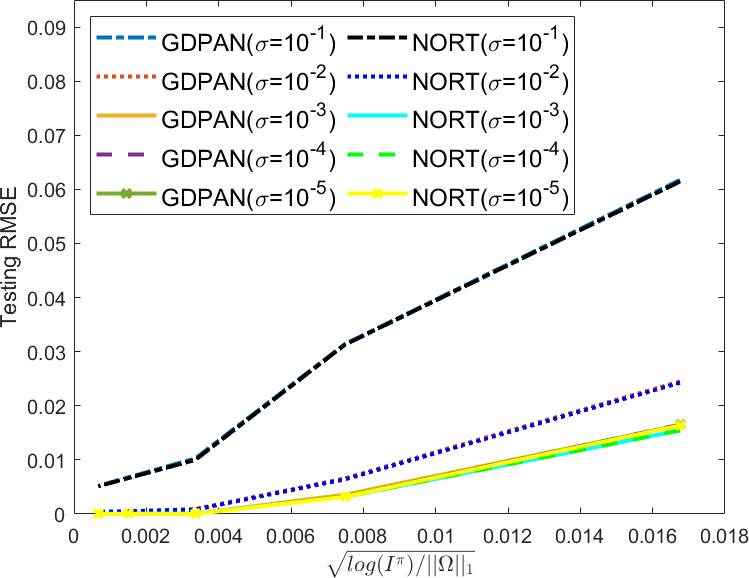}}
	
	\vspace{-10px}
	\caption{Effect of noise level and number of observations on the testing RMSE 
		on the synthetic data (with $\hat{c} = 400$). 
		Note that NORT and GDPAN obtain similar performance
		and their curves overlap with each other.}
\end{figure}

Figure~\ref{fig:noise}
shows
the effects of noise level on 
the convergence of testing RMSE versus (training) CPU time.
As can be seen,
testing RMSEs generally terminates at a higher level when
the noise level gets larger,
and NORT is much faster than GDPAN under all noise level.

\begin{figure}[H]
	\centering
	\subfigure[NORT. ]
	{\includegraphics[width=0.32\textwidth]{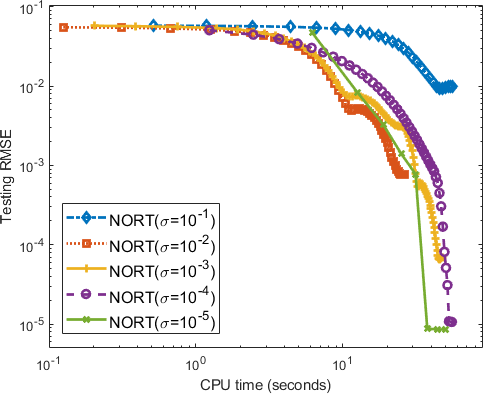}}
	\qquad
	\subfigure[GDPAN.]
	{\includegraphics[width=0.32\textwidth]{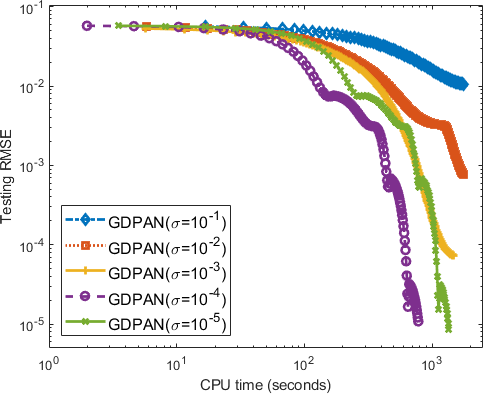}}
	
	\vspace{-10px}
\caption{Effects of the noise level on the convergence
		on synthetic data (with $\hat{c} = 400$, $s = 2.5\%$).}
	\label{fig:noise}
\end{figure}

Figure~\ref{fig:sparse} 
shows 
the effects of numbers of observations on 
the convergence of testing RMSE versus (training) CPU time.
First,
we can see that
NORT is much faster than GDPAN under 
various numbers of observations.
Then,
when $s$ gets smaller and the tensor completion problem is more ill-posed,
more iterations are needed by both
NORT and GDPAN,
which makes them take more time to converge.

\begin{figure}[ht]
	\centering
	\subfigure[NORT.]
	{\includegraphics[width=0.32\textwidth]{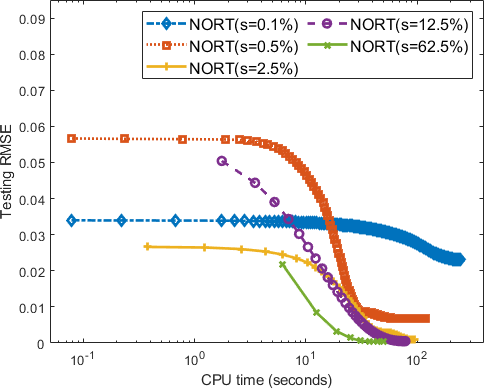}}
	\qquad
	\subfigure[GDPAN. ]
	{\includegraphics[width=0.32\textwidth]{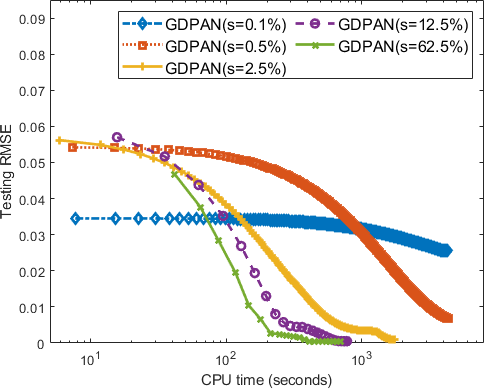}}
	
	\vspace{-10px}
	\caption{Effect of the number of observations on the convergence
		on synthetic data (with $\hat{c} = 400$, $\sigma = 10^{-2}$).}
	\label{fig:sparse}
\end{figure}

\subsubsection{Effects of Tensor Order and Rank}
\label{sec:orderrank}

In this experiment,
we use a similar experimental setup as in Section~\ref{ssec:recperf}, except that
the tensor order 
$M$ is varied from $2$ to $5$.  
As high-order tensors have large memory requirements,
while
we always set $I^1 = I^2 = I^3 = \hat{c} = 400$,
we set $I^4 = 5$ when $M = 4$
and $I^4 = I^5 = 5$ when $M = 5$.
Figure~\ref{fig:rmse:syn-order} shows the testing RMSE versus $M$. 
As can be seen,
the error grows almost linearly, which agrees with 
Theorem~\ref{thm:stat}.
Moreover, note that 
at $M=5$,
GDPAN runs out of memory  because 
it needs to maintain dense tensors in each iteration.

Figure~\ref{fig:rmse:syn-rank} shows the testing RMSE w.r.t.  
$\sqrt{r_g}$
(where $r_g$
is the ground-truth tensor rank).
As can be seen, the error grows linearly w.r.t. $\sqrt{r_g}$, which again agrees with 
Theorem~\ref{thm:stat}.

\begin{figure}[H]
	\centering
	
	\subfigure[Effect of tensor order $M$ ($r_g$ $=$ $5$).]
	{\includegraphics[width=0.315\textwidth]{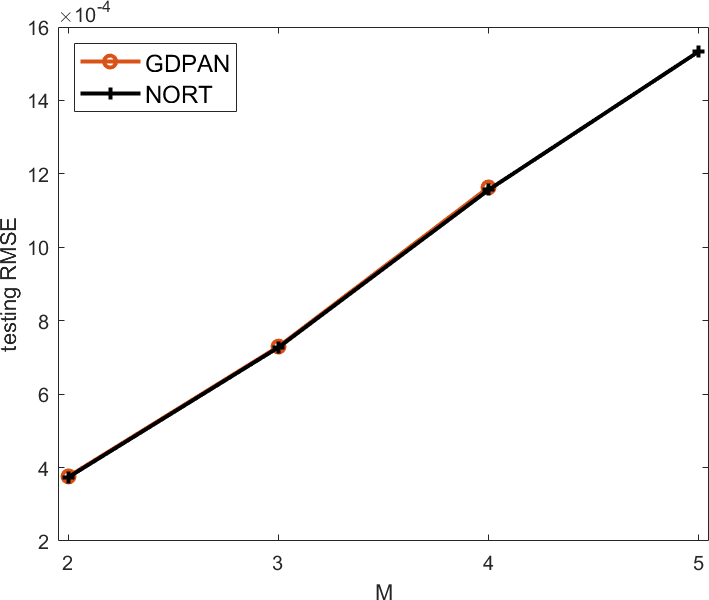}
		\label{fig:rmse:syn-order}}
	\qquad
	\subfigure[Effect of ground-truth rank $r_g$ at
	$M = 3$ (the corresponding
		$r_g$ values are $5$, $10$, $15$ and $20$).]
	{\includegraphics[width=0.335\textwidth]{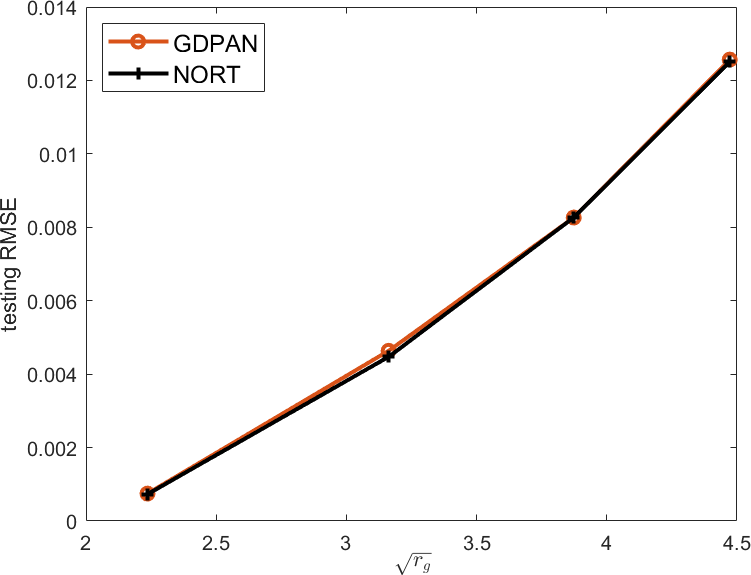}
		\label{fig:rmse:syn-rank}}
	\vspace{-10px}
	\caption{Effects of tensor order and ground-truth rank on the testing RMSE 
		on the synthetic data (with $\hat{c} = 400$). 
		Note that NORT and GDPAN obtain similar performance
		and their curves overlap with each other.}
	\label{fig:mk}
\end{figure}

Figure~\ref{fig:time:syn-order}
shows 
the convergence of testing RMSE versus (training) CPU time
at different tensor orders.
As can be seen,
while both GDPAN and NORT need more time to converge for
higher-order
tensors,
NORT is consistently faster than GDPAN.
Figure~\ref{fig:time:syn-rank}
shows the convergence of testing RMSE at
different ground-truth ranks.
As can be seen,
while NORT is still faster than GDPAN
at different ground-truth tensor ranks
($r_g$),
the relative speedup gets smaller when $r_g$ gets larger.
This is because
NORT needs to construct sparse tensors (e.g., Algorithm~\ref{alg:compp3})
before using them for multiplications, and 
empirically,
the
handling of sparse tensors 
requires more time on memory addressing
as the rank increases
\citep{bader2007efficient}.

%

\begin{figure}[ht]
	\centering
	\subfigure[Effect of tensor order $M$ ($r_g = 5$).]
	{\includegraphics[width=0.33\textwidth]{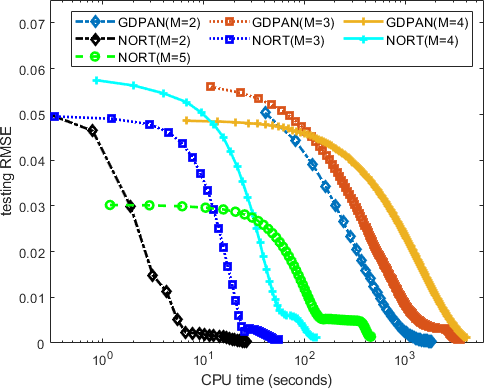}
		\label{fig:time:syn-order}}
	\qquad
	\subfigure[Effect of ground-truth rank ($M \! = \! 3$).]
	{\includegraphics[width=0.33\textwidth]{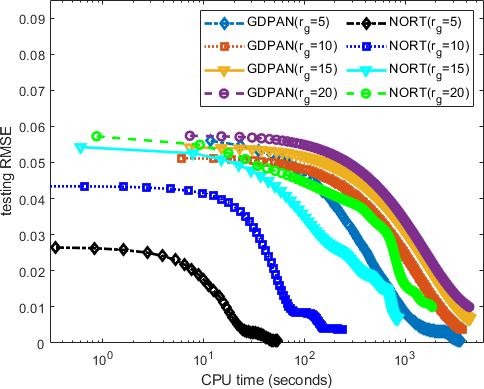}
		\label{fig:time:syn-rank}}
	\vspace{-10px}
	\caption{Effects of tensor order and ground-truth rank on the convergence 
		on the synthetic data (with $\hat{c} = 400$). 
		GDPAN runs out of memory
		when $M=5$.}
	\label{fig:mk:time}
\end{figure}

\subsection{Tensor Completion Applications}
\label{sec:expreal}

\begin{table}[t]
	\centering
	\caption{Algorithms compared on the real-world data sets.}
	\setlength\tabcolsep{3pt}
	\small
	\begin{tabular}{C{65px} | C{110px} | C{110px} | C{130px}}
		\toprule
		& algorithm                           & model                               & basic solver                                    \\ \midrule
		\multirow{11}{*}{convex}      & ADMM~\citep{tomioka2010estimation}    &                                     & ADMM                                           \\ \cmidrule(r){2-2} \cmidrule(r){4-4}
		& FaLRTC~\citep{liu2013tensor}        & overlapped  nuclear norm            & accelerated proximal algorithm on  dual problem \\ \cmidrule(r){2-2} \cmidrule(r){4-4}
		& PA-APG~\citep{yu2013better}         &                                     & accelerated PA algorithm                        \\ \cmidrule(r){2-4}
		& FFW~\citep{guo2017efficient}        & latent nuclear norm                 & efficient Frank-Wolfe algorithm                 \\ \cmidrule(r){2-4}
		& TR-MM~\citep{nimishakavi2018dual}   & squared latent nuclear norm         & Riemannian optimization  on  dual problem  \\ \cmidrule(r){2-4}
		& TenNN~\citep{zhang2017exact}        & tensor-SVD                          & ADMM                                            \\ \midrule
		\multirow{11}{*}{ factorization} & RP~\citep{kasai2016low}             & Turker decomposition                & Riemannian preconditioning                      \\ \cmidrule(r){2-4}
		& TMac~\citep{xu2013tmac}             & multiple matrices factorization     & alternative minimization                        \\ \cmidrule(r){2-4}
		& CP-WOPT~\citep{hong2020generalized}    & CP decomposition                    & nonlinear conjugate gradient                                \\ \cmidrule(r){2-4}
		& TMac-TT~\citep{bengua2017efficient} & tensor-train decomposition          & alternative minimization                        \\ \cmidrule(r){2-4}
		& TRLRF~\citep{yuan2019tensor}        & tensor-ring decomposition           & ADMM                                            \\ \midrule
		\multirow{6}{*}{ non-convex}   & GDPAN~\citep{zhong2014gradient}     &                                     & nonconvex PA algorithm                          \\ \cmidrule(r){2-2} \cmidrule(r){4-4}
		& LRTC~\citep{Chen2020ANL}            & nonconvex overlapped nuclear norm regularization & ADMM                                            \\ \cmidrule(r){2-2} \cmidrule(r){4-4}
		& NORT~(Algorithm~\ref{alg:nort})     &                                     & proposed algorithm                              \\ \bottomrule
	\end{tabular}
	\label{tab:comparedalgs}
\end{table}

In this section, we use the square loss. 
As different nonconvex regularizers have similar performance, we will only use
LSP in the sequel. 
The proposed NORT algorithm is compared with:\footnote{We used our own
implementations of LRTC, PA-APG and GDPAN as their codes are not publicly available.}
\begin{enumerate}
\item[(i)]
algorithms for various convex regularizers
including:
ADMM~\citep{boyd2011distributed}\footnote{\url{https://web.stanford.edu/~boyd/papers/admm/}},
PA-APG~\citep{yu2013better},
FaLRTC~\citep{liu2013tensor}\footnote{\url{https://github.com/andrewssobral/mctc4bmi/tree/master/algs_tc/LRTC}},
FFW~\citep{guo2017efficient}\footnote{\url{https://github.com/quanmingyao/FFWTensor}},
TR-MM~\citep{nimishakavi2018dual}\footnote{\url{https://github.com/madhavcsa/Low-Rank-Tensor-Completion}},
and 
TenNN~\citep{zhang2017exact}\footnote{\url{http://www.ece.tufts.edu/~shuchin/software.html}};
\item[(ii)] factorization-based algorithms
including:
RP~\citep{kasai2016low}\footnote{\url{https://bamdevmishra.in/codes/tensorcompletion/}},
TMac~\citep{xu2013tmac}\footnote{\url{http://www.math.ucla.edu/~wotaoyin/papers/tmac_tensor_recovery.html}},
CP-WOPT~\citep{hong2020generalized}\footnote{\url{https://www.sandia.gov/~tgkolda/TensorToolbox/}},
TMac-TT~\citep{bengua2017efficient}\footnote{\url{https://sites.google.com/site/jbengua/home/projects/efficient-tensor-completion-}
	\url{for-color-image-and-video-recovery-low-rank-tensor-train}}, and
TRLRF~\citep{yuan2019tensor}\footnote{\url{https://github.com/yuanlonghao/TRLRF}}; 
\item[(iii)] algorithms that can handle nonconvex regularizers
including
GDPAN~\citep{zhong2014gradient}
and
LRTC
\citep{Chen2020ANL}.
\end{enumerate}
More details are in 
	Table~\ref{tab:comparedalgs}.
We do not compare
with (i) sNORT,
as it has already been shown to be slower than NORT;
(ii) iterative hard thresholding~\citep{rauhut2017low}, as its code is not publicly available and the more
recent TMac-TT solves the same problem; (iii) the method in 
\citep{bahadori2014fast}, as it can only deal with cokriging and forecasting
problems. 

Unless otherwise specified,
performance is evaluated by (i) root-mean-squared-error on the 
unobserved elements: 
$\text{RMSE}  = \NM{P_{\vect{\Omega}^\bot}(\ten{X}  -  \ten{O})}{F} / \NM{ \vect{\Omega}^\bot }{1}^{0.5}$, 
where $\ten{X}$ is the low-rank tensor recovered,
and $\vect{\Omega}^\bot$ contains the unobserved elements in $\ten{O}$; 
and (ii) CPU time.

\subsubsection{Color Images}
\label{sec:image}

We use  the
\textit{Windows},
\textit{Tree}
and
\textit{Rice} images from~\citep{hu2013fast},
which are resized to $1000 \times 1000 \times 3$ (Figure~\ref{fig:window}).
Each pixel is normalized to $[0, 1]$.
We randomly sample 5\% of the pixels for training, which are then  corrupted by
Gaussian noise $\mathcal{N}(0, 0.01^2)$;
and another 5\% clean pixels are used for validation.
The remaining
unseen clean pixels are used for testing.
Hyperparameters of the various methods are tuned by using the validation
set.


\begin{figure}[ht]
	\centering
	\subfigure[\textit{Windows}.]
	{\includegraphics[width = 0.14\textwidth]{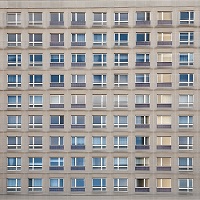}}
	\quad
	\subfigure[\textit{Tree}.]
	{\includegraphics[width = 0.14\textwidth]{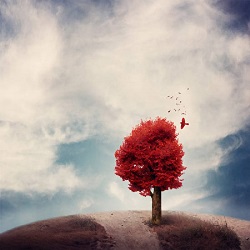}}
	\quad
	\subfigure[\textit{Rice}.]
	{\includegraphics[width = 0.14\textwidth]{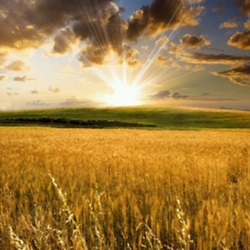}}	

	\vspace{-10px}
	\caption{Color images used in experiments. All are of size $1000 \times 1000 \times 3$.}
	\label{fig:window}
\end{figure}

Table~\ref{tab:colorimg} shows the RMSE
results. As can be seen,
the best convex methods 
(PA-APG and FaLRTC)
are
based on the overlapped nuclear norm.
This agrees with our motivation to build a nonconvex regularizer
based on the overlapped nuclear norm.
GDPAN, LRTC and NORT have similar RMSEs,
which are lower than those by convex regularization
and the factorization approach.
Convergence of the testing RMSE is shown in
Figure~\ref{fig:colorimg}.
As can be seen,
while ADMM solves the same convex model as PA-APG and FaLRTC,
it 
has 
slower convergence.
FFW, RP and TR-MM are very fast but their testing RMSEs are higher than that of NORT.
By utilizing the ``sparse plus low-rank'' structure and adaptive momentum,
NORT is more efficient than GDPAN and LRTC.

\begin{table}[ht]
	\centering
	\caption{Testing RMSEs on color images.
		For all images 5\% of the total pixels, 
		which are corrupted by Gaussian noise $\mathcal{N}(0, 0.01^2)$,
		are used for training.}
	\vspace{-8px}
	\small
	\begin{tabular}{c | c| c | c | c}
		\toprule
		      \multicolumn{2}{c}{dataset}        & \textit{Rice}              & \textit{Tree}              & \textit{Windows}           \\ \midrule
		   \multirow{8}{*}{convex}     & ADMM    & 0.0680$\pm$0.0003          & 0.0915$\pm$0.0005          & 0.0709$\pm$0.0004          \\
		      \cmidrule(r){2-5}        & PA-APG  & 0.0583$\pm$0.0016          & 0.0488$\pm$0.0007          & 0.0585$\pm$0.0002          \\
		      \cmidrule(r){2-5}        & FaLRTC  & 0.0576$\pm$0.0004          & 0.0494$\pm$0.0011          & 0.0567$\pm$0.0005          \\
		      \cmidrule(r){2-5}        & FFW     & 0.0634$\pm$0.0003          & 0.0599$\pm$0.0005          & 0.0772$\pm$0.0004          \\
		      \cmidrule(r){2-5}        & TR-MM   & 0.0596$\pm$0.0005          & 0.0515$\pm$0.0011          & 0.0634$\pm$0.0002          \\
		      \cmidrule(r){2-5}        & TenNN   & 0.0647$\pm$0.0004          & 0.0562$\pm$0.0004          & 0.0586$\pm$0.0003          \\ \midrule
		\multirow{7}{*}{factorization} & RP      & 0.0541$\pm$0.0011          & 0.0575$\pm$0.0010          & 0.0388$\pm$0.0026          \\
		      \cmidrule(r){2-5}        & TMac    & 0.1923$\pm$0.0005          & 0.1750$\pm$0.0006          & 0.1313$\pm$0.0005          \\
		      \cmidrule(r){2-5}        & CP-WOPT & 0.0912$\pm$0.0086          & 0.0750$\pm$0.0060          & 0.0964$\pm$0.0102          \\
		      \cmidrule(r){2-5}        & TMac-TT & 0.0729$\pm$0.0022          & 0.0665$\pm$0.0147          & 0.1045$\pm$0.0107          \\
		      \cmidrule(r){2-5}        & TRLRF   & 0.0640$\pm$0.0004          & 0.0780$\pm$0.0048          & 0.0588$\pm$0.0035          \\ \midrule
		  \multirow{4}{*}{nonconvex}   & GDPAN   & \textbf{0.0467$\pm$0.0002} & 0.0394$\pm$0.0006          & 0.0306$\pm$0.0007          \\
		      \cmidrule(r){2-5}        & LRTC    & \textbf{0.0468$\pm$0.0001} & 0.0392$\pm$0.0006          & 0.0304$\pm$0.0008          \\
		      \cmidrule(r){2-5}        & NORT    & \textbf{0.0468$\pm$0.0001} & \textbf{0.0386$\pm$0.0009} & \textbf{0.0297$\pm$0.0007} \\ \bottomrule
	\end{tabular}
	\label{tab:colorimg}
\end{table}

\begin{figure}[ht]
	\centering
	\includegraphics[width=0.32\textwidth]{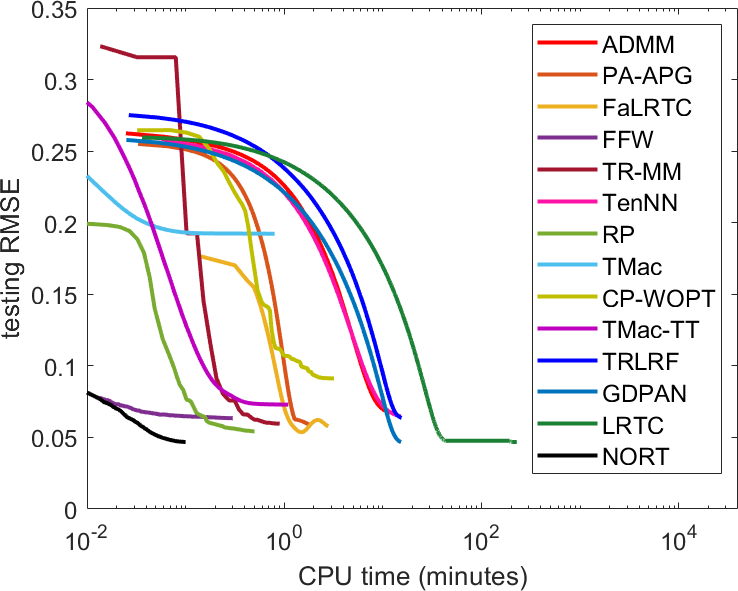}
	\includegraphics[width=0.315\textwidth]{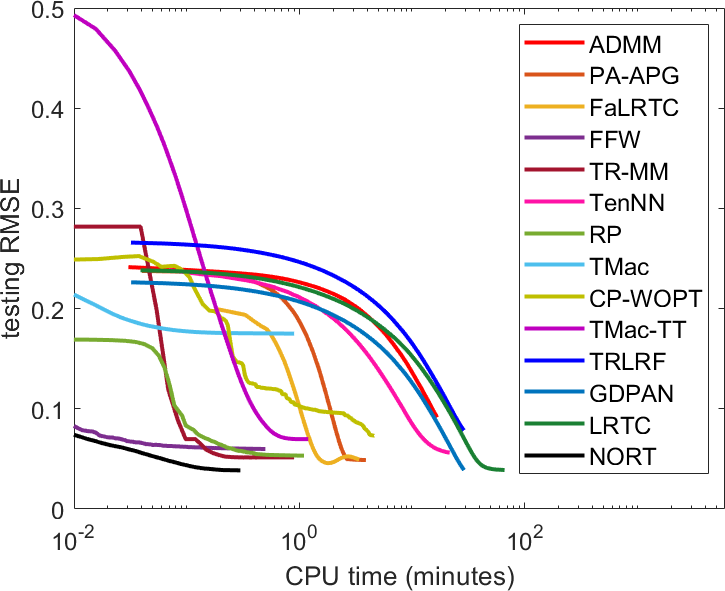}
	\includegraphics[width=0.32\textwidth]{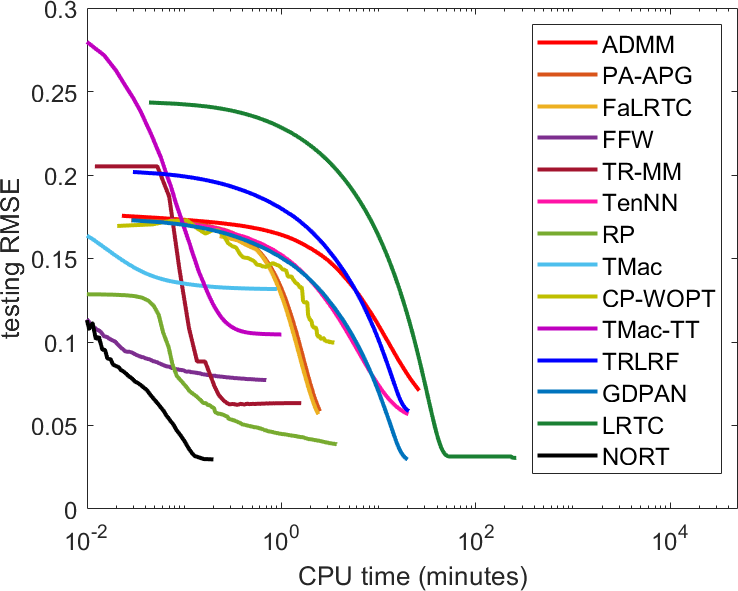}
	
	\subfigure[\textit{Rice}.]
	{\includegraphics[width=0.32\textwidth]{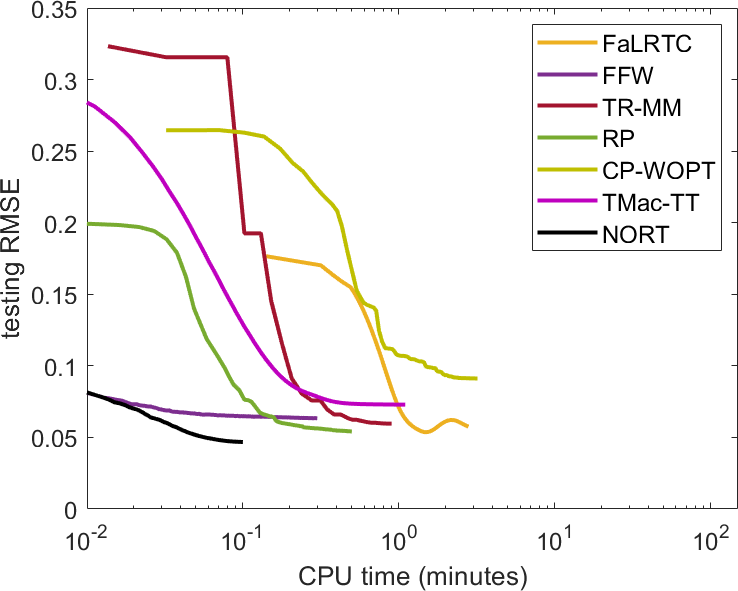}}
	\subfigure[\textit{Tree}.]
	{\includegraphics[width=0.315\textwidth]{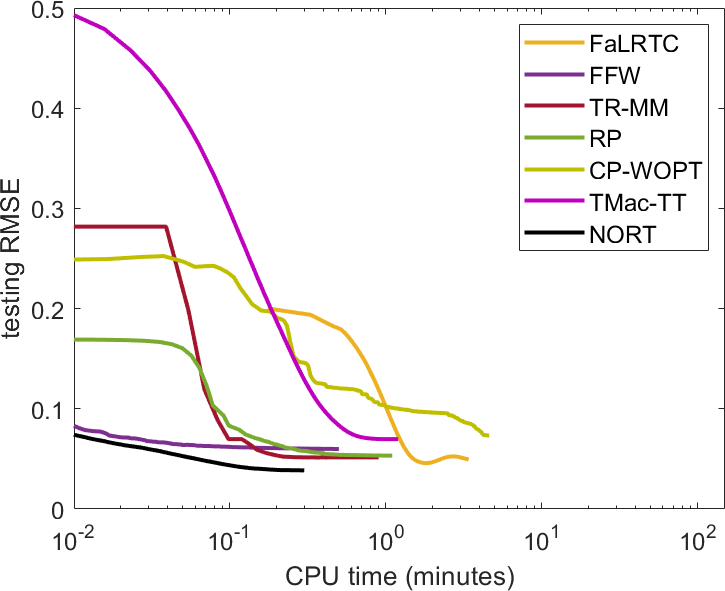}}
	\subfigure[\textit{Windows}.]
	{\includegraphics[width=0.32\textwidth]{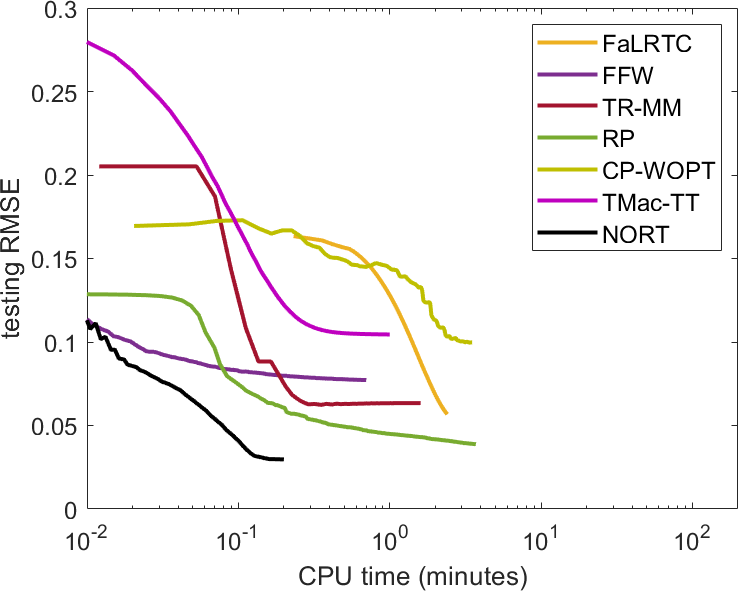}}

	\vspace{-10px}
	\caption{Testing RMSE versus CPU time on color images.
		Top: All methods;
		Bottom: For improved clarity, methods which are too slow or with too poor performance are removed.}
	\label{fig:colorimg}
\end{figure}

Finally,
Table~\ref{tab:colorimg:nlvl}
compares
NORT with PA-APG 
and RP, which are
the best  convex-regularization-based and
factorization-based 
algorithms, respectively, as observed in Table~\ref{tab:colorimg}.
Table~\ref{tab:colorimg:nlvl} shows
the testing RMSEs
at different noise levels $\sigma$'s.
As can be seen,
the testing RMSEs of all methods 
increase as $\sigma$ increases.
NORT has 
lower RMSEs at all $\sigma$ settings.
This is because 
natural images may not be exactly low-rank,
and adaptive penalization of the singular values 
can better preserve the spectrum.
A similar observation has also been made 
for nonconvex regularization on 
images
\citep{yao2018large,lu2016nonconvex}.
However,
when the noise level becomes
very high
($\sigma = 0.1$ with pixel values in $[0, 1]$),
though NORT is still the best,
its testing RMSE is not small.

\begin{table}[ht]
	\centering
	\caption{Testing RMSEs on image \textit{Tree} at different noise levels
	$\sigma$.
		The percentage followed by the marker
		$\uparrow$ indicates the relative increase of 
		testing RMSE compared with NORT.}
	\small
	\vspace{-8px}
	\begin{tabular}{c c|  c | c | c}
		\toprule
&        & $\sigma=0.001$                     & $\sigma=0.01$
& $\sigma=0.1 $                       \\ \midrule
		   (convex)     & PA-APG & 0.0149 (35.8\%$\uparrow$) & 0.0488 (24.6\%$\uparrow$) & 0.1749 (18.6\% $\uparrow$) \\ \midrule
		(factorization) & RP     & 0.0139 (26.0\%$\uparrow$) & 0.0575 (15.6\%$\uparrow$) & 0.1623 (10.1\% $\uparrow$) \\ \midrule
		  (nonconvex)   & NORT   & 0.0110                     & 0.0386                     & 0.1474                     \\ \bottomrule
	\end{tabular}
	\label{tab:colorimg:nlvl}
\end{table}

\subsubsection{Remote Sensing Data}

Experiments are performed on three hyper-spectral images
(Figure~\ref{fig:herspe}):
\textit{Cabbage} (1312$\times$432$\times$49),
\textit{Scene} (1312$\times$951$\times$49)
and
\textit{Female} (592$\times$409$\times$148).\footnote{\textit{Cabbage} and \textit{Scene} images are from \url{https://sites.google.com/site/hyperspectralcolorimaging/dataset},
while the 
\textit{Female} images are downloaded from \url{http://www.imageval.com/scene-database-4-faces-3-meters/}.
}:
The third dimension is for the bands of images.

\begin{figure}[ht]
	\centering
	\subfigure[\textit{Cabbage}.]
	{\includegraphics[height=0.15\columnwidth]{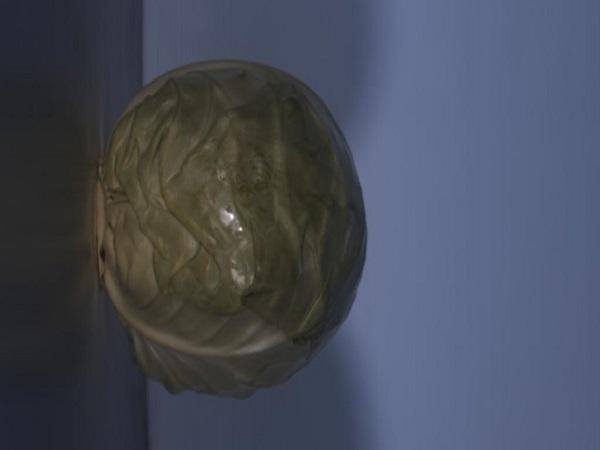}}
	\subfigure[\textit{Scene}.]
	{\includegraphics[height=0.15\columnwidth]{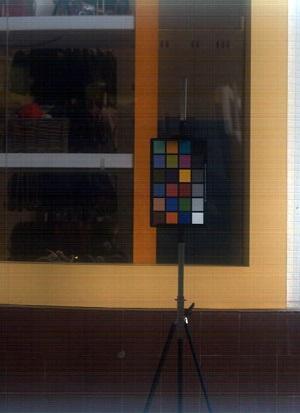}}
	\subfigure[\textit{Female}.]
	{\includegraphics[height=0.15\columnwidth]{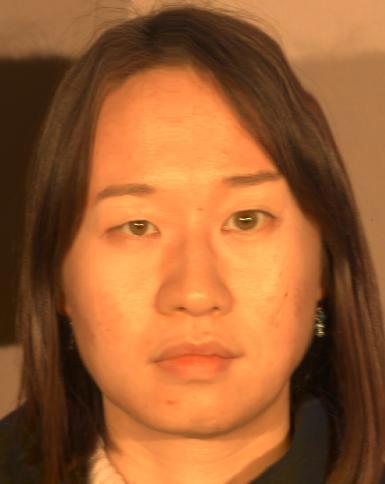}}
	
	\vspace{-10px}
	\caption{Hyperspectral images used in the experiment.}
	\label{fig:herspe}
\end{figure}

We use the same setup as in Section~\ref{sec:image},
and hyperparameters are tuned on the validation set.
ADMM, TenNN, GDPAN, LRTC, TMac-TT and TRLRF
are 
slow
and so not compared.
Results are shown in Table~\ref{tab:hypspec}.
Again, NORT achieves much lower testing RMSE than convex regularization and
factorization approach.
Figure~\ref{fig:hyper} shows convergence of the testing RMSE.
As can be seen, NORT is the fastest. 

\begin{table}[ht]
	\centering
	\caption{Testing RMSEs on remote sensing data.}
	\small
	\begin{tabular}{c | c | c | c | c}
		\toprule
\multicolumn{2}{c|}{} & \textit{Cabbage}           & \textit{Scene}             & \textit{Female}            \\ \midrule
		\multirow{5}{*}{convex}     & PA-APG  & 0.0913$\pm$0.0006          & 0.1965$\pm$0.0002          & 0.1157$\pm$0.0003          \\ \cmidrule(r){2-5}
		& FaLRTC  & 0.0909$\pm$0.0002          & 0.1920$\pm$0.0001          & 0.1133$\pm$0.0004          \\ \cmidrule(r){2-5}
		& FFW     & 0.0962$\pm$0.0004          & 0.2037$\pm$0.0002          & 0.2096$\pm$0.0006          \\ \cmidrule(r){2-5}
		& TR-MM   & 0.0959$\pm$0.0001          & 0.1965$\pm$0.0002          & 0.1397$\pm$0.0006          \\ \midrule
		\multirow{4}{*}{factorization} & RP      & 0.0491$\pm$0.0011          & 0.1804$\pm$0.0005          & 0.0647$\pm$0.0003          \\ \cmidrule(r){2-5}
		& TMac    & 0.4919$\pm$0.0059          & 0.5970$\pm$0.0029          & 1.9897$\pm$0.0006          \\ \cmidrule(r){2-5}
		& CP-WOPT & 0.1846$\pm$0.0514          & 0.4811$\pm$0.0082          & 0.1868$\pm$0.0013          \\ \midrule
		nonconvex   & NORT    & \textbf{0.0376$\pm$0.0004} & \textbf{0.1714$\pm$0.0012} & \textbf{0.0592$\pm$0.0002} \\ \bottomrule
	\end{tabular}
	\label{tab:hypspec}
\end{table}

\begin{figure}[ht]
	\centering
	\subfigure[\textit{Cabbage}.
	\label{fig:cabbage}]
	{\includegraphics[width=0.32\textwidth]{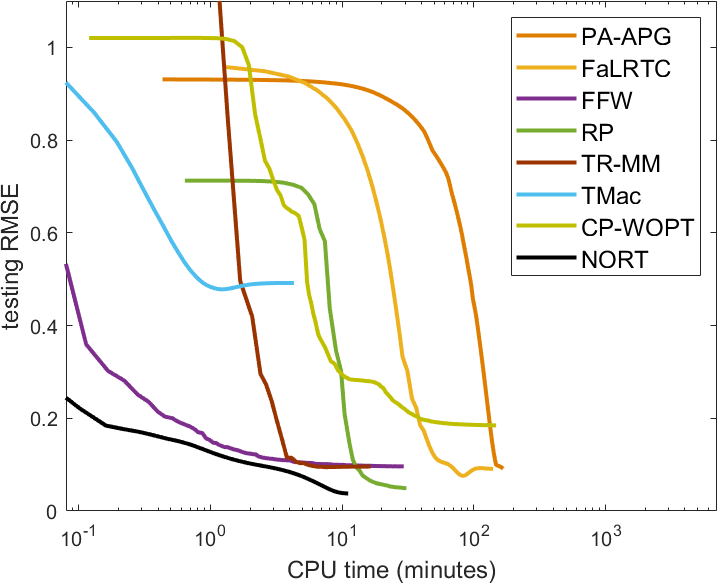}}
	\subfigure[\textit{Female}.
	\label{fig:leresfemale}]
	{\includegraphics[width=0.32\textwidth]{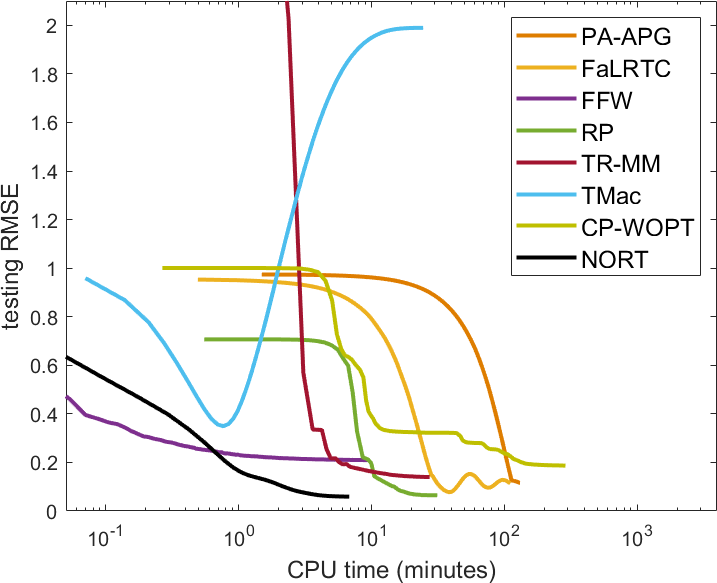}}
	\subfigure[\textit{Scene}.]
	{\includegraphics[width=0.32\textwidth]{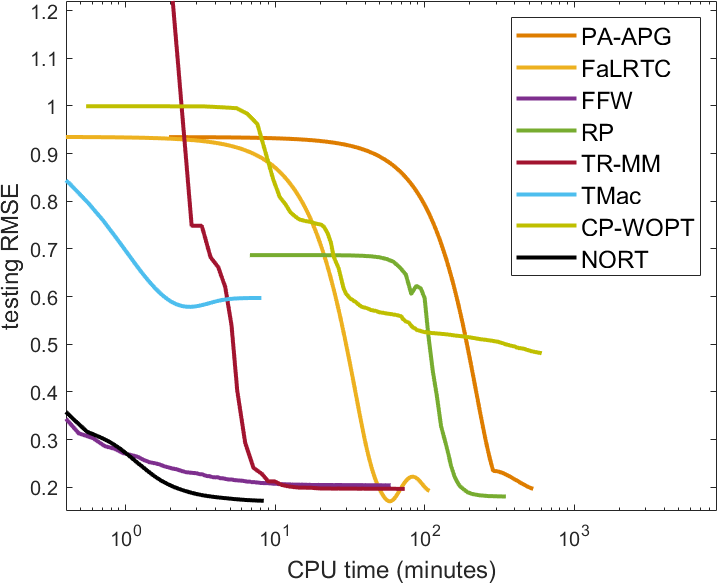}}
	
	\vspace{-10px}
	\caption{Testing RMSE versus CPU time on remote sensing data.}
	\label{fig:hyper}
\end{figure}

\subsubsection{Social Networks}

In this experiment,
we
consider multi-relational link prediction 
\citep{guo2017efficient}
as a tensor completion problem. Experiment is performed 
on the \textit{YouTube} data set\footnote{\url{http://leitang.net/data/youtube-data.tar.gz}}
\citep{lei2009analysis},
which contains 15,088 users and five types of user interactions.
Thus, it forms a
15088$\times$15088$\times$5 tensor,
with a total of 27,257,790 nonzero elements.
Besides the full set,
we also experiment with a \textit{YouTube} subset
obtained by randomly selecting 1,000 users (leading to 12,101 observations).
We use $50\%$ of the observations for training, another $25\%$ for validation and the rest for testing.
Table~\ref{tab:linkpred} shows the testing RMSE, 
and Figure~\ref{fig:linkpred} shows the convergence.
As can be seen, 
NORT achieves smaller RMSE and is also much faster.

\begin{table}[ht]
	\centering
	\caption{Testing RMSEs on \textit{YouTube} data sets.  FaLRTC, PA-APG, TR-MM and CP-WOPT are slow, and thus not run on the full set.}
	\vspace{-10px}
	\small
	\begin{tabular}{c | c | c |c}
		\toprule
\multicolumn{2}{c|}{} & subset                   & full set                 \\ \midrule
		   \multirow{5}{*}{convex}     & FaLRTC  & 0.657$\pm$0.060          & ---                      \\ \cmidrule(r){2-4}
		                               & PA-APG  & 0.651$\pm$0.047          & ---                      \\ \cmidrule(r){2-4}
		                               & FFW     & 0.697$\pm$0.054          & 0.395$\pm$0.001          \\ \cmidrule(r){2-4}
		                               & TR-MM   & 0.670$\pm$0.098          & ---                      \\ \midrule
		\multirow{4}{*}{factorization} & RP      & 0.522$\pm$0.038          & 0.410$\pm$0.001          \\ \cmidrule(r){2-4}
		                               & TMac    & 0.795$\pm$0.033          & 0.611$\pm$0.007          \\ \cmidrule(r){2-4}
		                               & CP-WOPT & 0.785$\pm$0.040          & ---                      \\ \midrule
		          nonconvex            & NORT    & \textbf{0.482$\pm$0.030} & \textbf{0.370$\pm$0.001} \\ \bottomrule
	\end{tabular}
	\label{tab:linkpred}
\end{table}

\begin{figure}[ht]
	\centering
	\subfigure[Subset.]
	{\includegraphics[width=0.32\textwidth]{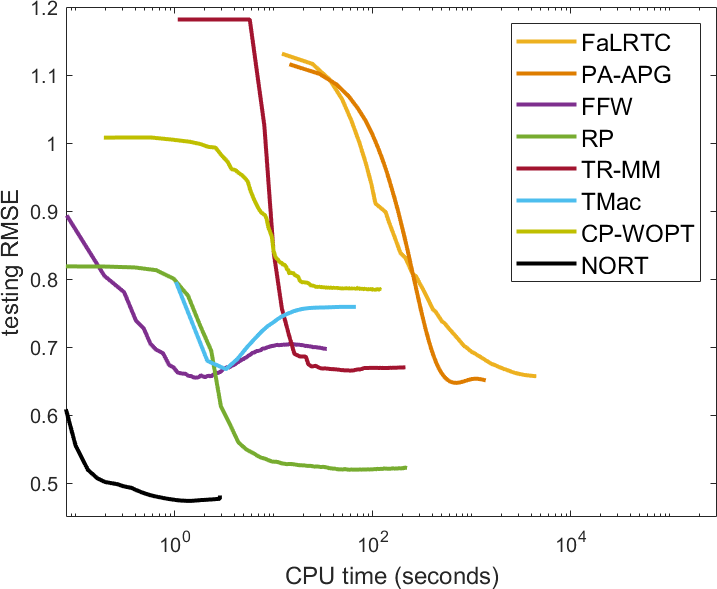}}
	\qquad
	\subfigure[Full set.]
	{\includegraphics[width=0.33\textwidth]{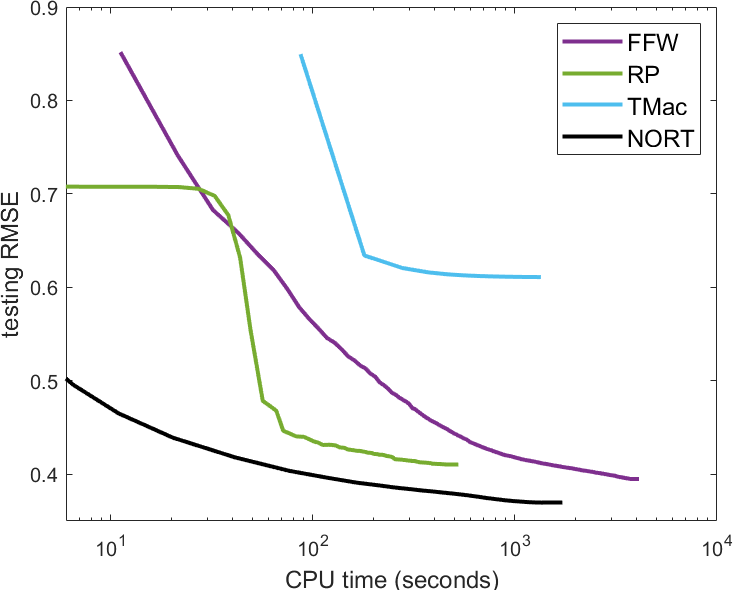}}
	
	\vspace{-10px}
	\caption{Testing RMSE versus CPU time on \textit{Youtube}.}
	\label{fig:linkpred}
\end{figure}

\subsection{Link Prediction in Knowledge Graph} 
\label{sec:exp:kg}

Knowledge Graph (KG)~\citep{nickel2015review,toutanova2015representing}
is an active research topic in data mining and machine learning.
Let $\mathcal{E}$ be the entity set and $\mathcal{R}$ be the relation set.
In a KG, nodes are the entities, while edges
are relations
representing the triplets 
$\mathcal{S}=\{(h,r,t)\}$, where $h\in\mathcal{E}$ is the {\em head entity\/},
$t\in\mathcal{E}$ is the {\em tail entity\/}, and $r\in\mathcal{R}$ is the {\em
relation\/} between $h$ and $t$. 

KGs have many downstream applications, such as
link prediction and triplet classification.
It is common to store KGs as tensors, and solve the KG learning
tasks with tensor methods
\citep{lacroix2018canonical,balazevic2019tucker}.
Take link prediction as an example.
The KG can be seen as a 3-order incomplete tensor $\ten{O} = \{\pm1\}\in\R^{I^1\times
I^2 \times I^3}$,
where $I^1=I^2=|\mathcal{E}|$ and $I^3=|\mathcal{R}|$. 
$\ten{O}_{i_1i_2i_3}=1$ when 
 entities $i_1$ and $i_2$ have the relation $i_3$, and $-1$ otherwise. 
Let $\vect{\Omega}$ be a mask tensor denoting the observed values in $\ten{O}$, i.e.,
$\vect{\Omega}_{i_1i_2i_3}=1$ if $\ten{O}_{i_1i_2i_3}$ is observed and 
$0$ otherwise.
The task is to predict elements in $\ten{O}$
which are not observed.
Since $\ten{O}$ is binary, 
it is common to  use the log loss
as $\ell(\cdot,\cdot)$ in \eqref{eq:pro}.
The objective then becomes:
\begin{align}\label{eq:tc_kge}
\min\nolimits_{\ten{X}}
\sum\nolimits_{(i_1 i_2 i_3)\in\vect{\Omega}}\log(1+\exp( - \ten{X}_{i_1i_2i_3} \ten{O}_{i_1i_2i_3}))
+ \sum\nolimits_{i = 1}^D \lambda_i \phi( \ten{X}_{\ip{i}} ).
\end{align}
In step~9 of Algorithm~\ref{alg:nort},
it is easy to see that
\begin{align*}
[ \xi(\ten{X}_t) ]_{i_1 i_2 i_3}
=\left\{
\begin{array}{cc}
\frac{- \ten{O}_{i_1 i_2 i_3} \cdot \exp( - \ten{X}_{i_1 i_2 i_3} \ten{O}_{i_1 i_2 i_3})}
{1 + \exp(- \ten{X}_{i_1 i_2 i_3} \ten{O}_{i_1 i_2 i_3})} & (i_1 i_2 i_3)\in\vect{\Omega}
\\
0& (i_1 i_2 i_3)\notin\vect{\Omega}
\end{array}\right..
\end{align*}

Experiments are performed on two benchmark data sets:
\textit{WN18RR}\footnote{\url{https://github.com/TimDettmers/ConvE}}
\citep{dettmers2018convolutional} and
\textit{FB15k-237}\footnote{\url{https://www.microsoft.com/en-us/download/details.aspx?id=52312}}
\citep{toutanova2015representing}, which are subsets of \textit{WN18} and \textit{FB15k}
\citep{bordes2013translating}, respectively. 
\textit{WN18} is a subset of WordNet~\citep{miller1995wordnet}, and \textit{FB15k} is a subset of the Freebase database~\citep{bollacker2008freebase}.
To 
avoid test leakage,
\textit{WN18RR} and \textit{FB15k-237}
do not contain near-duplicate and inverse-duplicate relations.
Hence, link prediction on
\textit{WN18RR} and \textit{FB15k-237} 
is harder but more recommended than that on \textit{WN18} and \textit{FB15k}
\citep{dettmers2018convolutional}.  
To form the entity set $\mathcal{E}$,
we keep the top 500 (head and tail) entities  that appear most frequently in
the relations ($r$'s).
Relations that do not link to any of these 500 entities
are removed, and
those remained form the relation set $\mathcal{R}$. 
Following the public splits on entities in $\mathcal{E}$ and relations in $\mathcal{R}$~\citep{han2018openke},
we split the observed triplets in $\mathcal{S}$ into a training set $\mathcal{S}_\text{train}$, validation set $\mathcal{S}_\text{val}$ and testing set $\mathcal{S}_\text{test}$.
For each observed triplet $(h,r,t)\in\mathcal{S}_\text{train}$, we sample a negative triplet
from
$\mathcal{\hat{S}}_{(h,r,t)} = \{(\hat{h},r,t)\notin\mathcal{S}|\hat{h}\in\mathcal{E}\}\cap\{(h,r,\hat{t})\notin\mathcal{S}|\hat{t}\in\mathcal{E}\}$. 
We avoid duplicate negative triplets during sampling. 
We then represent the KG's by tensors $\ten{O}$'s of size $500\times500\times 8$ for \textit{WN18RR}, and $500\times500\times 39$ for \textit{FB15k-237}  with corresponding mask tensors $\vect{\Omega}$'s.

Following~\citep{bordes2013translating,dettmers2018convolutional}, 
performance is evaluated on the testing triplets in $\bar{\vect{\Omega}}$ by the following metrics: 
(i) mean reciprocal ranking:
$\text{MRR} = 1 / \| \bar{\vect{\Omega}} \|_0 \sum_{(i_1 i_2
i_3)\in\bar{\vect{\Omega}}}$ 
$1 / \text{rank}_{i_3}$, 
where $\text{rank}_{i_3}$ is the ranking of score
$\ten{X}_{i_1i_2i_3}$ among
$\{\ten{X}_{i_1i_2j}\}$ with $j=1,\dots,|\mathcal{R}|$ in descending order; 
(ii) Hits$@1= 1 / \| \bar{\vect{\Omega}} \|_0 \sum_{(i_1 i_2
i_3)\in\bar{\vect{\Omega}}} \mathbb{I}(\text{rank}_{i_3}\le1)$,
where $\mathbb{I}(c)$ is the indicator function which returns 1 if the constraint $c$ is satisfied and 0 otherwise; 
and (iii) Hits$@3= 1 / \|\bar{\vect{\Omega}}\|_0\sum_{(i_1 i_2 i_3)\in\bar{\vect{\Omega}}} \mathbb{I}(\text{rank}_{i_3}\le3)$.
For these three metrics, the higher the better.

The aforementioned algorithms are designed for the square loss, but not 
for the log loss in \eqref{eq:tc_kge}.
We adapt the
gradient-based algorithms including
PA-APG, ADMM and CP-WOPT, as
we only need to change the gradient calculation 
for \eqref{eq:tc_kge}.
As a further baseline, 
we implement the 
classic Tucker decomposition~\citep{tucker1966some,Kolda2009} 
to optimize \eqref{eq:tc_kge}.
While RP~\citep{kasai2016low}
is 
the state-of-the-art Tucker-type algorithm,
it uses Riemannian preconditioning  and cannot be easily modified to handle
nonsmooth loss.

Results on
	\textit{WN18RR} 
and 
	\textit{FB15k-237} 
are shown in Tables~\ref{tab:WN18RR}
and \ref{tab:FB15k}, respectively.
As can be seen, 
NORT again obtains the best ranking results.
Figure~\ref{fig:kge} shows convergence of MRR with CPU time, and
NORT is about two orders of magnitude faster than the other methods.

\begin{table}[H]
	\centering
	\caption{Testing performance on the \textit{WN18RR} data set.}
	\small
	\begin{tabular}{ c | c|c|c|c}
		\toprule
\multicolumn{2}{c|}{} &           MRR            &          Hits@1          &          Hits@3          \\ \midrule
		\multirow{2}{*}{convex}        & ADMM    &     0.362$\pm$0.029      &     0.156$\pm$0.024      &     0.422$\pm$0.038      \\ \cmidrule(r){2-5}
		              & PA-APG  &     0.399$\pm$0.017      &     0.203$\pm$0.023      &     0.500$\pm$0.038      \\ \midrule
		\multirow{2}{*}{factorization} & Tucker  &     0.439$\pm$0.013      &     0.309$\pm$0.016      &     0.438$\pm$0.026      \\ \cmidrule(r){2-5}
		              & CP-WOPT &     0.417$\pm$0.018      &     0.266$\pm$0.027      &     0.453$\pm$0.019      \\ \midrule
		nonconvex     & NORT    & \textbf{0.523$\pm$0.022} & \textbf{0.375$\pm$0.033} & \textbf{0.578$\pm$0.024} \\ \bottomrule
	\end{tabular}
	\label{tab:WN18RR}
\end{table}

\begin{table}[H]
	\centering
	\caption{Testing performance on the \textit{FB15k-237} data set.}
	\small
	\begin{tabular}{ c | c|c|c|c}
		\toprule
\multicolumn{2}{c|}{} &           MRR            &          Hits@1          &          Hits@3          \\ \midrule
		\multirow{2}{*}{convex}        & ADMM    &     0.466$\pm$0.006      &     0.411$\pm$0.006      &     0.452$\pm$0.011      \\ \cmidrule(r){2-5}
		              & PA-APG  &     0.514$\pm$0.013      &     0.463$\pm$0.015      &     0.590$\pm$0.016      \\ \midrule
		\multirow{2}{*}{factorization} & Tucker  &     0.471$\pm$0.018      &     0.355$\pm$0.017      &     0.465$\pm$0.015      \\ \cmidrule(r){2-5}
		              & CP-WOPT &     0.420$\pm$0.021      &     0.373$\pm$0.015      &     0.488$\pm$0.014      \\ \midrule
		nonconvex     & NORT    & \textbf{0.677$\pm$0.007} & \textbf{0.609$\pm$0.007} & \textbf{0.698$\pm$0.011} \\ \bottomrule
	\end{tabular}
	\label{tab:FB15k}
\end{table}

\begin{figure}[H]
	\centering
	\subfigure[\textit{WN18RR}.]
	{\includegraphics[width=0.32\textwidth]{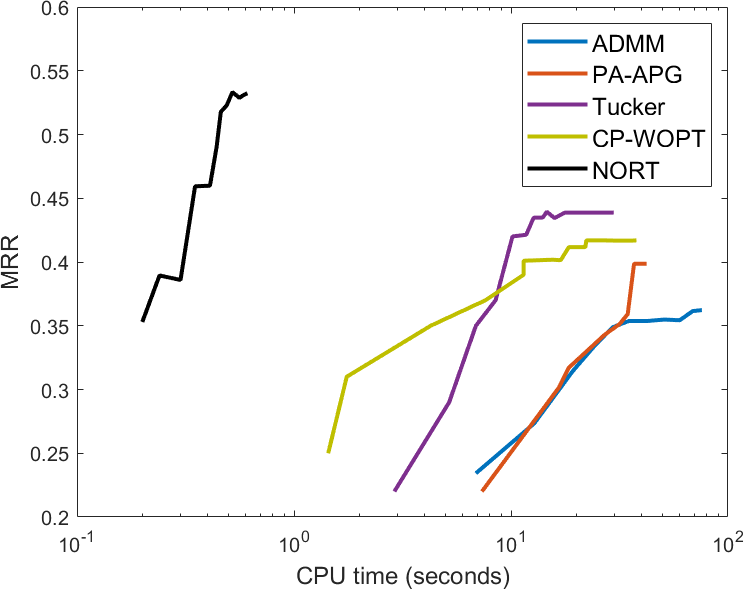}}
	\qquad
	\subfigure[\textit{FB15k-237}.]
	{\includegraphics[width=0.315\textwidth]{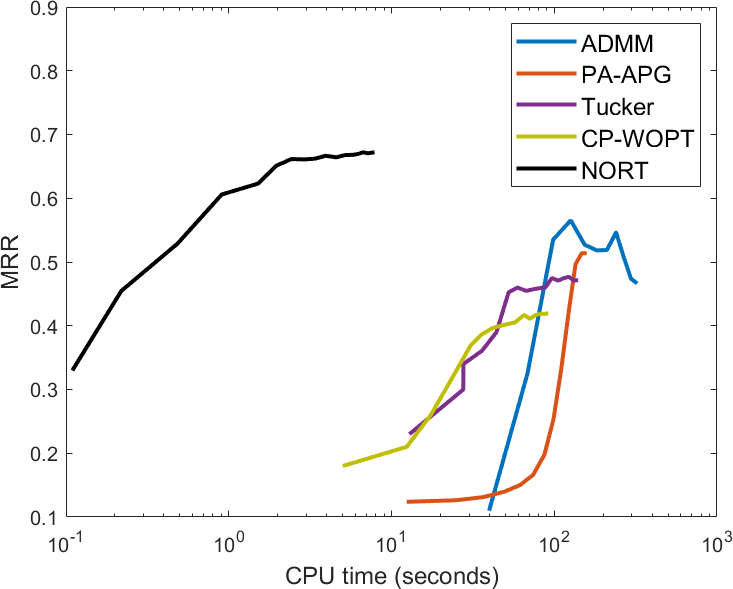}}

	\vspace{-10px}
	\caption{Testing MRR versus CPU time on the \textit{WN18RR} and \textit{FB15k-237} data sets.}
	\label{fig:kge}
\end{figure}

\subsection{Robust Tensor Completion}
\label{sec:exp:rpca}

In this section, we apply the proposed method on robust 
video
tensor completion.
Three videos (\textit{Eagle}\footnote{\url{http://youtu.be/ufnf_q_3Ofg}},  \textit{Friends}\footnote{\url{http://youtu.be/xmLZsEfXEgE}} and \textit{Logo}\footnote{\url{http://youtu.be/L5HQoFIaT4I}}) from~\citep{indyk2019learning} are used. Example frames are shown in
Figure~\ref{fig:exampleImages}.
For each video,
$200$ consecutive 
$360\times 640$
frames
are
downloaded 
from Youtube, and
the pixel values are normalized to $[0,1]$. 
Each video can then be represented as a 
fourth-order 
tensor $\bar{\ten{O}}$ with size 
$360\times 640\times 3\times 200$.
Each element of $\bar{\ten{O}}$ is normalized to $[0,1]$.
This clean tensor $\bar{\ten{O}}$ is corrupted by
a noise tensor $\ten{N}$ to form $\ten{O}$. 
$\ten{N}$ is a sparse random tensor with approximately 1\% nonzero elements. Each
entry is
first drawn uniformly from the
interval $[0, 1]$, and then multiplied by $5$ times the maximum value of $\bar{\ten{O}}$. 
Hyperparameters are chosen based on performance on the first 100 noisy frames.
Denoising performance is measured by the RMSE between the clean tensor $\bar{\ten{O}}$ and reconstructed tensor $\ten{X}$
on the last 100 frames.

\begin{figure}[H]
	\centering
	\subfigure[\textit{Eagle}.]
	{\includegraphics[width = 0.22\columnwidth]{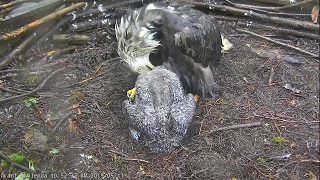}}
	\quad
	\subfigure[\textit{Friends}.]
	{\includegraphics[width = 0.22\columnwidth]{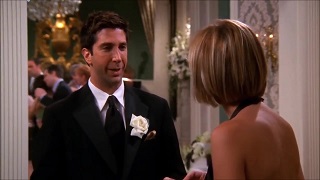}} 
	\quad
	\subfigure[\textit{Logo}.]
	{\includegraphics[width = 0.22\columnwidth]{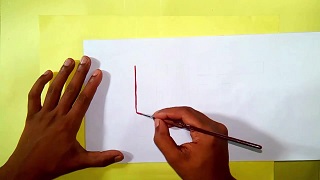}} 
	
	\vspace{-10px}
	\caption{Example image frames in the videos.}
	\label{fig:exampleImages}
\end{figure}

For the robust tensor completion,
we take RTDGC~\citep{gu2014robust} as the baseline,
which adopts  the $\ell_1$ loss and overlapped nuclear norm in \eqref{eq:robustpro}
(i.e., $\kappa_{\ell}(x)=x$ and $\phi$ is the nuclear norm).
As this is non-smooth and non-differentiable, 
RTDGC uses ADMM~\citep{boyd2011distributed} for the optimization,
which 
handles the robust loss and low-rank regularizer separately. 
As discussed in Section~\ref{sec:robustc},
we use the smoothing NORT (Algorithm~\ref{alg:snort}, with
$\delta_0=0.9$)
to optimize \eqref{eq:smoothpro}, the smoothed version of
\eqref{eq:robustpro}.
Table~\ref{tab:robust} shows the RMSE results. 
As can be seen, NORT obtains better denoising performance than RTDGC. 
This again validates the efficacy of nonconvex low-rank learning.
Figure~\ref{fig:robust} shows convergence of the testing RMSE.  
As shown, 
NORT leads to a lower RMSE
and converges much faster as folding/unfolding are avoided.

\begin{table}[ht]
	\centering
	\caption{Testing RMSEs on the videos.}
	\small
	\begin{tabular}{cc|c|c|c}
		\toprule
		\multicolumn{2}{c|}{} &       \textit{Eagle}        &      \textit{Friends}       &        \textit{Logo}        \\ \midrule
		 convex   &   RTDGC    &     0.122$\pm$0.007      &     0.128$\pm$0.005      &     0.112$\pm$0.008      \\ \midrule
		nonconvex &   NORT    & \textbf{0.090$\pm$0.003} & \textbf{0.075$\pm$0.002} & \textbf{0.088$\pm$0.004} \\ \bottomrule
	\end{tabular}
	\label{tab:robust}
\end{table}

\begin{figure}[ht]
	\centering
	\subfigure[\textit{Eagle}.]
	{\includegraphics[width=0.32\textwidth]{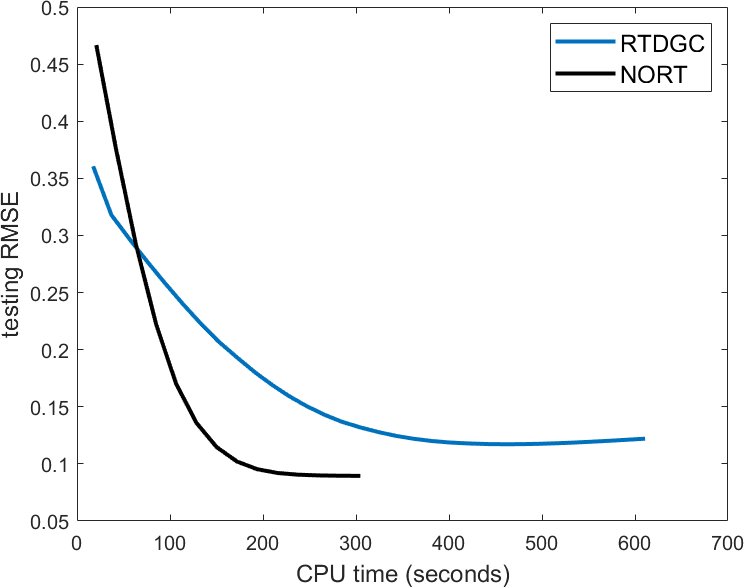}}
	\subfigure[\textit{Friends}.]
	{\includegraphics[width=0.32\textwidth]{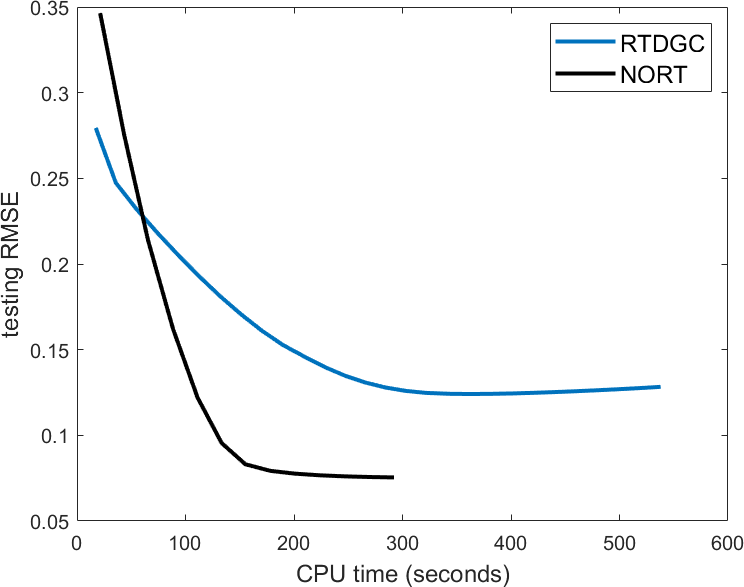}}
	\subfigure[\textit{Logo}.]
	{\includegraphics[width=0.32\textwidth]{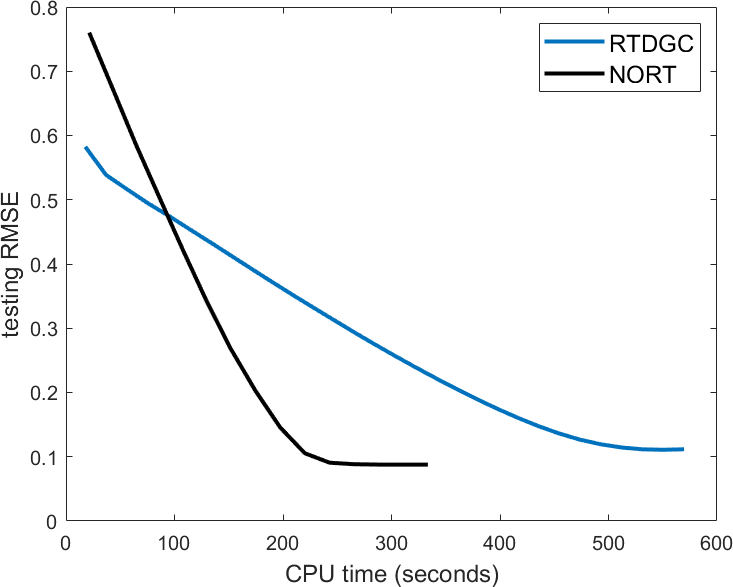}}
	\vspace{-10px}
	\caption{Testing RMSE versus CPU time on the videos.}
	\label{fig:robust}
\end{figure}

\subsection{Spatial-temporal Data}
\label{sec:exp:sptemp}

In this experiment,
we predict climate observations for locations that do not have any records. 
This is formulated as a regularized tensor completion problem in \eqref{eq:pro_reg}.
We use the square loss with 
a graph Laplacian regularizer constructed as in \eqref{eq:pro_reg}.  

We use	the \textit{CCDS} 
and \textit{USHCN} data sets from
\citep{bahadori2014fast}. \textit{CCDS}\footnote{\url{https://viterbi-web.usc.edu/~liu32/data/NA-1990-2002-Monthly.csv}} contains
monthly observations of 17 variables (such as carbon dioxide and temperature) in 125 stations from
January 1990 to December 2001.
\textit{USHCN}\footnote{\url{http://www.ncdc.noaa.gov/oa/climate/research/ushcn}}
contains monthly observations of 4 variables (minimum, maximum, average
temperature and total precipitation) in 1218 stations from from January 1919 to
November 2019. 
As discussed in Section~\ref{sec:rtc}, these records are  collectively represented by a 3-order tensor $\ten{O}\in\R^{I^1\times I^2 \times I^3}$, 
where $I^1$ is the number of locations, 
$I^2$ is the number of recorded time stamps, 
and $I^3$ is the number of variables corresponding to climate observations. 
Consequently, \textit{CCDS} is represented as a $125\times156\times17$ tensor and 
\textit{USHCN} is represented as a $1218\times1211\times4$ tensor. 
The affinity matrix is denoted $\bm{A}$, with
$\bm{A}_{ij}$ 
being the similarity 
$s(i,j)=\exp(-2b_{ij})$
between locations $i$ and $j$ 
($b_{ij}$ is the Haversine distance between $i$ and $j$). 
Following~\citep{bahadori2014fast}, we normalize the data to zero mean and unit variance, 
then randomly sample 10\% of the locations for training, 
another 10\% for validation,
and the rest
for testing.

Algorithms FaLRTC, FFW, TR-MM, RP and TMac cannot be directly used for this graph Laplacian regularized tensor completion problem, while  PA-APG, ADMM, Tucker and CP-WOPT can be adapted by modifying the
gradient calculation. 
Hence we adapt and implement PA-APG, ADMM, Tucker and CP-WOPT as baselines in this section.  
In addition, we
compare with a greedy algorithm 
(denoted ``Greedy")\footnote{This method is denoted ``ORTHOGONAL" in~\citep{bahadori2014fast} and obtains the best results  there.}
from~\citep{bahadori2014fast},
which successively adds a rank-1 matrix to
approximate the mode-$n$ unfolding with the rank constraint.
For the factorization-based algorithms Tucker and CP-WOPT,
the graph Laplacian regularizer $h$ takes 
the corresponding factor matrix rather than $\ten{X}_{\ip{1}}$ as the input. 
Specifically, recall that Tucker factorizes $\ten{X}$ into $[\ten{G};
\vect{B}^1,\vect{B}^2, \vect{B}^3]$, where $\ten{G}\in\R^{k^1\times k^2 \times k^3}$,  $\vect{B}^i\in\R^{I^i\times k^i}$, $i=1,2,3$, 
and $k^i$'s are hyperparameters.
When $k^1=k^2=k^3$ and $\ten{G}$ is superdiagonal, this reduces to the CP-WOPT decomposition. 
The graph Laplacian regularizer is then constructed as $h(\vect{B}^1)$ to leverage location proximity. 
As an additional baseline, we also experiment with a NORT variant that does not
use the Laplacian regularizer (denoted ``NORT-no-Lap''). 

\begin{table}[ht]
	\centering
	\caption{Testing RMSEs on \textit{CCDS} and \textit{USHCN} data sets.}
	\small
	\begin{tabular}{c | c | c | c}
		\midrule
		\multicolumn{2}{c|}{}	& \textit{CCDS}            & \textit{USHCN}           \\ \midrule
		\multirow{2}{*}{convex}     & ADMM    & 0.890$\pm$0.016          & 0.691$\pm$0.005          \\
		\cmidrule(r){2-4}        & PA-APG  & 0.866$\pm$0.014          & 0.680$\pm$0.009          \\ \midrule
		\multirow{2}{*}{factorization} & Tucker  & 0.856$\pm$0.026          & 0.647$\pm$0.006          \\
		\cmidrule(r){2-4}        & CP-WOPT & 0.887$\pm$0.018          & 0.688$\pm$0.009          \\\midrule
		\multirow{1}{*}{rank constraint}        & Greedy  & 0.871$\pm$0.008          & 0.658$\pm$0.012          \\ \midrule
		\multirow{2}{*}{nonconvex}     & NORT-no-Lap  &0.997$\pm$0.001  &1.391$\pm$0.001  \\
		\cmidrule{2-4}
		& NORT    & \textbf{0.793$\pm$0.002} & \textbf{0.583$\pm$0.012} \\ \bottomrule
	\end{tabular}
	\label{tab:climate}
\end{table}

Table~\ref{tab:climate} shows the RMSE
results.  Again,
NORT obtains the lowest testing RMSEs.
Moreover, when
the Laplacian regularizer
is not used,
the testing RMSE is much higher,
demonstrating that the missing slices cannot be reliably completed.
Figure~\ref{fig:climate} shows the convergence.
As can be seen, NORT is orders of magnitude faster than the other algorithms.
The gaps on the performance and speed
between NORT and the other baselines are more obvious on the larger \textit{USHCN} data set. 
Further, note  from Figures~\ref{climate-obj1} and \ref{climate-obj2} that though NORT-no-Lap 
has converged, 
it cannot decrease the testing RMSE during learning 
(Figures~\ref{climate-err1} and \ref{climate-err2}).
This validates the efficacy of the graph Laplacian regularizer.

\begin{figure}[ht]
	\centering
	
	\subfigure[\textit{CCDS}. \label{climate-obj1}]
	{\includegraphics[width=0.335\textwidth]{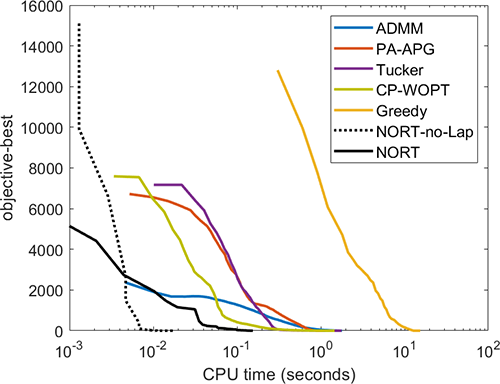}}
	\qquad
	\subfigure[\textit{USHCN}. \label{climate-obj2}]
	{\includegraphics[width=0.315\textwidth]{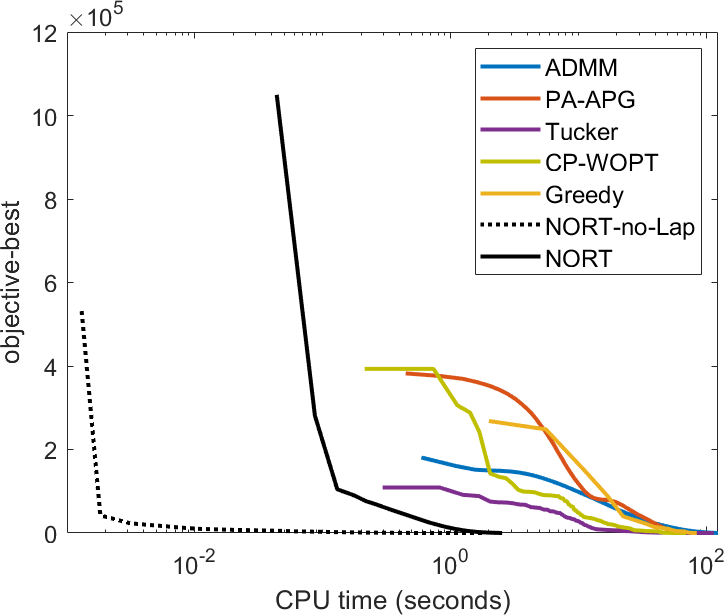}}
	
	\subfigure[\textit{CCDS}.\label{climate-err1}]
	{\includegraphics[width=0.325\textwidth]{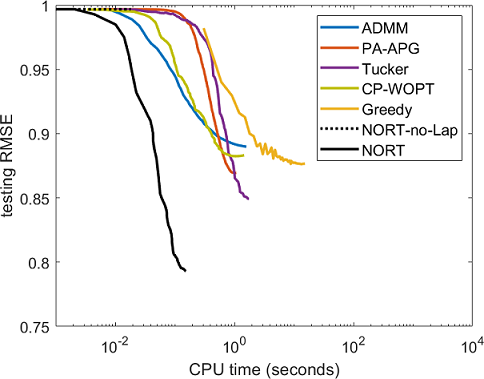}}
	\qquad
	\subfigure[\textit{USHCN}. \label{climate-err2}]
	{\includegraphics[width=0.32\textwidth]{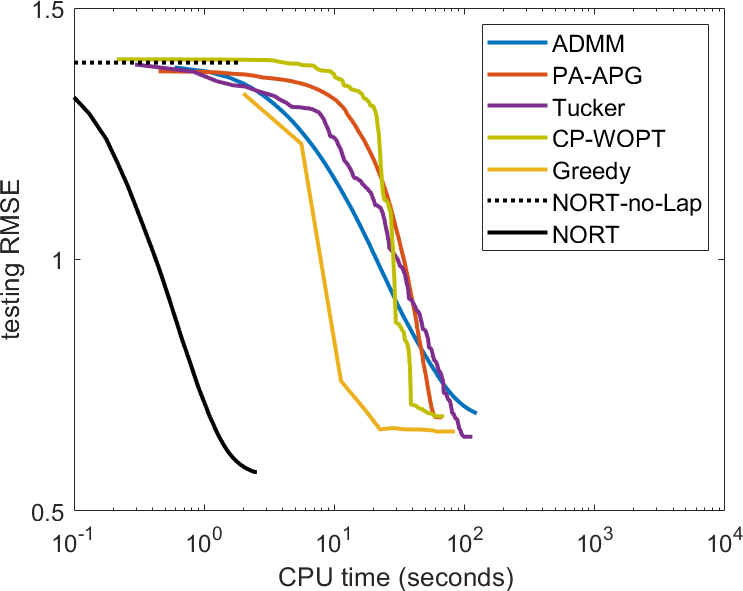}}
	
	\vspace{-10px}
	\caption{Convergence of the 
	training objective (below) 
	and 
	testing RMSE (top) 
	versus 
	CPU time on the spatial-temporal data.}
	\label{fig:climate}
\end{figure}

\section{Conclusion}
\label{sec:concl}

In this paper,
we propose a low-rank tensor completion model with nonconvex regularization.
An efficient
nonconvex proximal average algorithm
is developed,
which maintains the ``sparse plus
low-rank" structure throughout the iterations and  
incorporates adaptive momentum.
Convergence to critical points is guaranteed,
and the obtained critical points can have small statistical errors.
The algorithm is also extended for nonsmooth losses and additional
regularization, demonstrating
broad applicability of the proposed algorithm.
Experiments on a variety of synthetic and real data sets are performed.
Results show that the proposed  algorithm is more efficient
and more accurate
than existing state-of-the-art.

In the future,
we will extend the proposed algorithm
to simultaneous completion of multiple tensors,
e.g., collaborative tensor completion~\citep{Zheng2013CollaborativeMF} and coupled tensor completion~\citep{Wimalawarne2018ConvexCM}.
Besides, it is also interesting to study how the proposed algorithm can be
efficiently parallelized on GPUs and distributed computing environments~\citep{Phipps2019SoftwareFS}. 

\section*{Acknowledgement}

BH was supported by the RGC Early Career Scheme 
No.22200720 and NSFC Young Scientists Fund No. 62006202.

\appendix

\section*{Appendix}


\section{Comparison with Incoherence Condition}
\label{app:incoh}

The matrix incoherence condition~\citep{candes2009exact,candes2011robust,negahban2012restricted}
is in form of the singular value decomposition
$\vect{X} = \bm{U} \bm{\Sigma} \bm{V}^{\top} \in \R^{m \times n}$,
where
$\bm{U} \in \R^{m \times r}$ (resp. $\bm{V} \in \R^{n \times r}$) 
contains the left (resp. right) singular vectors
and $\bm{\Sigma} \in \R^{r \times r}$ is the diagonal matrix containing singular values.
The purpose of this condition
is to enforce
that the left and right 
singular vectors should not be aligned
with the standard basis
(i.e., vector $\vect{e}_i$'s with the $i$th dimension being $1$ and others being $0$).
Typically,
this condition is stated as
\begin{align}
	\max_{j = 1, \dots, m}
	\left|
	[ \bm{U} \bm{U}^{\top} ]_{jj}  
	\right| 
	\le
	\mu_0
	\frac{r}{m},
	\quad
	\text{and}
	\quad
	\max_{j = 1, \dots, n}
	\left|
	[ \bm{V} \bm{V}^{\top} ]_{jj}  
	\right| 
	\le
	\mu_0
	\frac{r}{n},
	\label{eq:matinco}
\end{align}
for some constant $\mu_0 > 0$.
Note 
that \eqref{eq:matinco}
does not depend on the singular values of $\vect{X}$.
However,
this condition can be restrictive in realistic settings,
where the underlying matrix is contaminated
by noise.
In this case,
the observed matrix can have small singular values.
Therefore,
we need to impose conditions
related to the singular values,
and \eqref{eq:spiky} shows such a dependency.
An example matrix satisfying the
matrix RSC condition but not the incoherence condition is in Section~3.4.2 of~\citep{negahban2012restricted}. 
As a result,
the RSC condition,
which involves singular values, 
is less restrictive than the incoherence condition,
and can better describe ``spikiness''.

\section{Proofs}

\subsection{Proposition~\ref{pr:mulv}}

\begin{proof}
For simplicity,
we consider the case where $\bm{U} \in \R^{I^j \times k}$ 
(resp. $\bm{V} \in \R^{(\frac{I^{\pi}}{I^j}) \times k}$) has only one single column 
$\bm{u} \in \R^{I^j}$
(resp. $\bm{v} \in \R^{\frac{I^{\pi}}{I^j}}$).	
We need to fold
$\bm{u} \bm{v}^{\top}$ along with the $j$th
mode
and then unfold it along its $i$th mode.
Let us consider the structure of $\ten{X} = (\bm{u} \bm{v}^{\top})^{\ip{j}}$,
we can express it as
\begin{align*}
\ten{X}_{\ip{j}} =
\big[ 
\mathbf{u} \bm{v}_1^{\top},
...,
\mathbf{u} \bm{v}_{\frac{I^{\pi}}{(I^i I^j)}}^{\top}
\big]
\in \R^{I^j \times \frac{I^{\pi}}{I^j}},
\end{align*}
where $\bm{v} = [ \bm{v}_1; \dots; \bm{v}_{\frac{I^{\pi}}{(I^i I^j)}} ]$ with each $\bm{v}_p \in \R^{I^i}$.
When unfolding $\ten{X}$ with the $i$th mode,
the unfolding matrix is 
\begin{align}
\big[ 
\bm{v}_1 \bm{u}^{\top}, \dots, 
\bm{v}_{\frac{I^{\pi}}{(I^i I^j)}} \bm{u}^{\top} 
\big] 
\in \R^{I^i \times \frac{I^{\pi}}{I^i}}.
\label{eq:temp3}
\end{align}
Thus,
\begin{eqnarray}
\bm{a}^{\top} \big[ \bm{v}_1 \bm{u}^{\top}, \dots, \bm{v}_{I^3} \bm{u}^{\top} \big]  
& = & 
\big[
(\bm{a}^{\top} \bm{v}_1) \bm{u}^{\top}, ...,
(\bm{a}^{\top} \bm{v}_{I^3}) \bm{u}^{\top} \big],
\nonumber
\\
& = &
\left(  \bm{a}^{\top} \text{mat}\left( \bm{v}_p; I^i, \bar{I}^{ij} \right) \right) 
\otimes
\bm{u}^{\top}.
\label{eq:temp4}
\end{eqnarray}
Similarly,
let $\bm{b} = \big[ \bm{b}_1; \dots; \bm{b}_{\frac{I^{\pi}}{I^i I^j}} \big]$, where each $\bm{b}_p \in \R^{I^j}$.
From \eqref{eq:temp3},
we have
\begin{eqnarray}
\big[ 
\bm{v}_1 \bm{u}^{\top}, \dots, 
\bm{v}_{\frac{I^{\pi}}{(I^i I^j)}} \bm{u}^{\top} 
\big] 
	\bm{b}
& = & \sum\nolimits_{j = 1}^{\frac{I^{\pi}}{I^i I^j}}
\bm{v}_i (\bm{u}^{\top} \bm{b}_i),
\nonumber \\
& = & \big[ \bm{v}_1; \dots; \bm{v}_{\frac{I^{\pi}}{I^i I^j}} \big]
\begin{bmatrix}
\bm{u}^{\top} \bm{b}_1
\\
\vdots
\\
\bm{u}^{\top} \bm{b}_{ \frac{I^{\pi}}{I^i I^j} }
\end{bmatrix},
\nonumber \\
& = & \big[ \bm{v}_1; \dots; \bm{v}_{ \frac{I^{\pi}}{I^i I^j} } \big]
\big[ \bm{b}_1; \dots; \bm{b}_{ \frac{I^{\pi}}{I^i I^j} } \big]^{\top}
\bm{u},
\nonumber \\
& = & 
\text{mat}\left( \bm{v}; I^i, \bar{I}^{ij} \right) 
\text{mat}\left( \bm{b}; \bar{I}^{ij}, I^j \right) 
\bm{u}.
\label{eq:temp5}
\end{eqnarray}
When $\bm{U}$ (resp. $\bm{V}$) has $k$ columns,
combining with the fact that
$\bm{U} \bm{V}^{\top} = \sum_{p = 1}^k \bm{u}_p \bm{v}_p^{\top}$
with \eqref{eq:temp4} and \eqref{eq:temp5},
we obtain \eqref{eq:mulv1} and \eqref{eq:mulv2}.
\end{proof}


\subsection{Proposition~\ref{pr:reg}}

\begin{proof}
Define $\bar{\lambda}_d = \lambda_d / \tau$,
then
\begin{align}
& \sum\nolimits_{d = 1}^D 
\min\nolimits_{ \vect{X}_d }
\frac{1}{2} \NM{\vect{X}_d - \ten{Z}_{\ip{d}}}{F}^2 
+ \bar{\lambda}_d \, \phi(\vect{X}_d),
\notag
\\
& \!\!\! = \min\nolimits_{\{ \vect{X}_d \}}
\frac{D}{2} \NM{\ten{Z}}{F}^2
- \big< \ten{Z}, \sum\nolimits_{d = 1}^D  \vect{X}_d^{\ip{d}} \big> 
+ \frac{D}{2} \sum\nolimits_{d = 1}^D  \NM{\vect{X}_d}{F}^2
+ \sum\nolimits_{d = 1}^D \bar{\lambda}_d \phi(\vect{X}_d),
\notag
\\
& \!\!\! = \min\nolimits_{\{ \vect{X}_d \}} 
\frac{D}{2}
\NM{\ten{Z} \! - \! \sum\nolimits_{d = 1}^D  \vect{X}_{d}^{\ip{d}} }{F}^2 
\!\!\! - \frac{D}{2}\NM{ \sum\nolimits_{d = 1}^D \vect{X}_{d}^{\ip{d}} }{F}^2 
\!\!\! + \sum\nolimits_{d = 1}^D 
\big[ \frac{1}{2} \NM{\vect{X}_d}{F}^2 
\! + \! \bar{\lambda}_d \phi (\vect{X}_d) \big].
\label{eq:app11}
\end{align}
Next, 
we
introduce an extra
parameter as 
$\ten{X} = \sum\nolimits_{d = 1}^D \vect{X}_{d}^{\ip{d}}$,
and express \eqref{eq:app11} as
\begin{align}
& \min\nolimits_{ \{\vect{X}_{d} \} :
\ten{X} = \sum\nolimits_{d = 1}^D \vect{X}_{d}^{\ip{d}}} 
\frac{D}{2}
\NM{\ten{Z} - \ten{X} }{F}^2 
- \frac{D}{2}\NM{ \sum\nolimits_{d = 1}^D \vect{X}_{d}^{\ip{d}} }{F}^2 
+  \sum\nolimits_{d = 1}^D 
\left[ \frac{1}{2} \NM{\vect{X}_d}{F}^2 
+  \bar{\lambda}_d \phi (\vect{X}_d) \right],
\notag
\\
& = \!
\min_{\ten{X}}  \left\{
\frac{D}{2}
\NM{\ten{Z} - \ten{X} }{F}^2 
+
\min_{ \{\vect{X}_{d} \} :
\sum\nolimits_{d = 1}^D \vect{X}_{d}^{\ip{d}} = \ten{X} } 
\sum\nolimits_{d = 1}^D 
\left[ \frac{1}{2} \NM{\vect{X}_d}{F}^2 
\! + \! \bar{\lambda}_d \phi (\vect{X}_d) \right]
\! - \! \frac{D}{2}\NM{ \ten{X} }{F}^2 \right\}. 
\label{eq:app16}
\end{align}
We transform the above equation as 
\begin{align*}
\min\nolimits_{ \ten{X} }
\frac{1}{2} \NM{\ten{Z}  - \ten{X} }{F}^2
+ \frac{1}{\tau} \bar{g}_{\tau}(\ten{X})
= \Px{\frac{\bar{g}_{\tau}}{\tau}}{\ten{X}},
\end{align*}
where $\bar{g}_{\tau}(\ten{X})$ is defined as
\begin{align}
\bar{g}_{\tau}(\ten{X})
& = 
\tau
\left[ 
\min\nolimits_{ \{ \vect{X}_{d} \} } 
\sum\nolimits_{d = 1}^D 
\big( \frac{1}{2} \NM{\vect{X}_d}{F}^2 
+  \bar{\lambda}_d \phi (\vect{X}_d) \big)
- \frac{D}{2}\NM{ \ten{X} }{F}^2
\right],
\label{app:barg}
\\
&
\text{\;s.t.\;} 
\sum\nolimits_{d = 1}^D \vect{X}_{d}^{\ip{d}} = \ten{X}.
\notag
\end{align}
Thus, there exists $\bar{g}_{\tau}$ such that
$\Px{\frac{\bar{g}_{\tau}}{\tau}}{\ten{Z}}
 = \sum\nolimits_{i = 1}^D \big[ \Px{\bar{\lambda}_d \phi}
{ \left[ \ten{Z} \right]_{\ip{i}} } \big]^{\ip{i}}$.
\end{proof}


\subsection{Proposition~\ref{pr:bnd}}

Let $g(\ten{X}) =  \sum_{d = 1}^{D} \lambda_i \phi(\ten{X}_{\ip{d}})$.
Before proving Proposition~\ref{pr:bnd},
we first extend Proposition 2 in~\citep{zhong2014gradient} in the following auxiliary Lemma.

\subsubsection{Auxiliary Lemma}

\begin{lemma}\label{app:lem1}
$0 \le g(\ten{X}) - \bar{g}_{\tau}(\ten{X}) \le \frac{\kappa_0^2}{ 2 \tau } \sum\nolimits_{d = 1}^D \lambda_d^2$.
\end{lemma}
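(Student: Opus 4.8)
The plan is to establish the two-sided bound on $g(\ten{X}) - \bar{g}_{\tau}(\ten{X})$ by reasoning directly from the variational formula for $\bar{g}_{\tau}$ given in Proposition~\ref{pr:reg}. Recall that
\[
\bar{g}_{\tau}(\ten{X})
=
\tau
\Big[
\min_{\{\vect{X}_d\}:\,\sum_d \vect{X}_d^{\ip{d}} = \ten{X}}
\sum_{d=1}^D\Big(\tfrac12\NM{\vect{X}_d}{F}^2 + \tfrac{\lambda_d}{\tau}\phi(\vect{X}_d)\Big)
- \tfrac{D}{2}\NM{\ten{X}}{F}^2
\Big].
\]
For the \emph{lower} bound $\bar{g}_{\tau}(\ten{X}) \le g(\ten{X})$, I would simply plug in a feasible choice of $\{\vect{X}_d\}$: namely take $\vect{X}_d = \tfrac{1}{D}\ten{X}_{\ip{d}}$ for all $d$, so that $\sum_d \vect{X}_d^{\ip{d}} = \ten{X}$. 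This gives an upper bound on the inner minimization; after multiplying by $\tau$ and cancelling the $-\tfrac{D}{2}\NM{\ten{X}}{F}^2$ term against the $\sum_d \tfrac12 \NM{\tfrac1D \ten{X}_{\ip d}}{F}^2 \cdot \tau$ contribution (using $\NM{\ten{X}_{\ip d}}{F} = \NM{\ten{X}}{F}$), one is left with $\sum_d \lambda_d \phi(\tfrac1D \ten{X}_{\ip d})$; concavity and $\kappa(0)=0$ (Assumption~\ref{ass:kappa}) give $\phi(\tfrac1D \ten{X}_{\ip d}) \le \tfrac1D \phi(\ten{X}_{\ip d})$ on each singular value, so the total is $\le \tfrac1D \sum_d \lambda_d \phi(\ten{X}_{\ip d}) \le g(\ten{X})$ — actually one should be a bit more careful and either argue $\bar g_\tau \le g$ via a cleaner feasible point or directly cite the structure, but the feasible-point idea is the core. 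Alternatively, and more cleanly, I would note that since each $\Px{\bar\lambda_d\phi}{\cdot}$ is a proximal operator and the decomposition identity in Proposition~\ref{pr:reg} holds, $\bar g_\tau$ is by construction the proximal-average regularizer, and the inequality $\bar g_\tau \le g$ is exactly the statement that the proximal average underestimates the sum — this is the tensor analogue of Proposition~2 in~\citep{zhong2014gradient}, which I would adapt.

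For the \emph{upper} bound $g(\ten{X}) - \bar{g}_{\tau}(\ten{X}) \le \tfrac{\kappa_0^2}{2\tau}\sum_d \lambda_d^2$, the key is to control how far below $g$ the quantity $\bar g_\tau$ can drop. I would start from the optimal $\{\vect{X}_d^\star\}$ achieving the inner minimum defining $\bar g_\tau(\ten{X})$, and compare term by term with the ``trivial'' decomposition. The gap is governed by how much the proximal smoothing can reduce each $\phi$ term; using $\lim_{\alpha\to 0^+}\kappa'(\alpha) = \kappa_0$ and concavity of $\kappa$, one has $\kappa(\alpha) \le \kappa_0 \alpha$ for all $\alpha \ge 0$, i.e. $\phi(\vect{X}_d) \le \kappa_0 \NM{\vect{X}_d}{*}$. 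The difference $\sum_d \lambda_d(\phi(\tfrac1D\ten{X}_{\ip d}) - \phi(\vect{X}_d^\star))$ together with the quadratic correction terms can then be bounded by optimizing the resulting scalar expression; each mode contributes at most $\tfrac{\lambda_d^2\kappa_0^2}{2\tau}$ from a completion-of-squares argument (the worst case being when the proximal step fully zeroes out a singular direction, trading a quadratic penalty $\tfrac{\tau}{2}\sigma^2$ against a linear gain $\lambda_d\kappa_0\sigma$, maximized at $\sigma = \lambda_d\kappa_0/\tau$ with value $\lambda_d^2\kappa_0^2/(2\tau)$). Summing over $d$ gives the claimed bound.

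I expect the main obstacle to be handling the \emph{coupling} introduced by the overlapping constraint $\sum_d \vect{X}_d^{\ip{d}} = \ten{X}$: unlike the separable setting of~\citep{zhong2014gradient} where the proximal average acts coordinatewise, here the minimizing $\vect{X}_d^\star$ interact across modes through this linear constraint, so one cannot immediately decouple the bound into independent per-mode estimates. The right way around this is to observe that the gap $g - \bar g_\tau$ is itself bounded by a \emph{sum} of per-mode proximal gaps (because the trivial decomposition $\vect{X}_d = \tfrac1D\ten{X}_{\ip d}$ is feasible and the per-mode prox gaps are individually bounded by $\tfrac{\lambda_d^2\kappa_0^2}{2\tau}$ regardless of coupling — one uses the suboptimality of the trivial point only in the direction that inflates the bound). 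Concretely I would write $g(\ten{X}) - \bar g_\tau(\ten{X}) \le \sum_d \lambda_d\big(\phi(\tfrac1D\ten{X}_{\ip d}) - \inf_{\vect{Y}}[\tfrac{\tau}{2\lambda_d}\NM{\vect{Y} - \tfrac1D\ten{X}_{\ip d}}{F}^2 + \phi(\vect{Y})]\big)$ plus controlled quadratic remainders, and bound each summand by the scalar completion-of-squares estimate above. Once this per-mode reduction is in place, the rest is routine; I would place the coupling reduction as the conceptual heart of the argument and keep the scalar estimate as a short lemma or inline computation.
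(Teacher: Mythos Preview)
Your lower-bound argument has a concrete error: for concave $\kappa$ with $\kappa(0)=0$ one has $\kappa(\tfrac{1}{D}\sigma) \ge \tfrac{1}{D}\kappa(\sigma)$, not $\le$, so the inequality $\phi(\tfrac{1}{D}\ten{X}_{\ip{d}}) \le \tfrac{1}{D}\phi(\ten{X}_{\ip{d}})$ is backwards. Moreover, with your choice $\vect{X}_d = \tfrac{1}{D}\ten{X}_{\ip{d}}$ the quadratics do \emph{not} cancel: $\sum_d \tfrac{1}{2}\NM{\vect{X}_d}{F}^2 = \tfrac{1}{2D}\NM{\ten{X}}{F}^2$, leaving a residual $-\tfrac{\tau(D^2-1)}{2D}\NM{\ten{X}}{F}^2$. (The conclusion $\bar g_\tau \le g$ can still be salvaged from your feasible point, using monotonicity of $\kappa$ rather than concavity, but not by the route you sketch.) The paper instead takes $\vect{X}_d = \ten{X}_{\ip{d}}$, which makes the quadratic terms cancel exactly and yields $\bar g_\tau(\ten{X}) \le \sum_d \lambda_d \phi(\ten{X}_{\ip{d}}) = g(\ten{X})$ immediately.

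For the upper bound, your scalar completion-of-squares estimate $\sup_\alpha\{\lambda_d\kappa_0\alpha - \tfrac{\tau}{2}\alpha^2\} = \tfrac{\lambda_d^2\kappa_0^2}{2\tau}$ is exactly right and matches the paper. What you are missing is the clean decoupling step. The paper does not try to work with the constrained variational formula and ``controlled quadratic remainders''; instead it uses the one-line inequality
\[
\bar g_\tau(\ten{X}) \;\ge\; \tau\min_{\ten{Y}}\Big(\tfrac{1}{2}\NM{\ten{X}-\ten{Y}}{F}^2 + \tfrac{1}{\tau}\bar g_\tau(\ten{Y})\Big)
\]
(the Moreau envelope is never larger than the function itself, by taking $\ten{Y}=\ten{X}$), and then invokes Proposition~\ref{pr:reg} to rewrite the right-hand Moreau envelope as $\sum_d \min_{\vect{Y}_d}\big(\tfrac{1}{2}\NM{\ten{X}_{\ip{d}}-\vect{Y}_d}{F}^2 + \bar\lambda_d\phi(\vect{Y}_d)\big)$. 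This \emph{automatically} decouples the modes: the coupling constraint is absorbed into the very identity that defines $\bar g_\tau$. After that, each summand $\lambda_d\phi(\ten{X}_{\ip{d}}) - \tau\min_{\vect{Y}_d}(\cdots)$ is bounded by your scalar estimate. Your proposed route of bounding against the trivial decomposition ``plus remainders'' never gets this separation for free, and as stated there is no clear inequality linking $g - \bar g_\tau$ to the per-mode prox gaps centered at $\tfrac{1}{D}\ten{X}_{\ip{d}}$.
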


\begin{proof}
From the definition of $\bar{g}_{\tau}$ in \eqref{app:barg},
if $\ten{X}  = \vect{X}^{\ip{1}}_1 = ... = \vect{X}^{\ip{D}}_D$,
we have
\begin{align*}
\bar{g}_{\tau}(\ten{X})
& \le \tau 
\big( \sum\nolimits_{d = 1}^D \big( \frac{1}{2} \NM{\vect{X}_d^{\ip{d}}}{F}^2 \! 
+ \bar{\lambda}_d \phi (\vect{X}_d) \big) 
\! - \! \frac{D}{2}\NM{ \ten{X} }{F}^2 \big),
\\
& = \sum\nolimits_{d = 1}^D \lambda_d \phi(\vect{X}_d)
 = \sum\nolimits_{d = 1}^D \lambda_d \phi(\ten{X}_{\ip{d}}) =  g(\ten{X}).
\end{align*}
Thus,
$ g(\ten{X}) - \bar{g}_{\tau}(\ten{X}) \ge 0$.
Next, we prove the ``$\le$'' part in the Lemma.
Note that
\begin{align}
& \sup\nolimits_{\vect{X}_d}
\lambda_d \phi(\vect{X}_d)
- \tau \min\nolimits_{\vect{Y}} 
\big( \frac{1}{2} \NM{\vect{Y} - \vect{X}_d }{F}^2
+ \bar{\lambda}_d \phi(\vect{Y}) \big),
\notag
\\
& = \sup\nolimits_{\vect{X}_d, \vect{Y}}
\lambda_d \phi(\vect{X}_d)
- \frac{\tau}{2} \NM{\vect{Y} - \vect{X}_d }{F}^2
- \lambda_d \phi(\vect{Y}).
\label{eq:app2}
\end{align}
Since $\phi$ is $\kappa_0$-Lipschitz continuous,
let $\alpha = \NM{\vect{Y} - \vect{X}_d }{F}$,
we have 
\begin{align}
\ten{\eqref{eq:app2}} 
& =
\sup\nolimits_{\vect{X}_d, \vect{Y}}
\lambda_d \left[ \phi(\vect{X}_d) - \phi(\vect{X}) \right] 
- \frac{\tau}{2} \NM{\vect{Y} - \vect{X}_d }{F}^2,
\notag
\\
& \le \sup\nolimits_{\vect{X}_d, \vect{Y}}
\lambda_d \kappa_0 \NM{\vect{Y} - \vect{X}_d }{F}
- \frac{\tau}{2} \NM{\vect{Y} - \vect{X}_d }{F}^2,
\notag
\\
& = \sup\nolimits_{\alpha} \big[ \lambda_d \kappa_0 \alpha - \frac{\tau}{2} \alpha^2 \big]
= \sup\nolimits_{\alpha} - \frac{1}{2}\big[ \alpha - \frac{\lambda_d \kappa_0}{\tau} \big]^2 + \frac{\lambda_d^2 \kappa_0^2}{2}
\le  \frac{\lambda_d^2 \kappa_0^2}{2 \tau}.
\label{eq:app3}
\end{align}
Next, we have
\begin{align}
g(\ten{X}) - \bar{g}_{\tau}(\ten{X})
& \le g(\ten{X})
- \tau 
\big(\min\nolimits_{\ten{Y}}
\frac{1}{2} \NM{\ten{X} - \ten{Y}}{F}^2 
+ \frac{1}{\tau} \bar{g}_{\tau}(\ten{Y})
\big),
\label{eq:app12}
\\
& = 
\sum\nolimits_{d = 1}^D
\lambda_d \phi( \ten{X}_{\ip{d}} )
- \tau \sum\nolimits_{d = 1}^D
\big(\min\nolimits_{\{ \vect{Y}_d \}}
\frac{1}{2} \NM{\ten{X}_{\ip{d}} - \vect{Y}_{d}}{F}^2 
+ \frac{\lambda_d}{\tau} \phi(\vect{Y}_{d})
\big),
\label{eq:app13}
\\
& \le \sup\nolimits_{\ten{X}}
\sum\nolimits_{d = 1}^D
\lambda_d \phi( \ten{X}_{\ip{d}} )
- \tau \sum\nolimits_{d = 1}^D
\big(\min\nolimits_{\{ \vect{Y}_d \}}
\frac{1}{2} \NM{\ten{X}_{\ip{d}} - \vect{Y}_{d}}{F}^2 
+ \frac{\lambda_d}{\tau} \phi(\vect{Y}_{d})
\big),
\notag
\\
& \le  \sum\nolimits_{d = 1}^D \frac{\lambda_d^2 \kappa_0^2}{2 \tau}.
\label{eq:app15}
\end{align}
Note that \eqref{eq:app12} comes from the fact that
\begin{align*}
\min\nolimits_{\ten{Y}}
\frac{1}{2} \NM{\ten{X} - \ten{Y}}{F}^2 
+ \frac{1}{\tau} \bar{g}_{\tau}(\ten{Y})
\le 
\frac{1}{2} \NM{\ten{X} - \ten{X}}{F}^2 
+ \frac{1}{\tau} \bar{g}_{\tau}(\ten{X})
= \frac{1}{\tau} \bar{g}_{\tau}(\ten{X}),
\end{align*}
then \eqref{eq:app13} is from the definition of $\bar{g}_{\tau}$ in Proposition~\ref{pr:reg},
and \eqref{eq:app15} is from \eqref{eq:app3}.
\end{proof}

\subsubsection{Proof of Proposition~\ref{pr:bnd}}

\begin{proof}
From Lemma~\ref{app:lem1},
we have
\begin{align*}
\min_{\ten{X}} F(\ten{X}) - \min_{\ten{X}} F_{\tau} (\ten{X})
\ge \min_{\ten{X}} F (\ten{X}) - F_{\tau} (\ten{X})
= g(\ten{X}) - \bar{g}_{\tau}(\ten{X})
\ge 0.
\end{align*}
Let $\ten{X}_1 = \arg\min\nolimits_{\ten{X}} F(\ten{X})$
and $\ten{X}_{\tau} = \arg\min\nolimits_{\ten{X}} F_{\tau}(\ten{X})$.
Then,
we have
\begin{align*}
\min_{\ten{X}} F(\ten{X}) 
\! - \! \min_{\ten{X}} F_{\tau} (\ten{X}) 
= \! F (\ten{X}_1) \! - \! F_{\tau} (\ten{X}_{\tau})
\le \! F (\ten{X}_{\tau}) \! - \! F_{\tau} (\ten{X}_{\tau})
\! = \! g(\ten{X}_{\tau}) 
\! - \! \bar{g}_{\tau}(\ten{X}_{\tau})
\! \le \! \frac{\kappa_0^2}{ 2 \tau} 
\sum\nolimits_{d = 1}^D \lambda_d^2.
\end{align*}
Thus,
$0 \le \min F - \min F_{\tau} \le \frac{\kappa_0^2}{ 2 \tau} \sum\nolimits_{d = 1}^D \lambda_d^2$.
\end{proof}


\subsection{Proposition~\ref{pr:criti}}
\label{app:pr:criti}

\begin{proof}
The proof of this proposition can also be found in~\citep{zhong2014gradient},
we add one here for the completeness.
Recall that
\begin{align*}
\Px{ \frac{ \bar{g}_{\tau} }{ \tau } }{ \tilde{\ten{X}} - \nabla f( \tilde{\ten{X}} ) / \tau }
= \arg\min_{\ten{X}}
\frac{1}{2}
\NM{ \ten{X} - \left( \tilde{\ten{X}} - \frac{1}{ \tau } \nabla f(\tilde{\ten{X}}) \right) }{F}^2
+ \frac{1}{\tau} \bar{g}_{\tau}(\ten{X}).
\end{align*}
Let $\ten{Z} = \Px{ \frac{ \bar{g}_{\tau} }{ \tau } }{ \tilde{\ten{X}} - \nabla f( \tilde{\ten{X}} ) / \tau }$.
Thus,
\begin{align*}
0
\in 
\ten{Z} - 
\left( \tilde{\ten{X}} - \frac{1}{\tau} \nabla f( \tilde{\ten{X}} ) \right) 
+ \frac{1}{\tau} \partial \bar{g}_{\tau}(\ten{X}).
\end{align*}
When $\ten{Z} = \tilde{\ten{X}}$,
we have 
$0 \in \nabla f( \tilde{\ten{X}} ) + \partial \bar{g}_{\tau}(\ten{X})$.
Thus,
$\tilde{\ten{X}}$ is a critical point of $F_{\tau}$.
\end{proof}


\subsection{Theorem~\ref{thm:conv}}
\label{app:proof:thm:conv}

First,
we introduce the following Lemmas,
which are basic properties for the proximal step.

\begin{lemma}\citep{parikh2013proximal} 
\label{lem:prox}
Let $\tau > \rho + D \kappa_0$ and $\eta = \tau - \rho + D \kappa_0$.
Then,
$F_{\tau}(\Px{\frac{\bar{g}_{\tau}}{\tau}}{\ten{X}})	
\le F_{\tau}(\ten{X}) - \frac{\eta}{2} \big\| \ten{X} - \Px{\frac{\bar{g}_{\tau}}{\tau}}{\ten{X}} \big\|_F^2$.
\end{lemma}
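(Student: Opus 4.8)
The plan is the classical ``sufficient decrease'' argument for a proximal--gradient step on $F_\tau = f + \bar{g}_\tau$, where I read $\Px{\frac{\bar{g}_\tau}{\tau}}{\ten{X}}$ in the statement as the full map $\ten{Z} := \Px{\frac{\bar{g}_\tau}{\tau}}{\ten{X} - \frac1\tau \nabla f(\ten{X})}$, consistently with Proposition~\ref{pr:criti} and the convergence sketch of Theorem~\ref{thm:conv}. It combines two ingredients: the descent inequality $f(\ten{Z}) \le f(\ten{X}) + \ip{\nabla f(\ten{X}),\, \ten{Z} - \ten{X}} + \frac\rho2 \NM{\ten{Z} - \ten{X}}{F}^2$, valid because $f$ is $\rho$-Lipschitz smooth; and the defining global optimality of $\ten{Z}$. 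First I would verify the proximal map is well posed, i.e.\ that $\bar{g}_\tau$ is prox-bounded: by Assumption~\ref{ass:kappa}, $\kappa \ge 0$, so every $\phi(\ten{X}_{\ip d}) \ge 0$ and hence $g \ge 0$; then Lemma~\ref{app:lem1} gives $\bar{g}_\tau \ge g - \frac{\kappa_0^2}{2\tau}\sum_d \lambda_d^2 \ge -\frac{\kappa_0^2}{2\tau}\sum_d \lambda_d^2$, so $\ten{Y} \mapsto \frac\tau2 \NM{\ten{Y} - \ten{W}}{F}^2 + \bar{g}_\tau(\ten{Y})$, with $\ten{W} := \ten{X} - \frac1\tau \nabla f(\ten{X})$, is coercive and $\ten{Z}$ is one of its minimizers.

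The computation then proceeds as follows. Global optimality gives $\frac\tau2 \NM{\ten{Z} - \ten{W}}{F}^2 + \bar{g}_\tau(\ten{Z}) \le \frac\tau2 \NM{\ten{X} - \ten{W}}{F}^2 + \bar{g}_\tau(\ten{X})$; expanding both squares about $\ten{W} = \ten{X} - \frac1\tau \nabla f(\ten{X})$, the $\NM{\nabla f(\ten{X})}{F}^2$ terms cancel and one is left with
\[
\bar{g}_\tau(\ten{X}) - \bar{g}_\tau(\ten{Z}) \;\ge\; \ip{\nabla f(\ten{X}),\, \ten{Z} - \ten{X}} + \tfrac\tau2 \NM{\ten{Z} - \ten{X}}{F}^2 .
\]
Adding the descent inequality to cancel the linear term $\ip{\nabla f(\ten{X}),\, \ten{Z} - \ten{X}}$ yields $F_\tau(\ten{Z}) \le F_\tau(\ten{X}) - \frac{\tau - \rho}{2} \NM{\ten{X} - \ten{Z}}{F}^2$, and since $\tau - \rho \ge \tau - \rho - D\kappa_0 =: \eta > 0$ (the last inequality being exactly the initialization $\tau > \rho + D\kappa_0$ of Algorithm~\ref{alg:nort}), the stated bound follows. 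Alternatively, one can route the same computation through the $D\kappa_0$-semiconvexity of $\bar{g}_\tau$ — read off the infimal-convolution form of $\bar{g}_\tau$ in Proposition~\ref{pr:reg} together with the fact, from Assumption~\ref{ass:kappa}, that $\lambda_d \phi + \frac{\lambda_d \kappa_0}{2}\NM{\cdot}{F}^2$ is convex — which produces the sharper constant $2\tau - \rho - D\kappa_0$, still at least $\eta$.

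There is no deep obstacle here; the care points are (a) the prox-boundedness check above, which is what makes the per-mode proximal steps of Lemma~\ref{lem:gsvt} — and hence $\Px{\frac{\bar{g}_\tau}{\tau}}{\cdot}$ via Proposition~\ref{pr:reg} — genuine global minimizations rather than mere stationary points; (b) keeping the decrease constant at the conservative value $\eta = \tau - \rho - D\kappa_0$ so that it is uniform with Proposition~\ref{pr:bnd} and Corollary~\ref{cor:rate}; and (c) writing all optimality and subdifferential relations with the limiting (Fréchet) subdifferential, since neither $f$ nor $\bar{g}_\tau$ is assumed convex. Modulo these bookkeeping points the argument is the routine proximal--gradient estimate, essentially as in \citep{parikh2013proximal}.
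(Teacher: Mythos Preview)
The paper does not supply its own proof of this lemma; it is simply cited from \citep{parikh2013proximal} and used as a black box in the proof of Theorem~\ref{thm:conv}. Your argument is the standard proximal--gradient sufficient-decrease estimate and is correct, including your reading of the notation as the full map $\ten{Z} = \Px{\frac{\bar{g}_\tau}{\tau}}{\ten{X} - \tfrac1\tau \nabla f(\ten{X})}$ (this is exactly how the lemma is invoked in \eqref{eq:temp6}--\eqref{eq:temp7}) and your correction of the evident typo $\eta = \tau - \rho + D\kappa_0$ to $\eta = \tau - \rho - D\kappa_0$, consistent with Corollary~\ref{cor:rate} and the initialization in Algorithm~\ref{alg:nort}. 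Your route in fact yields the sharper constant $\tau - \rho$, which you then harmlessly weaken to $\eta$; the prox-boundedness check via Lemma~\ref{app:lem1} is a nice touch that the paper omits.
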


\begin{lemma}\citep{parikh2013proximal} 
\label{lem:crti}
If $\ten{X} \! = \! \Px{\frac{\bar{g}_{\tau}}{\tau}}{\ten{X} \! - \! \frac{1}{\tau} \nabla f(\ten{X})}$,
then $\ten{X}$ is a critical point of $F_{\tau}$.
\end{lemma}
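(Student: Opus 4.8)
This Lemma is precisely the specialization of Proposition~\ref{pr:criti} to the range $\tau > \rho + D\kappa_0$ used by Algorithm~\ref{alg:nort}, with $\bar{g}_\tau$ the proximal-average regularizer constructed in Proposition~\ref{pr:reg}; the argument needs nothing about $\tau$ beyond $\tau > 0$, so the plan is to reproduce the reasoning behind Proposition~\ref{pr:criti}. First I would unfold the definition of the proximal step,
\[
\Px{\tfrac{\bar{g}_\tau}{\tau}}{\ten{Z}} = \arg\min\nolimits_{\ten{Y}} \tfrac{1}{2}\NM{\ten{Y} - \ten{Z}}{F}^2 + \tfrac{1}{\tau}\bar{g}_\tau(\ten{Y}),
\]
and instantiate it at $\ten{Z} = \ten{X} - \tfrac{1}{\tau}\nabla f(\ten{X})$. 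By hypothesis $\ten{X}$ realizes this minimum, so Fermat's rule for the Fr\'echet subdifferential together with the exact sum rule (the quadratic term is $C^1$) gives $0 \in (\ten{X} - \ten{Z}) + \tfrac{1}{\tau}\partial\bar{g}_\tau(\ten{X})$.

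Substituting $\ten{Z}$, the two copies of $\ten{X}$ cancel, leaving $0 \in \tfrac{1}{\tau}\nabla f(\ten{X}) + \tfrac{1}{\tau}\partial\bar{g}_\tau(\ten{X})$. Multiplying by $\tau > 0$ and applying the sum rule $\partial F_\tau(\ten{X}) = \partial(f + \bar{g}_\tau)(\ten{X}) = \nabla f(\ten{X}) + \partial\bar{g}_\tau(\ten{X})$ — valid because $f$ is Lipschitz-smooth, hence $C^1$, while $\bar{g}_\tau$ is proper and lower semicontinuous — yields $0 \in \partial F_\tau(\ten{X})$, which is exactly the statement that $\ten{X}$ is a critical point of $F_\tau$.

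The only point requiring care is confirming that $\bar{g}_\tau$ is proper and lower semicontinuous, so that the subdifferential calculus above applies. I would check this from the representation in Proposition~\ref{pr:reg}: each map $\vect{X}_d \mapsto \tfrac{1}{2}\NM{\vect{X}_d}{F}^2 + \tfrac{\lambda_d}{\tau}\phi(\vect{X}_d)$ is continuous and coercive (since $\phi \ge 0$ by Assumption~\ref{ass:kappa}), so the constrained minimization defining $\bar{g}_\tau$ attains its value and, by the standard result on partial minimization of level-bounded lower semicontinuous functions, produces a lower semicontinuous and finite marginal; subtracting the continuous quadratic $\tfrac{D}{2}\NM{\ten{X}}{F}^2$ preserves lower semicontinuity, and finiteness everywhere gives properness. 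I expect this bookkeeping to be entirely routine, and foresee no genuine obstacle in the proof.
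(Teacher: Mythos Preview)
Your proposal is correct and mirrors the paper's own argument for Proposition~\ref{pr:criti} (which, as you note, is the same statement): write the first-order optimality condition for the proximal minimization, substitute $\ten{Z}=\ten{X}-\tfrac{1}{\tau}\nabla f(\ten{X})$, cancel, and conclude $0\in\nabla f(\ten{X})+\partial\bar{g}_\tau(\ten{X})$. Your added verification that $\bar{g}_\tau$ is proper and lower semicontinuous is a welcome bit of rigor that the paper leaves implicit.
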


\begin{lemma}\citep{hare2009computing}
\label{lem:cont}
The proximal map $\Px{\frac{\bar{g}_{\tau}}{\tau}}{\ten{X}}$ is continuous. 
\end{lemma}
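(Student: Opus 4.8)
The plan is to reduce the continuity of $\Px{\frac{\bar{g}_{\tau}}{\tau}}{\cdot}$ to the continuity of a one–dimensional shrinkage operator, using the two decompositions already established in the paper. By Proposition~\ref{pr:reg}, $\Px{\frac{\bar{g}_{\tau}}{\tau}}{\ten{Z}} = \sum_{i=1}^D \big[ \Px{\frac{\lambda_i}{\tau}\phi}{[\ten{Z}]_{\ip{i}}} \big]^{\ip{i}}$; since the fold and unfold maps are linear (hence continuous) and a finite sum of continuous maps is continuous, it suffices to show that each matrix proximal map $Z \mapsto \Px{\mu\phi}{Z}$ (with $\mu = \lambda_i/\tau > 0$) is continuous. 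There are two natural routes: (i) a direct spectral argument based on Lemma~\ref{lem:gsvt}, or (ii) verifying that $\bar{g}_{\tau}$ is prox-bounded and prox-regular and invoking the general continuity statement of \citep{hare2009computing} directly. I would present (i) and remark on (ii); note that prox-boundedness of $\bar{g}_{\tau}$ is immediate from Lemma~\ref{app:lem1} together with $\phi \ge 0$, which give $\bar{g}_{\tau} \ge -\frac{\kappa_0^2}{2\tau}\sum_d \lambda_d^2$.

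For route (i): Lemma~\ref{lem:gsvt} says $\Px{\mu\phi}{Z}$ is the singular-value function that replaces each $\sigma_j(Z)$ by $s_\mu(\sigma_j(Z))$, where $s_\mu(\sigma) = \arg\min_{y \ge 0} \big\{ \frac{1}{2}(y-\sigma)^2 + \mu\kappa(y) \big\}$, and keeps the singular subspaces. I would then use two facts. First, singular values are $1$-Lipschitz in $Z$ (Weyl's inequality), so $Z \mapsto (\sigma_1(Z),\dots,\sigma_n(Z))$ is continuous. Second, a singular-value function $Z \mapsto \sum_j f(\sigma_j(Z)) u_j v_j^\top$ built from a continuous scalar map $f$ with $f(0)=0$ is itself continuous on matrices — even though the SVD factors $U,V$ jump when singular values coalesce, the reconstruction depends on $Z$ only through its spectrum and spectral projections and so is well defined and continuous (the standard theory of unitarily invariant matrix functions). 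Together these reduce everything to: $s_\mu$ is single-valued and continuous. For this I would use that Algorithm~\ref{alg:nort} fixes $\tau > \rho + D\kappa_0$, so $\mu$ is small, and that under Assumption~\ref{ass:kappa} the objective $q_\sigma(y) = \frac{1}{2}(y-\sigma)^2 + \mu\kappa(y)$ is $1$-strongly convex plus a concave perturbation of controlled one-sided curvature, so $q_\sigma$ has a unique minimizer for each $\sigma \ge 0$. Continuity of $\sigma \mapsto s_\mu(\sigma)$ then follows from a routine argmin-stability argument: $q_\sigma(y)$ is jointly continuous, level-bounded in $y$ uniformly over compact sets of $\sigma$, and has a unique minimizer, so any cluster point of $s_\mu(\sigma_k)$ along $\sigma_k \to \sigma$ minimizes $q_\sigma$ and hence equals $s_\mu(\sigma)$.

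The main obstacle is precisely the single-valuedness and continuity of $s_\mu$ when $\kappa$ is genuinely nonconvex: for some of the penalties in Table~\ref{tab:regdef} (e.g.\ capped-$\ell_1$), $q_\sigma$ can have two competing minimizers at an isolated threshold $\sigma^\ast$, so $s_\mu$ — and therefore $\Px{\mu\phi}{\cdot}$ — can fail to be continuous there for \emph{any} $\mu > 0$. The way the paper sidesteps this is to work with $\kappa$'s for which the strong convexity of the quadratic (guaranteed by $\tau > \rho + D\kappa_0$) dominates the concavity of $\mu\kappa$ and rules out such ties; making this precise penalty-by-penalty, or packaging it cleanly as prox-regularity of $\bar{g}_{\tau}$ and appealing to \citep{hare2009computing}, is where the real work lies, while the reduction via Proposition~\ref{pr:reg} and the spectral-continuity step are routine.
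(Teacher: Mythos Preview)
The paper does not actually prove this lemma: it states it with a citation to \citep{hare2009computing} and uses it as a black box in the proof of Theorem~\ref{thm:conv}. Your route~(ii) --- verify prox-boundedness and prox-regularity of $\bar{g}_\tau$ and then invoke that reference --- is precisely the spirit of the paper's treatment, except that the paper does not pause to check the hypotheses. Your route~(i), the reduction via Proposition~\ref{pr:reg} and Lemma~\ref{lem:gsvt} to continuity of the scalar shrinkage $s_\mu$, is more constructive and more informative than anything the paper offers here.

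The obstacle you flag at the end is real, and it is a point the paper glosses over rather than resolves. For capped-$\ell_1$, with $\kappa(\alpha) = \min(\alpha,\theta)$, one checks directly that at $\sigma^\ast = \theta + \mu/2$ the two candidate minimizers $\sigma^\ast - \mu$ and $\sigma^\ast$ of $q_{\sigma^\ast}$ tie, so $s_\mu$ jumps by $\mu$ there for \emph{every} $\mu > 0$; the condition $\tau > \rho + D\kappa_0$ does not help. Hence $\Px{\mu\phi}{\cdot}$ fails to be single-valued (and a fortiori continuous) at that point, $\bar{g}_\tau$ is not prox-regular there, and the citation to \citep{hare2009computing} does not literally apply. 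The lemma is unproblematic for the smooth penalties in Table~\ref{tab:regdef} (LSP, SCAD, MCP, provided $\mu$ is below the relevant curvature threshold), which is also exactly where your route~(i) goes through cleanly. Your instinct to isolate precisely when $s_\mu$ is single-valued is the right one and goes beyond what the paper does.
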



\begin{proof}(\textit{of Theorem~\ref{thm:conv}})
Recall that $\Px{\frac{\bar{g}_{\tau}}{\tau}}{\ten{X}} = \sum\nolimits_{i = 1}^D \Px{\frac{\lambda_i \phi}{\tau}}{\ten{X}_{\ip{i}}}$.
From Lemma~\ref{lem:prox},
\begin{itemize}[leftmargin=*]
\item If step~7 is performed, we have
\begin{align}
F_{\tau}(\ten{X}_{t + 1})
\le F_{\tau}(\ten{V}_t)
- \frac{\eta}{2}\NM{\ten{X}_{t + 1} - \ten{V}_t}{F}^2
 \le F_{\tau}(\ten{X}_t)
- \frac{\eta}{2}\NM{\ten{X}_{t + 1} - \ten{X}_t}{F}^2.
\label{eq:temp6}
\end{align}

\item If step~5 is performed,
\begin{align}
F_{\tau}(\ten{X}_{t + 1})
 \le  F_{\tau}(\ten{V}_t)  -  \frac{\eta}{2}\NM{\ten{X}_{t + 1}  -  \ten{V}_t}{F}^2
& 
\le  F_{\tau}(\bar{\ten{X}}_t)  -  \frac{\eta}{2}\NM{\ten{X}_{t + 1}  -  \bar{\ten{X}}_t}{F}^2,
\notag
\\
& 
\le  F_{\tau}(\ten{X}_t)  -  \frac{\eta}{2}\NM{\ten{X}_{t + 1}  -  \bar{\ten{X}}_t}{F}^2
.
\label{eq:temp7}
\end{align}
\end{itemize}
Combining \eqref{eq:temp6} and \eqref{eq:temp7},
we have
\begin{align}
\frac{2}{\eta} ( F_{\tau}(\ten{X}_1) - F_{\tau}(\ten{X}_{T + 1}) )
\ge 
\sum\nolimits_{j \in \chi_1(T)} 
\NM{\ten{X}_{t + 1} - \bar{\ten{X}}_t}{F}^2
+  \sum\nolimits_{j \in \chi_2(T)} \NM{\ten{X}_{t + 1} - \ten{X}_t}{F}^2,
\label{eq:temp8}
\end{align}
where $\chi_1(T)$ and $\chi_2(T)$ are a partition of $\{ 1, ..., T \}$
such that when $j \in \chi_1(T)$ step~5 is performed,
and when $j \in \chi_2(T)$ step~7 is performed.
As $F_{\tau}$ is bounded from below
and $\lim_{ \NM{\ten{X}}{F} \rightarrow \infty } F_{\tau}(\ten{X}) = \infty$,
taking $T = \infty$ in \eqref{eq:temp8},
we have
\begin{align*}
\sum\nolimits_{j \in \chi_1(\infty)} \NM{\ten{X}_{t + 1} - \ten{Y}_t}{F}^2
+ \sum\nolimits_{j \in \chi_2(\infty)} \NM{\ten{X}_{t + 1} - \ten{X}_t}{F}^2 = c,
\end{align*}
where
$c \le \frac{2}{\eta} \left[ F_{\tau}(\ten{X}_1) - F_{\tau}^{\text{min}} \right]$
is a positive constant.
Thus,
the sequence $\{ \ten{X}_t \}$ is bounded,
and it must have limit points.
Besides,
one of the following three cases must hold.
\begin{enumerate}[leftmargin=*]
\item $\chi_1(\infty)$ is finite, $\chi_2(\infty)$ is infinite.
Let $\tilde{\ten{X}}$ be a limit point of $\{ \ten{X}_t \}$,
and $\{ \ten{X}_{j_t} \}$ be a subsequence that converges to $\tilde{\ten{X}}$.
In this case,
on using Lemma~\ref{lem:cont},
we have
\begin{align*}
\lim\limits_{j_t \rightarrow \infty} \NM{\ten{X}_{j_t + 1} - \ten{X}_{j_t}}{F}^2
& = \lim\limits_{j_t \rightarrow \infty} \NM{\Px{\frac{\bar{g}_{\tau}}{\tau}}{\ten{X}_{j_t} - \frac{1}{\tau} \nabla f(\ten{X}_{j_t})} - \ten{X}_{j_t}}{F}^2,
\\
& =
\NM{\Px{\frac{\bar{g}_{\tau}}{\tau}}{\tilde{\ten{X}} - \frac{1}{\tau} \nabla f(\tilde{\ten{X}})} - \tilde{\ten{X}}}{F}^2 = 0.
\end{align*}
Thus,
$\tilde{\ten{X}} = \Px{\frac{\bar{g}_{\tau}}{\tau}}{\tilde{\ten{X}} - \frac{1}{\tau} \nabla f(\tilde{\ten{X}})}$,
and $\tilde{\ten{X}}$ is a critical point of $F_{\tau}$ from Lemma~\ref{lem:crti}.

\item $\chi_1(\infty)$ is infinite, $\chi_2(\infty)$ is finite.
Let $\tilde{\ten{X}}$ be a limit point of $\{ \ten{X}_t \}$,
and $\{ \ten{X}_{j_t} \}$ be a subsequence that converges to $\tilde{\ten{X}}$.
In this case,
we have
\begin{align*}
\lim\limits_{j_t \rightarrow \infty} \NM{\ten{X}_{j_t + 1} - \ten{Y}_{j_t}}{F}^2
& = \lim\limits_{j_t \rightarrow \infty} \NM{\Px{\frac{\bar{g}_{\tau}}{\tau}}{\ten{X}_{j_t} - \frac{1}{\tau} \nabla f(\ten{X}_{j_t})} - \ten{Y}_{j_t}}{F}^2,
\\
& =
\NM{\Px{\frac{\bar{g}_{\tau}}{\tau}}{\tilde{\ten{X}} - \frac{1}{\tau} \nabla f(\tilde{\ten{X}})} - \tilde{\ten{X}}}{F}^2 = 0.
\end{align*}
Thus,
$\tilde{\ten{X}} = \Px{\frac{\bar{g}_{\tau}}{\tau}}{\tilde{\ten{X}} - \frac{1}{\tau} \nabla f(\tilde{\ten{X}})}$,
and $\tilde{\ten{X}}$ is a critical point of $F_{\tau}$ from Lemma~\ref{lem:crti}.

\item Both $\chi_1(\infty)$ and $\chi_2(\infty)$ are infinite.
From the above cases,
we can see that either $\chi_1(\infty)$ or $\chi_2(\infty)$ is infinite,
and
limit points are also the critical points of $F_{\tau}$.
\end{enumerate}
Thus, all limit points of $\{ \ten{X}_t \}$ are critical points of $F_{\tau}$.
\end{proof}

\subsection{Corollary~\ref{cor:rate}}

This corollary can be easily derived from 
the proof of Theorem~\ref{thm:conv}.

\begin{proof}
Since $\ten{X}_{t + 1} = \Px{\frac{\bar{g}_{\tau}}{\tau}}{\ten{V}_t - \frac{1}{\tau} \nabla f(\ten{V}_t) }$,
conclusion (i) directly follows from Lemma~\ref{lem:crti}.
From~\eqref{eq:temp8},
we have
\begin{align*}
\min\nolimits_{1,\dots,T}
\NM{\ten{X}_{t + 1}  -  \ten{V}_t}{F}^2
& \le 
\frac{1}{T}
\sum\nolimits_{t = 1 \dots T} 
\NM{\ten{X}_{t + 1}  -  \ten{V}_t}{F}^2,
\\
& \le 
\frac{2}{\eta T} ( F_{\tau}(\ten{X}_1)  -  F_{\tau}(\ten{X}_{T + 1}) )
\le  
\frac{2}{\eta T} ( F_{\tau}(\ten{X}_1)  -  F_{\tau}^{\min} ).
\end{align*}
Thus,
we obtain Conclusion (ii).
\end{proof}

\subsection{Theorem~\ref{thm:klrate}}
\label{app:thm:klrate}

We first bound $\partial F_{\tau}$ in Lemma~\ref{lem:subnorm},
then prove Theorem~\ref{thm:klrate}.

\begin{lemma}\label{lem:subnorm}
	For iterations in Algorithm~\ref{alg:nort}, we have
	$\min\nolimits_{\ten{U}_t \in \partial F_{\tau}(\ten{X}_t)  }
	\NM{\ten{U}_t}{F}
	\! \le \! (\tau + \rho) \NM{\ten{X}_{t + 1} \! - \! \ten{V}_t}{F}$.
\end{lemma}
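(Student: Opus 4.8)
Here is the plan. The lemma is a one-step subgradient estimate, so the strategy is simply to extract an explicit element of $\partial F_\tau$ from the first-order optimality condition of the proximal update executed by Algorithm~\ref{alg:nort}, and then bound its Frobenius norm by the triangle inequality together with the $\rho$-Lipschitz smoothness of the loss $f$.

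Concretely, I would first invoke Proposition~\ref{pr:reg} to rewrite steps~9--11 of Algorithm~\ref{alg:nort} as the single proximal step
\begin{equation*}
\ten{X}_{t+1} \;=\; \Px{\frac{\bar g_\tau}{\tau}}{\ten{V}_t - \frac{1}{\tau}\nabla f(\ten{V}_t)} \;=\; \arg\min_{\ten{X}} \Big\{ \frac{\tau}{2}\,\NM{\ten{X} - \ten{V}_t + \tfrac{1}{\tau}\nabla f(\ten{V}_t)}{F}^2 + \bar g_\tau(\ten{X}) \Big\}.
\end{equation*}
Since the quadratic term is continuously differentiable, the (Fr\'echet) optimality condition of this minimization is $0 \in \tau(\ten{X}_{t+1} - \ten{V}_t) + \nabla f(\ten{V}_t) + \partial \bar g_\tau(\ten{X}_{t+1})$, equivalently $\tau(\ten{V}_t - \ten{X}_{t+1}) - \nabla f(\ten{V}_t) \in \partial \bar g_\tau(\ten{X}_{t+1})$. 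Because $f$ is $C^1$, the sum rule for the limiting subdifferential gives $\partial F_\tau(\ten{X}_{t+1}) = \nabla f(\ten{X}_{t+1}) + \partial \bar g_\tau(\ten{X}_{t+1})$, so the tensor
\begin{equation*}
\ten{U}_{t+1} \;:=\; \nabla f(\ten{X}_{t+1}) - \nabla f(\ten{V}_t) + \tau\big(\ten{V}_t - \ten{X}_{t+1}\big)
\end{equation*}
lies in $\partial F_\tau(\ten{X}_{t+1})$ and is therefore admissible in the minimization of the lemma. Applying the triangle inequality and $\NM{\nabla f(\ten{X}_{t+1}) - \nabla f(\ten{V}_t)}{F} \le \rho\,\NM{\ten{X}_{t+1} - \ten{V}_t}{F}$ yields $\NM{\ten{U}_{t+1}}{F} \le (\tau+\rho)\,\NM{\ten{X}_{t+1} - \ten{V}_t}{F}$, which is the asserted bound; the subgradient is evaluated at the iterate $\ten{X}_{t+1}$ just produced from $\ten{V}_t$, which is exactly the quantity that enters the KL inequality at $\ten{X}_{t+1}$ in the proof of Theorem~\ref{thm:klrate}.

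The computation is routine, so there is no substantive obstacle; the only points requiring care are (i) justifying the sum rule $\partial(f + \bar g_\tau) = \nabla f + \partial \bar g_\tau$ for the \emph{nonconvex} $\bar g_\tau$, which is valid precisely because $f$ is continuously differentiable, and (ii) correctly identifying steps~9--11 of Algorithm~\ref{alg:nort} with a single prox of $\tfrac{1}{\tau}\bar g_\tau$ at $\ten{V}_t - \tfrac{1}{\tau}\nabla f(\ten{V}_t)$, which is supplied by Proposition~\ref{pr:reg}. Everything else is the elementary norm estimate above.
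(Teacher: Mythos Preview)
Your proposal is correct and follows essentially the same route as the paper: both extract an explicit element of $\partial F_\tau(\ten{X}_{t+1})$ from the optimality condition of the proximal step $\ten{X}_{t+1}=\Px{\bar g_\tau/\tau}{\ten{V}_t-\tfrac{1}{\tau}\nabla f(\ten{V}_t)}$, then bound its norm via the triangle inequality and the $\rho$-Lipschitz smoothness of $f$. Your additional remarks on the sum rule and on invoking Proposition~\ref{pr:reg} to recast steps~9--11 as a single prox are, if anything, slightly more careful than the paper's own write-up, and you correctly note that the subgradient is taken at $\ten{X}_{t+1}$ (the index used downstream in the KL argument), even though the lemma statement writes $\ten{X}_t$.
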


\begin{proof}
	Since $\ten{X}_{t + 1}$ is generated from the proximal step,
	i.e.,
	$\ten{X}_{t + 1} = \Px{\frac{\bar{g}_{\tau}}{\tau}}{ \ten{V}_t - \frac{1}{\tau} \nabla f(\ten{V}) }$,
	from its optimality condition,
	we have 
	\begin{align*}
	\ten{X}_{t + 1}
	- \big( \ten{V}_t - \frac{1}{\tau} \nabla f(\ten{V}_t) \big) + 
	\frac{1}{\tau} \partial \bar{g}_{\tau}(\ten{X}_{t + 1})
	\ni \bm{0}.
	\end{align*}
	Let
	$	\ten{U}_t = 
	\tau \left[ \ten{X}_{t + 1} - \ten{V}_t \right] 
	- \left[ \nabla f(\ten{V}_t) - \nabla f(\ten{X}_{t + 1}) \right]$.
	We have
	\begin{align*}
	\partial F_{\tau}(\ten{X}_{t + 1}) = 
	\left[ \nabla f(\ten{X}_{t + 1}) + \partial \bar{g}_{\tau}(\ten{X}_{t + 1})
	\right] 
	\in  \ten{U}_t.
	\end{align*}
	Thus,
	$\NM{\ten{U}_t}{F}
	\le \tau \NM{\ten{X}_{t + 1} \! - \! \ten{V}_t}{F}
	\! + \! \NM{\nabla f(\ten{V}_t) \! - \! \nabla f(\ten{X}_{t + 1})}{F}
	\le \left( \tau + \rho \right) \NM{\ten{X}_{t + 1} - \ten{V}_t}{F}$.
\end{proof}
%

\begin{proof}(\textbf{of Theorem~\ref{thm:klrate}}).
From Theorem~\ref{thm:conv},
we have
$\lim\limits_{T \rightarrow \infty}
F_{\tau}(\ten{X}_t) = F_{\tau}^{\min}$.
Then, 
from Lemma~\ref{lem:subnorm},
we have
\begin{align*}
\lim\limits_{t \rightarrow \infty}
\min\nolimits_{\ten{U}_t \in \partial F_{\tau}(\ten{X}_t)  } \!
\NM{\ten{U}_t}{F} 
\! \le \! 
\lim\limits_{t \rightarrow \infty}
(\tau \! + \! \rho) \NM{\ten{X}_{t + 1} \! - \! \ten{V}_t}{F}
\! = \! 0.
\end{align*}
Thus,
for any $\epsilon$, $c > 0$ and $t > t_0$ where $t_0$ is a sufficiently large positive integer,
we have
\begin{align*}
\ten{X}_t
\in 
\left\lbrace 
\ten{X} \,|
\min\nolimits_{\ten{U} \in \partial F_{\tau}(\ten{X}) }
\NM{\ten{U}}{F} \le \epsilon, 
F^{\min}_{\tau} 
< F_{\tau}(\ten{X}) 
< F_{\tau}^{\min} + c
\right\rbrace .
\end{align*}
Then,
the uniformized KL property implies for all $t \ge t_0$,
\begin{align}
1
& \le 
\psi'\left( F_{\tau} (\ten{X}_{t + 1})  -  F_{\tau}^{\min} \right) 
\min\nolimits_{\ten{U}_t \in \partial F_{\tau}(\ten{X}_t) }
\NM{\ten{U}_t}{F},
\notag
\\
& =  \psi'\left( F_{\tau} (\ten{X}_{t + 1}) - F_{\tau}^{\min} \right) 
(\tau  +  \rho) \NM{\ten{X}_{t + 1}  -  \ten{V}_t}{F}.
\label{eq:app4}
\end{align}
Moreover,
from Lemma~\ref{lem:prox}, we have
\begin{align}
\NM{\ten{X}_{t + 1} \! - \! \ten{V}_t}{F}^2
\le \frac{2}{\eta} \left[ F_{\tau}(\ten{V}_t) - F_{\tau}(\ten{X}_{t + 1}) \right].
\label{eq:app5}
\end{align}
Let $r_t = F_{\tau} (\ten{X}_t) - F_{\tau}^{\min}$,
we have
\begin{align}
r_t - r_{t + 1}
& = F_{\tau} (\ten{X}_t) - F_{\tau}^{\min}
- \left[ F_{\tau} (\ten{X}_{t + 1}) - F_{\tau}^{\min} \right],
\notag
\\ 
& \ge F_{\tau} (\ten{V}_t) - F_{\tau}^{\min}
- \left[ F_{\tau} (\ten{X}_{t + 1}) - F_{\tau}^{\min} \right]
= F_{\tau} (\ten{V}_t) - F_{\tau} (\ten{X}_{t + 1}).
\label{eq:app6}
\end{align}
Combine \eqref{eq:app4}, \eqref{eq:app5} and \eqref{eq:app6},
we have
\begin{align}
1 
& \le \left[ \psi'( r_t ) \right]^2
(\tau + \rho)^2 \NM{\ten{X}_{t + 1} - \ten{V}_t}{F}^2,
\notag
\\
& \le 
\frac{2 (\tau + \rho)^2}{\eta} 
\left[ \psi'( r_t ) \right]^2
\left[ F_{\tau}(\ten{V}_t) - F_{\tau}(\ten{X}_{t + 1}) \right]
\le 
\frac{2 (\tau + \rho)^2}{\eta}  \left[ \psi'( r_{t + 1} ) \right]^2 (r_t - r_{t + 1}).
\label{eq:app7}
\end{align}
Since 
$\phi(\alpha) = \frac{C}{x} \alpha^{x}$, then $\phi'(\alpha) = C \alpha^{x - 1}$,
\eqref{eq:app7} becomes
$1 \le d_1 C^2 r_{t + 1}^{2 x - 2} (r_{t} - r_{t + 1})$,
where $d_1 = \frac{2 (\tau + \rho)^2}{\eta}$.
Finally,
it is shown in~\citep{bolte2014proximal,li2015accelerated,Li2017ada} that
the sequence $\{ r_t \}$ satisfying the above inequality, convergence to zero with
different rates stated in the Theorem. 
\end{proof}

%

%

\subsection{Lemma~\ref{lem:rsc:hold}}

First, 
we introduce the following Lemma.

\begin{lemma}[Theorem~1 in~\citep{negahban2012restricted}] \label{lem:rschold}
	Consider a matrix $\vect{X} \in \R^{m \times n}$.
	Let $d = \frac{1}{2} \left(m + n \right)$ and $m_\text{rank}
	( \vect{X} )  = \frac{ \NM{ \vect{X} }{*} }{ \NM{ \vect{X} }{ F } }$.
	Define a constraint set $\mathcal{C}$ 
	(with parameters 
	$c_0, n$)
	as
	\begin{align*}
		\mathcal{C}
		(n, c_0)
		= 
		\left\lbrace 
		\vect{X} \in \R^{m \times n},
		\vect{X} \not= 0
		\;|\;
		m_\text{spike}( \vect{X} )
		\cdot 
		m_\text{rank} ( \vect{X} ) 
		\le
		\frac{1}{c_0 L}
		\sqrt{\frac{ \NM{ \vect{\Omega} }{1} }{ d \log d }}
		\right\rbrace,
	\end{align*}
	where $L$ is a constant.
	There are 
	constants $(c_0, c_1, c_2, c_3)$ such that when $\NM{\vect{\Omega}}{1} > c_3 \max (d \log d)$,
	we have
	\begin{align*}
		\frac{ \NM{ \SO{\vect{X}} }{F} }{ \NM{\vect{\Omega}}{1} }
		\ge
		\frac{1}{8}
		\NM{ \vect{X} }{F}
		\left\lbrace 
		1 - \frac{ 128 L \cdot m_\text{spike}( \vect{X} ) }{\sqrt{ \NM{ \vect{\Omega} }{1} }}
		\right\rbrace,
		\quad
		\forall
		\vect{X} \in 
		\mathcal{C} (\NM{ \vect{\Omega} }{1}, c_0),
	\end{align*}
	with a high probability greater at least of
	$1 -  c_1 \exp( - c_2 d \log d)$.
\end{lemma}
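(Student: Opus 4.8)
The statement is precisely Theorem~1 of~\citep{negahban2012restricted}, specialized to i.i.d.\ uniform sampling with $\NM{\vect{\Omega}}{1}$ observations, so the rigorous proof is ultimately a pointer to that paper; what follows is the argument one would reconstruct. The plan is to lower bound, uniformly over the constraint set $\mathcal{C}(\NM{\vect{\Omega}}{1},c_0)$, the empirical quadratic form $\NM{\SO{\vect{X}}}{F}^2 = \sum_{(a,b)\in\vect{\Omega}}\vect{X}_{ab}^2$. Its expectation, under uniform sampling of the entries indexed by $\vect{\Omega}$, is $\tfrac{\NM{\vect{\Omega}}{1}}{mn}\NM{\vect{X}}{F}^2$, so the core task is to show that the deviation $\Delta(\vect{X}) = \big|\,\tfrac{1}{\NM{\vect{\Omega}}{1}}\NM{\SO{\vect{X}}}{F}^2 - \tfrac{1}{mn}\NM{\vect{X}}{F}^2\,\big|$ is small compared to $\tfrac{1}{mn}\NM{\vect{X}}{F}^2$ whenever $\vect{X}\in\mathcal{C}$.

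First I would use scale invariance: $m_\text{spike}$, $m_\text{rank}$, and the claimed inequality are all unchanged under $\vect{X}\mapsto c\vect{X}$, so it suffices to bound $\sup\Delta(\vect{X})$ over $\{\NM{\vect{X}}{F}=1,\ \NM{\vect{X}}{*}\le m_\text{rank}(\vect{X}),\ \NM{\vect{X}}{\max}\le m_\text{spike}(\vect{X})/\sqrt{mn}\}$ with the parameters in the range allowed by $\mathcal{C}$. Then I would symmetrize the empirical process and apply the Ledoux--Talagrand contraction inequality to the map $u\mapsto u^2$, which is Lipschitz on the bounded interval $[-m_\text{spike}(\vect{X})/\sqrt{mn},\,m_\text{spike}(\vect{X})/\sqrt{mn}]$ in which every $\vect{X}_{ab}$ lies (by the spikiness constraint); this reduces the quadratic deviation to the Rademacher linear process $\tfrac{1}{\NM{\vect{\Omega}}{1}}\mathbb{E}_\epsilon\sup_{\vect{X}}\big\langle \vect{X},\,\sum_i\epsilon_i\bm{e}_{a_i}\bm{e}_{b_i}^{\top}\big\rangle$, which by the duality of the nuclear and spectral norms is at most $\tfrac{m_\text{rank}(\vect{X})}{\NM{\vect{\Omega}}{1}}\,\mathbb{E}_\epsilon\big\|\sum_i\epsilon_i\bm{e}_{a_i}\bm{e}_{b_i}^{\top}\big\|_{\infty}$. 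A matrix-Bernstein / noncommutative-Khintchine bound controls this spectral norm by a multiple of $\sqrt{\NM{\vect{\Omega}}{1}\,d\log d/(mn)}$; dividing the resulting deviation bound by the target $\tfrac{1}{mn}\NM{\vect{X}}{F}^2$ and invoking the defining inequality of $\mathcal{C}$ (which forces $L\,m_\text{spike}(\vect{X})\,m_\text{rank}(\vect{X})$ to be at most a constant times $\sqrt{\NM{\vect{\Omega}}{1}/(d\log d)}$) leaves a relative error of exactly the form $c\,L\,m_\text{spike}(\vect{X})/\sqrt{\NM{\vect{\Omega}}{1}}$ --- the shape of the error term in the conclusion.

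Next I would turn this in-expectation estimate into a high-probability one. Since each summand $\vect{X}_{ab}^2$ lies in $[0,\,m_\text{spike}(\vect{X})^2/(mn)]$, a bounded-differences (McDiarmid) argument, or Talagrand's concentration inequality for suprema of bounded empirical processes, shows $\sup_{\vect{X}}\Delta(\vect{X})$ concentrates around its mean with a Gaussian-type tail; combined with the matrix-Bernstein tail, which itself supplies the $\exp(-c_2 d\log d)$ factor, intersecting the two good events yields the probability $1-c_1\exp(-c_2 d\log d)$. Finally, the normalization $\NM{\vect{X}}{F}=1$ must be removed since the claim is for all $\vect{X}\in\mathcal{C}$: I would run a peeling (stratification) argument over dyadic shells $\{2^{-k}<\NM{\vect{X}}{F}\le 2^{-k+1}\}$, apply the unit-norm bound on each shell, and union-bound over the $O(\log\NM{\vect{\Omega}}{1})$ relevant shells; reassembling the pieces then produces the stated inequality, the explicit constants $\tfrac{1}{8}$ and $128L$ emerging from tracking the numerical factors through the contraction, matrix-Bernstein, and peeling steps.

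The principal obstacle is precisely this chain of uniform-control steps: the symmetrization--contraction reduction together with the matrix concentration for the random sampling operator, and then the peeling that upgrades a unit-sphere bound to a statement over the whole cone $\mathcal{C}$, all while keeping the numerical constants sharp enough to land on $\tfrac{1}{8}$ and $128L$ rather than unspecified ones. Because the lemma is quoted verbatim from~\citep{negahban2012restricted}, in the paper itself the proof is simply deferred there; the sketch above is also the matrix prototype that Lemma~\ref{lem:rsc:hold} lifts to the mode-$i$ unfoldings $\Delta_{\ip{i}}$.
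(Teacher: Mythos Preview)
Your proposal is correct and matches the paper's treatment: the paper does not prove this lemma at all but simply cites it as Theorem~1 of~\citep{negahban2012restricted}, exactly as you anticipated. Your reconstruction of the underlying argument (symmetrization, Ledoux--Talagrand contraction, matrix-Bernstein control of the random sampling operator, then peeling) is the standard route taken in that reference, so there is nothing further to compare.
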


\begin{proof} (of
Lemma~\ref{lem:rsc:hold})
For a $M$th-order tensor $\Delta$,
using Lemma~\ref{lem:rschold} 
on each unfolded matrix $\Delta_{\ip{i}}$ ($i = 1, \dots, M$),
we have
\begin{align}
\frac{ \NM{ \SO{ \Delta_{\ip{i}} } }{F} }{ \NM{\vect{\Omega}}{1} }
\ge
\frac{1}{8}
\NM{ \Delta }{F}
\left\lbrace 
1 - \frac{ 128 L \cdot m_\text{spike}( \Delta_{\ip{i}} ) }{\sqrt{ \NM{ \vect{\Omega} }{1} }}
\right\rbrace,
\label{eq:app14}
\end{align}
for all
$\Delta_{\ip{i}} \in 
\mathcal{C}^i (\NM{ \vect{\Omega} }{1}, c_0)$.
Note that
the L.H.S. of \eqref{eq:app14} is the same for 
all $i = 1, ..., M$.
Thus,
to ensure \eqref{eq:app14} holds
for all $\Delta_{\ip{i}}$,
we need to take the intersection of all $\Delta_{\ip{i}}$,
which leads to
\begin{align}
\left\lbrace 
\ten{X} \in \R^{I_1 \times ... \times I_M},
\ten{X} \not= 0
\;|\;
m_\text{spike}(\ten{X})
\cdot 
m_\text{rank} ( \ten{X}_{\ip{i}} ) 
\le
\frac{1}{c_0 L}
\sqrt{\frac{ \NM{ \vect{\Omega} }{1} }{ d_i \log d_i }}
\right\rbrace.
\label{eq:app17}
\end{align}
Recall that 
$m_\text{rank}(\ten{X})
= \frac{1}{M} \sum\nolimits_{i = 1}^M m_\text{rank} ( \ten{X}_{\ip{i}} ) $
as defined in \eqref{eq:mrank}.
Thus,
$\tilde{\mathcal{C}}
(n, c_0)$ is a subset of \eqref{eq:app17}.
As a result, 
\eqref{eq:rschold} holds.
\end{proof}

\subsection{Theorem~\ref{thm:stat}}

Here,
we first introduce some
auxiliary lemmas in Appendix~\ref{app:auxlem},
which will be used to prove Theorem~\ref{thm:stat}
in Appendix~\ref{app:thm:stat}.

\subsubsection{Auxiliary Lemmas} 
\label{app:auxlem}

\begin{lemma}[Lemma 4 in~\citep{loh2015regularized}]\label{lem:app1}
	For $\kappa$ in Assumption~\ref{ass:kappa},
	we have
	\begin{itemize}[leftmargin = 25px]
		\item[(i).]
		The function $\alpha \rightarrow \frac{\kappa(\alpha)}{\alpha}$ is nonincreasing on $\alpha > 0$;
		
		\item[(ii).] The derivative of $\kappa$ is upper bounded by $\kappa_0$;
		
		\item[(iii).]
		The function $\alpha \rightarrow \kappa(\alpha) + \frac{\alpha^2 c}{2}$ is convex
		only when $c \ge \kappa_0$;
		
		\item[(iv).]
		$\lambda |\alpha| \le \lambda \kappa( |\alpha| ) + \frac{\alpha^2 \kappa_0}{2}$.
	\end{itemize}
\end{lemma}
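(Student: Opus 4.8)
The plan is to reduce all four items to elementary facts about the scalar function $\kappa$ on $[0,\infty)$: items (i)--(iii) are already one-dimensional, and for (iv) it suffices to treat $\alpha\ge 0$ since both sides are even in $\alpha$. Throughout I will only use the two defining features in Assumption~\ref{ass:kappa}: $\kappa$ is concave and non-decreasing with $\kappa(0)=0$, and its right derivative satisfies $\kappa'_+(0)=\lim_{\alpha\to 0^+}\kappa'(\alpha)=\kappa_0$.

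For (i), note that $\kappa(\alpha)/\alpha$ is precisely the slope of the chord joining $(0,\kappa(0))=(0,0)$ to $(\alpha,\kappa(\alpha))$. For $0<s\le t$, writing $s=\tfrac{s}{t}\,t+(1-\tfrac{s}{t})\,0$ and using concavity gives $\kappa(s)\ge\tfrac{s}{t}\kappa(t)+(1-\tfrac{s}{t})\kappa(0)=\tfrac{s}{t}\kappa(t)$, hence $\kappa(s)/s\ge\kappa(t)/t$, which is the asserted monotonicity. For (ii), concavity of $\kappa$ is equivalent to $\kappa'_+$ being non-increasing on $(0,\infty)$, and for a concave function $\kappa'_+$ is right-continuous, so $\lim_{\alpha\to 0^+}\kappa'_+(\alpha)=\kappa'_+(0)=\kappa_0$; therefore $\kappa'(\alpha)\le\kappa_0$ wherever $\kappa$ is differentiable, and $\kappa'_+(\alpha)\le\kappa_0$ in general.

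For (iv), I would first invoke (iii) with $c=\kappa_0$ to get that $\psi(\alpha):=\kappa(\alpha)+\tfrac{\kappa_0}{2}\alpha^2$ is convex on $[0,\infty)$, with $\psi(0)=0$ and right derivative $\psi'_+(0)=\kappa_0$. Since a convex function lies above its supporting line at the origin, $\psi(\alpha)\ge\kappa_0\alpha$, i.e. $\kappa(\alpha)+\tfrac{\kappa_0}{2}\alpha^2\ge\kappa_0\alpha$; scaling by $\lambda$ and using that $\kappa(\alpha)\le\kappa_0\alpha$ (the tangent-line upper bound for the concave $\kappa$, from (ii)) then yields the two-sided inequality in (iv).

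The real work is in (iii). Away from the origin $\kappa$ is differentiable, so convexity of $\alpha\mapsto\kappa(\alpha)+\tfrac{c}{2}\alpha^2$ is equivalent to monotonicity of $\alpha\mapsto\kappa'_+(\alpha)+c\alpha$, i.e. to $\kappa'_+(\alpha_1)-\kappa'_+(\alpha_2)\le c\,(\alpha_2-\alpha_1)$ for all $0\le\alpha_1<\alpha_2$; thus the claim amounts to showing that the rate at which the (non-increasing) derivative of $\kappa$ can decrease is controlled by $\kappa_0$. This is where the precise normalization in Assumption~\ref{ass:kappa} --- the first-order behaviour $\kappa(\alpha)=\kappa_0\alpha+o(\alpha)$ at $0$, together with the non-decrease and concavity of $\kappa$ --- has to be used; in borderline cases one can fall back on the explicit closed forms in Table~\ref{tab:regdef}, where $-\kappa''\le\kappa_0$ is checked directly on each piece for SCAD, MCP and LSP. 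I expect this linking of a global curvature quantity to the single boundary slope $\kappa_0$ to be the main obstacle; once (iii) is available, (i), (ii) and (iv) are routine.
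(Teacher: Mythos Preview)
The paper does not supply its own proof of this lemma; it is quoted verbatim as Lemma~4 of \citet{loh2015regularized} and used as a black box. So there is nothing in the paper to compare your argument against. Your treatments of (i) and (ii) are correct and standard, and your supporting-line argument for (iv) from (iii) is the right idea.

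Your hesitation about (iii) is well founded, and in fact it is not a gap in your reasoning but a gap in the \emph{statement} as transcribed here. In \citet{loh2015regularized} the weak-convexity parameter --- the smallest $c$ making $\kappa(\alpha)+\tfrac{c}{2}\alpha^2$ convex, i.e.\ $\sup(-\kappa'')$ --- is a \emph{separate} assumption, with its own constant $\mu$, not a consequence of the slope condition $\kappa'(0^+)=\kappa_0$. Assumption~\ref{ass:kappa} alone does not force $-\kappa''\le\kappa_0$: for the LSP entry in Table~\ref{tab:regdef}, $\kappa(\alpha)=\log(1+\alpha/\theta)$ has $\kappa_0=1/\theta$ but $-\kappa''(0)=1/\theta^2$, so (iii) fails whenever $\theta<1$. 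The paper has silently identified $\mu$ with $\kappa_0$; the downstream uses (Lemma~\ref{lem:app3}, Theorem~\ref{thm:stat}) really need the weak-convexity constant, not $\kappa'(0^+)$. So the honest fix is to add $-\kappa''\le\kappa_0$ (equivalently, $\kappa+\tfrac{\kappa_0}{2}(\cdot)^2$ convex) to Assumption~\ref{ass:kappa}, after which (iii) is immediate and your argument for (iv) goes through --- though note that (iv) as written also has mismatched $\lambda$ factors; what actually follows from your supporting-line step is $\kappa_0|\alpha|\le\kappa(|\alpha|)+\tfrac{\kappa_0}{2}\alpha^2$, which is the form used later in the paper.
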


\begin{lemma}\label{lem:app4}
	$
	\left\langle \ten{X}, \ten{Y} \right\rangle
	\le 
	\min\nolimits_{i = 1, ..., K}
	\NM{ \ten{X}_{\ip{i}} }{\infty}
	\NM{ \ten{Y}_{\ip{i}} }{*}
	$.
\end{lemma}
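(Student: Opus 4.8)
The plan is to reduce the claimed tensor inequality to the classical duality between the spectral norm $\NM{\cdot}{\infty}$ and the nuclear norm $\NM{\cdot}{*}$ for matrices, applied separately to each mode-$i$ unfolding, and then to take the minimum over $i$.

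\textbf{Step 1: the inner product is invariant under unfolding.} First I would observe that, for each mode $i$, the index map $(i_1,\dots,i_M)\mapsto (i_i,j)$ with $j = 1 + \sum_{l=1,l\neq i}^{M}(i_l-1)\prod_{m=1,m\neq i}^{l-1}I^m$ (as in the Notation) is a bijection between the index set of $\ten{X}$ and that of the matrix $\ten{X}_{\ip{i}}$, and under it $\ten{X}_{i_1\dots i_M} = [\ten{X}_{\ip{i}}]_{i_i j}$ and likewise for $\ten{Y}$. Since $\langle\ten{X},\ten{Y}\rangle$ is just the sum of products of corresponding entries, this bijection gives
\[
\langle \ten{X},\ten{Y}\rangle
= \sum_{i_1,\dots,i_M}\ten{X}_{i_1\dots i_M}\,\ten{Y}_{i_1\dots i_M}
= \Tr{\ten{X}_{\ip{i}}^{\top}\ten{Y}_{\ip{i}}}
= \langle \ten{X}_{\ip{i}},\ten{Y}_{\ip{i}}\rangle
\]
for every $i\in\{1,\dots,K\}$; note that the left-hand side does not depend on $i$.

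\textbf{Step 2: the matrix Hölder inequality.} Next I would apply the bound $\langle \vect{A},\vect{B}\rangle \le \NM{\vect{A}}{\infty}\,\NM{\vect{B}}{*}$ with $\vect{A}=\ten{X}_{\ip{i}}$ and $\vect{B}=\ten{Y}_{\ip{i}}$. This follows in one line: writing the SVD $\vect{B}=\sum_r \sigma_r\,\bm{u}_r\bm{v}_r^{\top}$ with $\sigma_r\ge 0$, one has $\langle\vect{A},\vect{B}\rangle=\sum_r \sigma_r\,\bm{u}_r^{\top}\vect{A}\bm{v}_r \le \sum_r \sigma_r\,\NM{\vect{A}}{\infty} = \NM{\vect{A}}{\infty}\,\NM{\vect{B}}{*}$, using $|\bm{u}_r^{\top}\vect{A}\bm{v}_r|\le \NM{\vect{A}}{\infty}$ for unit vectors $\bm{u}_r,\bm{v}_r$ (equivalently, von Neumann's trace inequality). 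Combined with Step 1 this yields $\langle\ten{X},\ten{Y}\rangle \le \NM{\ten{X}_{\ip{i}}}{\infty}\,\NM{\ten{Y}_{\ip{i}}}{*}$ for each $i=1,\dots,K$.

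\textbf{Step 3: take the minimum.} Since the left-hand side is independent of $i$ while the right-hand side holds for every $i\in\{1,\dots,K\}$, taking the minimum over $i$ gives the stated bound. The only mildly delicate point is the index bookkeeping in Step 1 that identifies $\langle\ten{X},\ten{Y}\rangle$ with $\langle\ten{X}_{\ip{i}},\ten{Y}_{\ip{i}}\rangle$; everything else is the standard spectral/nuclear-norm duality for matrices, so I do not anticipate a genuine obstacle here.
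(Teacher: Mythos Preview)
Your proposal is correct and mirrors the paper's proof almost exactly: the paper first notes $\langle\ten{X},\ten{Y}\rangle=\langle\ten{X}_{\ip{i}},\ten{Y}_{\ip{i}}\rangle$ for each $i$, then invokes the duality of $\NM{\cdot}{\infty}$ and $\NM{\cdot}{*}$ to get the per-mode bound, and finally takes the minimum over $i$. The only difference is that you spell out the duality step via an SVD/von Neumann argument rather than simply citing it.
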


\begin{proof}
	First,
	we have
	$\left\langle \ten{X}, \ten{Y} \right\rangle
	= \left\langle \ten{X}_{\ip{i}}, \ten{Y}_{\ip{i}} \right\rangle$
	for all
	$i \in \{ 1, \dots, M \}$.
	Then,
	since $\NM{\cdot}{\infty}$ and $\NM{\cdot}{*}$ are dual norm with each other,
	$\left\langle \ten{X}_{\ip{i}}, \ten{Y}_{\ip{i}} \right\rangle
	\le \NM{ \ten{X}_{\ip{i}} }{\infty}
	\NM{ \ten{Y}_{\ip{i}} }{*}$. 
	Thus, we have 
	$
	\left\langle \ten{X}, \ten{Y} \right\rangle
	\le 
	\min\nolimits_{i = 1, ..., K}
	\NM{ \ten{X}_{\ip{i}} }{\infty}
	\NM{ \ten{Y}_{\ip{i}} }{*}
	$.
\end{proof}

\begin{lemma}\label{lem:app6}
	For all $i \! \in \! \{1, ... m\}$, we have
	$\NM{\ten{X}}{F} \! \le \! \NM{\ten{X}_{\ip{i}}}{*}$
	and
	$\NM{\ten{X}_{\ip{i}}}{*} \! \le \! \sqrt{\min(I^i, \frac{I^{\pi}}{I^i})} \NM{ \ten{X} }{F}$.
\end{lemma}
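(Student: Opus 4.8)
The plan is to reduce both bounds to the elementary equivalence between the $\ell_2$ and $\ell_1$ norms of the vector of singular values, after recording that unfolding is just a rearrangement of entries.

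First I would note that folding/unfolding only permutes the entries of the tensor, so $\NM{\ten{X}}{F} = \NM{\ten{X}_{\ip{i}}}{F}$ for every mode $i$. Let $\sigma_1, \dots, \sigma_r \ge 0$ be the nonzero singular values of the matrix $\ten{X}_{\ip{i}} \in \R^{I^i \times (I^{\pi}/I^i)}$, where $r = \text{rank}(\ten{X}_{\ip{i}})$, so that $\NM{\ten{X}_{\ip{i}}}{F} = \big(\sum_{j=1}^r \sigma_j^2\big)^{1/2}$ and $\NM{\ten{X}_{\ip{i}}}{*} = \sum_{j=1}^r \sigma_j$. For the first inequality, since all $\sigma_j \ge 0$ we have $\sum_{j=1}^r \sigma_j^2 \le \big(\sum_{j=1}^r \sigma_j\big)^2$, and taking square roots gives $\NM{\ten{X}}{F} = \NM{\ten{X}_{\ip{i}}}{F} \le \NM{\ten{X}_{\ip{i}}}{*}$.

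For the second inequality, I would apply the Cauchy--Schwarz inequality to the vectors $(\sigma_1,\dots,\sigma_r)$ and $(1,\dots,1) \in \R^r$, obtaining $\sum_{j=1}^r \sigma_j \le \sqrt{r}\,\big(\sum_{j=1}^r \sigma_j^2\big)^{1/2} = \sqrt{r}\,\NM{\ten{X}_{\ip{i}}}{F}$. Since a matrix of size $I^i \times (I^{\pi}/I^i)$ has rank at most $\min(I^i, I^{\pi}/I^i)$, we get $r \le \min(I^i, I^{\pi}/I^i)$, hence $\NM{\ten{X}_{\ip{i}}}{*} \le \sqrt{\min(I^i, I^{\pi}/I^i)}\,\NM{\ten{X}_{\ip{i}}}{F} = \sqrt{\min(I^i, I^{\pi}/I^i)}\,\NM{\ten{X}}{F}$.

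There is essentially no hard step: the only points requiring a moment's care are that the Frobenius norm is invariant under the reshaping implicit in $\ten{X} \mapsto \ten{X}_{\ip{i}}$, and that the rank of the mode-$i$ unfolding is bounded by the smaller of its two matrix dimensions, namely $\min(I^i, I^{\pi}/I^i)$. Everything else is the standard chain $\NM{\bm{v}}{2} \le \NM{\bm{v}}{1} \le \sqrt{d}\,\NM{\bm{v}}{2}$ on $\R^d$ applied to the singular-value vector.
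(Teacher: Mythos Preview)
Your proof is correct and follows essentially the same approach as the paper: both use the invariance of the Frobenius norm under unfolding and the standard matrix inequalities $\NM{\bm{A}}{F} \le \NM{\bm{A}}{*} \le \sqrt{\min(p,q)}\,\NM{\bm{A}}{F}$ for $\bm{A} \in \R^{p\times q}$. The only difference is cosmetic: the paper quotes these matrix facts directly, while you unpack them via the singular-value vector and Cauchy--Schwarz.
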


\begin{proof}
	Note that
	$\NM{\ten{X}}{F} = \NM{ \ten{X}_{\ip{i}} }{F}$
	and $\NM{ \ten{X}_{\ip{i}} }{F} \le \NM{ \ten{X}_{\ip{i}} }{*}$,
	thus $\NM{\ten{X}}{F} \le \NM{\ten{X}_{\ip{i}}}{*}$.
	Then,
	since $\NM{\vect{X}}{*} \le \sqrt{\min(p,q)} \NM{\vect{X}}{F}$ for 
	a matrix $\vect{X}$ of size $p \times q$,
	we have 
	$\NM{\ten{X}_{\ip{i}}}{*} 
	\le \sqrt{\min(I^i, I^{\pi} / I^i)} \NM{ \ten{X}_{\ip{i}} }{F}$
	$= \sqrt{\min(I^i, I^{\pi} / I^i)} \NM{ \ten{X} }{F}$.
\end{proof}

\begin{lemma}\label{lem:app5}
	Define $h_i(\ten{X}) = \phi( \ten{X}_{\ip{i}} )$.
	Let $\varPhi_k(\bm{A})$ produce the best rank $k$ approximation to matrix $\bm{A}$
	and $\varPsi_k(\bm{A}) = \bm{A} - \varPhi_k(\bm{A})$.
	Suppose $\varepsilon_i > 0$ for $i \in \{ 1, ..., M \}$
	are constants such that
	$\varepsilon_i  h_i(\varPhi_{k_i}( \ten{A}_{\ip{i}} ))
	- h_i( \varPsi_{k_i}( \ten{A}_{\ip{i}} ) ) \ge 0$.
	Then,
	\begin{align}
	\varepsilon_i  h_i(\varPhi_{k_i}( \ten{A}_{\ip{i}} )) 
	-  h_i( \varPsi_{k_i}( \ten{A}_{\ip{i}} ) )
	\le \kappa_0
	( 
	\varepsilon_i \NM{\varPhi_{k_i}( \ten{A}_{\ip{i}} )}{*}
	- \NM{\varPsi_{k_i}( \ten{A}_{\ip{i}} )}{*}
	).
	\label{eq:vareps1}
	\end{align}
	Moreover,
	if $\ten{X}^*_{\ip{i}}$ is of rank $k_i$,
	for any tensor $\ten{X}$ satisfying
	$\varepsilon_i h_i( \ten{X}^*_{\ip{i}} )
	- h_i( \ten{X}_{\ip{i}} ) \ge 0$ and $\varepsilon_i > 1$,
	we have
	\begin{align}
	\varepsilon_i h_i( \ten{X}^*_{\ip{i}} ) 
	\! - \! h_i( \ten{X}_{\ip{i}} )
	\le  \kappa_0
	( 
	\varepsilon_i \NM{\varPhi_{k_i}( \ten{V}_{\ip{i}} )}{*}
	\! - \! \NM{\varPsi_{k_i}( \ten{V}_{\ip{i}} )}{*}
	),
	\label{eq:vareps2}
	\end{align}
	where $\ten{V} = \ten{X}^* - \ten{X}$.
\end{lemma}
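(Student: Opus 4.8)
The final statement to prove is Lemma~\ref{lem:app5}. Let me plan a proof.

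\textbf{Proof plan for Lemma~\ref{lem:app5}.}

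The plan is to first establish \eqref{eq:vareps1}, then derive \eqref{eq:vareps2} from it. For \eqref{eq:vareps1}, I would work entirely at the level of the unfolded matrix $\bm{A}_{\ip{i}}$ and its singular values. Write $\bm{A}_{\ip{i}} = \sum_j \sigma_j \bm{u}_j \bm{v}_j^\top$ with $\sigma_1 \ge \sigma_2 \ge \cdots \ge 0$. Then $\varPhi_{k_i}(\bm{A}_{\ip{i}})$ has singular values $\sigma_1, \dots, \sigma_{k_i}$ and $\varPsi_{k_i}(\bm{A}_{\ip{i}})$ has singular values $\sigma_{k_i + 1}, \sigma_{k_i + 2}, \dots$, so $h_i(\varPhi_{k_i}(\bm{A}_{\ip{i}})) = \sum_{j \le k_i} \kappa(\sigma_j)$ and $h_i(\varPsi_{k_i}(\bm{A}_{\ip{i}})) = \sum_{j > k_i} \kappa(\sigma_j)$, while the nuclear norms are $\sum_{j \le k_i} \sigma_j$ and $\sum_{j > k_i} \sigma_j$ respectively. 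Using Lemma~\ref{lem:app1}(ii), $\kappa$ is $\kappa_0$-Lipschitz with $\kappa(0)=0$, so $\kappa(\sigma) \le \kappa_0 \sigma$ for all $\sigma \ge 0$; and from Lemma~\ref{lem:app1}(i), the map $\alpha \mapsto \kappa(\alpha)/\alpha$ is nonincreasing, which can be used to relate the small singular values. The key inequality I want is $\varepsilon_i \sum_{j \le k_i} \kappa(\sigma_j) - \sum_{j > k_i} \kappa(\sigma_j) \le \kappa_0(\varepsilon_i \sum_{j \le k_i}\sigma_j - \sum_{j > k_i} \sigma_j)$. Rearranging, this is equivalent to $\varepsilon_i \sum_{j \le k_i}(\kappa_0 \sigma_j - \kappa(\sigma_j)) \ge \sum_{j > k_i}(\kappa_0 \sigma_j - \kappa(\sigma_j))$ — i.e., $\varepsilon_i \cdot (\text{gap on the top block}) \ge (\text{gap on the tail block})$, where each term $\kappa_0 \sigma - \kappa(\sigma) \ge 0$ by concavity. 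The trick, following Lemma 4 of \citep{loh2015regularized}, is that $\kappa_0\sigma - \kappa(\sigma)$ is a nondecreasing function of $\sigma$ (since its derivative is $\kappa_0 - \kappa'(\sigma) \ge 0$), so each tail term is at most $\kappa_0\sigma_{k_i+1}-\kappa(\sigma_{k_i+1})$ which is at most each top-block term; combining this with the hypothesis $\varepsilon_i h_i(\varPhi_{k_i}) - h_i(\varPsi_{k_i}) \ge 0$ closes the estimate. I need to be careful that the number of tail singular values could exceed $k_i$, so I would compare sums in a way that uses the hypothesis to absorb the count mismatch.

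For \eqref{eq:vareps2}, I would use the triangle-inequality/decomposition identities that standard low-rank statistical analyses use: writing $\ten{X}^* = \varPhi_{k_i}(\ten{X}^*_{\ip{i}})$ (since $\ten{X}^*_{\ip{i}}$ has rank $k_i$) and $\ten{X}_{\ip{i}} = \ten{X}^*_{\ip{i}} - \ten{V}_{\ip{i}}$, one has $\NM{\ten{X}_{\ip{i}}}{*} \ge \NM{\varPhi_{k_i}(\ten{X}^*_{\ip{i}} - \varPsi_{k_i}(\ten{V}_{\ip{i}}))}{*} - \NM{\varPhi_{k_i}(\ten{V}_{\ip{i}})}{*}$ and then decompose further using orthogonality of row/column spaces of $\varPhi_{k_i}(\ten{X}^*_{\ip{i}})$ and $\varPsi_{k_i}(\ten{V}_{\ip{i}})$, so that $\NM{\ten{X}_{\ip{i}}}{*} \ge \NM{\ten{X}^*_{\ip{i}}}{*} + \NM{\varPsi_{k_i}(\ten{V}_{\ip{i}})}{*} - \NM{\varPhi_{k_i}(\ten{V}_{\ip{i}})}{*}$. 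Applying $\kappa$-based monotonicity (Lemma~\ref{lem:app1}) to translate these nuclear-norm relations into $h_i$-relations, and then invoking \eqref{eq:vareps1} with $\ten{A}_{\ip{i}} = \ten{V}_{\ip{i}}$, gives \eqref{eq:vareps2}.

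\textbf{Main obstacle.} The hardest part will be the bookkeeping in \eqref{eq:vareps1}: carefully controlling the mismatch between the number of retained singular values ($k_i$) and the possibly larger number of tail singular values, while simultaneously using both the monotonicity of $\alpha \mapsto \kappa_0\alpha - \kappa(\alpha)$ and the sign hypothesis $\varepsilon_i h_i(\varPhi_{k_i}) - h_i(\varPsi_{k_i}) \ge 0$ to get the clean inequality with the factor $\kappa_0$ out front. The passage to \eqref{eq:vareps2} via the subadditivity/orthogonal-decomposition of the nuclear norm is more routine, closely paralleling \citep{loh2015regularized}, once one is comfortable that the coupled singular-value structure of the overlapped regularizer does not interfere mode-by-mode (it does not, because each $h_i$ depends only on $\ten{X}_{\ip{i}}$).
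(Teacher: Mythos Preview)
Your plan for \eqref{eq:vareps1} correctly identifies the count-mismatch obstacle but does not resolve it: termwise monotonicity of $g(\sigma)=\kappa_0\sigma-\kappa(\sigma)$ does not by itself control $\sum_{j>k_i}g(\sigma_j)$ by $\varepsilon_i\sum_{j\le k_i}g(\sigma_j)$ when the tail has more terms, and the hypothesis is a bound on $\kappa$-sums, not $g$-sums. The paper's device is to work multiplicatively via the nondecreasing ratio $h(\alpha)=\alpha/\kappa(\alpha)$ (this is Lemma~\ref{lem:app1}(i), which you cite but then abandon). Writing $\sigma_j=\kappa(\sigma_j)h(\sigma_j)$ and bounding every tail ratio above, and every head ratio below, by the \emph{single} boundary value $h(\sigma_{k_i+1})$ yields
\[
\varepsilon_i\,h_i(\varPhi_{k_i})-h_i(\varPsi_{k_i})
\;\le\;\frac{\varepsilon_i\NM{\varPhi_{k_i}}{*}-\NM{\varPsi_{k_i}}{*}}{h(\sigma_{k_i+1})}
\;\le\;\kappa_0\bigl(\varepsilon_i\NM{\varPhi_{k_i}}{*}-\NM{\varPsi_{k_i}}{*}\bigr),
\]
the last step using $1/h(\alpha)=\kappa(\alpha)/\alpha\le\kappa_0$ together with the sign hypothesis (which forces the middle quantity to be nonnegative). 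The common multiplicative factor at the boundary index is precisely what dissolves the count mismatch; your additive route via $g$ alone does not.

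Your plan for \eqref{eq:vareps2} has a concrete error: you invoke ``orthogonality of row/column spaces of $\varPhi_{k_i}(\ten{X}^*_{\ip{i}})$ and $\varPsi_{k_i}(\ten{V}_{\ip{i}})$'', but $\varPsi_{k_i}(\ten{V}_{\ip{i}})$ is the SVD tail of $\ten{V}_{\ip{i}}$, whose singular subspaces have no reason to be orthogonal to those of $\ten{X}^*_{\ip{i}}$; hence the identity $\NM{\ten{X}^*_{\ip{i}}-\varPsi_{k_i}(\ten{V}_{\ip{i}})}{*}=\NM{\ten{X}^*_{\ip{i}}}{*}+\NM{\varPsi_{k_i}(\ten{V}_{\ip{i}})}{*}$ you rely on is not available. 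The Negahban--Wainwright decomposition you have in mind splits $\ten{V}_{\ip{i}}$ relative to the row/column spaces of $\ten{X}^*_{\ip{i}}$ (producing an aligned piece of rank at most $2k_i$), which is a different object from $\varPhi_{k_i}(\ten{V}_{\ip{i}})$. The paper does not go through nuclear-norm orthogonality at all; it argues instead via subadditivity of $h_i$ (Lemma~5 of \citep{loh2015regularized}) and the triangle inequality to pass from $\varepsilon_i h_i(\ten{X}^*_{\ip{i}})-h_i(\ten{X}_{\ip{i}})$ directly to $\varepsilon_i h_i(\varPhi_{k_i}(\ten{V}_{\ip{i}}))-h_i(\varPsi_{k_i}(\ten{V}_{\ip{i}}))$, and only then applies \eqref{eq:vareps1} with $\ten{A}=\ten{V}$.
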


\begin{proof}
	\textbf{We first prove \eqref{eq:vareps1}}.
	Let $h(\alpha) = \frac{\alpha}{\kappa(\alpha)}$ on $\alpha > 0$.
	From Lemma~\ref{lem:app1}, 
	we know $h(\alpha)$ is a non-decreasing function.
	Therefore,
	\begin{align}
	\NM{\varPsi_{k_i}( \ten{A}_{\ip{i}} )}{*}
	&
	= \sum\nolimits_{j = k_i + 1}
	\kappa\left( \sigma_j \left( \ten{A}_{\ip{i}} \right)  \right) 
	h \left( \sigma_j \left( \ten{A}_{\ip{i}} \right)  \right),
	\notag
	\\
	&
	\le 
	h \left( \sigma_1 \left( \ten{A}_{\ip{i}} \right)  \right) 
	\sum\nolimits_{j = k_i + 1}
	\kappa\left( \sigma_j \left( \ten{A}_{\ip{i}} \right)  \right)
	= 
	h \left( \sigma_1 \left( \ten{A}_{\ip{i}} \right)  \right) 
	\cdot h_i \left(  \varPsi_{k_i}( \ten{A}_{\ip{i}} )  \right) .
	\label{eq:app8}
	\end{align}
	Again,
	using non-decreasing property of $h$,
	we have
	\begin{align}
	h_i \left(  \varPhi_{k_i}( \ten{A}_{\ip{i}} )  \right)
	h \left( \sigma_{k_i + 1} \left( \ten{A}_{\ip{i}} \right)  \right)
	& =
	h \left( \sigma_{k_i + 1} \left( \ten{A}_{\ip{i}} \right)  \right) 
	\sum\nolimits_{j = 1}^{k_i} \kappa\left( \sigma_j \left( \ten{A}_{\ip{i}} \right)  \right),
	\notag
	\\
	&
	\le 
	\sum\nolimits_{j = 1}^{k_i} \kappa\left( \sigma_j \left( \ten{A}_{\ip{i}} \right)  \right)  
	h \left( \sigma_{j} \left( \ten{A}_{\ip{i}} \right)  \right)
	= \NM{\varPhi_{k_i}( \ten{A}_{\ip{i}} )}{*}.
	\label{eq:app9}
	\end{align}
	Note that $h(\alpha) \ge 1/\kappa_0$ from Lemma~\ref{lem:app1},
	and combining \eqref{eq:app8} and \eqref{eq:app9},
	we have
	\begin{align*}
	0 \le \varepsilon_i \cdot h_i(\varPhi_{k_i}( \ten{A}_{\ip{i}} ))
	- h_i( \varPsi_{k_i}( \ten{A}_{\ip{i}} ) ) 
	& \le
	\big( 
	\varepsilon_i \NM{\varPhi_{k_i}( \ten{A}_{\ip{i}} )}{*}
	- \NM{\varPsi_{k_i}( \ten{A}_{\ip{i}} )}{*}
	\big) 
	/ h \left( \sigma_1 ( \ten{A}_{\ip{i}} )  \right)
	\\
	& \le
	\kappa_0
	\big( 
	\varepsilon_i \NM{\varPhi_{k_i}( \ten{A}_{\ip{i}} )}{*}
	- \NM{\varPsi_{k_i}( \ten{A}_{\ip{i}} )}{*}
	\big).
	\end{align*}
	Thus, \eqref{eq:vareps1} is obtained.
	\textbf{Next, we prove \eqref{eq:vareps2}}.
	The triangle inequality and subadditivity of $h_i$ 
	(see Lemma~5 in~\citep{loh2015regularized}) imply that
	\begin{align*}
	0 
	\le \varepsilon_i \cdot h_i( \ten{X}^*_{\ip{i}} )
	- h_i( \ten{X}_{\ip{i}} )
	& = \varepsilon_i \cdot h_i( \varPhi_{m_i} (\ten{X}^*_{\ip{i}}) )
	- h_i( \varPhi_{m_i}(\ten{X}_{\ip{i}}) )
	- h_i( \varPsi_{m_i}(\ten{X}_{\ip{i}}) ),
	\\
	& \le
	\varepsilon_i \cdot 
	h_i\left( \varPhi_{m_i} \left( \ten{V}_{\ip{i}} \right) \right) - h_i\left( \varPsi_{m_i}\left( \ten{V}_{\ip{i}} \right) \right),
	\\
	& \le
	\kappa_0
	\big( 
	\varepsilon_i \NM{\varPhi_{k_i}( \ten{V}_{\ip{i}} )}{*}
	- \NM{\varPsi_{k_i}( \ten{V}_{\ip{i}} )}{*}
	\big). 
	\end{align*}
	Thus, \eqref{eq:vareps2} is obtained.
\end{proof}

\begin{lemma}\label{lem:app2}
	$\NM{\partial \phi ( \vect{X} )}{\infty} \le \kappa_0$
	where $\phi$ is defined in \eqref{eq:lrphi}.
\end{lemma}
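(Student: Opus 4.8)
The plan is to exploit that $\phi$ in \eqref{eq:lrphi} is a \emph{spectral function}. Writing $g(\bm{x}) = \sum_{i=1}^n \kappa(|x_i|)$ for $\bm{x} \in \R^n$, which is an absolutely symmetric function, we have $\phi(\vect{X}) = g\big(\sigma_1(\vect{X}),\dots,\sigma_n(\vect{X})\big)$. First I would invoke the nonsmooth chain rule for singular-value functions (Lewis--Sendov): every $\bm{G} \in \partial\phi(\vect{X})$ admits the representation $\bm{G} = \bm{U}\Diag{\bm{d}}\bm{V}^\top$, where $\bm{U}\bm{\Sigma}\bm{V}^\top$ is an SVD of $\vect{X}$ compatible with $\bm{d}$, and $\bm{d}$ belongs to the (Fr\'echet) subdifferential of $g$ at the singular-value vector of $\vect{X}$. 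Since $\NM{\bm{U}\Diag{\bm{d}}\bm{V}^\top}{\infty} = \max_i |d_i|$, it then suffices to show $|d_i| \le \kappa_0$ for every coordinate of every such $\bm{d}$.

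The next step is the componentwise bound. Because $g$ is separable, $\bm{d}$ in the subdifferential of $g$ forces $d_i$ into the subdifferential of $t \mapsto \kappa(|t|)$ at $\sigma_i(\vect{X})$. If $\sigma_i(\vect{X}) > 0$, this subdifferential coincides with $\partial\kappa(\sigma_i(\vect{X}))$; since $\kappa$ is concave and non-decreasing on $[0,\infty)$ with $\lim_{\alpha \to 0^+}\kappa'(\alpha) = \kappa_0$ (Assumption~\ref{ass:kappa}), any element there lies in $[0,\kappa_0]$ — nonnegativity from monotonicity, and the upper bound from Lemma~\ref{lem:app1}(ii), which states that the derivative of $\kappa$ is upper bounded by $\kappa_0$. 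If $\sigma_i(\vect{X}) = 0$, then because $\kappa(0) = 0$ and $\kappa'(0^+) = \kappa_0$, the function $t \mapsto \kappa(|t|)$ has subdifferential $[-\kappa_0,\kappa_0]$ at the origin. In either case $|d_i| \le \kappa_0$, so $\NM{\bm{d}}{\infty} \le \kappa_0$.

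Combining the two steps gives $\NM{\bm{G}}{\infty} = \NM{\bm{U}\Diag{\bm{d}}\bm{V}^\top}{\infty} = \max_i |d_i| \le \kappa_0$ for every $\bm{G} \in \partial\phi(\vect{X})$, which is exactly the claim $\NM{\partial\phi(\vect{X})}{\infty} \le \kappa_0$. I expect the main obstacle to be the precise invocation of the subdifferential formula for singular-value functions in the nonconvex, nonsmooth regime — i.e., justifying that the limiting/Fr\'echet subdifferential of $\phi$ is still carried by a compatible SVD with diagonal part in $\partial g$; once that is in place, the rest is the elementary observation that concavity together with the normalization in Assumption~\ref{ass:kappa} confines the (sub)derivative of $\kappa$ to $[0,\kappa_0]$. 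One should also record the harmless degenerate case in which $\partial\kappa(\sigma_i(\vect{X}))$ is empty at a positive non-differentiability point of the concave $\kappa$, where the coordinate bound holds vacuously.
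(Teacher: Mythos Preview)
Your proposal is correct and follows essentially the same route as the paper: both invoke the Lewis--Sendov spectral subdifferential to write any $\bm{G}\in\partial\phi(\vect{X})$ as $\bm{U}\Diag{\bm{d}}\bm{V}^\top$ with $\bm{d}$ in the subdifferential of the separable scalar function, then bound $|d_i|\le\kappa_0$ via Lemma~\ref{lem:app1}(ii). The only difference is that the paper's proof is terser---it writes $\partial\phi(\vect{X})=\bm{U}\Diag{\kappa'(\sigma_1),\dots,\kappa'(\sigma_m)}\bm{V}^\top$ directly, implicitly assuming $\kappa$ differentiable (consistent with the hypothesis of Theorem~\ref{thm:stat})---whereas you handle the zero-singular-value and nondifferentiable cases explicitly.
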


\begin{proof}
	Let $\vect{X}$ be of size $m \times n$ with $m \le n$,
	and SVD of $\vect{X}$ be $\bm{U} \bm{\Sigma} \bm{V}^{\top}$
	where $\bm{\Sigma} = \Diag{\sigma_1, \dots, \sigma_m}$.
	From Theorem~3.7 in~\citep{lewis2005nonsmooth},
	we have
	\begin{align*}
	\partial \phi ( \vect{X} )
	= \bm{U} \Diag{\kappa'(\sigma_1), ..., \kappa'(\sigma_m)} \bm{V}^{\top}.
	\end{align*}
	From Lemma~\ref{lem:app1},
	we have 
	$	\kappa'(\sigma_1) 
	\le \kappa'(\sigma_2)
	\le ... 
	\le \kappa_0$.
	Since $\NM{\vect{X}}{\infty}$ returns
	the maximum singular value of $\vect{X}$,
	we have $\NM{\partial \phi ( \vect{X} )}{\infty} \le \kappa'(\sigma_m) \le \kappa_0$.
\end{proof}

\begin{lemma}\label{lem:app3}
	$\phi(\vect{X}) + \frac{\kappa_0}{2} \NM{ \vect{X} }{F}^2$ is convex.
\end{lemma}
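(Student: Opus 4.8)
The plan is to recognize $\phi(\vect{X}) + \frac{\kappa_0}{2}\NM{\vect{X}}{F}^2$ as a unitarily invariant (spectral) function and reduce its convexity to an elementary scalar statement. Since $\NM{\vect{X}}{F}^2 = \sum_{i=1}^n \sigma_i(\vect{X})^2$ and the $\sigma_i(\vect{X})$ are nonnegative, we can write
\begin{align*}
\phi(\vect{X}) + \frac{\kappa_0}{2}\NM{\vect{X}}{F}^2
= \sum\nolimits_{i=1}^n \Big( \kappa(\sigma_i(\vect{X})) + \frac{\kappa_0}{2}\sigma_i(\vect{X})^2 \Big)
= f(\sigma(\vect{X})),
\end{align*}
where $\sigma(\vect{X}) = (\sigma_1(\vect{X}),\dots,\sigma_n(\vect{X}))$ and $f(\bm{x}) = \sum_{i=1}^n g(x_i)$ with $g(\alpha) := \kappa(|\alpha|) + \frac{\kappa_0}{2}\alpha^2$. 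The function $f$ is absolutely symmetric (invariant under permutations and sign changes of its arguments), so by the classical characterization of unitarily invariant matrix functions~\citep{lewis2005nonsmooth}, $f\circ\sigma$ is convex on $\R^{m\times n}$ whenever $f$ is convex on $\R^n$. Since $f$ is separable, it then suffices to prove that $g$ is convex on $\R$.

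To check convexity of $g$, I would argue on the two half-lines and then handle the kink at the origin. For $\alpha \ge 0$ we have $g(\alpha) = \kappa(\alpha) + \frac{\kappa_0}{2}\alpha^2$, which is convex by Lemma~\ref{lem:app1}(iii) taken with $c = \kappa_0$; since $g$ is even it is also convex on $\alpha \le 0$. Moreover $g$ is nondecreasing on $[0,\infty)$, because $\kappa$ is nondecreasing by Assumption~\ref{ass:kappa} and $\frac{\kappa_0}{2}\alpha^2$ is nondecreasing there. Hence the one-sided derivatives at $0$ satisfy $g'_+(0) = \lim_{\alpha\to 0^+}\big(\kappa'(\alpha)+\kappa_0\alpha\big) = \kappa_0 > 0$ (using $\lim_{\alpha\to0^+}\kappa'(\alpha)=\kappa_0$ from Assumption~\ref{ass:kappa}), while evenness gives $g'_-(0) = -\kappa_0$. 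Since $g'_-(0) = -\kappa_0 \le \kappa_0 = g'_+(0)$, the one-sided slopes are ordered correctly at the junction, and a function that is convex on each of two abutting closed intervals, continuous at their common endpoint, and with left derivative there no larger than the right derivative, is convex on the union. Therefore $g$ is convex on all of $\R$, $f$ is convex on $\R^n$, and $\phi(\vect{X}) + \frac{\kappa_0}{2}\NM{\vect{X}}{F}^2 = f(\sigma(\vect{X}))$ is convex.

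The one non-elementary ingredient — and hence the step I expect to be the main obstacle — is the appeal to the spectral-function convexity theorem; everything after the reduction is routine. If one prefers to avoid citing that result, an alternative is to build $g$ on $[0,\infty)$ from its nonnegative nondecreasing derivative and combine this with Ky Fan's variational formula $\sum_{i=1}^k \sigma_i(\vect{X}) = \max\{\ip{\bm{A},\vect{X}} : \NM{\bm{A}}{\infty}\le 1,\ \text{rank}(\bm{A})\le k\}$ and standard composition rules for convex functions; but the spectral route above is cleaner and the cited characterization is standard in this literature (and already used in the paper for Lemma~\ref{lem:app2}).
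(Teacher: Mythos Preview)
Your proof is correct and follows essentially the same route as the paper: write $\phi(\vect{X}) + \tfrac{\kappa_0}{2}\NM{\vect{X}}{F}^2$ as a spectral function of the singular values, invoke Lemma~\ref{lem:app1}(iii) for scalar convexity, and then apply the Lewis--Sendov characterization (the paper cites Proposition~6.1 of \citep{lewis2005nonsmooth}). If anything, you are more careful than the paper in explicitly extending $g$ to all of $\R$ via $\kappa(|\alpha|)$ and verifying convexity at the origin, which is exactly what absolute symmetry of $f$ requires.
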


\begin{proof}
	Using the definition of $\phi$ in \eqref{eq:lrphi}
	and the fact $\NM{\vect{X}}{F}^2 = \sum_i \sigma_i(\vect{X})$,
	we have 
	\begin{align*}
	\gamma(\vect{X}) =
	\phi(\vect{X}) + \kappa_0 / 2 \NM{ \vect{X} }{F}^2
	= \sum\nolimits_{i} \psi ( \sigma_i(\vect{X}) ),
	\end{align*}
	where $\psi(\alpha) = \kappa(\alpha) + \kappa_0 \alpha^2 / 2$.
	Since $\psi(\alpha)$ is convex (Lemma~\ref{lem:app1}),
	$\gamma(\vect{X})$ is convex (using Proposition~6.1 in~\citep{lewis2005nonsmooth}).
\end{proof}

\subsubsection{Proof of Theorem~\ref{thm:stat}}
\label{app:thm:stat}

\begin{proof}
\noindent
\textbf{Part 1).}
Let $\tilde{\ten{V}} = \tilde{\ten{X}} - \ten{X}^*$,
we begin by proving $\| \tilde{\ten{V}} \|_F \le 1$.
If not, then the second condition in \eqref{eq:stat1} holds,
i.e.,
\begin{align}
\left\langle
\nabla f ( \tilde{\ten{X}} ) 
-
\nabla f ( \ten{X}^* ),
\tilde{\ten{V}}
\right\rangle 
\ge
\alpha_2
\| \tilde{\ten{V}} \|_F^2 
-
\tau_2 \sqrt{ \log I^{\pi} / \| \vect{\Omega} \|_1 } \sum\nolimits_{i = 1}^M \NM{\Delta_{\ip{i}}}{*}.
\label{eq:stat3}
\end{align}
Since $\tilde{\ten{X}}$ is a first-order critical point,
then
\begin{align}
\left\langle
\nabla f ( \tilde{\ten{X}} ) 
+  \partial r( \tilde{\ten{X}} ),
\ten{X} - \tilde{\ten{X}}
\right\rangle 
\ge 0,
\label{eq:stat2}
\\
\nabla f ( \tilde{\ten{X}} ) 
+  \partial r( \tilde{\ten{X}} ) \ni \bm{0}.
\label{eq:stat13}
\end{align}
Taking $\ten{X} = \ten{X}^*$, from \eqref{eq:stat2}, we have
\begin{align}
\left\langle
\nabla f ( \tilde{\ten{X}} ) 
+  \partial r(\tilde{\ten{X}}),
- \tilde{\ten{V}}
\right\rangle 
\ge 0.
\label{eq:stat4}
\end{align}
Combining \eqref{eq:stat3} and \eqref{eq:stat4},
we have
\begin{align}
\left\langle
- \partial r(\tilde{\ten{X}})
- \nabla f \left( \ten{X}^* \right),
\tilde{\ten{V}}
\right\rangle 
\ge
\alpha_2
\| \tilde{\ten{V}} \|_F^2 
-
\tau_2 \sqrt{ \log I^{\pi} / \| \vect{\Omega} \|_1 } \sum\nolimits_{i = 1}^M \NM{\Delta_{\ip{i}}}{*}.
\label{eq:stat5}
\end{align}
Let $\tilde{v}_i = \| \tilde{\ten{V}}_{\ip{i}} \|_*$ and
$\tilde{v} = \sum\nolimits_{i = 1}^M \tilde{v}_i$.
For the L.H.S of \eqref{eq:stat5},
\begin{align}
\left\langle
\partial r(\tilde{\ten{X}}) + \nabla f \left( \ten{X}^* \right),
\tilde{\ten{V}}
\right\rangle 
&
=
\left\langle
\nabla f \left( \ten{X}^* \right),
\tilde{\ten{V}}
\right\rangle 
+
\lambda \sum\nolimits_{i = 1}^M 
\left\langle
\partial \phi( \ten{X}_{\ip{i}} ),
\tilde{\ten{V}}_{\ip{i}}
\right\rangle 
\\
& 
\le \max_i
\left\| [\nabla f \left( \ten{X}^* \right)]_{\ip{i}} \right\|_{\infty} \tilde{v}_i
+
\lambda \sum\nolimits_{i = 1}^M 
\left\| \partial \phi( \ten{X}_{\ip{i}} ) \right\|_{\infty} \tilde{v}_i,
\label{eq:temp2}
\end{align}
Next, note that
the following inequalities hold. 
\begin{itemize}[leftmargin=*]
\item From the left part of \eqref{eq:lambda} in Theorem~\ref{thm:stat},
we have
$\max_i \left\| [\nabla f \left( \ten{X}^* \right)]_{\ip{i}} \right\|_{\infty} \le \frac{\lambda \kappa_0}{4}$.

\item From Lemma~\ref{lem:app2},
we have $\NM{\partial \phi ( \vect{X} )}{\infty} \le \kappa_0$.
\end{itemize}
Combining with \eqref{eq:temp2},
we have
\begin{align}
\left\langle
\partial r(\tilde{\ten{X}}) + \nabla f \left( \ten{X}^* \right),
\tilde{\ten{V}}
\right\rangle 
\le 
\frac{\lambda \kappa_0}{4} 
+ 3 \lambda \kappa_0
= \frac{13 \lambda \kappa_0}{4}.
\label{eq:stat6}
\end{align}
Combining \eqref{eq:stat5} and
\eqref{eq:stat6},
then rearranging terms,
we have
\begin{align*}
	\| \tilde{\ten{V}} \|_F
	\le 
	\frac{1}{\alpha_2}
	\left( 
	\tau_2 \sqrt{ \log I^{\pi} / \| \vect{\Omega} \|_1 }
	+ \lambda \kappa_0  
	\right) 
	\tilde{v}
	\le 
	\frac{1}{\alpha_2}
	\left( 
	\tau_2 \sqrt{ \log I^{\pi} / \| \vect{\Omega} \|_1 }
	+ \frac{13 \lambda \kappa_0}{4}
	\right) 
	R.
\end{align*}
Finally,
using assumptions on $\| \vect{\Omega} \|_1$ and $\lambda$,
we have $\| \tilde{\ten{V}} \|_F \le 1$,
which is in the contradiction with our assumption at the beginning of Part 1).
Thus,
$\| \tilde{\ten{V}} \|_F \le 1$ must hold.

\vspace{5px}
\noindent
\textbf{Part 2).}
Let $h_i(\ten{X}) \! = \! \phi( \ten{X}_{\ip{i}} )$.
Since the function $h_i(\ten{X}) \! + \! \mu / 2 \NM{\ten{X}}{F}^2$ is convex (Lemma~\ref{lem:app3}),
we have
\begin{align}
	\left\langle 
	\partial h_i(\tilde{\ten{X}}),
	\ten{X}^* - \tilde{\ten{X}}
	\right\rangle 
	\le
	\tilde{h}_i(\tilde{\ten{X}}).
	\label{eq:stat7}
\end{align} 
where
$
\tilde{h}_i(\tilde{\ten{X}}) = 
h_i(\ten{X}^*)
- h_i(\tilde{\ten{X}}) 
+ \frac{L}{2} \| \tilde{\ten{X}} - \ten{X}^* \|_F^2
$.
From the first condition in \eqref{eq:stat1},
we have
\begin{align}
	\langle
	\nabla f ( \tilde{\ten{V}} ) 
	- 
	\nabla f \left( \ten{X}^* \right),
	- \tilde{\ten{X}}
	\rangle   
	\ge
	\alpha_1 
	\| \tilde{\ten{V}} \|_F^2
	- 
	\tau_1 \frac{\log I^{\pi}}{ \| \vect{\Omega} \|_1 }  \tilde{v}^2.
	\label{eq:stat8}
\end{align}
Combining \eqref{eq:stat2} and \eqref{eq:stat8},
we have
\begin{align*}
\alpha_1
\| \tilde{\ten{V}} \|_F^2
 - 
\tau_1 \frac{\log I^{\pi}}{ \| \vect{\Omega} \|_1 } 
\tilde{v}^2
& \le 
\left\langle \partial r(\tilde{\ten{X}}), \tilde{\ten{V}} \right\rangle 
 -  \left\langle
\nabla f(\ten{X}^*),
\tilde{\ten{V}}
\right\rangle,
\\
& =  
\lambda 
\sum\nolimits_{i = 1}^M  
\left\langle  \partial h_i (\tilde{\ten{X}}), \tilde{\ten{V}} \right\rangle 
 -  
\left\langle
\nabla f(\ten{X}^*),
\tilde{\ten{V}}
\right\rangle.
\end{align*}
Together with \eqref{eq:stat7},
we have
\begin{align*}
\alpha_1
\| \tilde{\ten{V}} \|_F^2
- \tau_1 \frac{\log I^{\pi}}{ \| \vect{\Omega} \|_1 } 
\tilde{v}^2
& \le
\lambda \sum\nolimits_{i = 1}^M 
\tilde{h}_i(\tilde{\ten{X}})
- \left\langle
\nabla f(\ten{X}^*),
\tilde{\ten{V}}
\right\rangle,
\\
& \le
\lambda \sum\nolimits_{i = 1}^M 
\tilde{h}_i(\tilde{\ten{X}}) 
+
\max_{i}
\NM{
	[\nabla f(\ten{X}^*)]_{\ip{i}} 
}{\infty}
\tilde{v}_i
\\
& \le
\lambda \sum\nolimits_{i = 1}^M 
\tilde{h}_i(\tilde{\ten{X}}) 
+
\max_{i}
\NM{
	[\nabla f(\ten{X}^*)]_{\ip{i}} 
}{\infty}
\tilde{v},
\end{align*}
where the second inequality is from Lemma~\ref{lem:app4}.
Rearranging items in the above inequality,
we have
\begin{align}
\big(
\alpha_1 - \frac{\mu M}{2}
\big) 
\| \tilde{\ten{V}} \|_F^2
\! \le \! \lambda \sum\nolimits_{i = 1}^M 
\big( 
h_i(\ten{X}^*)
\! - \! h_i(\tilde{\ten{X}}) 
\big) 
\! + \!
\left( 
\max_i \NM{ [\nabla f(\ten{X}^*)]_{\ip{i}} }{\infty}
\! + \! \tau_1 \frac{\log I^{\pi}}{ \| \vect{\Omega} \|_1 } \tilde{v}
\right) 
\tilde{v}.
\label{eq:temp1}
\end{align}
Note that from the Assumption in Theorem~\ref{thm:stat},
we have the following inequalities.
\begin{itemize}[leftmargin=*]
\item 
$\max_i 
\big\| \left[ \nabla f(\ten{X}^*) \right]_{\ip{i}} \big\|_{\infty}
\le
\kappa_0 \lambda / 4$.

\item Since $\| \vect{\Omega} \|_1 \ge 16 R^2 \max\left( \tau_1^2, \tau_2^2 \right) \log (I^{\pi} ) / \alpha_2^2$
and 
$\alpha_2 \sqrt{ \log I^{\pi} / \| \vect{\Omega} \|_1 } \le \kappa_0 \lambda / 4$, then
\begin{align*}
	\frac{\tau_1 \log I^{\pi}}{ \| \vect{\Omega} \|_1 } \tilde{v}
	\le 
	\frac{\tau_1}{ \alpha_2 } 
	\sqrt{\frac{\log I^{\pi}}{\| \vect{\Omega} \|_1}}\tilde{v}
	\cdot
	\alpha_2
	\sqrt{\frac{\log I^{\pi}}{\| \vect{\Omega} \|_1}}
	\le 
	\frac{\tau_1}{ \alpha_2 } 
	\sqrt{\frac{\alpha_2^2 \log I^{\pi}}{16 \tilde{v}^2 \tau_1^2}}\tilde{v}
	\cdot
	\alpha_2
	\sqrt{\frac{\log I^{\pi}}{\| \vect{\Omega} \|_1}}
	\le \frac{\lambda \kappa_0}{4}.
\end{align*}
\end{itemize}
Combing above inequalities into \eqref{eq:temp1},
we further have
\begin{align}
\big(
\alpha_1 - \frac{\mu M}{2}
\big) 
\| \tilde{\ten{V}} \|_F^2
&  \le  
\lambda  \sum\nolimits_{i = 1}^M 
\big( 
h_i(\ten{X}^*)
-  h_i(\tilde{\ten{X}}) 
\big) + 
\frac{\lambda \kappa_0}{2} \tilde{v}.
\label{eq:stat9}
\end{align}


\vspace{5px}
\noindent
\textbf{Part 3).}
Combining \eqref{eq:stat9} and Lemma~\ref{lem:app1},
as well as the subadditivity of $h_i$, 
we have
\begin{align}
\big(
\alpha_1 - \frac{L M}{2}
\big) 
\| \tilde{\ten{V}} \|_F^2
& \le 
\lambda \sum\nolimits_{i = 1}^M
\big( 
h_i(\ten{X}^*)
- h_i(\tilde{\ten{X}}) 
\big)
+ 
\frac{\lambda \kappa_0}{2}
\big(  
\frac{\sum\nolimits_{i = 1}^M h_i(\tilde{\ten{V}})}{L M}
+ \frac{ L M  }{2 \lambda \kappa_0} \| \tilde{\ten{V}} \|_F^2 
\big),
\notag
\\
&
\le \! \lambda \! \sum\nolimits_{i = 1}^M \!
\big( 
h_i(\ten{X}^*) \! - \! h_i(\tilde{\ten{X}}) 
\big) 
+ \frac{\lambda\sum\nolimits_{i = 1}^M h_i(\ten{X}^*) \! + \! h_i(\tilde{\ten{X}}) }{2D}
+ \frac{L M}{4} \| \tilde{\ten{V}} \|_F^2.
\label{eq:app10}
\end{align}
Next,
define 
\begin{align*}
	a_v
	= \alpha_1 - \frac{3 M}{4} \kappa_0,
	\quad
	b_v
	= 1 + \frac{1}{2 M},
	\quad
	c_v
	= 1 - \frac{1}{2 M}.
\end{align*}
Rearranging terms in \eqref{eq:app10},
we have
\begin{align}
	a_v \| \tilde{\ten{V}} \|_F^2
	\le \lambda
	\sum\nolimits_{i = 1}^M
	b_v h_i(\ten{X}^*) 
	- c_v h_i(\tilde{\ten{X}}).
	\label{eq:stat10}
\end{align}
From Lemma~\ref{lem:app5},
we have
\begin{align}
	b_v  h_i(\ten{X}^*) 
	- c_v  h_i(\tilde{\ten{X}})
	\le 
	L
	\big(
	b_v 
	\NM{ \varPhi_{k_i} (\tilde{\ten{V}}_{\ip{i}}) }{*}
	- c_v \NM{ \varPsi_{k_i}( \tilde{\ten{V}}_{\ip{i}} ) }{*}
	\big) .
	\label{eq:stat11}
\end{align}
Besides,
we have the cone condition
\begin{align}
	\NM{\varPhi_{k_i} (\tilde{\ten{V}}_{\ip{i}})}{*} 
	\le \frac{c_v}{b_v} \NM{\varPsi_{k_i}( \tilde{\ten{V}}_{\ip{i}} )}{*}.
	\label{eq:stat12}
\end{align}
Combining \eqref{eq:stat10},
\eqref{eq:stat11}
and \eqref{eq:stat12},
we have
\begin{align*}
	a_v
	\| \tilde{\ten{V}} \|_F^2
	& \le 
	\lambda \kappa_0
	\sum\nolimits_{i = 1}^M
	\big(
	b_v 
	\NM{\varPhi_{k_i} (\tilde{\ten{V}}_{\ip{i}})}{*}
	- c_v \NM{\varPsi_{k_i}( \tilde{\ten{V}}_{\ip{i}} )}{*}
	\big),
	\\ 
	& \le 
	\lambda \kappa_0
	\sum\nolimits_{i = 1}^M b_v \NM{\varPhi_{k_i} (\tilde{\ten{V}}_{\ip{i}})}{*}
	\le 
	\lambda \kappa_0
	\sum\nolimits_{i = 1}^M c_v \sqrt{k_i} \| \tilde{\ten{V}} \|_F.
\end{align*}
where the last inequality comes from Lemma~\ref{lem:app6}.
Since $a_v > 0$ as assumed,
we conclude that
\begin{align*}
	\| \tilde{\ten{V}} \|_F
	\le 
	\frac{ \lambda \kappa_0 c_v }{a_v}
	\sum\nolimits_{i = 1}^M  \sqrt{k_i}.
\end{align*}
which proves the theorem.
\end{proof}


\subsection{Corollary~\ref{cor:noisy}}

\begin{proof}
When noisy level is sufficiently small,
\eqref{eq:lambda} reduces to
\begin{align} 
\sqrt{ \log I^{\pi} / \NM{\vect{\Omega}}{1} }
\le
\lambda
\le
\frac{1}{ 4 R \kappa_0}.
\label{eq:app18}
\end{align}
Let $\lambda = b_1 \max_i \| \left[ \SO{\ten{E}} \right]_{\ip{i}} \|_{\infty}$
where
$b_1 \in 
\left[ 
\frac{ 4 }{ \kappa_0 }, 
\frac{ \alpha_2 }{ 4 R \kappa_0 \max_i \|  \left[ \SO{\ten{E}} \right]_{\ip{i}} \|_{\infty} } 
\right]$.
It is easy to check \eqref{eq:app18} holds.
Then,
from Theorem~\ref{thm:stat},
we will have
\begin{align}
\big\| \ten{X}^* - \tilde{\ten{X}} \big\|_F
\le 
b_1 \max_i \| \left[ \SO{\ten{E}} \right]_{\ip{i}} \|_{\infty}
\cdot
\frac{ \kappa_0 c_v }{a_v} \sum\nolimits_{i = 1}^M  \sqrt{k_i}.
\label{eq:app20}
\end{align}
Next, 
note that 
\begin{align}
\mathbb{E}
\left[ 
\| \left[ \SO{\ten{E}} \right]_{\ip{i}} \|_{\infty} 
\right] 
\le 
\mathbb{E}
\left[ 
\| \left[ \SO{\ten{E}} \right]_{\ip{i}} \|_{F}
\right] 
= \mathbb{E}
\left[ 
\| \xi \cdot \vect{\Omega} \|_{F}
\right] 
= \sigma \| \vect{\Omega} \|_{F}
\le 
\sigma \sqrt{I^{\pi}}.
\label{eq:app19}
\end{align}
Combining \eqref{eq:app20} and \eqref{eq:app19},
we then have
\begin{align*}
\mathbb{E}
\left[ 
\big\| \ten{X}^* - \tilde{\ten{X}} \big\|_F
\right] 
\le 
\sigma
\frac{ \kappa_0 c_v \sqrt{I^{\pi}}}{a_v} 
\sum\nolimits_{i = 1}^M  \sqrt{k_i}.
\end{align*}
\end{proof}


\subsection{Corollary~\ref{cor:number}}

\begin{proof}
	When $\NM{\vect{\Omega}}{1}$ is sufficiently larger,
	\eqref{eq:lambda} reduces to
	\begin{align} 
		4
\sqrt{ \log I^{\pi} / \NM{\vect{\Omega}}{1} }
		\le
		\lambda
		\le
		\frac{1}{ 4 R }.
		\label{eq:app21}
	\end{align}
	Let $\lambda = b_3 \sqrt{ \frac{ \log I^{\pi} }{ \NM{\vect{\Omega}}{1} } }$
	where
	$b_3 \in 
	\left[ 
	4, 
	\frac{ 1 }{ 4 R  \sqrt{ \log I^{\pi} / \NM{\vect{\Omega}}{1} } } 
	\right]$.
	It is easy to check \eqref{eq:app21} holds.
	Then,
	from Theorem~\ref{thm:stat},
	we will have
	\begin{align*}
		\big\| \ten{X}^* - \tilde{\ten{X}} \big\|_F
		\le 
		b_3 \sqrt{ \frac{ \log I^{\pi} }{ \NM{\vect{\Omega}}{1} } }
		\cdot
		\frac{ \kappa_0 c_v }{a_v} \sum\nolimits_{i = 1}^M  \sqrt{k_i}.
	\end{align*}
\end{proof}


\subsection{Proposition~\ref{pr:smtl1}}

\begin{proof}
First, 
from Proposition 1 in~\citep{yao2018efficient},
we know that the function $\kappa(|a|) - \kappa_0 \cdot |a|$ is smooth.
Since $\tilde{\ell}$ is also smooth,
thus $\tilde{\kappa}_{\ell}$ is differentiable.
Finally,
note that $\lim\nolimits_{\delta \rightarrow 0} \ell(a; \delta) = |a|$.
Then,
we have
\begin{align*}
	\lim\nolimits_{\delta \rightarrow 0} 
	\tilde{\kappa}_{\ell}(|a|; \delta)
	& =  
	\lim\nolimits_{\delta \rightarrow 0}
	\big[ 
	\kappa_0 \cdot \tilde{\ell}(|a|; \delta) + 
	\left(
	\kappa_{\ell}(|a|) - \kappa_0 \cdot |a|
	\right)
	\big],
	\\
	& = 
	\kappa_0  |a| + 
	\left(
	\kappa_{\ell}(|a|) - \kappa_0  |a|
	\right) 
	= \kappa_{\ell}(|a|).
\end{align*}
Thus,
the Proposition holds.
\end{proof}


\subsection{Theorem~\ref{thm:snort}}

\begin{proof}
	First,
	by the definition of $\tilde{\kappa}_{\ell}$ in \eqref{eq:smtl1},
	when $|a| \le \delta$,
	we have
	\begin{align*}
		\lim\limits_{\delta \rightarrow 0} \partial \tilde{\kappa}_{\ell}(a; \delta)
		= \frac{a}{\delta} \kappa_0
		\in
		\begin{cases}
			[0, \kappa_0) & \text{if}\; a \ge 0
			\\
			(-\kappa_0, 0) & \text{otherwise}
		\end{cases}.
	\end{align*}
	Thus,
	\begin{align}
		\lim\nolimits_{\delta \rightarrow 0} \partial \tilde{\kappa}_{\ell}(a; \delta)  = \partial \kappa_{\ell}(|a|).
		\label{eq:app1}
	\end{align}
	Define
	$\tilde{F}_{\tau}(\ten{X}; \delta) =
	\sum\nolimits_{(i_1 ... i_M) \in \mathbf{\Omega}}
	\tilde{\ell}\left( \ten{X}_{i_1 ... i_M} - \ten{O}_{i_1... i_M} ; \delta \right) 
	+ \sum\nolimits_{i = 1}^D \lambda_i \phi( \ten{X}_{\ip{i}} )$.
	Since $\ten{X}_s$ is obtained from solving \eqref{eq:smoothpro} at step~4 of Algorithm~\ref{alg:snort},
	we have
	$\ten{X}_s \in \partial \tilde{F}_{\tau}(\ten{X}; (\delta_0)^s)$.
	Take $s \rightarrow \infty$ and use \eqref{eq:app1},
	we have
	$		\lim\nolimits_{s \rightarrow \infty} \ten{X}_s 
	\in \lim\nolimits_{s \rightarrow \infty} \partial \tilde{F}_{\tau}(\ten{X}; (\delta_0)^s)
	= \lim\nolimits_{\delta \rightarrow 0} \partial \tilde{F}_{\tau}(\ten{X}; \delta)
	= \partial \tilde{F}_{\tau}(\ten{X})$.
	Thus,
	Theorem~\ref{thm:snort} holds.
\end{proof}

\bibliographystyle{apalike}
\bibliography{bib}

\clearpage

\end{document}